\newtheorem{repeatthm@}{Proposition}
\newenvironment{repeatthm}[1]{%
    \def\therepeatthm@{\ref{#1}}
    \repeatthm@
}
{\endrepeatthm@}
\theoremstyle{plain}
\newtheorem{theorem}{Theorem}[section]
\newtheorem{prop}[theorem]{Proposition}
\newtheorem{lemma}[theorem]{Lemma}
\newtheorem{corollary}[theorem]{Corollary}
\theoremstyle{definition}
\theoremstyle{remark}
\newtheorem{remark}[theorem]{Remark}
\icmltitlerunning{Inverse problems with diffusion models: MAP estimation via mode-seeking loss}
\def\rvf{{\mathbf{f}}}
\def\rvt{{\mathbf{t}}}
\def\rvw{{\mathbf{w}}}
\def\rvx{{\mathbf{x}}}
\def\rvy{{\mathbf{y}}}
\def\rvz{{\mathbf{z}}}
\DeclareMathOperator*{\argmax}{arg\,max}
\newcommand{\KL}{D_{\mathrm{KL}}}
\newcommand{\Cov}{\mathrm{Cov}}
\begin{document}

\twocolumn[
  \icmltitle{Inverse problems with diffusion models: MAP estimation via mode-seeking loss}



  \icmlsetsymbol{equal}{*}

  \begin{icmlauthorlist}
    \icmlauthor{Sai Bharath Chandra Gutha}{rpl}
    \icmlauthor{Ricardo Vinuesa}{umich}
    \icmlauthor{Hossein Azizpour}{rpl,scilife}
  \end{icmlauthorlist}


  \icmlaffiliation{rpl}{KTH Royal Institute of Technology, Sweden}
  \icmlaffiliation{umich}{University of Michigan, USA}
  \icmlaffiliation{scilife}{Science for Life Laboratory, Sweden}

  \icmlcorrespondingauthor{Sai Bharath Chandra Gutha}{sbcgutha@kth.se}

  \icmlkeywords{Machine Learning, ICML}

  \vskip 0.3in
]



\printAffiliationsAndNotice{}  

\begin{abstract}
A pre-trained unconditional diffusion model, combined with posterior sampling or maximum a posteriori (MAP) estimation techniques, can solve arbitrary inverse problems without task-specific training or fine-tuning. However, existing posterior sampling and MAP estimation methods often rely on modeling approximations and can also be computationally demanding. In this work, we propose a new MAP estimation strategy for solving inverse problems with a pre-trained unconditional diffusion model. Specifically, we introduce the variational mode-seeking loss (VML) and show that its minimization at each reverse diffusion step guides the generated sample towards the MAP estimate (modes in practice). VML arises from a novel perspective of minimizing the Kullback-Leibler (KL) divergence between the diffusion posterior $p(\rvx_0|\rvx_t)$ and the measurement posterior $p(\rvx_0|\rvy)$, where $\rvy$ denotes the measurement. Importantly, for linear inverse problems, VML can be analytically derived without any modeling approximations. Based on further theoretical insights, we propose VML-MAP, an empirically effective algorithm for solving inverse problems via VML minimization, and validate its efficacy in both performance and computational time through extensive experiments on diverse image-restoration tasks across multiple datasets.
\end{abstract}

\section{Introduction}\label{sec:introduction}

Solving an inverse problem essentially involves estimation of the original data sample $\rvx$ based on a given partially degraded measurement $\rvy$. Formally, Equation~(\ref{eqn:invprob}) relates the degraded measurement with the original data sample, where $\mathcal{A}$ denotes the known degradation operator and $\eta$ is a random variable denoting measurement noise, which is typically assumed to be Gaussian distributed with known standard deviation $\sigma_\rvy$, i.e., $\mathbf{\eta} \sim \mathcal{N}(0,\sigma^2_\rvy \mathbf{I})$. This implies that $p(\rvy|\rvx) = \mathcal{N}(\mathcal{A}(\rvx),\sigma^2_{\rvy}\mathbf{I})$. For linear inverse problems, $\mathcal{A}$ is linear and can be denoted with a matrix instead, i.e., $\mathcal{A}(\rvx)=\mathrm{H}\rvx$, where we use the matrix $\mathrm{H}$ to denote a linear degradation operator throughout the paper. 
\begin{equation}\label{eqn:invprob}
\rvy = \mathcal{A}(\rvx) + \mathbf{\eta}    
\end{equation}
Inverse problems are typically ill-posed, where many plausible data samples $\rvx$ could correspond to a given degraded measurement $\rvy$, rendering a probabilistic approach essential. In a Bayesian framework, solving an inverse problem amounts to finding a sample of the posterior distribution $p(\rvx|\rvy)$. 
As diffusion models~\cite{ddpm,songsde,ldm} gain prominence in generative modeling, leveraging a pre-trained unconditional diffusion model (which samples $\rvx \sim p(\rvx)$) to solve arbitrary inverse problems in a zero-shot (plug-and-play) fashion is becoming increasingly attractive~\cite{repaint,ddrm,dps,pgdm,diffpir,rout2023solving,resample,reddiff,dcps,mapga,daps,vmps,zilberstein2025repulsive}. 

In a diffusion process, noise is added progressively to the training data samples $\rvx_0 \sim p_{data}(\rvx_0)$ to convert these into pure noise gradually over a time horizon $t \in \left(0, T \right]$. This process can be reversed stochastically with a Stochastic Differential Equation (SDE) or deterministically with the Probability Flow Ordinary Differential Equation (PF ODE), both of which require the score function of the marginal distribution at time $t$, i.e., $\nabla_{\rvx_t}\log p(\rvx_t)$, for all $t \in \left( 0,T \right]$~\cite{songsde}. The true score functions are typically intractable, so a diffusion model $s_{\theta}(\rvx_t,t)$ is trained via score-matching loss~\cite{dsm,ssm} to approximate these. The trained diffusion model can then generate samples $\rvx_0 \sim p_{data}(\rvx_0)$, starting from random noise.

\textbf{Related works.} As mentioned earlier, solving an inverse problem amounts to finding a sample of the posterior. Approaches for solving inverse problems with a pre-trained unconditional diffusion model can be roughly categorized into ({\romannumeral 1}) Posterior sampling and ({\romannumeral 2}) MAP estimation methods. Posterior sampling methods, by design, aim to draw samples proportional to the posterior probability. To achieve that, it suffices to replace the unconditional score function $\nabla_{\rvx_t} \log p(\rvx_t)$ with the conditional score $\nabla_{\rvx_t} \log p(\rvx_t|\rvy)$ in the reverse diffusion process. Since $\nabla_{\rvx_t} \log p(\rvx_t|\rvy) = \nabla_{\rvx_t} \log p(\rvx_t) + \nabla_{\rvx_t} \log p(\rvy|\rvx_t)$, the former term can be replaced with the unconditional diffusion model $s_{\theta}(\rvx_t,t)$ but the latter term remains intractable due to the intractability of $p(\rvy|\rvx_t)$~\cite{dps}. Works like DPS~\cite{dps}, $\Pi$GDM~\cite{pgdm}, TMPD~\cite{boys}, ~\cite{peng} use Gaussian approximations, while some others~\cite{trippe2022diffusion,cardoso2023monte,achituve2025inverse} also rely on sequential Monte-Carlo sample approximations. Some posterior sampling methods, such as DAPS~\cite{daps}, do not require estimating this conditional score, but still involve other modeling approximations. However, note that the main objective of all posterior sampling methods remains the same, which is to draw samples proportional to the posterior probability.

MAP estimation approaches aim to find the MAP estimate, i.e., the sample with the highest posterior probability, and fundamentally differ in theory and method compared to posterior sampling schemes. As the MAP estimate is a point estimate, MAP solvers, unlike posterior sampling methods, do not try to estimate the conditional score but rather find different approaches to optimize the MAP objective. Earlier works on MAP estimation, like DiffPIR~\cite{diffpir}, adapt variable splitting optimization algorithms such as HQS or ADMM (which relaxes the original MAP objective into several proximal subproblems) to the diffusion framework. Later, more efficient works, such as DMPlug~\cite{dmplug} and MAP-GA~\cite{mapga}, considered the direct mapping from noise to the data given by the PF ODE or the consistency model~\cite{consistencymodels} and solved for the optimal noise using gradient-based optimizers on the reparameterized MAP objective. These works use heuristic approximations and also require a consistency model, which, however, can be empirically approximated by multiple diffusion denoiser steps in practice, but results in expensive gradient computations.

\textbf{Motivation.} It is more accurate to evaluate posterior sampling schemes based on their capacity to cover the true posterior proportionally. But this has not been the focus in the literature and understandably so, due to the limited availability of ground-truth posterior samples. Many image restoration inverse tasks often require generating a plausible image consistent with the measurement, which also fuels this evaluation criterion towards plausibility of estimates rather than covering the posterior proportionally. Such a requirement is inconsistent with the principle of posterior sampling and aligns more closely with MAP estimation, as the MAP estimate is the most plausible (probable) posterior sample. This remains the main motivation for MAP estimation methods in general, including ours, as we propose a novel MAP estimation strategy for solving inverse problems with a pre-trained unconditional diffusion model.

\textbf{Novelty.} We introduce the variational mode-seeking loss (VML) and show that its minimization at each diffusion time $t$ through the reverse diffusion process will steer the generated sample towards the MAP estimate (modes in practice) as $t \to 0$. Unlike existing MAP solvers, VML can be derived in a closed form for linear inverse problems without any modeling approximations. Under mild assumptions, we further show that VML converges to the negative log posterior, i.e., the MAP objective as $t \to 0$. Not only does our approach differ from posterior sampling schemes (since our theoretical objective is different), but it also significantly differs from existing MAP estimation methods. To the best of our knowledge, VML is the first approximation-free modeling of MAP estimation for the linear operator setting with formal analysis, in the context of solving inverse problems leveraging only a pre-trained unconditional diffusion model.

{\textbf{Contributions.}} Our main contributions in this work are summarized below. 
\begin{itemize}[left=1em, label=\textbullet]
    \item We introduce the variational mode-seeking loss (VML), which, when minimized at each reverse diffusion time step $t$, steers the generated sample towards the MAP estimate (modes in practice) as $t \to 0$. 
    \item For linear inverse problems, we derive a closed-form expression for VML without any modeling approximations and provide a formal theoretical analysis of its convergence under mild assumptions. 
    \item Based on further theoretical insights, we propose an empirically effective algorithm (VML-MAP) for solving inverse problems, and also a preconditioner for ill-conditioned linear degradation operators.
    \item We demonstrate VML-MAP's effectiveness over other approaches through extensive experiments on diverse image-restoration tasks across multiple datasets.
\end{itemize}

\begin{figure*}[t!]
    \begin{center}
    \begin{tikzpicture}
          $\draw[{Straight Barb[length=2mm, width=2.5mm, color=darkgray]}-,line width=1pt, color=darkgray] (0,0)--(4.9,0) node[right] () {\textit{\small{\color{darkgray}{\textbf{reverse diffusion}}}}}; \draw[line width=1pt, color=darkgray] (7.3,0)--(12.4,0);$
    \end{tikzpicture}\hfill%
    \\[1pt]
    \begin{subfigure}{.148\textwidth}
        \centering
        \includegraphics[width=\textwidth]{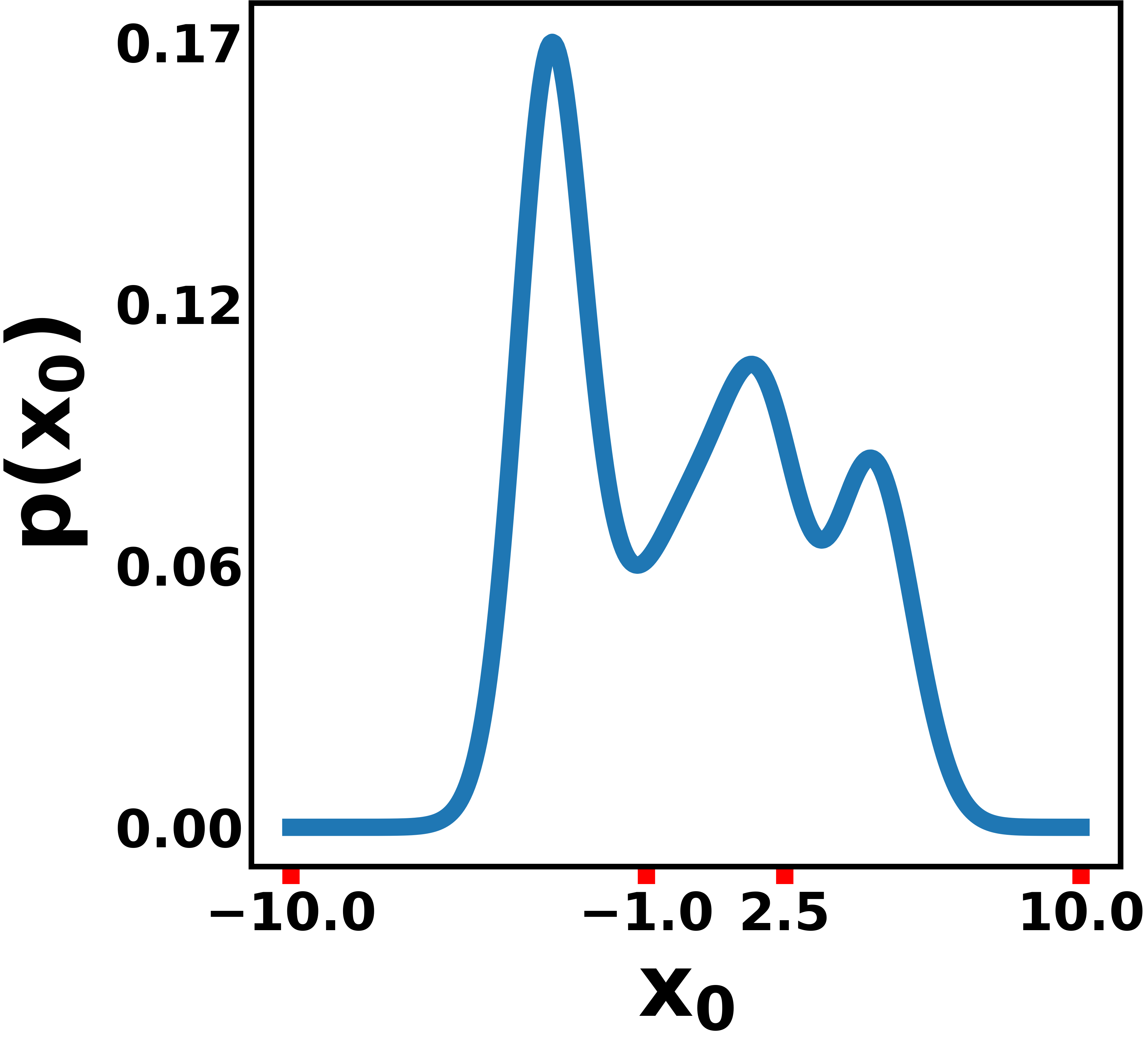}
    \end{subfigure}\hfill%
    \begin{subfigure}{.148\textwidth}
        \centering
        \includegraphics[width=\textwidth]{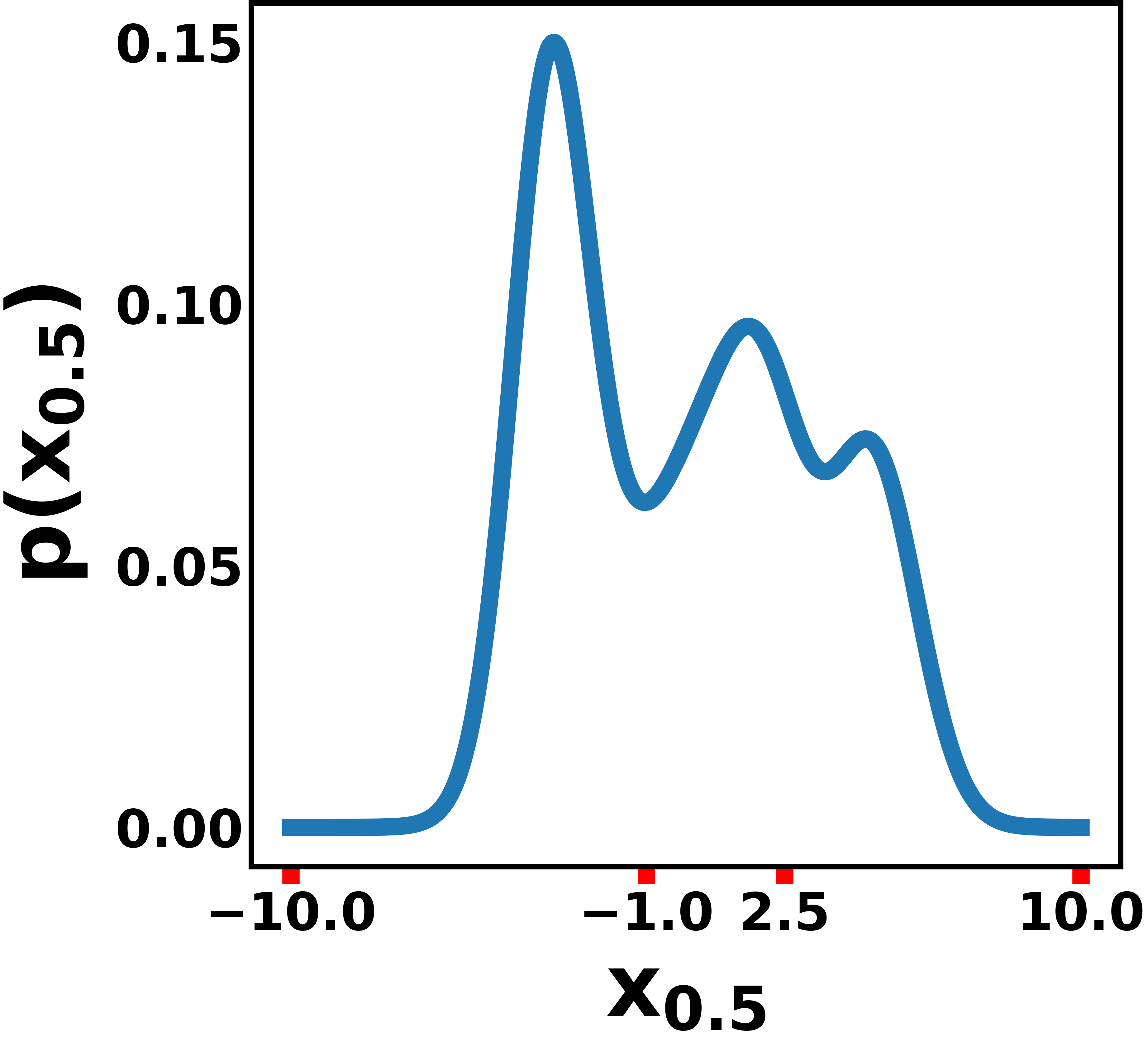}
    \end{subfigure}\hfill%
    \begin{subfigure}{.148\textwidth}
        \centering
        \includegraphics[width=\textwidth]{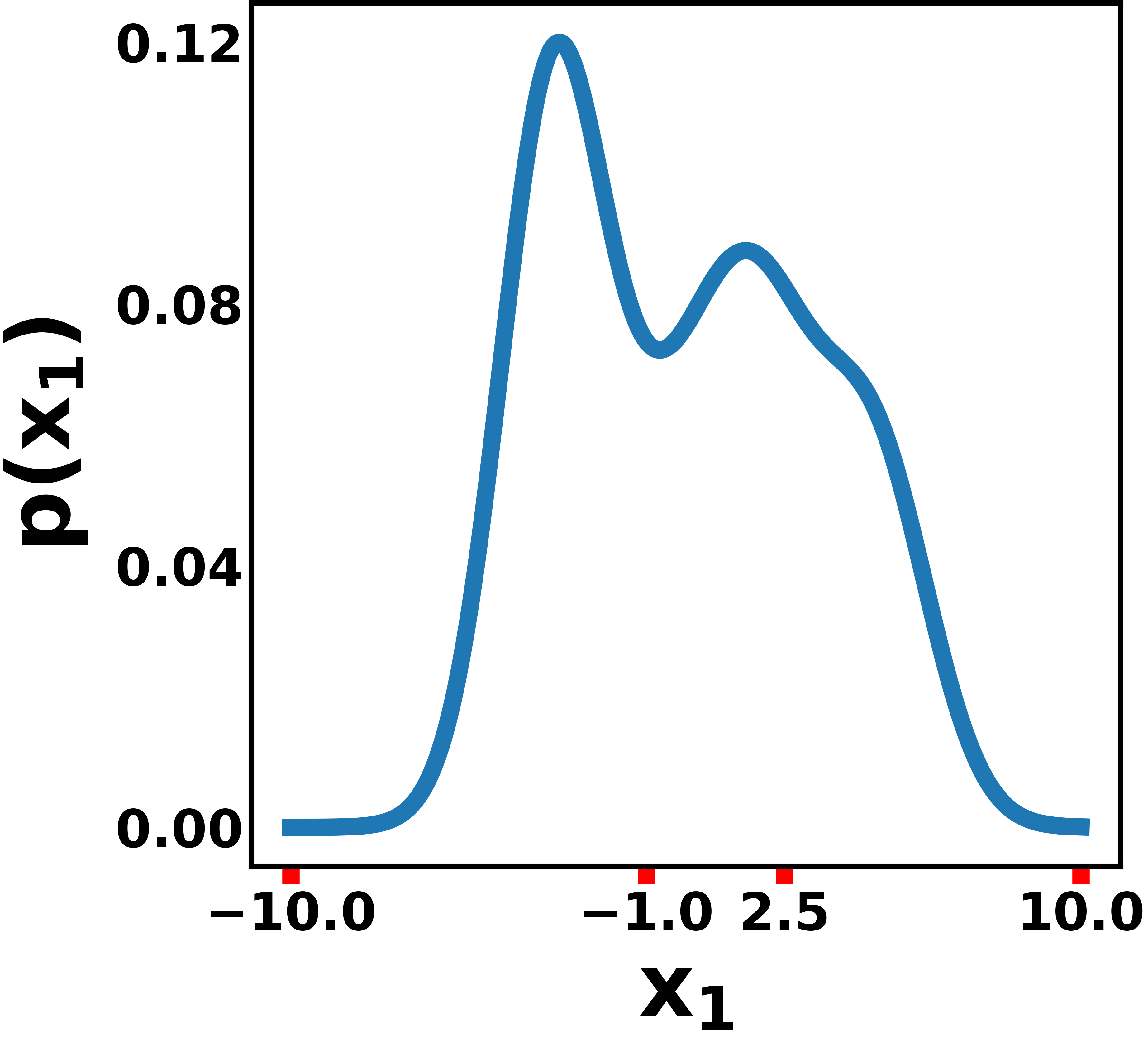}
    \end{subfigure}\hfill%
    \begin{subfigure}{.148\textwidth}
        \centering
        \includegraphics[width=\textwidth]{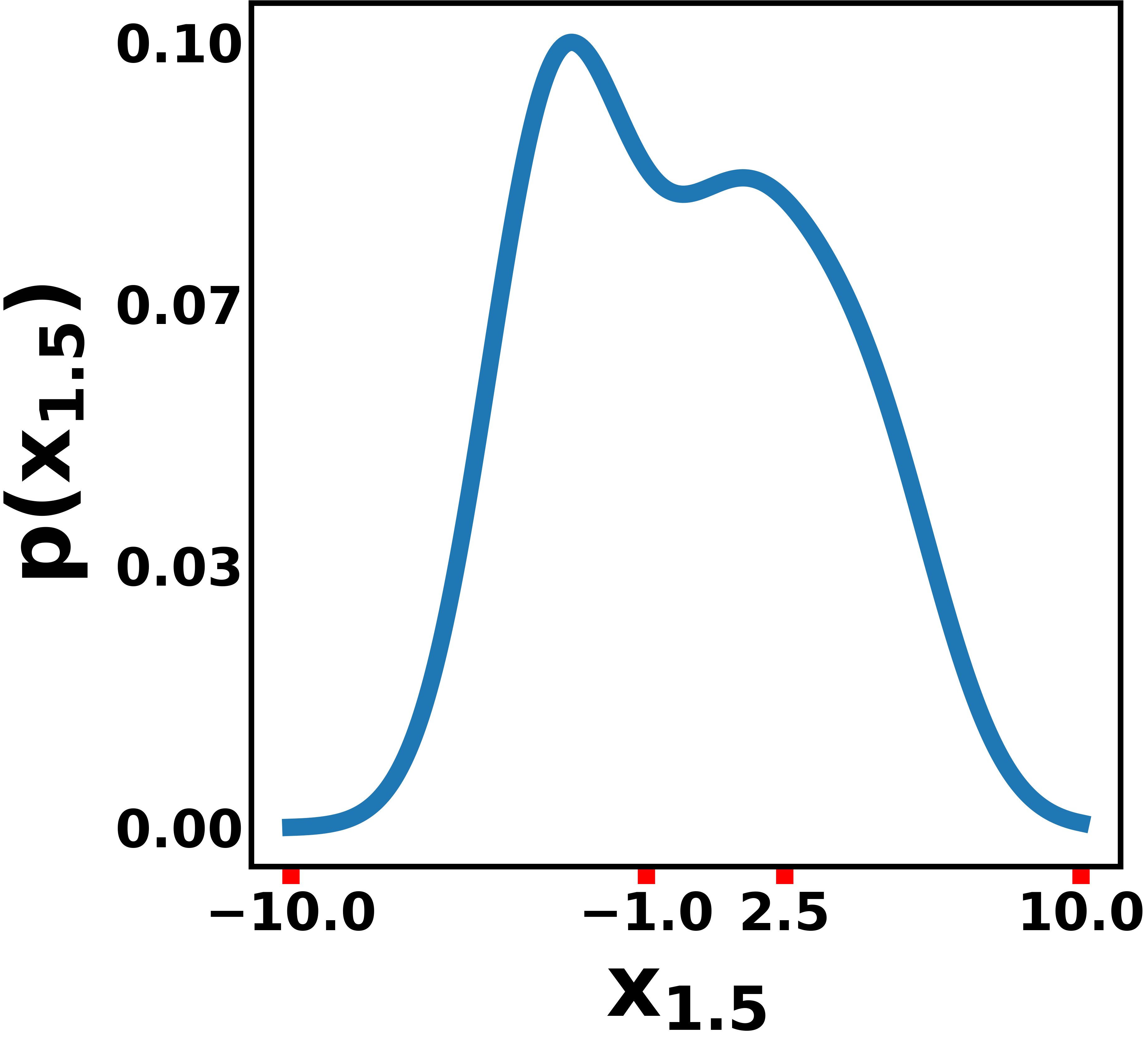}
    \end{subfigure}\hfill%
    \begin{subfigure}{.148\textwidth}
        \centering
        \includegraphics[width=\textwidth]{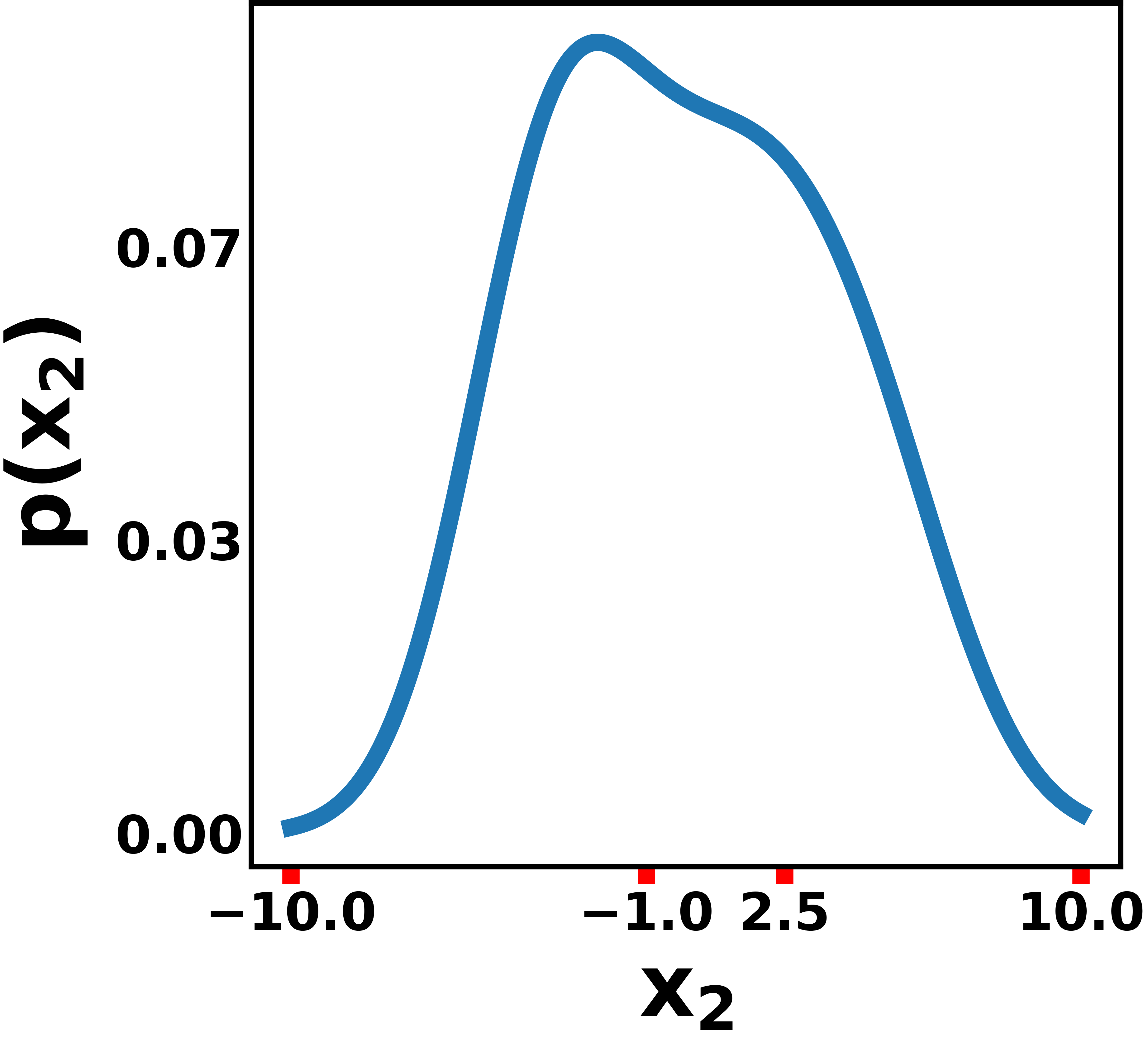}
    \end{subfigure}\hfill%
    \\[1pt]
    \begin{tikzpicture}
    \node[draw, fill=orange!20, text width=.98\textwidth, align=center] at (0,0) {\scriptsize{${p(\rvx_t) \text{ along different time steps of a diffusion process. }p(\rvx_0)\text{ is a Gaussian mixture model, and }\sigma(t)=t.}$}};
    \end{tikzpicture}
    \\[2pt]
    \begin{subfigure}{.148\textwidth}
        \centering
        \includegraphics[width=\textwidth]{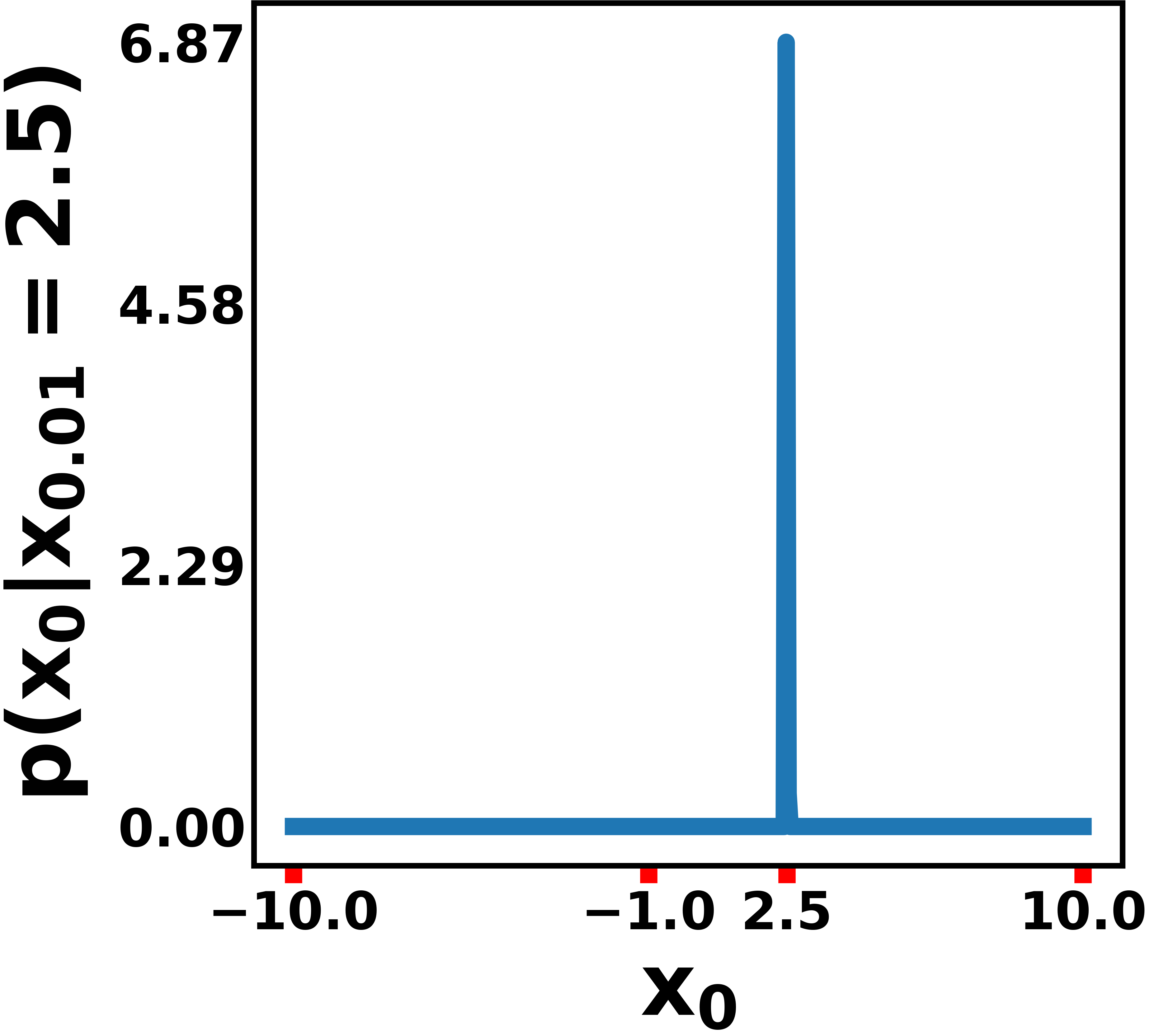}
    \end{subfigure}\hfill%
    \begin{subfigure}{.148\textwidth}
        \centering
        \includegraphics[width=\textwidth]{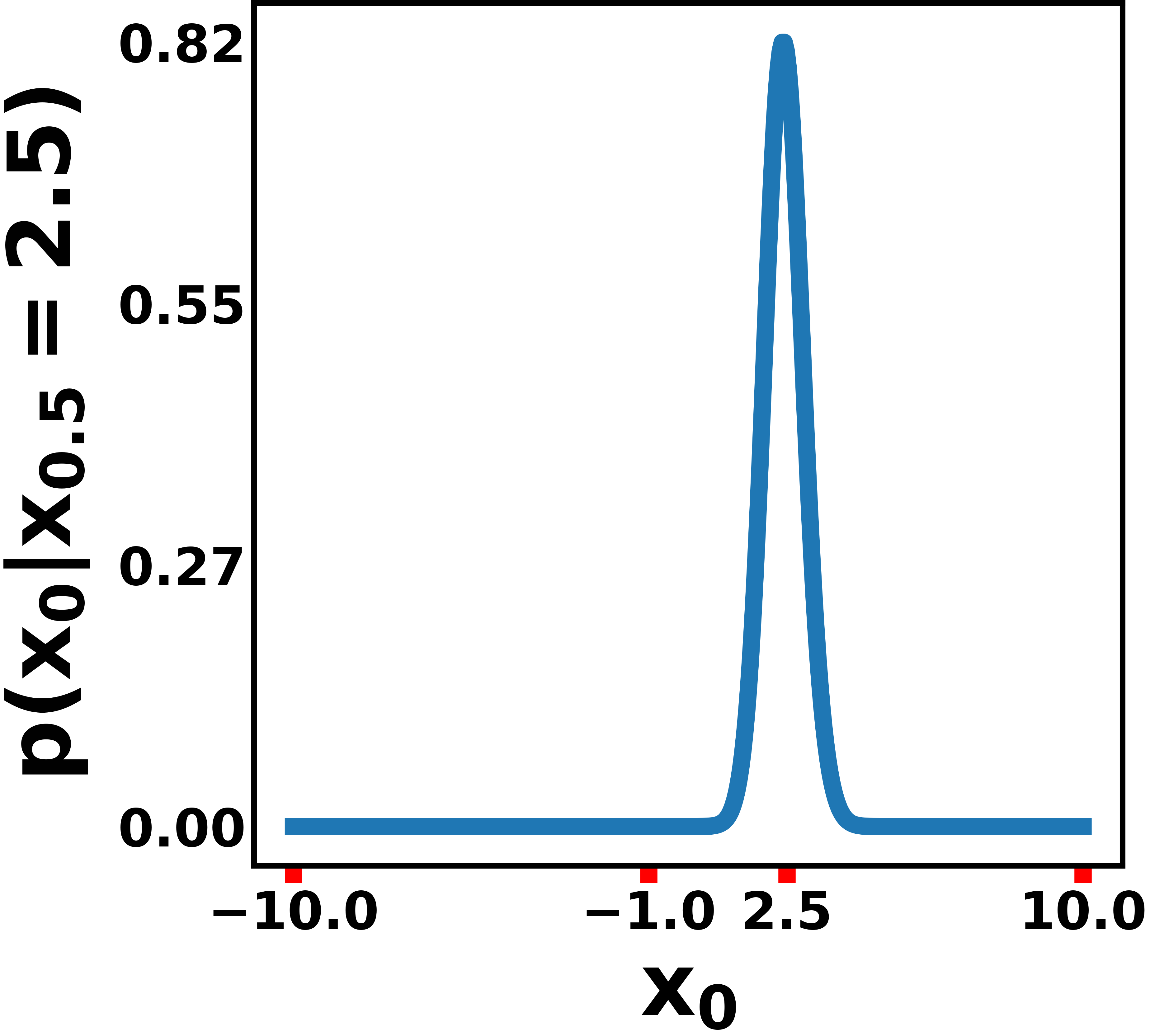}
    \end{subfigure}\hfill%
    \begin{subfigure}{.148\textwidth}
        \centering
        \includegraphics[width=\textwidth]{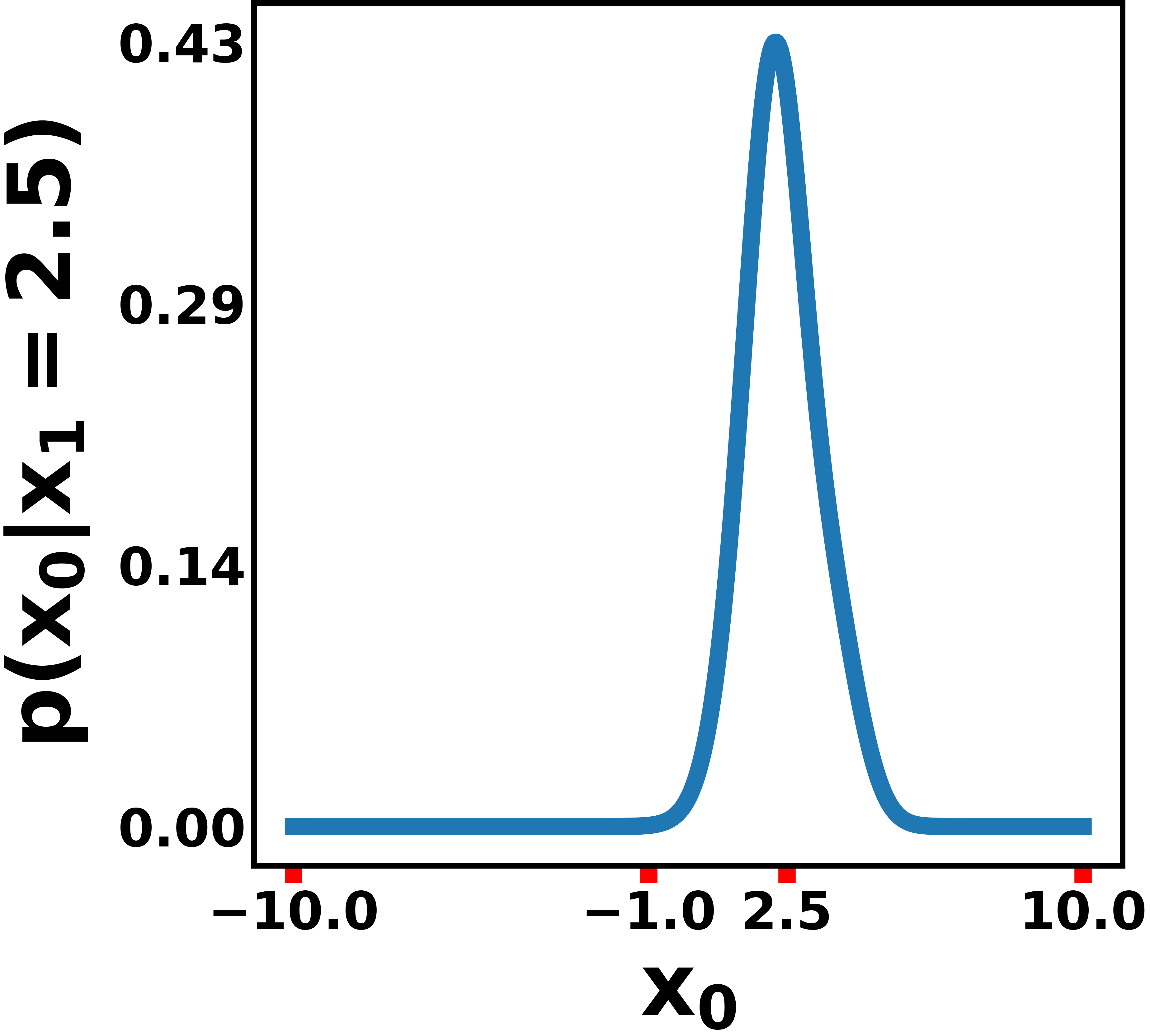}
    \end{subfigure}\hfill%
    \begin{subfigure}{.148\textwidth}
        \centering
        \includegraphics[width=\textwidth]{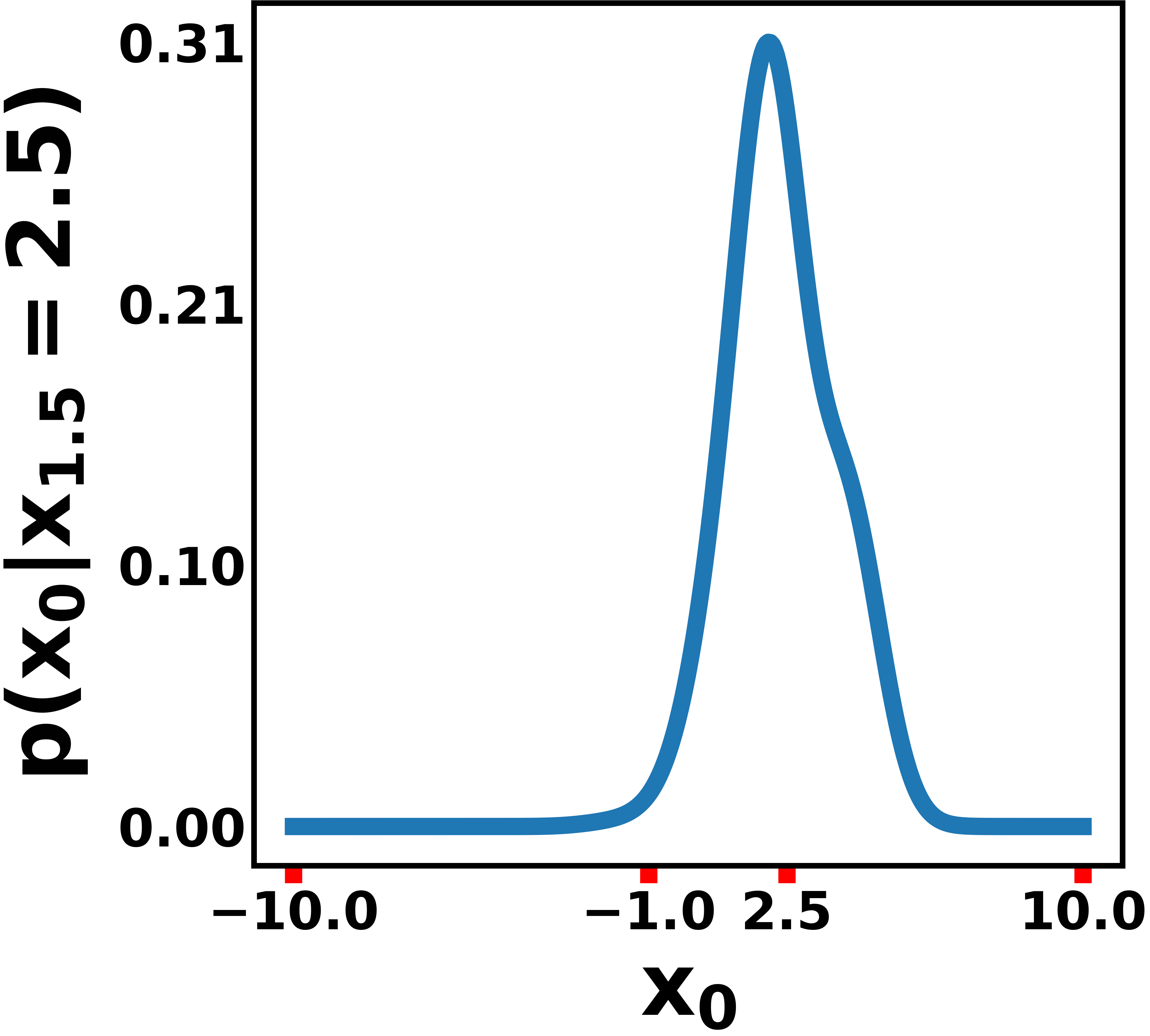}
    \end{subfigure}\hfill%
    \begin{subfigure}{.148\textwidth}
        \centering
        \includegraphics[width=\textwidth]{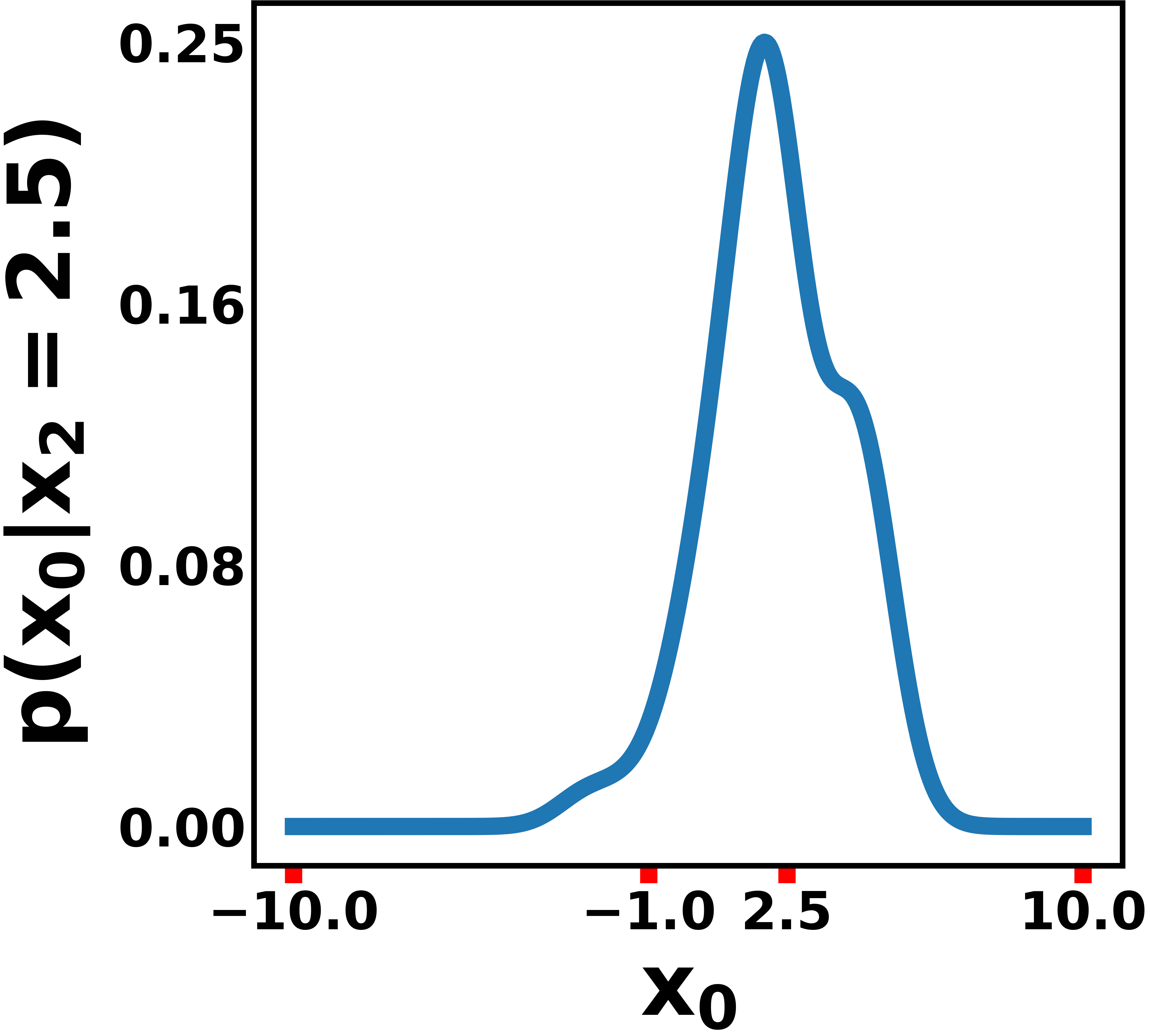}
    \end{subfigure}\hfill%
    \\[1pt]
    \begin{tikzpicture}
    \node[draw, fill=orange!20, text width=.98\textwidth, align=center] at (0,0) {\scriptsize{$p(\rvx_0|\rvx_t=2.5)\text{ along different time steps of a diffusion process. It converges to }\delta(\rvx_0 - 2.5)\text{ as } t \to 0 $}};
    \end{tikzpicture}
    \\[2pt]
    \begin{subfigure}{.148\textwidth}
        \centering
        \includegraphics[width=\textwidth]{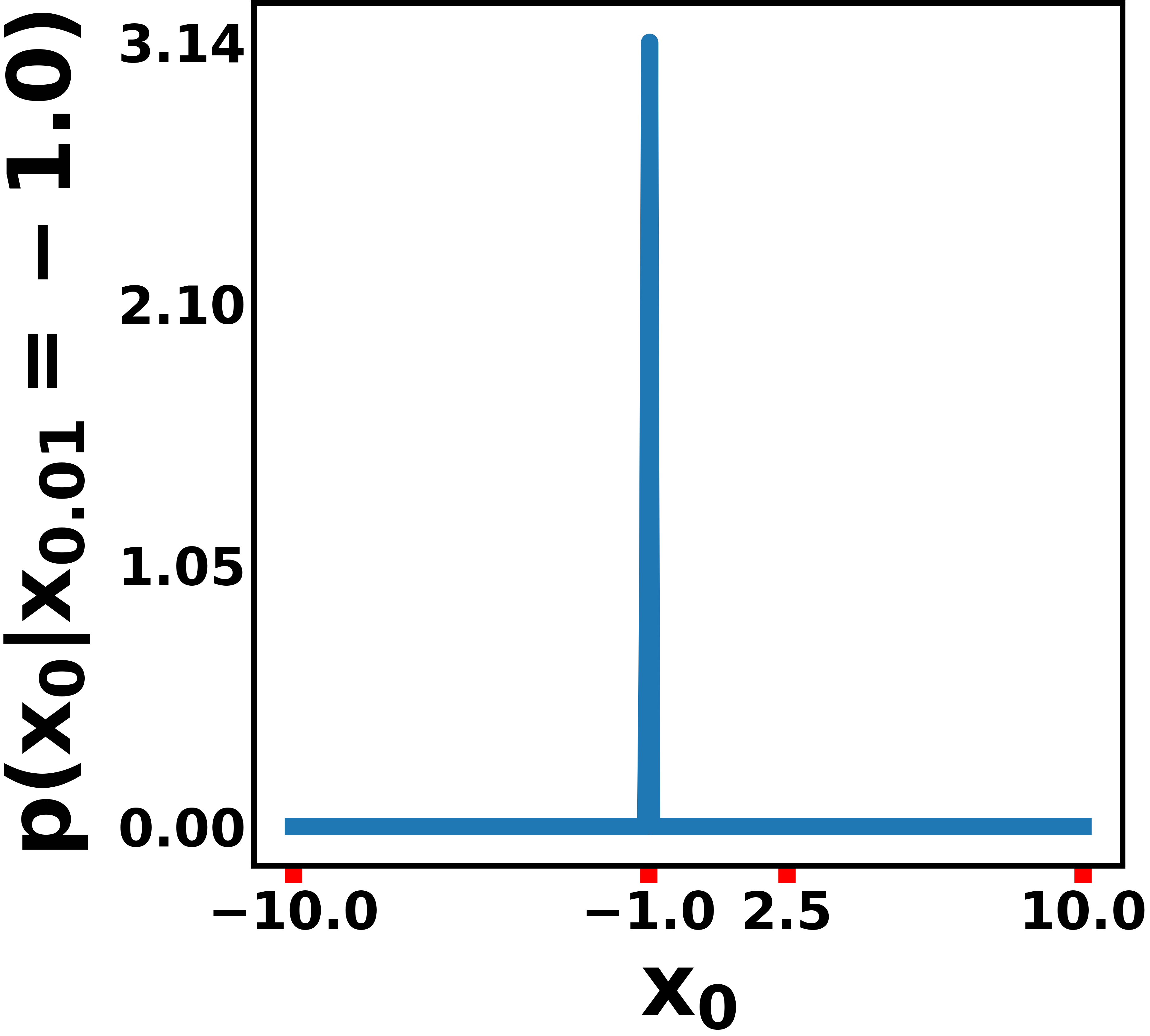}
    \end{subfigure}\hfill%
    \begin{subfigure}{.148\textwidth}
        \centering
        \includegraphics[width=\textwidth]{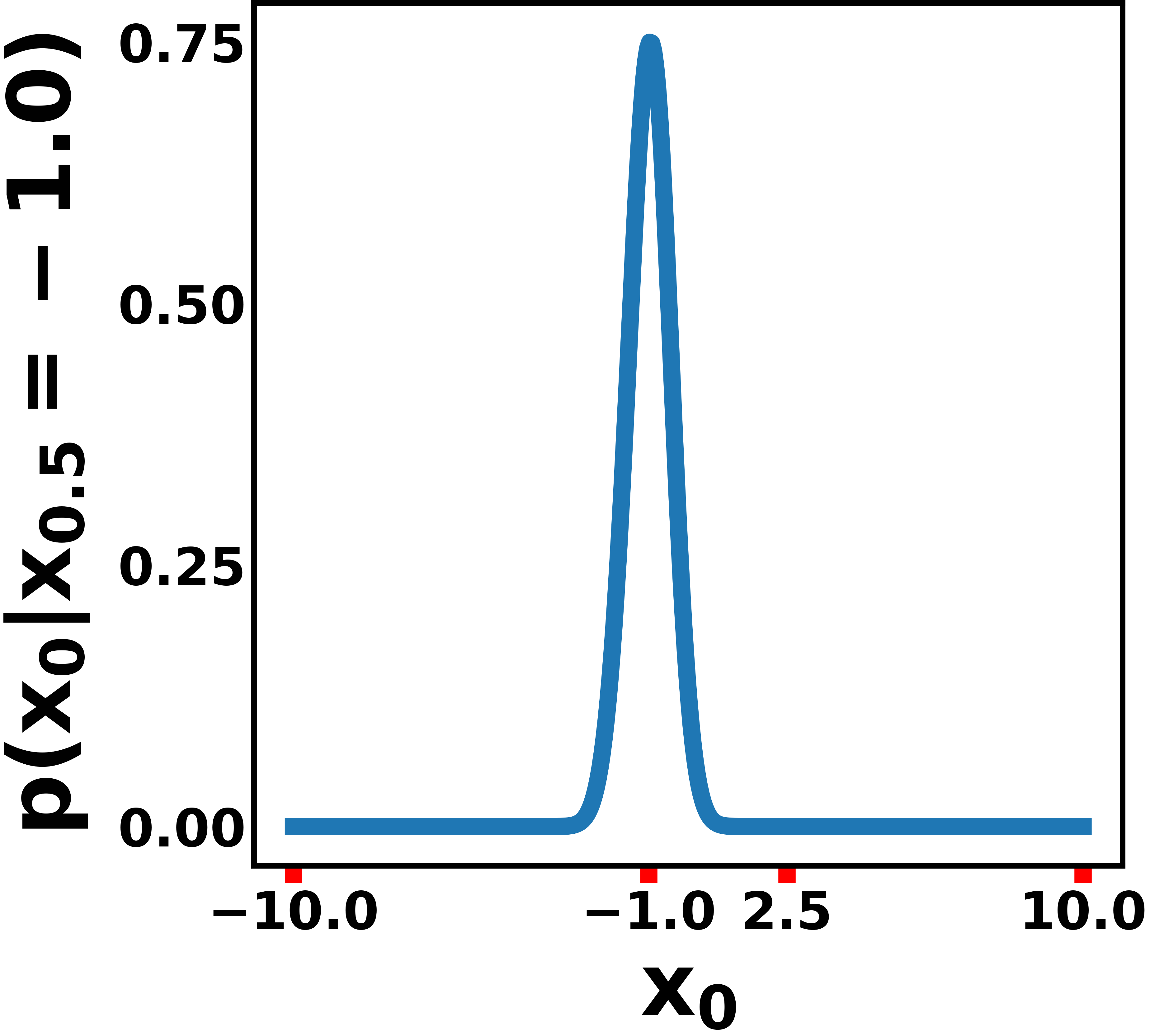}
    \end{subfigure}\hfill%
    \begin{subfigure}{.148\textwidth}
        \centering
        \includegraphics[width=\textwidth]{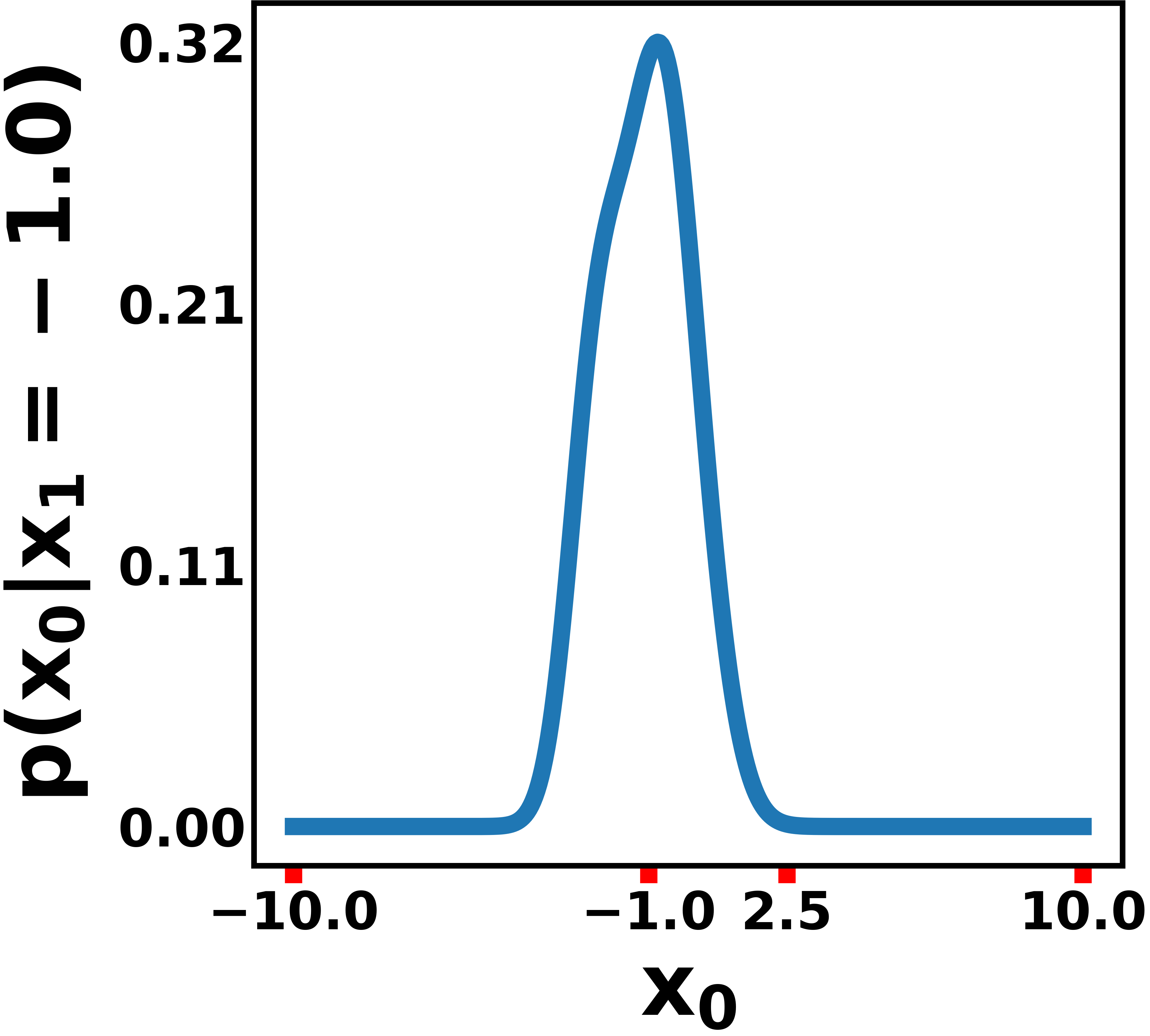}
    \end{subfigure}\hfill%
    \begin{subfigure}{.148\textwidth}
        \centering
        \includegraphics[width=\textwidth]{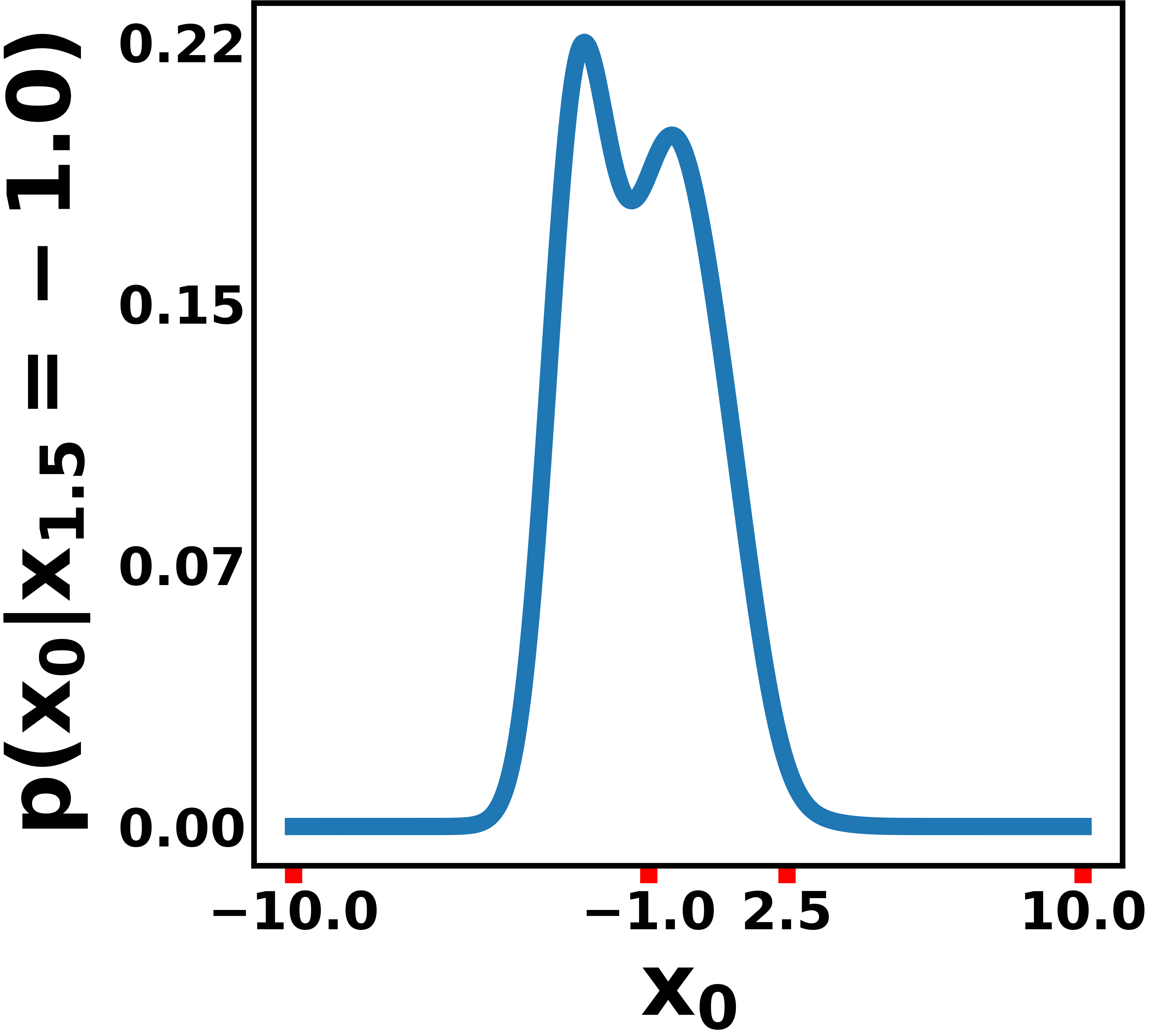}
    \end{subfigure}\hfill%
    \begin{subfigure}{.148\textwidth}
        \centering
        \includegraphics[width=\textwidth]{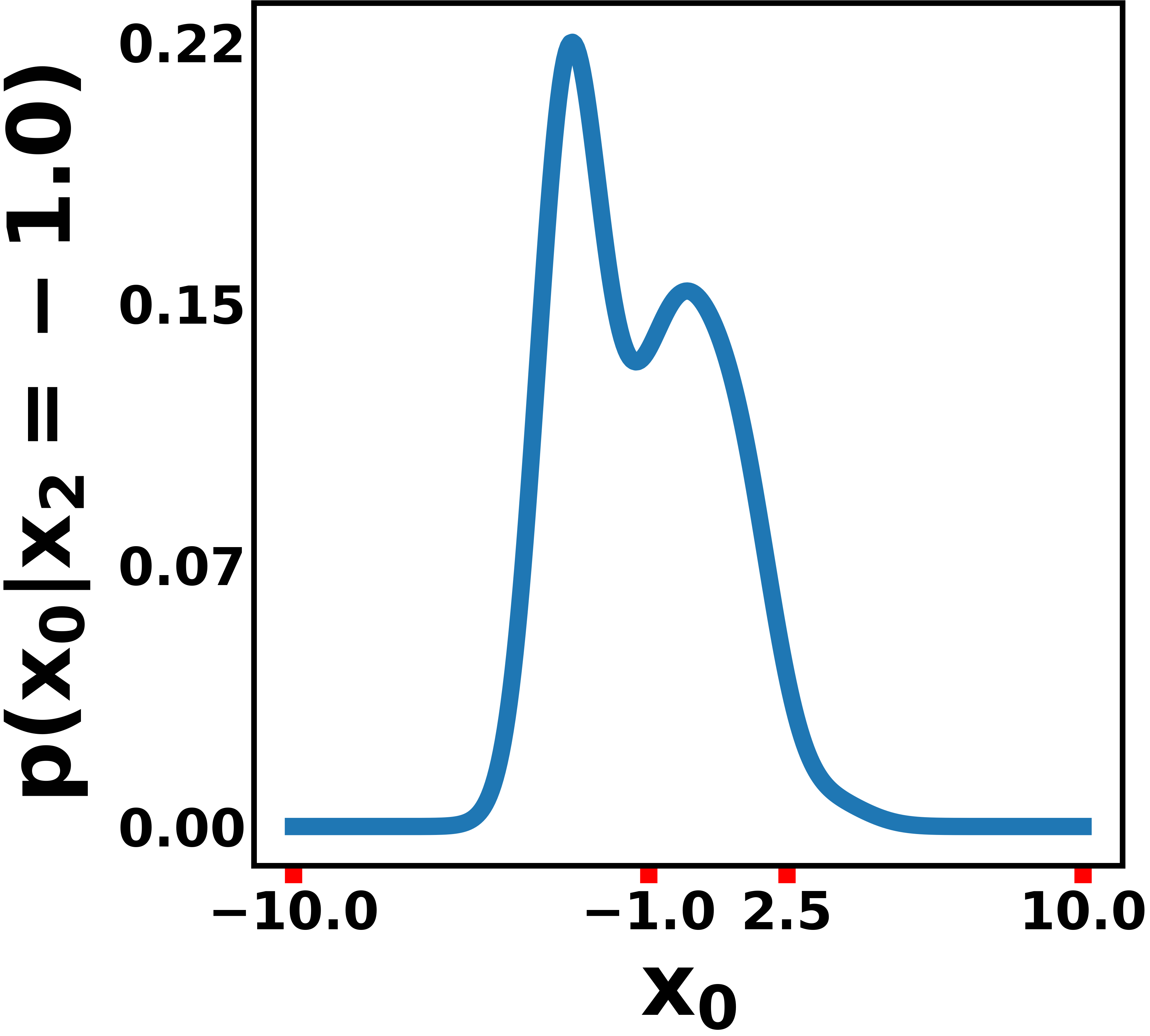}
    \end{subfigure}\hfill%
    \\[1pt]
    \begin{tikzpicture}
    \node[draw, fill=orange!20, text width=.98\textwidth, align=center] at (0,0) {\scriptsize{$p(\rvx_0|\rvx_t=-1.0)\text{ along different time steps of a diffusion process. It converges to }\delta\left(\rvx_0 - (-1.0)\right)\text{ as } t\to 0 $}};
    \end{tikzpicture}
    \end{center}
    \caption{ 
    The figure depicts how the functional form of $p(\rvx_0|\rvx_t)$ gets peaky around $\rvx_t$ as $t \to 0$.}
    \label{fig:1}
\end{figure*}

\section{Background on diffusion models}\label{sec:background}
The forward process in a diffusion model corrupts a clean sample of the input data distribution i.e., $\rvx_0 \sim P(\rvx_0)$ into intermediate noisy sample $\rvx_t$, $t \in \left( 0,T \right]$, modeled by the forward SDE given by Equation~(\ref{eqn:forwardsde}), where $\rvf(\cdot,t):\mathbb{R}^n\to \mathbb{R}^n$, and $g:\mathbb{R}\to \mathbb{R}$ are the drift and diffusion coefficients, respectively, and $\rvw_t$ denotes a standard Wiener process~\cite{songsde}. The drift, diffusion coefficients, and $T$ are chosen such that the distribution of $\rvx_T$ is tractable to sample from and is typically independent of the input data distribution.
\begin{equation}\label{eqn:forwardsde}
\mathrm{d} \rvx_t = \rvf(\rvx_t, t) \mathrm{d} t + g(t) \mathrm{d} \rvw_t
\end{equation}
The reverse SDE in Equation~(\ref{eqn:reversesde}) converts a noisy sample $\rvx_t$ into a clean data sample $\rvx_0$, where $\bar{\rvw}_t$ denotes a standard Wiener process in reverse time. $p(\rvx_t)$ denotes the marginal distribution at time $t$. Its score function, i.e., $\nabla_{\rvx_t} \log p(\rvx_t)$, is usually intractable, so a neural network $s_{\theta}(\rvx_t,t)$ is trained using score-matching loss~\cite{dsm,ssm} to approximate this for all $t$. 
\begin{equation}\label{eqn:reversesde}
\mathrm{d} \rvx_t = \{ \rvf(\rvx_t, t) - g^2(t) \nabla_{\rvx_t} \log p(\rvx_t) \}\mathrm{d} t + g(t) \mathrm{d} \bar{\rvw}_t 
\end{equation}
For a given choice of $\rvf, g$, the PF ODE in Equation~(\ref{eqn:pfode}) describes a deterministic process where, an intermediate sample $\rvx_t$ generated by the ODE share the same marginal probability $p(\rvx_t)$ as that simulated by the forward SDE for all $t \in \left( 0,T \right]$.
\begin{equation}\label{eqn:pfode}
\mathrm{d} \rvx_t = \{ \rvf(\rvx_t, t) - \frac{1}{2}g^2(t) \nabla_{\rvx_t} \log p(\rvx_t) \}\mathrm{d} t
\end{equation}
To sample $\rvx_0 \sim p(\rvx_0)$, we first sample $\rvx_T \sim p(\rvx_T)$ and solve either the reverse SDE with SDE solvers or the PF ODE with ODE solvers using the learned score function $s_{\theta}(\rvx_t,t)$. Corresponding to the Variance Exploding (VE) SDE formulation from~\cite{songsde}, throughout this paper, we fix $\rvf(\rvx_t, t) = \mathbf{0}$, $g(t) = \sqrt{\mathrm{d} \sigma^2(t)/\mathrm{d} \rvt}$, where $\sigma(t)$ denotes the noise schedule for $t \in \left[ 0,T \right]$. We use $\sigma(t)$ and $\sigma_t$ interchangeably to denote the noise level at time $t$. With the above choice of $\rvf$, and $g$, the corresponding perturbation kernel is given by $p(\rvx_t|\rvx_0) = \mathcal{N}(\rvx_0, \sigma^2_t\mathbf{I})$, and $p(\rvx_T)\approx \mathcal{N}(0,\sigma^2_T\mathbf{I})$. Note that setting $\sigma(t) = t$ recovers the case of EDM preconditioning~\cite{edm}.

\section{Variational mode-seeking loss}\label{sec:vml}

{\color{red}{\textbf{Motivation.}}} Given a measurement $\rvy$, and an unconditional diffusion model that generates samples $\rvx_0 \sim p(\rvx_0)$, we are primarily interested in finding the MAP estimate, i.e.,  $\argmax_{\rvx_0} \log p(\rvx_0|\rvy)$. The main motivation for VML stems from the following observation. 
Starting the reverse diffusion process from a fixed noisy sample $\rvx_t$ at time $t$ results in a distribution over $\rvx_0$, i.e., $p(\rvx_0|\rvx_t)$. If we find an optimal $\rvx^{*}_t$ such that $p(\rvx_0|\rvx^{*}_t)$ shares modes, i.e., high-density regions, with the posterior $p(\rvx_0|\rvy)$, then, by starting the reverse diffusion process from $\rvx^{*}_t$ at time $t$, one may expect to generate a probable sample of the posterior. If we repeat the task of finding such $\rvx^{*}_t$ at each diffusion time step $t$, then, as $t \to 0$, $\rvx^{*}_t$ converges\footnote{Under mild assumptions on marginal pdfs, i.e., $p(\rvx_t)$, and their convergence to $p(\rvx_0)$. Note that $p(\rvx_t|\rvx_0)=\mathcal{N}(0,\sigma^2_t\mathbf{I})$} to the MAP estimate as explained in the rest of this section.

For a fixed value of $\rvx_t$, say $\rvx_t = \bm{\gamma}$, the behavior of $p(\rvx_0|\rvx_t=\bm\gamma)$ along various time steps of a diffusion process is shown in Figure~\ref{fig:1}. As $p(\rvx_0|\rvx_t=\bm\gamma) \propto p(\rvx_t=\bm\gamma|\rvx_0)p(\rvx_0)$, and with $p(\rvx_t|\rvx_0) = \mathcal{N}(\rvx_0, \sigma^2_t \mathbf{I})$, the distribution $p(\rvx_0|\rvx_t=\bm\gamma)$ is essentially proportional to the product of $p(\rvx_0)$ and a Gaussian kernel with variance $\sigma^2_t$, centered at $\bm\gamma$ (due to the symmetrical form of $p(\rvx_t|\rvx_0)$). 
Since $\sigma_t \to 0$ as $t \to 0$, the dependence of $p(\rvx_0|\rvx_t=\bm\gamma)$ on $\bm\gamma$ grows stronger as $t$ decreases, with $p(\rvx_0|\rvx_t=\bm\gamma)$ converging to the Dirac delta function $\delta(\rvx_0-\bm\gamma)$ as $t \to 0$, for any $\bm\gamma$.

Suppose, at each time step $t$, we find an optimal $\bm\gamma^{*}_t$ such that $p(\rvx_0|\rvx_t=\bm\gamma^{*}_t)$ shares modes i.e., high-density regions with $p(\rvx_0|\rvy)$. Note that for $t$ arbitrarily close to $0$, $p(\rvx_0|\rvx_t=\bm\gamma^{*}_t)$ is an extremely peaky distribution around $\bm\gamma^{*}_t$. For this distribution to share modes with the posterior $p(\rvx_0|\rvy)$, it is ideal for $\bm\gamma^{*}_t$ to be closer, and converging to the MAP estimate as $t \to 0$, since the MAP estimate is the highest posterior mode, i.e., the sample with the highest posterior probability density.

{\color{red}{\textbf{Formalism.}}} From a variational perspective, at a diffusion time step $t$, $p(\rvx_0|\rvx_t)$ is a parameterized distribution of $\rvx_t$. 
During each reverse diffusion time step $t$, we aim to find a specific distribution $p(\rvx_0|\rvx^{*}_t)$ from the class of parameterized distributions $\{p(\rvx_0|\rvx_t)\}_{\rvx_t}$ such that $p(\rvx_0|\rvx^{*}_t)$ shares modes i.e., high-density regions with the posterior. The functional form of $p(\rvx_0|\rvx_t)$ getting arbitrarily peaky as $t \to 0$ implies that the optimal $\rvx^{*}_t$ ideally converges to the MAP estimate as $t \to 0$. The reverse KL divergence is known to promote this mode-matching behavior of distributions, so we choose $\KL(p(\rvx_0|\rvx_t)||p(\rvx_0|\rvy))$ as the minimization objective at each time step, which we refer to as the variational mode-seeking loss (VML). In practice, however, finding the exact MAP estimate is extremely challenging, as the VML can be highly non-convex, rendering optimization approaches ineffective. Therefore, we settle instead for the posterior modes found by the VML optimizer in practice. 
\vspace{2pt}
\begin{prop}\label{prop:1}
The variational mode-seeking loss (VML) at diffusion time $t$, for a 
given degradation operator $\mathcal{A}$, measurement $\rvy$, and measurement noise variance $\sigma^2_{\rvy}$ is 
{\small{\begin{align*}
& \mathrm{VML}(\rvx_t,t) = \KL( p(\rvx_0|\rvx_t)||p(\rvx_0|\rvy)) = - \log p(\rvx_t) \\
&- \frac{\|\mathrm{D}(\rvx_t,t)-\rvx_t\|^2}{2\sigma^2_t} - \frac{1}{2\sigma^2_t} \mathrm{Tr}\left\{ \Cov[\rvx_0|\rvx_t] \right\} + \frac{1}{2\sigma^2_\rvy} \biggl(  -2 \rvy^{\top} \\
& \int_{\rvx_0} \mathcal{A}(\rvx_0) p(\rvx_0|\rvx_t) \mathrm{d} \rvx_0   + \int_{\rvx_0} \|\mathcal{A}(\rvx_0)\|^2 p(\rvx_0|\rvx_t) \mathrm{d} \rvx_0  \biggr) + \mathrm{C}
\end{align*}}} where $\mathrm{C}$ is a constant, independent of $\rvx_t$. $\mathrm{Tr}$ denotes the matrix trace, $\Cov$ denotes the covariance matrix, and $\mathrm{D}(\cdot,\cdot)$ denotes the true denoiser (see Appendix~\ref{ssec:tweediesformula}).
\end{prop}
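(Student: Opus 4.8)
The plan is to expand the divergence straight from its definition and track which terms depend on $\rvx_t$. I would start by writing $\KL(p(\rvx_0|\rvx_t)\|p(\rvx_0|\rvy)) = \int p(\rvx_0|\rvx_t)\log p(\rvx_0|\rvx_t)\,\mathrm{d}\rvx_0 - \int p(\rvx_0|\rvx_t)\log p(\rvx_0|\rvy)\,\mathrm{d}\rvx_0$ and applying Bayes' rule to \emph{both} conditionals, namely $p(\rvx_0|\rvx_t) = p(\rvx_t|\rvx_0)\,p(\rvx_0)/p(\rvx_t)$ and $p(\rvx_0|\rvy) = p(\rvy|\rvx_0)\,p(\rvx_0)/p(\rvy)$. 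After substitution, the term $\int p(\rvx_0|\rvx_t)\log p(\rvx_0)\,\mathrm{d}\rvx_0$ appears with a $+$ sign from the first integral and a $-$ sign from the second, so the (intractable) expected log-prior cancels exactly; similarly $\log p(\rvy)$ is constant in $\rvx_t$ and is absorbed into $\mathrm{C}$, while $-\log p(\rvx_t)$ survives untouched, yielding the first stated term.

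For the remaining piece of the first integral, I would substitute the perturbation kernel $p(\rvx_t|\rvx_0) = \mathcal{N}(\rvx_0,\sigma_t^2\mathbf{I})$, whose log contributes a constant $-\tfrac{n}{2}\log(2\pi\sigma_t^2)$ (into $\mathrm{C}$) and the quadratic $-\tfrac{1}{2\sigma_t^2}\E[\|\rvx_t-\rvx_0\|^2\,|\,\rvx_t]$. The key identity here is the bias--variance decomposition $\E[\|\rvx_t-\rvx_0\|^2\,|\,\rvx_t] = \|\rvx_t - \E[\rvx_0|\rvx_t]\|^2 + \Tr\{\Cov[\rvx_0|\rvx_t]\}$, combined with Tweedie's formula $\E[\rvx_0|\rvx_t] = \mathrm{D}(\rvx_t,t)$ (Appendix~\ref{ssec:tweediesformula}). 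This splits the quadratic exactly into $-\tfrac{1}{2\sigma_t^2}\|\mathrm{D}(\rvx_t,t)-\rvx_t\|^2$ and $-\tfrac{1}{2\sigma_t^2}\Tr\{\Cov[\rvx_0|\rvx_t]\}$, the second and third stated terms.

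For the cross-entropy integral, I would substitute the measurement likelihood $p(\rvy|\rvx_0) = \mathcal{N}(\mathcal{A}(\rvx_0),\sigma_\rvy^2\mathbf{I})$ and expand $\|\rvy-\mathcal{A}(\rvx_0)\|^2 = \|\rvy\|^2 - 2\rvy^{\top}\mathcal{A}(\rvx_0) + \|\mathcal{A}(\rvx_0)\|^2$. Pushing the expectation $\int(\cdot)\,p(\rvx_0|\rvx_t)\,\mathrm{d}\rvx_0$ through this sum (and noting $\E[\|\rvy\|^2|\rvx_t]=\|\rvy\|^2$ is constant, hence into $\mathrm{C}$) produces precisely the two integral terms $-2\rvy^{\top}\int\mathcal{A}(\rvx_0)p(\rvx_0|\rvx_t)\,\mathrm{d}\rvx_0$ and $\int\|\mathcal{A}(\rvx_0)\|^2p(\rvx_0|\rvx_t)\,\mathrm{d}\rvx_0$, both scaled by $\tfrac{1}{2\sigma_\rvy^2}$; the Gaussian normalizer $\tfrac{m}{2}\log(2\pi\sigma_\rvy^2)$ likewise goes into $\mathrm{C}$. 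Collecting every $\rvx_t$-independent quantity into $\mathrm{C}$ then reproduces the claimed expression, and note this derivation does not require $\mathcal{A}$ to be linear.

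I do not expect a genuine obstacle, since the computation is essentially careful bookkeeping, but the one step that carries the real content is the cancellation of the expected log-prior: this is what makes the loss tractable, as it removes the unknown data prior $p(\rvx_0)$ entirely, leaving only quantities expressible through the denoiser, its posterior covariance, and the forward operator $\mathcal{A}$. The second nontrivial move is recognizing that the entropy-side quadratic must be handled via the bias--variance split together with Tweedie's formula rather than evaluated directly, since $p(\rvx_0|\rvx_t)$ admits no closed form in general.
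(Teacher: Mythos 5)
Your proposal is correct and follows essentially the same route as the paper: Bayes' rule applied to both posteriors so the data prior $p(\rvx_0)$ cancels inside the log ratio, substitution of the Gaussian perturbation kernel and measurement likelihood, and the second-moment identity $\E[\|\rvx_t-\rvx_0\|^2\,|\,\rvx_t]=\|\rvx_t-\mathrm{D}(\rvx_t,t)\|^2+\Tr\{\Cov[\rvx_0|\rvx_t]\}$ (the paper packages this as Lemma~\ref{lemma:1} after expanding the square, which is the same bias--variance bookkeeping you do directly). The only cosmetic difference is that you fold $\|\rvy\|^2/(2\sigma_\rvy^2)$ explicitly into $\mathrm{C}$, which the paper leaves implicit.
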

\vspace{2pt}
\begin{prop}\label{prop:2}
The variational mode-seeking loss (VML) at diffusion time $t$, for a given \textbf{linear} degradation matrix $\mathrm{H}$, measurement $\rvy$, and measurement noise variance $\sigma^2_{\rvy}$ is 
{\small{
\begin{align*}
& \mathrm{VML}(\rvx_t,t) = \KL(p(\rvx_0|\rvx_t)||p(\rvx_0|\rvy)) = - \log p(\rvx_t) \\
& - \frac{\|\mathrm{D}(\rvx_t,t)-\rvx_t\|^2}{2\sigma^2_t} - \frac{1}{2\sigma^2_t} \mathrm{Tr}\left\{ \Cov[\rvx_0|\rvx_t] \right\} \\ 
&+  \underbrace{\frac{\|\rvy-\mathrm{H}\mathrm{D}(\rvx_t,t)\|^2}{2\sigma^2_\rvy}}_{\text{measurement consistency}} + \frac{1}{2\sigma^2_\rvy} \mathrm{Tr}\left\{ \mathrm{H}\Cov[\rvx_0|\rvx_t]\mathrm{H}^{\top} \right\} + \mathrm{C}
\end{align*}}} where $\mathrm{C}$ is a constant, independent of $\rvx_t$. $\mathrm{Tr}$ denotes the matrix trace, $\Cov$ denotes the covariance matrix, and $\mathrm{D}(\cdot,\cdot)$ denotes the true denoiser (see Appendix~\ref{ssec:tweediesformula}).
\end{prop}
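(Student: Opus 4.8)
The plan is to obtain Proposition~\ref{prop:2} as a direct specialization of Proposition~\ref{prop:1} to the linear case $\mathcal{A}(\rvx_0) = \mathrm{H}\rvx_0$, rather than re-deriving the KL divergence from scratch. Observe that the first three terms of the VML in Proposition~\ref{prop:1}, namely $-\log p(\rvx_t)$, $-\|\mathrm{D}(\rvx_t,t)-\rvx_t\|^2/(2\sigma^2_t)$, and $-\mathrm{Tr}\{\Cov[\rvx_0|\rvx_t]\}/(2\sigma^2_t)$, contain no dependence on the degradation operator and therefore carry over verbatim. Consequently the entire task reduces to re-expressing the two measurement-dependent integrals in Proposition~\ref{prop:1} under the substitution $\mathcal{A}(\rvx_0)=\mathrm{H}\rvx_0$.

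First I would evaluate the linear cross term. By linearity of integration, $\int_{\rvx_0} \mathrm{H}\rvx_0\, p(\rvx_0|\rvx_t)\,\mathrm{d}\rvx_0 = \mathrm{H}\,\E[\rvx_0|\rvx_t]$, and by Tweedie's formula (Appendix~\ref{ssec:tweediesformula}) the posterior mean coincides with the denoiser, $\E[\rvx_0|\rvx_t] = \mathrm{D}(\rvx_t,t)$, so this contributes $-2\rvy^{\top}\mathrm{H}\,\mathrm{D}(\rvx_t,t)/(2\sigma^2_\rvy)$. Next I would treat the quadratic term $\int_{\rvx_0} \|\mathrm{H}\rvx_0\|^2 p(\rvx_0|\rvx_t)\,\mathrm{d}\rvx_0 = \E[\rvx_0^{\top}\mathrm{H}^{\top}\mathrm{H}\rvx_0 \mid \rvx_t]$ via the standard identity for the expectation of a quadratic form, $\E[\rvx_0^{\top}M\rvx_0 \mid \rvx_t] = \E[\rvx_0|\rvx_t]^{\top}M\,\E[\rvx_0|\rvx_t] + \mathrm{Tr}(M\,\Cov[\rvx_0|\rvx_t])$ with $M = \mathrm{H}^{\top}\mathrm{H}$. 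Combined with the cyclic property of the trace, $\mathrm{Tr}(\mathrm{H}^{\top}\mathrm{H}\,\Cov[\rvx_0|\rvx_t]) = \mathrm{Tr}(\mathrm{H}\,\Cov[\rvx_0|\rvx_t]\,\mathrm{H}^{\top})$, this integral equals $\|\mathrm{H}\,\mathrm{D}(\rvx_t,t)\|^2 + \mathrm{Tr}(\mathrm{H}\,\Cov[\rvx_0|\rvx_t]\,\mathrm{H}^{\top})$.

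Finally I would complete the square. Collecting the two evaluated integrals yields $(2\sigma^2_\rvy)^{-1}\big(-2\rvy^{\top}\mathrm{H}\,\mathrm{D}(\rvx_t,t) + \|\mathrm{H}\,\mathrm{D}(\rvx_t,t)\|^2 + \mathrm{Tr}(\mathrm{H}\,\Cov[\rvx_0|\rvx_t]\,\mathrm{H}^{\top})\big)$. Since $\|\rvy - \mathrm{H}\,\mathrm{D}(\rvx_t,t)\|^2 = \|\rvy\|^2 - 2\rvy^{\top}\mathrm{H}\,\mathrm{D}(\rvx_t,t) + \|\mathrm{H}\,\mathrm{D}(\rvx_t,t)\|^2$, the cross and self terms fuse into the measurement-consistency term $\|\rvy-\mathrm{H}\,\mathrm{D}(\rvx_t,t)\|^2/(2\sigma^2_\rvy)$, at the cost of the additive constant $-\|\rvy\|^2/(2\sigma^2_\rvy)$, which does not depend on $\rvx_t$ and is therefore folded into $\mathrm{C}$. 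This reproduces precisely the expression claimed in Proposition~\ref{prop:2}.

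As this is fundamentally a substitution argument, there is no deep obstacle. The only points demanding care are the bookkeeping of which quantities are constant in $\rvx_t$ (so that $\|\rvy\|^2/(2\sigma^2_\rvy)$ is correctly absorbed into $\mathrm{C}$) and the correct application of the trace identity $\mathrm{Tr}(\mathrm{H}^{\top}\mathrm{H}\,\Sigma)=\mathrm{Tr}(\mathrm{H}\,\Sigma\,\mathrm{H}^{\top})$ together with the quadratic-form expectation formula; the substantive content — the general KL expansion and the appeal to Tweedie's formula — resides in Proposition~\ref{prop:1}, which I treat as established.
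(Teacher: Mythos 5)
Your proposal is correct and follows essentially the same route as the paper: substitute $\mathcal{A}(\rvx_0)=\mathrm{H}\rvx_0$ into Proposition~\ref{prop:1}, evaluate the linear integral via the posterior mean $\E[\rvx_0|\rvx_t]=\mathrm{D}(\rvx_t,t)$, evaluate the quadratic integral via the second-moment decomposition (your quadratic-form identity with $M=\mathrm{H}^{\top}\mathrm{H}$ plus trace cyclicity is exactly the content of the paper's Lemma~\ref{lemma:2}), and complete the square, absorbing the $\rvx_t$-independent $\|\rvy\|^2$ term into $\mathrm{C}$. No gaps.
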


Proofs are provided in Appendix~\ref{ssec:proofs}. In the case of a linear degradation operator, VML can be derived in a closed form as shown above, which consists of a measurement consistency term and several prior terms. Proposition~\ref{prop:4} shows that under mild assumptions, the sequence of functions given by $\mathrm{VML}(\cdot,t)$ converges to the MAP objective, i.e., the negative log posterior $-\log p(\rvx_0|\rvy)$ as $t \to 0$. 

\begin{figure*}[t!]
    \centering
    \begin{subfigure}{.22\textwidth}
        \includegraphics[width=\textwidth]{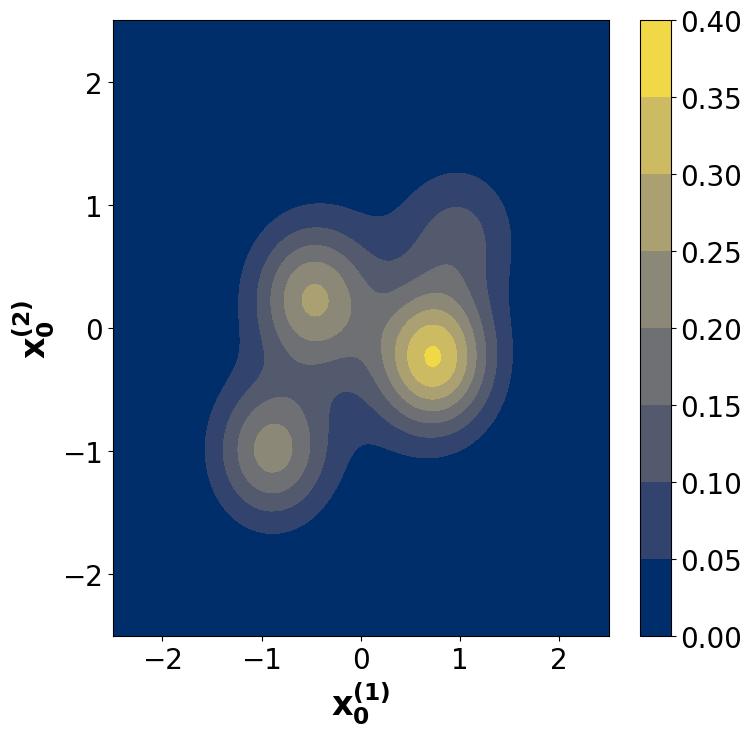}
    \end{subfigure}\hfill%
    \begin{subfigure}{.22\textwidth}
        \includegraphics[width=\textwidth]{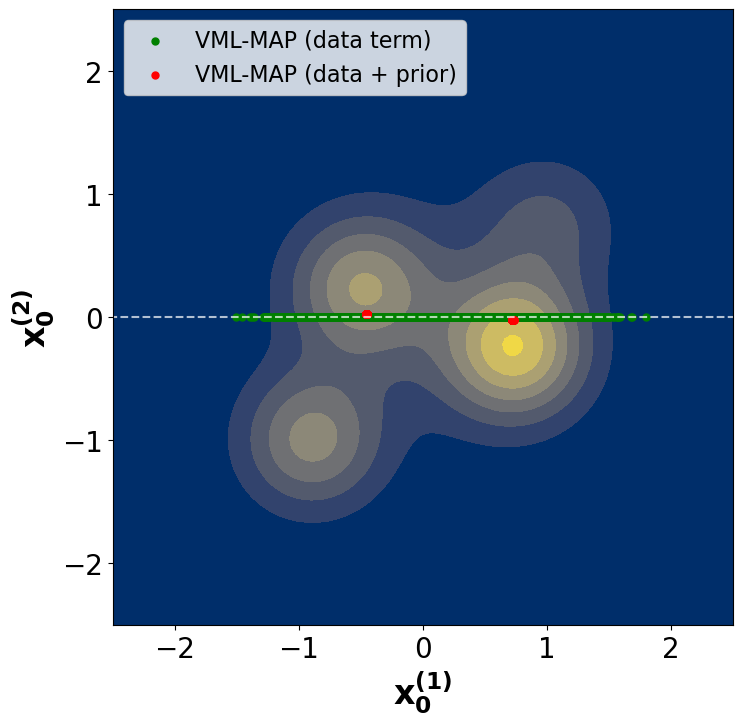}
    \end{subfigure}\hfill%
    \begin{subfigure}{.22\textwidth}
        \includegraphics[width=\textwidth]{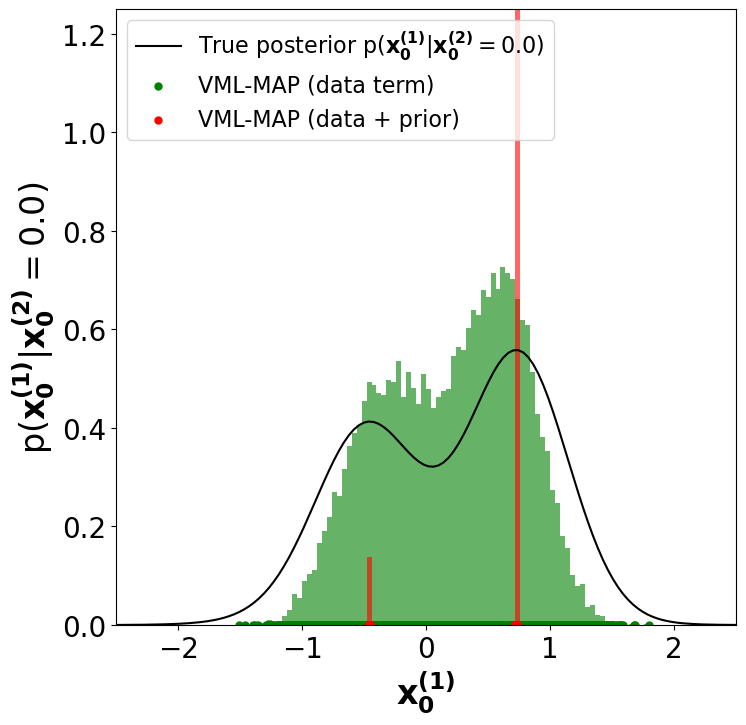}
    \end{subfigure}\hfill%
    \begin{subfigure}{.22\textwidth}
        \includegraphics[width=\textwidth]{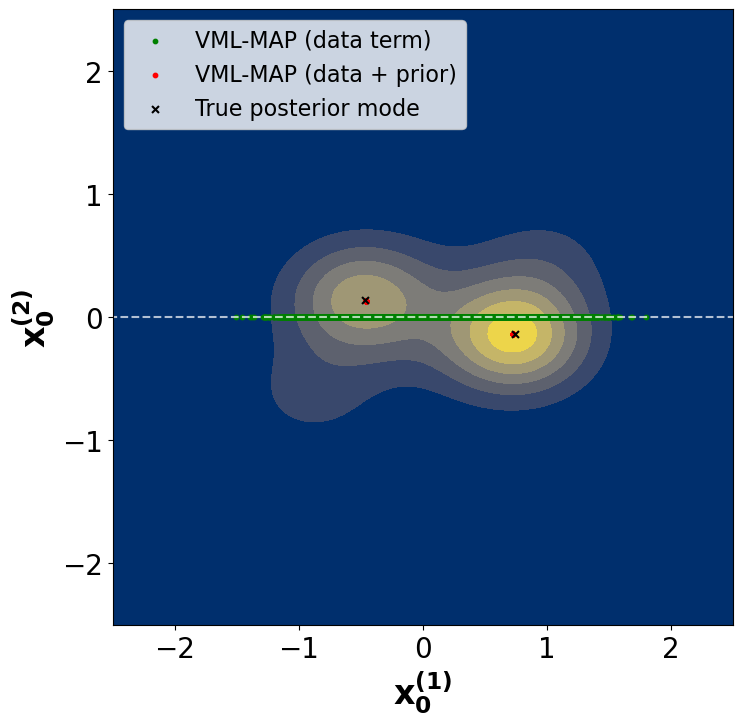}
    \end{subfigure}\\
    \begin{subfigure}{.22\textwidth}
        \includegraphics[width=\textwidth]{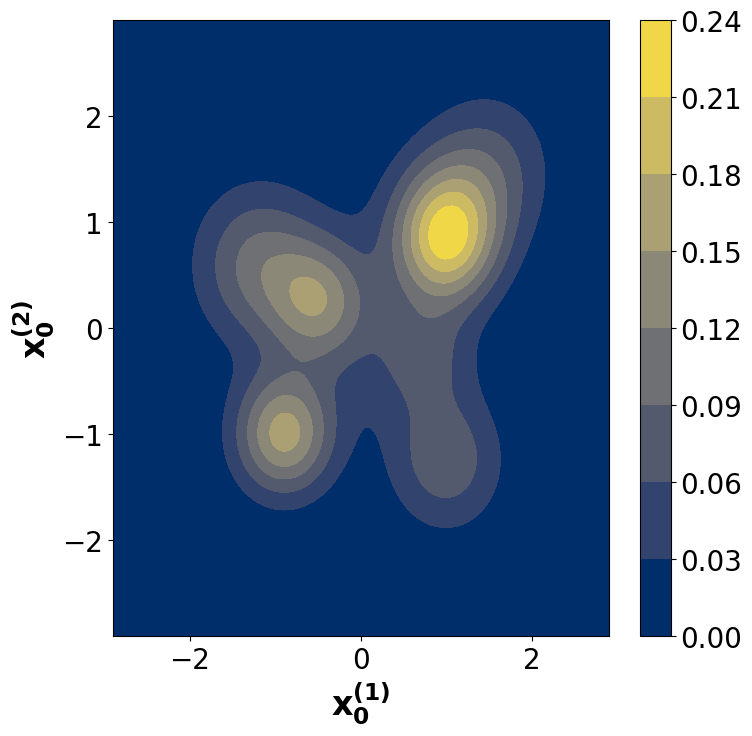}
        \caption{A 2D Gaussian mixture model based input data distribution i.e., $p(\rvx_0)$.}
        \label{fig:gmminput}
    \end{subfigure}\hfill%
    \begin{subfigure}{.22\textwidth}
        \includegraphics[width=\textwidth]{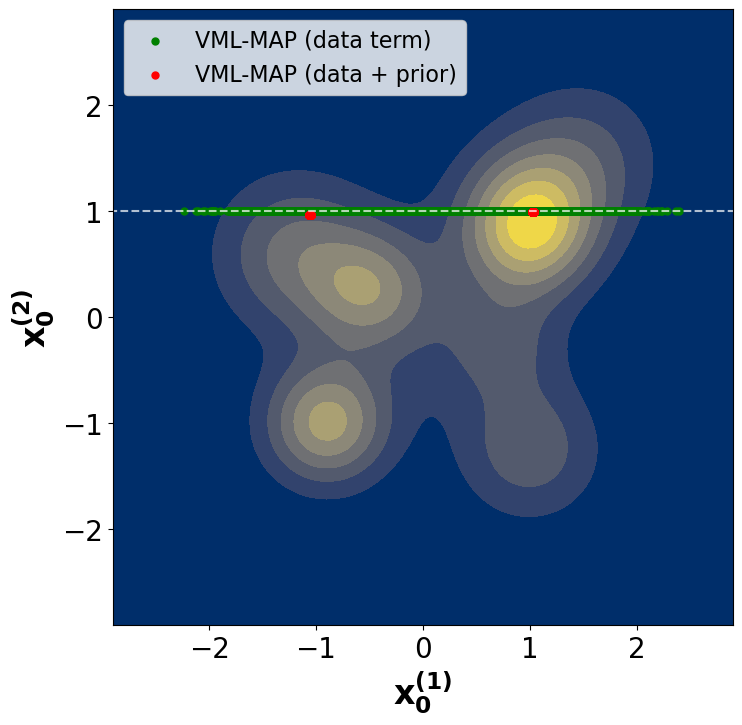}
        \caption{VML-MAP samples for noiseless inpainting i.e., $\rvy = \rvx^{(2)}_0$. Dashed line shows $\rvy$.}
        \label{fig:noiselessinpainting}
    \end{subfigure}\hfill%
    \begin{subfigure}{.22\textwidth}
        \includegraphics[width=\textwidth]{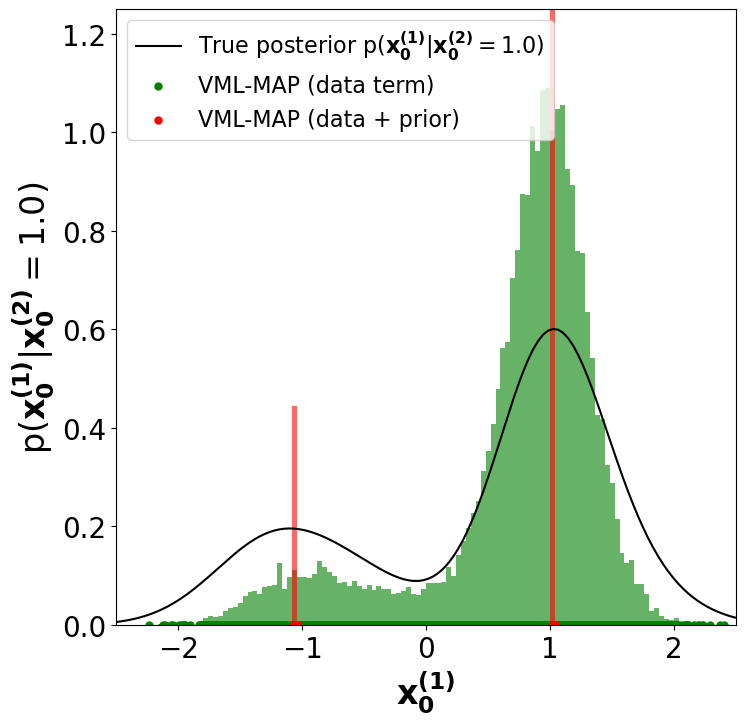}
        \caption{VML-MAP generated sample distributions and the true posterior from Figure~\ref{fig:noiselessinpainting}.}
        \label{fig:noiselessinpaintingdistribution}
    \end{subfigure}\hfill%
    \begin{subfigure}{.22\textwidth}
        \includegraphics[width=\textwidth]{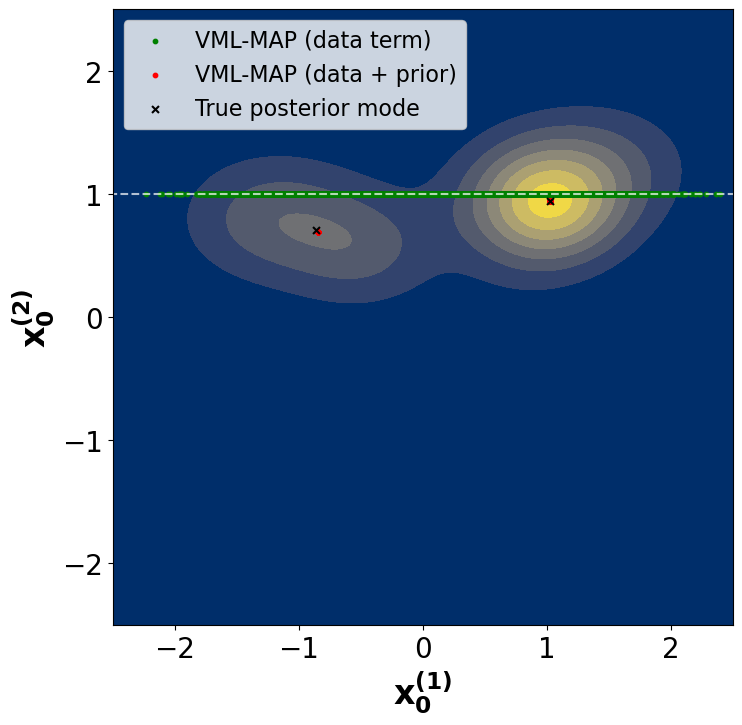}
        \caption{VML-MAP samples for noisy inpainting. The heatmap shows the true posterior.}
        \label{fig:noisyinpainting}
    \end{subfigure}
        \caption{Each row depicts a different example. In each example, the input data distribution $p(\rvx_0)$ follows a 2D Gaussian mixture model, and we evaluate VML-MAP using Algorithm~\ref{algo:vmlmap} on noiseless and noisy inpainting tasks. In the figures above, we generated 20000 samples each for VML-MAP (data term) and VML-MAP (data + prior).}
    \label{fig:toyexample}
\end{figure*}

Note that the higher-order terms of VML involving $\Cov[\rvx_0|\rvx_t]$ are computationally demanding, especially when VML has to be differentiable, for instance, to use gradient-based optimization for minimization. However, Proposition~\ref{prop:3} shows that these higher-order terms of $\mathrm{VML}(\cdot,t)$ converge uniformly to the zero function as $t \to 0$, which implies that the simplified-VML objective $\mathrm{VML}_{\mathrm{S}}(\cdot,t)$, that drops these higher-order terms as shown in Equation~\ref{eqn:svml}, also converges to the MAP objective as $t \to 0$ (see Corollary~\ref{corollary:1}). Given the above and the fact that no practical optimizer can guarantee finding the global minimizer of VML at each diffusion time step, we remark that these higher-order terms can be safely ignored in practice. See Remarks~\ref{remark:1},~\ref{remark:2}, and~\ref{remark:3} for additional details.
\begin{align}\label{eqn:svml}
\resizebox{0.9\columnwidth}{!}{%
 $\mathrm{VML}_{\mathrm{S}}(\rvx_t,t) = \frac{\| \rvy - \mathrm{H}\mathrm{D}(\rvx_t,t) \|^2}{2\sigma^2_\rvy} -\log p(\rvx_t) - \frac{\| \mathrm{D}(\rvx_t,t) - \rvx_t\|^2 }{2\sigma^2_t} + \mathrm{C}$}
\end{align}
\begin{equation}
\resizebox{.7\columnwidth}{!}{%
$\begin{aligned}\label{eqn:svmlrho}
\mathrm{VML}^{\rho}_{\mathrm{S}}(\rvx_t,t) =  & \frac{\| \rvy - \mathrm{H}\mathrm{D}(\rvx_t,t) \|^2}{2\sigma^2_\rvy} -\rho_t \, \log p(\rvx_t) \\ 
& \,\,\,\,\,\,\,\,\,\,\,\,\,\,\,\, - \rho_t \, \frac{\| \mathrm{D}(\rvx_t,t) - \rvx_t\|^2 }{2\sigma^2_t} + \mathrm{C}
\end{aligned}$ }
\end{equation}
\begin{equation}
\resizebox{.8\columnwidth}{!}{%
$\begin{aligned}\label{eqn:svmlrhograd}
\nabla_{\rvx_t} \mathrm{VML}^{\rho}_{\mathrm{S}}(\rvx_t,t) = &   \underbrace{ - \frac{\partial \mathrm{D}^{\top} (\rvx_t,t)}{\partial \rvx_t} \frac{\mathrm{H}^{\top}(\rvy-\mathrm{H}\mathrm{D}(\rvx_t,t))}{\sigma^2_\rvy} }_{\textit{measurement consistency gradient}} \\ 
 & \,\,\,\,\,\,\,\,\,\,\,\,\,\,\,\,\,\, - \rho_t \, \underbrace{\frac{\partial \mathrm{D}^{\top}(\rvx_t,t)}{\partial \rvx_t} \frac{(\mathrm{D}(\rvx_t,t)-\rvx_t)}{\sigma^2_t} }_{\textit{prior gradient}}
\end{aligned}$ }
\end{equation}
Furthermore, from Corollary~\ref{corollary:2}, reweighting the prior terms of $\mathrm{VML}_{\mathrm{S}}(\cdot,\cdot)$ and $\mathrm{VML}(\cdot,\cdot)$ with a prior weight schedule $\rho(\cdot)$ also leads to both these objectives converging to the MAP objective, if $\rho(t) \to 1$ as $t \to 0$. Note that we use the notations $\rho(t)$ and $\rho_t$ interchangeably. In this regard, Equation~\ref{eqn:svmlrho} shows the reweighted simplified-VML objective $\mathrm{VML}^{\rho}_{\mathrm{S}}(\cdot, \cdot)$, and Equation~\ref{eqn:svmlrhograd} shows its gradient. Note that this gradient computation only requires a single vector-Jacobian product, which modern Automatic differentiation frameworks handle efficiently in practice. Hence, we only consider the reweighted simplified-VML objective, i.e., $\mathrm{VML}^{\rho}_{\mathrm{S}}(\cdot,\cdot)$ (where $\rho_t \to 1$ as $t \to 0$) in the remainder of this paper and in our practical implementation.

{\color{red}{\textbf{VML-MAP.}}} We propose a practical implementation of the reweighted simplified-VML minimization using a gradient-descent optimizer as shown in Algorithm~\ref{algo:vmlmap}, which we refer to as VML-MAP. The inputs to Algorithm~\ref{algo:vmlmap} consists of the learned diffusion denoiser $\mathrm{D}_{\theta}(\cdot,\cdot)$, the linear degradation matrix $\mathrm{H}$, the measurement $\rvy$ with noise variance $\sigma^2_{\rvy}$, the diffusion noise schedule $\sigma(\cdot)$, the total number of reverse diffusion steps $N$ with the discretized time step schedule specified by $t_{i\in\{0,\dots N\}}$, where $t_0 = 0$, the gradient descent iterations per step given by $K$, the learning rate $\gamma$, and the prior weight schedule $\rho(\cdot)$. Note that we use the notations $\sigma(t)$ and $\sigma_t$ interchangeably.
\begin{algorithm}[h!]
\caption{\textbf{VML-MAP} / $\textbf{VML-MAP}_{pre}$}
\label{algo:vmlmap}
\begin{algorithmic}[1]
\STATE {\bfseries Input:} $\mathrm{D}_{\theta}(\cdot,\cdot)$, $\mathrm{H}$, $\mathbf{y}$, $\sigma_{\mathbf{y}}$, $\sigma(\cdot)$, $\{t_i\}_{i=0}^N$, $K$, $\gamma$, $\rho(\cdot)$
\STATE {\bfseries Output:} $\mathbf{x}_{t_0}$ 
\STATE {\bfseries Initialize} $\mathbf{x}_{t_N} \sim \mathcal{N}(\mathbf{0}, \sigma_{t_N}^2 \mathbf{I})$
\FOR{$i = N$ \textbf{down to} $1$}
    \FOR{$j = 1$ \textbf{to} $K$}
        \IF{\textbf{VML-MAP}}
            \STATE $\mathbf{x}_{t_i} \gets \mathbf{x}_{t_i}
            - \gamma \, \nabla_{\mathbf{x}_{t_i}} \mathrm{VML}^{\rho}_{\mathrm{S}}(\rvx_t,t)$ \COMMENTNOBRACES{Eq.~(\ref{eqn:svmlrhograd})}
        \ELSIF{\textbf{VML-MAP}$_{pre}$}
            \STATE $\mathbf{x}_{t_i} \gets \mathbf{x}_{t_i}
            - \gamma \, \nabla_{\mathbf{x}_{t_i}} \mathrm{VML}^{\rho}_{\mathrm{S}_{\text{pre}}}(\rvx_t,t)$ \COMMENTNOBRACES{Eq.~(\ref{eqn:preconditionedrhogradient})}
        \ENDIF
    \ENDFOR
    \STATE $\mathbf{x}_{t_{i-1}} \sim
    \mathcal{N}\left(\mathrm{D}_{\theta}(\mathbf{x}_{t_i}, t_i),
    \sigma_{t_{i-1}}^2 \mathbf{I}\right)$
\ENDFOR
\STATE {\bfseries Return} $\mathbf{x}_{t_0}$
\end{algorithmic}
\end{algorithm}

{\color{red}{\textbf{Validation check.}}} To empirically validate the reweighted simplified-VML minimization leading to the MAP estimate (modes in practice), we apply VML-MAP (Algorithm~\ref{algo:vmlmap}) to noiseless and noisy inpainting tasks, where the input data distribution follows a 2D (i.e., $\rvx_0 \in \mathbb{R}^2$) Gaussian mixture model (GMM). Two examples, each following a different 2D GMM input distribution, are shown in Figure~\ref{fig:gmminput}. 

In Figure~\ref{fig:noiselessinpainting}, we consider noiseless inpainting, where the observation corresponds to the second coordinate, i.e., $\rvy = \rvx^{(2)}_0$, and the task is to infer the full sample $\rvx_0$. For each of the two corresponding examples, Figure~\ref{fig:noiselessinpainting} shows a white dashed line, denoting the observed value of $\rvy$ (or equivalently $\rvx^{(2)}_0$ in this specific case), and also samples generated by VML-MAP using ({\romannumeral 1}) only the data/measurement consistency term ({\romannumeral 2}) both the data and prior terms, i.e., the full $\mathrm{VML}^{\rho}_{\mathrm{S}}(\cdot,\cdot)$. The empirical distributions of these generated samples are depicted and compared against the true posterior $p(\rvx_0|\rvy)$ (or equivalently $p(\rvx^{(1)}_0|\rvx^{(2)}_0)$ in this specific case) in Figure~\ref{fig:noiselessinpaintingdistribution}, which shows the samples of VML-MAP (data + prior) concentrating around the posterior modes. 

In Figure~\ref{fig:noisyinpainting}, we consider the case of noisy inpainting, where the observation $\rvy = \rvx^{(2)}_0 + \eta$, and $\eta \sim \mathcal{N}(0,\sigma^2_{\rvy} \mathbf{I})$, with $\sigma_{\rvy} = 0.5$. Figure~\ref{fig:noisyinpainting} shows a white dashed line denoting the observed value of $\rvy$, the heatmap of the true posterior $p(\rvx_0|\rvy)$, and the samples generated by VML-MAP (data term) and VML-MAP (data + prior). The figure again reveals that the samples of VML-MAP (data + prior) concentrate around the posterior modes, which is the intended behavior, since no optimizer in practice can always guarantee finding the global minimizer of VML at each diffusion time step. Further details regarding the experimental setup and implementation are provided in Appendix~\ref{ssec:toysetup}.


{\color{red}{\textbf{Preconditioner.}}} Note that VML-MAP uses a simple gradient-descent optimizer for computational efficiency, which can struggle especially when dealing with ill-conditioned loss objectives. In this regard, a preconditioner can help accelerate convergence and thereby improve the effectiveness of the optimizer.
{{\begin{equation}\label{eqn:preconditionerM}
\resizebox{.45\columnwidth}{!}{%
$\mathrm{M} = (\mathbf{I} - \Sigma^{+}\Sigma) + \mathrm{H}^{\top}\mathrm{H}$ }
\end{equation}}}
\vspace{-.3cm}
{{\begin{equation}\label{eqn:preconditioner}
\resizebox{.68\columnwidth}{!}{%
$\mathrm{P} = \left( \frac{\partial \mathrm{D}^{\top}(\rvx_t,t)}{\partial \rvx_t} \right) \mathrm{M^{-1}} \left( \frac{\partial \mathrm{D}^{\top}(\rvx_t,t)}{\partial \rvx_t} \right)^{\mathrm{-1}}$ }
\end{equation}}}
\begin{equation}
\resizebox{.85\columnwidth}{!}{%
${{\begin{aligned}\label{eqn:preconditionedrhogradient}
\nabla_{\rvx_t}\mathrm{VML}^{\rho}_{\mathrm{S}_{pre}}(\rvx_t,t) = & -\frac{\partial \mathrm{D}^{\top}(\rvx_t,t)}{\partial \rvx_t} \mathrm{M^{-1}} \frac{\mathrm{H}^{\top}(\rvy-\mathrm{H}\mathrm{D}(\rvx_t,t))}{\sigma^2_\rvy} \\ & \,\,\,\,\,\,\,\,\,\,\,\,\,\,\,\, - \rho_t \, \frac{\partial \mathrm{D}^{\top}(\rvx_t,t)}{\partial \rvx_t} \mathrm{M^{-1}}\frac{(\mathrm{D}(\rvx_t,t)-\rvx_t)}{\sigma^2_t}
\end{aligned}}}$ }
\end{equation}
Assuming a linear degradation matrix $\mathrm{H}$ with the singular value decomposition (SVD) given by $\mathrm{H}=\mathrm{U}\Sigma \mathrm{V}^{\top}$, where $\mathrm{U}$,$\mathrm{V}$ denote the left and right singular orthogonal matrices respectively, with $\Sigma$ denoting the singular values matrix, we use the preconditioner $\mathrm{P}$ in Equation~(\ref{eqn:preconditioner}) to essentially replace the gradient $\nabla_{\rvx_t}\mathrm{VML}^{\rho}_{\mathrm{S}}(\cdot,\cdot)$ in VML-MAP with the preconditioned gradient $\mathrm{P}\nabla_{\rvx_t}\mathrm{VML}^{\rho}_{\mathrm{S}}(\cdot,\cdot)$. We refer to this method as $\text{VML-MAP}_{pre}$, also presented in Algorithm~\ref{algo:vmlmap}. Note that $\Sigma^{+}$ denotes the pseudoinverse of $\Sigma$. 

Equation~(\ref{eqn:preconditionedrhogradient}) further expands this preconditioned gradient $\mathrm{P}\nabla_{\rvx_t}\mathrm{VML}^{\rho}_{\mathrm{S}}(\cdot,\cdot)$ that we denote with $\nabla_{\rvx_t}\mathrm{VML}^{\rho}_{\mathrm{S}_{pre}}(\cdot, \cdot)$. Note that {\small{$\frac{\partial \mathrm{D}^{\top}(\rvx_t,t)}{\partial \rvx_t}$}} {\small{$= \sigma^2_t\Cov[\rvx_0|\rvx_t]$}} (see Appendix~\ref{ssec:covarianceformula}). Positive-definiteness of {\small{$\Cov[\rvx_0|\rvx_t] $}}, implies the positive definiteness and hence the invertibility of {\small{$\frac{\partial \mathrm{D}^{\top}(\rvx_t,t)}{\partial \rvx_t}$}}, which further implies the invertibility of $\mathrm{P}$. 

Note that the non-preconditioned variant (VML-MAP) does not require any SVD and is a viable option in most cases. Our proposed preconditioner applies to linear operators with affordable SVD and may be infeasible in other cases where SVD computation remains a bottleneck. However, note that preconditioners are not unique, and that different designs based on the operator and the availability of SVD are possible. Our proposed preconditioner is only one such design.

\section{VML-MAP for Image restoration}\label{sec:image-restoration}

\begin{table*}[ht!]
\caption{Noiseless image restoration: Experiments on Half-mask inpainting, $4\times$ Super-resolution, and Uniform deblurring on $100$ validation images of ImageNet$256$ and $100$ images of FFHQ$256$ are repeated across 4 different seeds. The mean and standard deviation across these runs are reported. Best values in \textbf{bold}, second best values \underline{underlined}.}
\label{tab:exp1}
\begin{center}
\resizebox{0.7\textwidth}{!}{
\begin{tabular}{ l l c c c c c c } 
\toprule
\multicolumn{1}{c}{} & \multicolumn{1}{c}{} & \multicolumn{6}{c}{} \\[-7pt]
 \multicolumn{1}{c}{{Dataset}}  & \multicolumn{1}{l}{{Method}} & \multicolumn{2}{c}{{Inpainting}} & \multicolumn{2}{c}{{$4\times${ Super-res}}}  & \multicolumn{2}{c}{{Deblurring}} \\[3pt]
 \multicolumn{1}{c}{} & \multicolumn{1}{c}{} & \multicolumn{1}{c}{\scriptsize{LPIPS$\downarrow$}} & \multicolumn{1}{c}{\scriptsize{FID$\downarrow$}} & \multicolumn{1}{c}{\scriptsize{LPIPS$\downarrow$}} & \multicolumn{1}{c}{\scriptsize{FID$\downarrow$}} & \multicolumn{1}{c}{\scriptsize{LPIPS$\downarrow$}} & \multicolumn{1}{c}{\scriptsize{FID$\downarrow$}} \\[3pt]
\toprule 
 \multicolumn{1}{c}{} & \multicolumn{1}{c}{} & \multicolumn{6}{c}{}\\[-7pt]
 \multicolumn{1}{c}{} & \multicolumn{1}{l}{{\scriptsize{$\text{DDRM}$}}}  & 
 \multicolumn{1}{c}{\scriptsize{$0.393$}\tiny{\color{lightgray}{$\pm0.001$}}} & \multicolumn{1}{c}{\scriptsize{$103.5$}\tiny{\color{lightgray}{$\pm0.657$}}}  & \multicolumn{1}{c}{\scriptsize{$0.289$}\tiny{\color{lightgray}{$\pm0.000$}}} & \multicolumn{1}{c}{\scriptsize{$89.40$}\tiny{\color{lightgray}{$\pm0.962$}}} & \multicolumn{1}{c}{\scriptsize{$0.618$}\tiny{\color{lightgray}{$\pm0.000$}}} & \multicolumn{1}{c}{\scriptsize{$233.4$}\tiny{\color{lightgray}{$\pm2.033$}}}\\
 \multicolumn{1}{c}{} & \multicolumn{1}{l}{{\scriptsize{$\Pi\text{GDM}$}}}  & \multicolumn{1}{c}{\scriptsize{$0.373$}\tiny{\color{lightgray}{$\pm0.000$}}} & \multicolumn{1}{c}{\scriptsize{$103.7$}\tiny{\color{lightgray}{$\pm0.694$}}}  & \multicolumn{1}{c}{\scriptsize{$0.292$}\tiny{\color{lightgray}{$\pm0.001$}}} & \multicolumn{1}{c}{\scriptsize{$83.88$}\tiny{\color{lightgray}{$\pm0.226$}}} & \multicolumn{1}{c}{\scriptsize{$0.562$}\tiny{\color{lightgray}{$\pm0.000$}}} & \multicolumn{1}{c}{\scriptsize{$231.8$}\tiny{\color{lightgray}{$\pm0.631$}}}\\ 
 \multicolumn{1}{c}{} & \multicolumn{1}{l}{{\scriptsize{$\text{MAPGA}$}}}  & \multicolumn{1}{c}{\scriptsize{$\underline{0.290}$}\tiny{\color{lightgray}{$\pm0.001$}}} & \multicolumn{1}{c}{\scriptsize{$\underline{80.76}$}\tiny{\color{lightgray}{$\pm2.013$}}}  & \multicolumn{1}{c}{\scriptsize{$0.273$}\tiny{\color{lightgray}{$\pm0.001$}}} & \multicolumn{1}{c}{\scriptsize{$76.12$}\tiny{\color{lightgray}{$\pm0.587$}}} & \multicolumn{1}{c}{\scriptsize{$\underline{0.459}$}\tiny{\color{lightgray}{$\pm0.002$}}} & \multicolumn{1}{c}{\scriptsize{$\underline{194.9}$}\tiny{\color{lightgray}{$\pm1.548$}}}\\
 \multicolumn{1}{c}{\scriptsize{ImageNet}} & \multicolumn{1}{l}{{\scriptsize{$\text{DAPS-1K}$}}}  & \multicolumn{1}{c}{\scriptsize{$0.384$}\tiny{\color{lightgray}{$\pm0.001$}}} & \multicolumn{1}{c}{\scriptsize{$98.45$}\tiny{\color{lightgray}{$\pm1.879$}}}  & \multicolumn{1}{c}{\scriptsize{$0.254$}\tiny{\color{lightgray}{$\pm0.001$}}} & \multicolumn{1}{c}{\scriptsize{$71.60$}\tiny{\color{lightgray}{$\pm1.010$}}} & \multicolumn{1}{c}{\scriptsize{$0.605$}\tiny{\color{lightgray}{$\pm0.001$}}} & \multicolumn{1}{c}{\scriptsize{$217.8$}\tiny{\color{lightgray}{$\pm3.042$}}}\\
  \multicolumn{1}{c}{} & \multicolumn{1}{l}{{\scriptsize{$\text{DAPS-4K}$}}}  & \multicolumn{1}{c}{\scriptsize{$0.365$}\tiny{\color{lightgray}{$\pm0.004$}}} & \multicolumn{1}{c}{\scriptsize{$93.62$}\tiny{\color{lightgray}{$\pm0.856$}}}  & \multicolumn{1}{c}{\scriptsize{$0.244$}\tiny{\color{lightgray}{$\pm0.000$}}} & \multicolumn{1}{c}{\scriptsize{$69.71$}\tiny{\color{lightgray}{$\pm1.554$}}} & \multicolumn{1}{c}{\scriptsize{$0.593$}\tiny{\color{lightgray}{$\pm0.001$}}} & \multicolumn{1}{c}{\scriptsize{$227.9$}\tiny{\color{lightgray}{$\pm2.991$}}}\\[2pt]
 \cline{2-8}
 \multicolumn{1}{c}{} & \multicolumn{1}{c}{} & \multicolumn{6}{c}{}\\[-7pt]
 \multicolumn{1}{l}{} & \multicolumn{1}{l}{\bf{\scriptsize{$\text{VML-MAP}$}}}  & \multicolumn{1}{c}{\scriptsize{$\bf{0.262}$}\tiny{\color{lightgray}{$\pm0.002$}}} & \multicolumn{1}{c}{\scriptsize{$\bf{74.21}$}\tiny{\color{lightgray}{$\pm3.209$}}} & \multicolumn{1}{c}{\scriptsize{$\bf{0.194}$}\tiny{\color{lightgray}{$\pm0.001$}}} & \multicolumn{1}{c}{\scriptsize{$\underline{60.08}$}\tiny{\color{lightgray}{$\pm0.687$}}} & \multicolumn{1}{c}{\scriptsize{$0.509$}\tiny{\color{lightgray}{$\pm0.003$}}} & \multicolumn{1}{c}{\scriptsize{$200.4$}\tiny{\color{lightgray}{$\pm2.812$}}}\\
 \multicolumn{1}{l}{} & \multicolumn{1}{l}{\bf{\scriptsize{$\text{VML-MAP}_{pre}$}}}  & \multicolumn{1}{c}{\scriptsize{$\bf{0.262}$}\tiny{\color{lightgray}{$\pm0.002$}}} & \multicolumn{1}{c}{\scriptsize{$\bf{74.21}$}\tiny{\color{lightgray}{$\pm3.209$}}} & \multicolumn{1}{c}{\scriptsize{$\underline{0.196}$}\tiny{\color{lightgray}{$\pm0.003$}}} & \multicolumn{1}{c}{\scriptsize{$\bf{58.60}$}\tiny{\color{lightgray}{$\pm2.076$}}} & \multicolumn{1}{c}{\scriptsize{$\bf{0.367}$}\tiny{\color{lightgray}{$\pm0.002$}}} & \multicolumn{1}{c}{\scriptsize{$\bf{165.2}$}\tiny{\color{lightgray}{$\pm2.037$}}}\\[2pt]
 \midrule
 \multicolumn{1}{c}{} & \multicolumn{1}{c}{} & \multicolumn{6}{c}{}\\[-7pt]
 \multicolumn{1}{c}{} & \multicolumn{1}{l}{{\scriptsize{$\text{DDRM}$}}}  & 
 \multicolumn{1}{c}{\scriptsize{$0.246$}\tiny{\color{lightgray}{$\pm0.001$}}} & \multicolumn{1}{c}{\scriptsize{$71.23$}\tiny{\color{lightgray}{$\pm0.521$}}}  & \multicolumn{1}{c}{\scriptsize{$0.154$}\tiny{\color{lightgray}{$\pm0.000$}}} & \multicolumn{1}{c}{\scriptsize{$70.07$}\tiny{\color{lightgray}{$\pm0.495$}}} & \multicolumn{1}{c}{\scriptsize{$0.307$}\tiny{\color{lightgray}{$\pm0.000$}}} & \multicolumn{1}{c}{\scriptsize{$117.9$}\tiny{\color{lightgray}{$\pm0.427$}}}\\
 \multicolumn{1}{c}{} & \multicolumn{1}{l}{{\scriptsize{$\Pi\text{GDM}$}}}  & \multicolumn{1}{c}{\scriptsize{$0.237$}\tiny{\color{lightgray}{$\pm0.001$}}} & \multicolumn{1}{c}{\scriptsize{$70.40$}\tiny{\color{lightgray}{$\pm0.494$}}}  & \multicolumn{1}{c}{\scriptsize{$0.147$}\tiny{\color{lightgray}{$\pm0.000$}}} & \multicolumn{1}{c}{\scriptsize{$68.38$}\tiny{\color{lightgray}{$\pm0.605$}}} & \multicolumn{1}{c}{\scriptsize{$0.293$}\tiny{\color{lightgray}{$\pm0.000$}}} & \multicolumn{1}{c}{\scriptsize{$114.0$}\tiny{\color{lightgray}{$\pm0.302$}}}\\ 
 \multicolumn{1}{c}{} & \multicolumn{1}{l}{{\scriptsize{$\text{MAPGA}$}}}  & \multicolumn{1}{c}{\scriptsize{$\underline{0.206}$}\tiny{\color{lightgray}{$\pm0.000$}}} & \multicolumn{1}{c}{\scriptsize{$64.15$}\tiny{\color{lightgray}{$\pm0.660$}}}  & \multicolumn{1}{c}{\scriptsize{$0.132$}\tiny{\color{lightgray}{$\pm0.000$}}} & \multicolumn{1}{c}{\scriptsize{$63.82$}\tiny{\color{lightgray}{$\pm0.455$}}} & \multicolumn{1}{c}{\scriptsize{$0.235$}\tiny{\color{lightgray}{$\pm0.000$}}} & \multicolumn{1}{c}{\scriptsize{$119.2$}\tiny{\color{lightgray}{$\pm1.045$}}}\\ 
 \multicolumn{1}{c}{\scriptsize{FFHQ}} & \multicolumn{1}{l}{{\scriptsize{$\text{DAPS-1K}$}}}  & \multicolumn{1}{c}{\scriptsize{$0.233$}\tiny{\color{lightgray}{$\pm0.001$}}} & \multicolumn{1}{c}{\scriptsize{$61.22$}\tiny{\color{lightgray}{$\pm0.759$}}}  & \multicolumn{1}{c}{\scriptsize{$0.113$}\tiny{\color{lightgray}{$\pm0.000$}}} & \multicolumn{1}{c}{\scriptsize{$60.89$}\tiny{\color{lightgray}{$\pm1.028$}}} & \multicolumn{1}{c}{\scriptsize{$0.260$}\tiny{\color{lightgray}{$\pm0.000$}}} & \multicolumn{1}{c}{\scriptsize{$100.8$}\tiny{\color{lightgray}{$\pm0.951$}}}\\
  \multicolumn{1}{c}{} & \multicolumn{1}{l}{{\scriptsize{$\text{DAPS-4K}$}}}  & \multicolumn{1}{c}{\scriptsize{$0.224$}\tiny{\color{lightgray}{$\pm0.000$}}} & \multicolumn{1}{c}{\scriptsize{$\underline{60.22}$}\tiny{\color{lightgray}{$\pm0.738$}}}  & \multicolumn{1}{c}{\scriptsize{$\underline{0.100}$}\tiny{\color{lightgray}{$\pm0.000$}}} & \multicolumn{1}{c}{\scriptsize{$58.29$}\tiny{\color{lightgray}{$\pm0.398$}}} & \multicolumn{1}{c}{\scriptsize{$\underline{0.230}$}\tiny{\color{lightgray}{$\pm0.001$}}} & \multicolumn{1}{c}{\scriptsize{$\underline{93.71}$}\tiny{\color{lightgray}{$\pm1.217$}}}\\[2pt]
 \cline{2-8}
 \multicolumn{1}{c}{} & \multicolumn{1}{c}{} & \multicolumn{6}{c}{}\\[-7pt]
 \multicolumn{1}{l}{} & \multicolumn{1}{l}{\bf{\scriptsize{$\text{VML-MAP}$}}}  & \multicolumn{1}{c}{\scriptsize{$\bf{0.180}$}\tiny{\color{lightgray}{$\pm0.001$}}} & \multicolumn{1}{c}{\scriptsize{$\bf{52.76}$}\tiny{\color{lightgray}{$\pm0.526$}}} & \multicolumn{1}{c}{\scriptsize{$\bf{0.100}$}\tiny{\color{lightgray}{$\pm0.000$}}} & \multicolumn{1}{c}{\scriptsize{$\underline{57.55}$}\tiny{\color{lightgray}{$\pm1.414$}}} & \multicolumn{1}{c}{\scriptsize{$0.247$}\tiny{\color{lightgray}{$\pm0.000$}}} & \multicolumn{1}{c}{\scriptsize{$99.40$}\tiny{\color{lightgray}{$\pm1.799$}}}\\
 \multicolumn{1}{l}{} & \multicolumn{1}{l}{{\bf{\scriptsize{$\text{VML-MAP}_{pre}$}}}}  & \multicolumn{1}{c}{\scriptsize{$\bf{0.180}$}\tiny{\color{lightgray}{$\pm0.001$}}} & \multicolumn{1}{c}{\scriptsize{$\bf{52.76}$}\tiny{\color{lightgray}{$\pm0.526$}}} & \multicolumn{1}{c}{\scriptsize{$\bf{0.100}$}\tiny{\color{lightgray}{$\pm0.000$}}} & \multicolumn{1}{c}{\scriptsize{$\bf{52.20}$}\tiny{\color{lightgray}{$\pm0.814$}}} & \multicolumn{1}{c}{\scriptsize{$\bf{0.182}$}\tiny{\color{lightgray}{$\pm0.000$}}} & \multicolumn{1}{c}{\scriptsize{$\bf{93.48}$}\tiny{\color{lightgray}{$\pm0.681$}}}\\[2pt]
 \bottomrule
 \end{tabular}}
\end{center}
\end{table*}

Several image restoration tasks in computer vision, such as inpainting, super-resolution, deblurring, etc., can be modeled as linear inverse problems. In this section, we apply VML-MAP to the aforementioned image restoration tasks to understand its effectiveness in practice. In all our experiments, we evaluate on $100$ validation images of ImageNet~\cite{imagenet} with a resolution of $256\times 256$, using the unconditional ImageNet$256$ pre-trained diffusion model from~\citet{ddpm}, and on $100$ images of FFHQ~\cite{ffhq}, with a resolution of $256\times 256$, using the FFHQ$256$ pre-trained diffusion model from~\citet{dps}. 

\textbf{Metrics.} Note that the goal of image restoration is to reconstruct a perceptually good image consistent with the given measurement (i.e., the degraded image). While PSNR and SSIM serve as conventional reconstruction metrics, in the context of image restoration inverse problems, they do not always capture perceptual quality. As explained in~\citet{blau2018perception}, for pixel-to-pixel based mean squared error reconstruction metrics (e.g. PSNR), there exist a unique reconstructed image, given by the minimum mean squared error estimator, i.e., the posterior mean, which achieves a high PSNR, but is often perceptually worse, since the posterior mean does not necessarily have a high probability, as it may not even lie on the input data manifold. Such a metric also disregards the fact that there can be several equally plausible, i.e., perceptually good reconstructions consistent with the given degraded image (see Figure~\ref{fig:diverse_seeds}). Hence, in the context of image restoration, achieving low distortion metrics (i.e., a high PSNR or high SSIM) does not necessarily imply better perceptual sample quality, while achieving high perceptual metrics (i.e., a low LPIPS and low FID) naturally correlate more strongly with a sample's perceptual quality and also align closely with the goal of image restoration. The perception-distortion tradeoff~\cite{blau2018perception} further demonstrates, both formally and empirically, the tradeoff between these distortion metrics (e.g., PSNR, SSIM) and the perceptual metrics (e.g., LPIPS, FID), which implies that no image-restoration algorithm can reconstruct an image with the maximal perceptual quality without sacrificing performance on the distortion metrics. 

Therefore, in our experiments, we focus on perceptual metrics such as LPIPS and FID, which capture measurement consistency and perceptual quality of the restored images. Using these, we report and compare the performance of several existing baselines against VML-MAP and its preconditioned variant in the latter sections. For completeness, though not to be emphasized, we report PSNR and SSIM metrics in Appendix~\ref{ssec:implementationdetails1} for our experiments in Table~\ref{tab:exp1}.

\subsection{Noiseless image restoration}\label{ssec:noiseless-image-restoration}

In Table~\ref{tab:exp1}, we consider the challenging tasks of image inpainting with a half-mask (where the right half of the image is masked), $4\times$ super-resolution, and uniform deblurring with a $16\times 16$ kernel. We make the deblurring task even more challenging by setting the singular values below a high threshold to zero. Note that only a handful of prior works considered such challenging inverse tasks in the literature. In all our experiments, we fix a budget of approximately 1000 neural function evaluations of the diffusion model for both VML-MAP and $\text{VML-MAP}_{pre}$. For DAPS~\cite{daps}, we consider two configurations, with 1000 and 4000 neural function evaluations, denoted DAPS-1K and DAPS-4K, respectively. See Appendix~\ref{ssec:implementationdetails1} for further details regarding the experimental setup and hyperparameters.

\begin{figure}[h!]
    \centering
    \begin{subfigure}[h]{0.025\textwidth}
    \centering    
    \subfloat{\bf{\color{gray}{ }}}
    \end{subfigure}
    \begin{subfigure}[h]{0.09\textwidth}
    \centering
    \subfloat{\scriptsize{\bf{{\color{gray}{\textit{Original}}}}}}    
    \end{subfigure}
    \begin{subfigure}[h]{0.09\textwidth}
    \centering
    \subfloat{\scriptsize{\bf{{\color{gray}{\textit{Measurement}}}}}}    
    \end{subfigure}
    \begin{subfigure}[h]{0.09\textwidth}
    \centering
    \subfloat{\scriptsize{\bf{{\color{gray}{\textit{DAPS-1K}}}}}}   
    \end{subfigure}
    \begin{subfigure}[h]{0.09\textwidth}
    \centering
    \subfloat{\scriptsize{{\bf{\color{gray}{\textit{DAPS-4K}}}}}}    
    \end{subfigure}
    \begin{subfigure}[h]{0.09\textwidth}
    \centering
    \subfloat{\scriptsize{\bf{\color{gray}{\textit{VML-MAP}}}}}
    \end{subfigure}
    \\[2pt]
    \begin{subfigure}[h]{0.025\textwidth}
    \centering
    \subfloat{{\rotatebox[origin=c]{90}{\bf{\color{gray}{\textit{Inpaint}}}}}}    
    \end{subfigure}
    \begin{subfigure}[h]{0.09\textwidth}
    \centering
    \includegraphics[width=\textwidth]{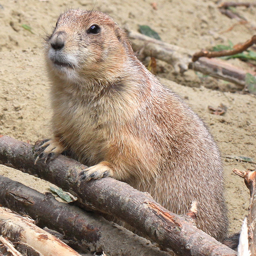}    
    \end{subfigure}
    \begin{subfigure}[h]{0.09\textwidth}
    \centering
    \includegraphics[width=\textwidth]{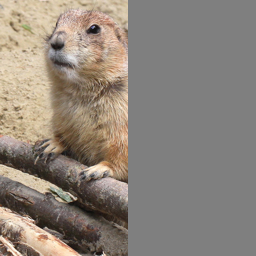}    
    \end{subfigure}
    \begin{subfigure}[h]{0.09\textwidth}
    \centering
    \includegraphics[width=\textwidth]{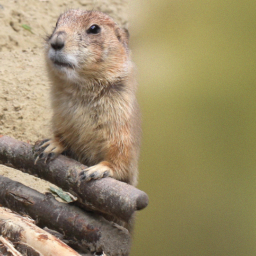}  
    \end{subfigure}
    \begin{subfigure}[h]{0.09\textwidth}
    \centering
    \includegraphics[width=\textwidth]{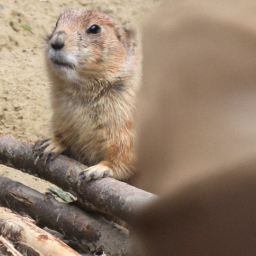}    
    \end{subfigure}
    \begin{subfigure}[h]{0.09\textwidth}
    \centering
    \includegraphics[width=\textwidth]{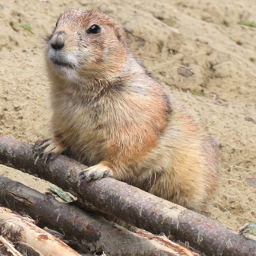}    
    \end{subfigure}
    \\
    \begin{subfigure}[h]{0.025\textwidth}
    \centering
    \subfloat{{\rotatebox[origin=c]{90}{\color{gray}{$\mathbf{4\times}$\bf{\textit{SR}}}}}}    
    \end{subfigure}
    \begin{subfigure}[h]{0.09\textwidth}
    \centering
    \includegraphics[width=\textwidth]{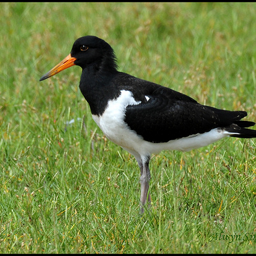}    
    \end{subfigure}
    \begin{subfigure}[h]{0.09\textwidth}
    \centering
    \includegraphics[width=\textwidth]{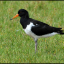}    
    \end{subfigure}
    \begin{subfigure}[h]{0.09\textwidth}
    \centering
    \includegraphics[width=\textwidth]{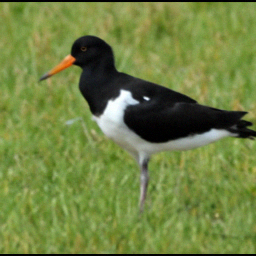}    
    \end{subfigure}
    \begin{subfigure}[h]{0.09\textwidth}
    \centering
    \includegraphics[width=\textwidth]{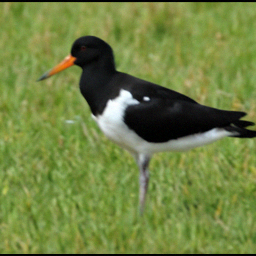}    
    \end{subfigure}
    \begin{subfigure}[h]{0.09\textwidth}
    \centering
    \includegraphics[width=\textwidth]{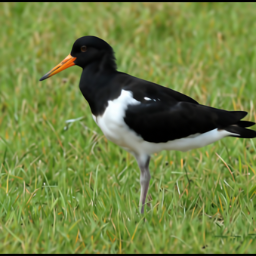}    
    \end{subfigure}
    \\
    \begin{subfigure}[h]{0.025\textwidth}
    \centering
    \subfloat{{\rotatebox[origin=c]{90}{\bf{\color{gray}{\textit{Deblur}}}}}}    
    \end{subfigure}
    \begin{subfigure}[h]{0.09\textwidth}
    \centering
    \includegraphics[width=\textwidth]{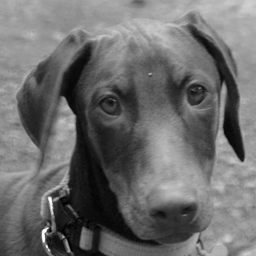}    
    \end{subfigure}
    \begin{subfigure}[h]{0.09\textwidth}
    \centering
    \includegraphics[width=\textwidth]{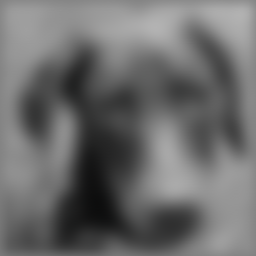}    
    \end{subfigure}
    \begin{subfigure}[h]{0.09\textwidth}
    \centering
    \includegraphics[width=\textwidth]{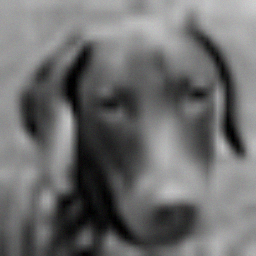} 
    \end{subfigure}
    \begin{subfigure}[h]{0.09\textwidth}
    \centering
    \includegraphics[width=\textwidth]{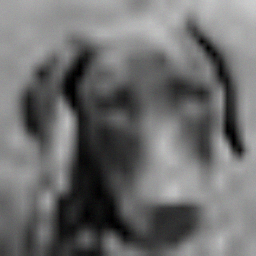} 
    \end{subfigure}
    \begin{subfigure}[h]{0.09\textwidth}
    \centering
    \includegraphics[width=\textwidth]{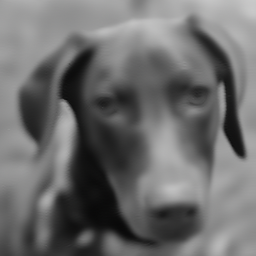} 
    \end{subfigure}
    \caption{Half-mask inpainting, $4\times$ super-resolution, and deblurring tasks from Table~\ref{tab:exp1}. Note that DAPS-1K, DAPS-4K, and VML-MAP only require the forward operation of the linear degradation operator and not its SVD. Zoom in for the best view.}\label{fig:exp1}
\end{figure}%
\begin{figure}[h!]
    \centering
    \begin{subfigure}[h]{0.025\textwidth}
    \centering    
    \subfloat{\bf{\color{gray}{ }}}
    \end{subfigure}
    \begin{subfigure}[h]{0.075\textwidth}
    \centering
    \subfloat{\tiny{\bf{{\color{gray}{$\textit{Original}_{\ }$}}}}}    
    \end{subfigure}
    \begin{subfigure}[h]{0.075\textwidth}
    \centering
    \subfloat{\tiny{\bf{{\color{gray}{$\textit{Measurement}_{\ }$}}}}}    
    \end{subfigure}
    \begin{subfigure}[h]{0.075\textwidth}
    \centering
    \subfloat{\tiny{{\bf{\color{gray}{$\textit{DDRM}_{\ }$}}}}}    
    \end{subfigure}
    \begin{subfigure}[h]{0.075\textwidth}
    \centering
    \subfloat{\tiny{{\bf{\color{gray}{$\textit{$\Pi$GDM}_{\ }$}}}}}    
    \end{subfigure}
    \begin{subfigure}[h]{0.075\textwidth}
    \centering
    \subfloat{\tiny{\bf{\color{gray}{$\textit{MAPGA}_{\ }$}}}}
    \end{subfigure}
    \begin{subfigure}[h]{0.075\textwidth}
    \centering
    \subfloat{\tiny{\bf{{\color{gray}{$\textit{VML-MAP}_{pre}$}}}}}   
    \end{subfigure}
    \\[2pt]
    \begin{subfigure}[h]{0.025\textwidth}
    \centering
    \subfloat{{\rotatebox[origin=c]{90}{\color{gray}{$\mathbf{4\times}$\bf{\textit{SR}}}}}}    
    \end{subfigure}
    \begin{subfigure}[h]{0.075\textwidth}
    \centering
    \includegraphics[width=\textwidth]{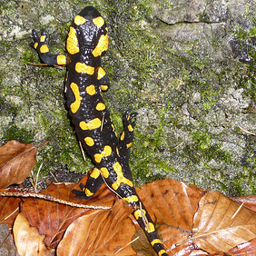}    
    \end{subfigure}
    \begin{subfigure}[h]{0.075\textwidth}
    \centering
    \includegraphics[width=\textwidth]{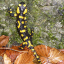}    
    \end{subfigure}
    \begin{subfigure}[h]{0.075\textwidth}
    \centering
    \includegraphics[width=\textwidth]{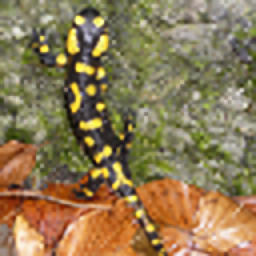}  
    \end{subfigure}
    \begin{subfigure}[h]{0.075\textwidth}
    \centering
    \includegraphics[width=\textwidth]{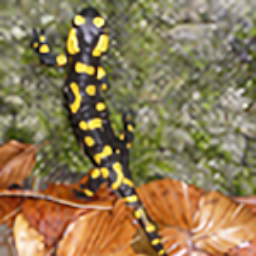}    
    \end{subfigure}
    \begin{subfigure}[h]{0.075\textwidth}
    \centering
    \includegraphics[width=\textwidth]{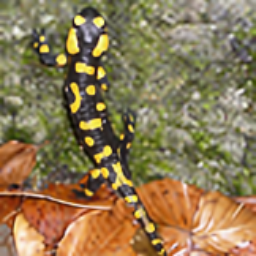}
    \end{subfigure}
    \begin{subfigure}[h]{0.075\textwidth}
    \centering
    \includegraphics[width=\textwidth]{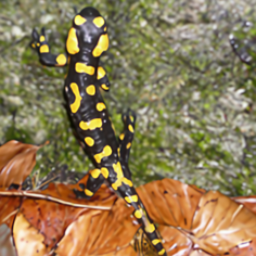}    
    \end{subfigure}
    \\
    \begin{subfigure}[h]{0.025\textwidth}
    \centering
    \subfloat{{\rotatebox[origin=c]{90}{\bf{\color{gray}{\textit{Deblur}}}}}}    
    \end{subfigure}
    \begin{subfigure}[h]{0.075\textwidth}
    \centering
    \includegraphics[width=\textwidth]{icml2026/figures/imagenet256new/deblur16/im49_original.png}    
    \end{subfigure}
    \begin{subfigure}[h]{0.075\textwidth}
    \centering
    \includegraphics[width=\textwidth]{icml2026/figures/imagenet256new/deblur16/im49_degraded.png}    
    \end{subfigure}
    \begin{subfigure}[h]{0.075\textwidth}
    \centering
    \includegraphics[width=\textwidth]{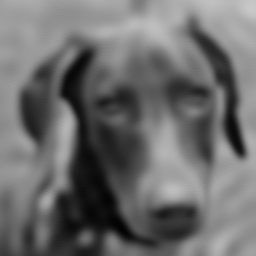} 
    \end{subfigure}
    \begin{subfigure}[h]{0.075\textwidth}
    \centering
    \includegraphics[width=\textwidth]{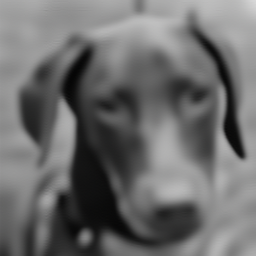} 
    \end{subfigure}
    \begin{subfigure}[h]{0.075\textwidth}
    \centering
    \includegraphics[width=\textwidth]{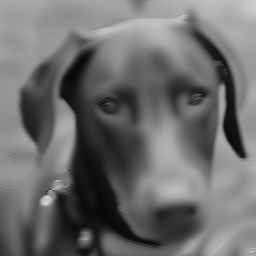}
    \end{subfigure}
    \begin{subfigure}[h]{0.075\textwidth}
    \centering
    \includegraphics[width=\textwidth]{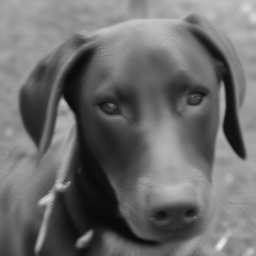}
    \end{subfigure}
    \caption{$4\times$ Super-resolution, and Deblurring tasks from Table~\ref{tab:exp1}. Note that DDRM, $\Pi$GDM, MAPGA, and $\text{VML-MAP}_{pre}$ require SVD of the linear degradation operator. Zoom in for the best view.}\label{fig:exp2}
\end{figure}

The quantitative results from Table~\ref{tab:exp1} and the corresponding qualitative comparisons from Figures~\ref{fig:exp1} and~\ref{fig:exp2} highlight the effectiveness of VML-MAP and $\text{VML-MAP}_{pre}$ over existing methods. The results also indicate the effectiveness of the preconditioner on $4\times$ super-resolution and deblurring tasks as $\text{VML-MAP}_{pre}$ shows significant improvements in LPIPS and FID over VML-MAP and other baselines, denoting higher perceptual quality of the restored images. For inpainting, VML-MAP and $\text{VML-MAP}_{pre}$ are essentially equivalent since $\mathrm{M} = \mathrm{P} = \mathbf{I}$.

Note that DDRM, $\Pi$GDM, MAPGA, and $\text{VML-MAP}_{pre}$ require SVD of $\mathrm{H}$, while DAPS and VML-MAP only require the forward operation of $\mathrm{H}$. To ensure a fair comparison in this regard, in Appendix~\ref{ssec:inpexp}, we evaluate all the methods on the image inpainting task with diverse masks, where the SVD of $\mathrm{H}$ is trivial. The quantitative results from Table~\ref{tab:inpexp} and the corresponding qualitative visualizations from Figures~\ref{fig:qv4},~\ref{fig:qv5},~\ref{fig:qv6}, and~\ref{fig:qv7} reveal the superior performance of VML-MAP in all the inpainting tasks. We refer to Appendix~\ref{ssec:inpexp} for further details regarding the experimental setup and hyperparameters.

\begin{figure}[h!]
    \centering
    \begin{subfigure}[h]{0.49\columnwidth}
    \centering
    \includegraphics[width=\columnwidth]{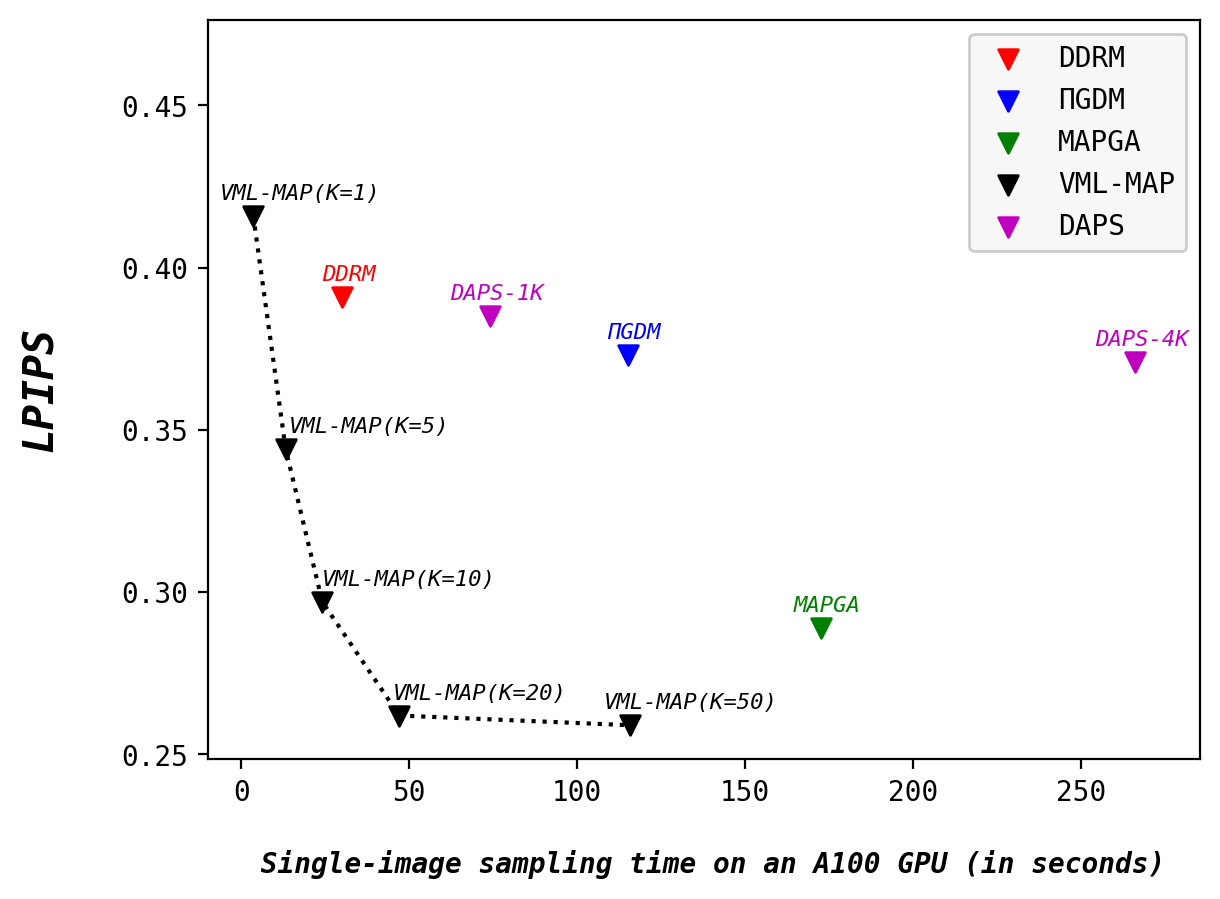}
    \end{subfigure}
    \begin{subfigure}[h]{0.49\columnwidth}
    \centering
    \includegraphics[width=\columnwidth]{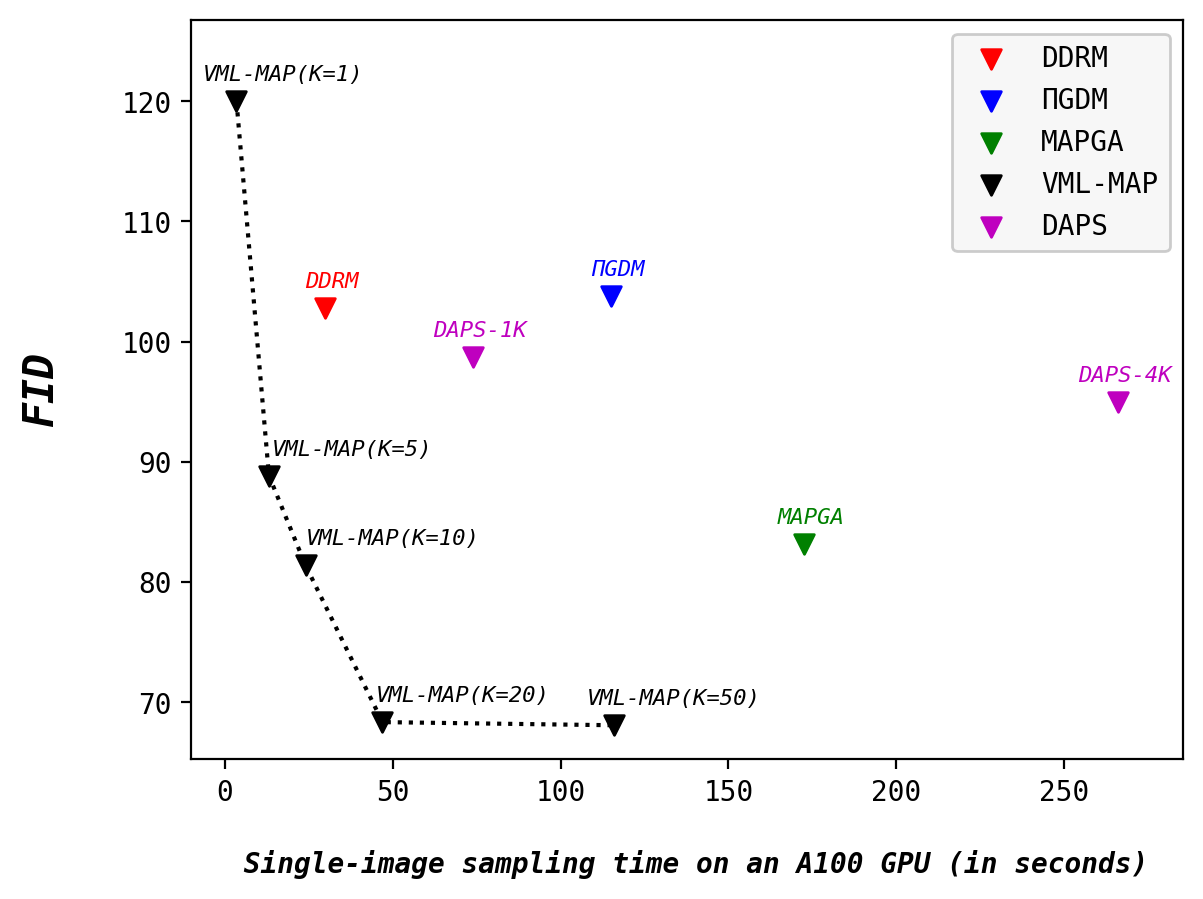}
    \end{subfigure}
    \caption{Runtime vs Perceptual quality for the half-mask inpainting experiment in Table~\ref{tab:exp1}. DDRM and $\Pi$GDM use $500$ and $1000$ reverse diffusion steps, respectively, to achieve their best results. For VML-MAP, we fix the reverse diffusion steps ($N$) to $20$, and vary the number of gradient-descent iterations per step ($K$) across $\{1,5,10,20,50\}$. See Appendix~\ref{ssec:implementationdetails1} for more details. VML-MAP achieves better perceptual quality with lower compute.}
    \label{fig:rtvsq}
\end{figure}

\begin{table}[t!]
\caption{Noisy image restoration ($\sigma_\rvy = 0.05$): Experiments on Half-mask inpainting, $4\times$ Super-resolution, and Uniform deblurring on $100$ validation images of ImageNet$256$ and $100$ images of FFHQ$256$ are repeated across 4 different seeds. The mean and standard deviation across these runs are reported. Best values in \textbf{bold}, second best values \underline{underlined}.}
\label{tab:exp2}
\begin{center}
\resizebox{\columnwidth}{!}{
\begin{tabular}{ l l c c c c c c } 
\toprule
\multicolumn{1}{c}{} & \multicolumn{1}{c}{} & \multicolumn{6}{c}{} \\[-7pt]
 \multicolumn{1}{c}{{Dataset}}  & \multicolumn{1}{l}{{Method}} & \multicolumn{2}{c}{{Inpainting}} & \multicolumn{2}{c}{{$4\times${ Super-res}}}  & \multicolumn{2}{c}{{Deblurring}} \\[3pt]
 \multicolumn{1}{c}{} & \multicolumn{1}{c}{} & \multicolumn{1}{c}{\scriptsize{LPIPS$\downarrow$}} & \multicolumn{1}{c}{\scriptsize{FID$\downarrow$}} & \multicolumn{1}{c}{\scriptsize{LPIPS$\downarrow$}} & \multicolumn{1}{c}{\scriptsize{FID$\downarrow$}} & \multicolumn{1}{c}{\scriptsize{LPIPS$\downarrow$}} & \multicolumn{1}{c}{\scriptsize{FID$\downarrow$}} \\[3pt]
\toprule 
 \multicolumn{1}{c}{} & \multicolumn{1}{c}{} & \multicolumn{6}{c}{}\\[-7pt]
 \multicolumn{1}{c}{} & \multicolumn{1}{l}{{\scriptsize{$\Pi\text{GDM}$}}}  & \multicolumn{1}{c}{\scriptsize{$0.396$}\tiny{\color{lightgray}{$\pm0.000$}}} & \multicolumn{1}{c}{\scriptsize{$126.6$}\tiny{\color{lightgray}{$\pm0.932$}}}  & \multicolumn{1}{c}{\scriptsize{$0.369$}\tiny{\color{lightgray}{$\pm0.002$}}} & \multicolumn{1}{c}{\scriptsize{$149.3$}\tiny{\color{lightgray}{$\pm2.100$}}} & \multicolumn{1}{c}{\scriptsize{$0.656$}\tiny{\color{lightgray}{$\pm0.001$}}} & \multicolumn{1}{c}{\scriptsize{$255.3$}\tiny{\color{lightgray}{$\pm0.840$}}}\\ 
 \multicolumn{1}{c}{} & \multicolumn{1}{l}{{\scriptsize{$\text{TMPD}$}}} & \multicolumn{1}{c}{\scriptsize{$0.407$}\tiny{\color{lightgray}{$\pm0.004$}}} & \multicolumn{1}{c}{\scriptsize{$155.3$}\tiny{\color{lightgray}{$\pm4.009$}}}  & \multicolumn{1}{c}{\scriptsize{$0.345$}\tiny{\color{lightgray}{$\pm0.002$}}} & \multicolumn{1}{c}{\scriptsize{$108.9$}\tiny{\color{lightgray}{$\pm4.358$}}} & \multicolumn{1}{c}{\scriptsize{$0.548$}\tiny{\color{lightgray}{$\pm0.002$}}} & \multicolumn{1}{c}{\scriptsize{$246.7$}\tiny{\color{lightgray}{$\pm3.764$}}}\\ 
  \multicolumn{1}{c}{\scriptsize{ImageNet}} & \multicolumn{1}{l}{{\scriptsize{$\text{DAPS-4K}$}}}  & \multicolumn{1}{c}{\scriptsize{$\underline{0.392}$}\tiny{\color{lightgray}{$\pm0.003$}}} & \multicolumn{1}{c}{\scriptsize{$\underline{107.7}$}\tiny{\color{lightgray}{$\pm1.151$}}}  & \multicolumn{1}{c}{\scriptsize{$\underline{0.343}$}\tiny{\color{lightgray}{$\pm0.001$}}} & \multicolumn{1}{c}{\scriptsize{$103.3$}\tiny{\color{lightgray}{$\pm3.612$}}} & \multicolumn{1}{c}{\scriptsize{$0.600$}\tiny{\color{lightgray}{$\pm0.002$}}} & \multicolumn{1}{c}{\scriptsize{$233.7$}\tiny{\color{lightgray}{$\pm4.641$}}}\\[2pt]
 \cline{2-8}
 \multicolumn{1}{c}{} & \multicolumn{1}{c}{} & \multicolumn{6}{c}{}\\[-7pt]
 \multicolumn{1}{l}{} & \multicolumn{1}{l}{\bf{\scriptsize{$\text{VML-MAP}^{\bm \tau}$}}}  & \multicolumn{1}{c}{\scriptsize{$\bf{0.287}$}\tiny{\color{lightgray}{$\pm0.002$}}} & \multicolumn{1}{c}{\scriptsize{$\bf{86.97}$}\tiny{\color{lightgray}{$\pm2.317$}}} & \multicolumn{1}{c}{\scriptsize{$\bf{0.210}$}\tiny{\color{lightgray}{$\pm0.001$}}} & \multicolumn{1}{c}{\scriptsize{$\underline{69.19}$}\tiny{\color{lightgray}{$\pm1.125$}}} & \multicolumn{1}{c}{\scriptsize{$\underline{0.547}$}\tiny{\color{lightgray}{$\pm0.003$}}} & \multicolumn{1}{c}{\scriptsize{$\underline{186.7}$}\tiny{\color{lightgray}{$\pm2.646$}}}\\
 \multicolumn{1}{l}{} & \multicolumn{1}{l}{\bf{\scriptsize{$\text{VML-MAP}^{\bm \tau}_{pre}$}}}  & \multicolumn{1}{c}{\scriptsize{$\bf{0.287}$}\tiny{\color{lightgray}{$\pm0.002$}}} & \multicolumn{1}{c}{\scriptsize{$\bf{86.97}$}\tiny{\color{lightgray}{$\pm2.317$}}} & \multicolumn{1}{c}{\scriptsize{$\bf{0.210}$}\tiny{\color{lightgray}{$\pm0.000$}}} & \multicolumn{1}{c}{\scriptsize{$\bf{67.40}$}\tiny{\color{lightgray}{$\pm0.945$}}} & \multicolumn{1}{c}{\scriptsize{$\bf{0.400}$}\tiny{\color{lightgray}{$\pm0.002$}}} & \multicolumn{1}{c}{\scriptsize{$\bf{151.4}$}\tiny{\color{lightgray}{$\pm4.084$}}}\\[2pt]
 \midrule
 \multicolumn{1}{c}{} & \multicolumn{1}{c}{} & \multicolumn{6}{c}{}\\[-7pt]
 \multicolumn{1}{c}{} & \multicolumn{1}{l}{{\scriptsize{$\Pi\text{GDM}$}}}  & \multicolumn{1}{c}{\scriptsize{$0.256$}\tiny{\color{lightgray}{$\pm0.000$}}} & \multicolumn{1}{c}{\scriptsize{$91.63$}\tiny{\color{lightgray}{$\pm0.535$}}}  & \multicolumn{1}{c}{\scriptsize{$0.210$}\tiny{\color{lightgray}{$\pm0.000$}}} & \multicolumn{1}{c}{\scriptsize{$120.1$}\tiny{\color{lightgray}{$\pm0.812$}}} & \multicolumn{1}{c}{\scriptsize{$0.330$}\tiny{\color{lightgray}{$\pm0.000$}}} & \multicolumn{1}{c}{\scriptsize{$139.4$}\tiny{\color{lightgray}{$\pm1.140$}}}\\ 
 \multicolumn{1}{c}{} & \multicolumn{1}{l}{{\scriptsize{$\text{TMPD}$}}}  & \multicolumn{1}{c}{\scriptsize{$0.271$}\tiny{\color{lightgray}{$\pm0.002$}}} & \multicolumn{1}{c}{\scriptsize{$84.37$}\tiny{\color{lightgray}{$\pm1.767$}}}  & \multicolumn{1}{c}{\scriptsize{$0.123$}\tiny{\color{lightgray}{$\pm0.001$}}} & \multicolumn{1}{c}{\scriptsize{$\underline{62.13}$}\tiny{\color{lightgray}{$\pm1.170$}}} & \multicolumn{1}{c}{\scriptsize{$0.273$}\tiny{\color{lightgray}{$\pm0.001$}}} & \multicolumn{1}{c}{\scriptsize{$102.0$}\tiny{\color{lightgray}{$\pm1.272$}}}\\
 \multicolumn{1}{c}{\scriptsize{FFHQ}} & \multicolumn{1}{l}{{\scriptsize{$\text{DAPS-4K}$}}}  & \multicolumn{1}{c}{\scriptsize{$\underline{0.255}$}\tiny{\color{lightgray}{$\pm0.000$}}} & \multicolumn{1}{c}{\scriptsize{$\underline{76.63}$}\tiny{\color{lightgray}{$\pm0.527$}}}  & \multicolumn{1}{c}{\scriptsize{$0.131$}\tiny{\color{lightgray}{$\pm0.000$}}} & \multicolumn{1}{c}{\scriptsize{$66.24$}\tiny{\color{lightgray}{$\pm0.832$}}} & \multicolumn{1}{c}{\scriptsize{$0.236$}\tiny{\color{lightgray}{$\pm0.002$}}} & \multicolumn{1}{c}{\scriptsize{$91.99$}\tiny{\color{lightgray}{$\pm1.038$}}}\\[2pt]
 \cline{2-8}
 \multicolumn{1}{c}{} & \multicolumn{1}{c}{} & \multicolumn{6}{c}{}\\[-7pt]
 \multicolumn{1}{l}{} & \multicolumn{1}{l}{\bf{\scriptsize{$\text{VML-MAP}^{\bm \tau}$}}}  & \multicolumn{1}{c}{\scriptsize{$\bf{0.202}$}\tiny{\color{lightgray}{$\pm0.001$}}} & \multicolumn{1}{c}{\scriptsize{$\bf{62.81}$}\tiny{\color{lightgray}{$\pm0.706$}}} & \multicolumn{1}{c}{\scriptsize{$\underline{0.120}$}\tiny{\color{lightgray}{$\pm0.000$}}} & \multicolumn{1}{c}{\scriptsize{$64.85$}\tiny{\color{lightgray}{$\pm1.590$}}} & \multicolumn{1}{c}{\scriptsize{$\underline{0.214}$}\tiny{\color{lightgray}{$\pm0.001$}}} & \multicolumn{1}{c}{\scriptsize{$\bf{84.88}$}\tiny{\color{lightgray}{$\pm0.929$}}}\\
 \multicolumn{1}{l}{} & \multicolumn{1}{l}{{\bf{\scriptsize{$\text{VML-MAP}^{\bm \tau}_{pre}$}}}}  & \multicolumn{1}{c}{\scriptsize{$\bf{0.202}$}\tiny{\color{lightgray}{$\pm0.001$}}} & \multicolumn{1}{c}{\scriptsize{$\bf{62.81}$}\tiny{\color{lightgray}{$\pm0.706$}}} & \multicolumn{1}{c}{\scriptsize{$\bf{0.112}$}\tiny{\color{lightgray}{$\pm0.000$}}} & \multicolumn{1}{c}{\scriptsize{$\bf{61.54}$}\tiny{\color{lightgray}{$\pm1.037$}}} & \multicolumn{1}{c}{\scriptsize{$\bf{0.202}$}\tiny{\color{lightgray}{$\pm0.000$}}} & \multicolumn{1}{c}{\scriptsize{$\underline{88.38}$}\tiny{\color{lightgray}{$\pm0.761$}}}\\[2pt]
 \bottomrule
 \end{tabular}}
\end{center}
\end{table}

Also from Figure~\ref{fig:rtvsq}, which shows the tradeoff between runtime and perceptual quality of reconstructed images for several baselines, VML-MAP achieves better perceptual quality with lower runtime than other methods, highlighting its computational efficiency. Note that $\text{VML-MAP}_{pre}$ has an almost similar runtime as VML-MAP, as we compute $\mathrm{M}^{-1}$ in Equation~(\ref{eqn:preconditionedrhogradient}) with negligible overhead using SVD. 

\subsection{Noisy image restoration}\label{ssec:noisy-image-restoration}

Note that VML-MAP (Algorithm~\ref{algo:vmlmap}) is designed to handle the general case of noisy inverse problems, as evident from its application to the noisy inpainting example in Figure~\ref{fig:toyexample}. However, we observe that VML-MAP for noisy image restoration, using pre-trained image diffusion models~\cite{ddpm,dps}, involves optimization challenges at lower diffusion time steps, where the resulting final reconstructed image consists of sharp artifacts. 

 \begin{algorithm}[H]
\caption{$\textbf{VML-MAP}^{\bm \tau}$ / $\textbf{VML-MAP}^{\bm \tau}_{pre}$}
\label{algo:vmlmap2}
\begin{algorithmic}[1]
\STATE {\bfseries Input:} $\mathrm{D}_{\theta}(\cdot,\cdot)$, $\mathrm{H}$, $\mathbf{y}$, $\sigma_{\mathbf{y}}$, $\sigma(\cdot)$, $\{t_i\}_{i=0}^N$, $K$, $\gamma$, $\rho(\cdot)$, $\bm \tau$, $\{\hat{t}_j\}_{j=0}^S$ \ \ \ \ \ \ \ \COMMENTNOBRACES{Note that $t_{0} = \hat{t}_S = \bm \tau$, and $\hat{t}_0 = 0$}
\STATE {\bfseries Output:} $\mathbf{x}_{\hat{t}_0}$ 
\STATE Compute $\rvx_{t_0}$ with {\small{$\textbf{VML-MAP}$ / $\textbf{VML-MAP}_{pre}$}}
\STATE {\bfseries Initialize} $\mathbf{x}_{\hat{t}_S} = \rvx_{t_0}$
\FOR{$j = S$ \textbf{down to} $1$}
    \STATE $\mathbf{x}_{\hat{t}_{j-1}}$ $\ \gets\ $ {\small{$\textbf{DDIM\_UPDATE} (\rvx_{\hat{t}_j}, \hat{t}_j)$}}
\ENDFOR
\STATE {\bfseries Return} $\mathbf{x}_{\hat{t}_0}$
\end{algorithmic}
\end{algorithm}

While higher-order optimizers could mitigate such issues, another potential cause could be the pre-trained diffusion model not approximating the true score sufficiently well enough for the diffusion-modeled posterior modes to reflect the perceptual quality of images. Identifying and resolving these potential causes involves a substantial study of its own, and may also require developing efficient yet practically feasible higher-order optimizers. Therefore, we leave addressing it to future work. Meanwhile, in this work, we propose a simple and effective strategy for solving noisy image restoration. Specifically, we limit VML minimization (using VML-MAP) until a diffusion time threshold $\bm \tau$, and then follow the standard reverse diffusion sampling process with DDIM~\cite{ddim} update steps to generate a highly-probable image from the posterior. The full procedure $\text{VML-MAP}^{\bm \tau}$ is detailed in Algorithm~\ref{algo:vmlmap2}, including its preconditioned variant denoted as $\text{VML-MAP}^{\bm \tau}_{pre}$.

With the same experimental setup as in Section~\ref{ssec:noiseless-image-restoration}, we consider the case of noisy image restoration ($\sigma_\rvy = 0.05$). The quantitative results from Table~\ref{tab:exp2}, and the qualitative visualizations from Figures~\ref{fig:noisyexp1}, and~\ref{fig:noisyexp2} indicate the effectiveness of $\text{VML-MAP}^{\bm \tau}$ and $\text{VML-MAP}^{\bm \tau}_{pre}$ again, validating our simplified-VML minimization principle. See Appendix~\ref{ssec:implementationdetails2} for further implementation details. Also, see our extension of VML-MAP to non-linear operators via Latent Diffusion Models (LDMs)~\cite{ldm} in Appendix~\ref{ssec:ldmexperiments}.

\section{Conclusion}\label{sec:conclusion}
In this work, we proposed a novel inference-time MAP estimation strategy for solving inverse problems with a pre-trained unconditional diffusion model. The core of our approach relies on minimizing the proposed variational mode-seeking loss (VML) at each reverse diffusion time step, which steers the generated sample towards the MAP estimate (modes in practice). We derived VML in a closed form for linear inverse problems without any modeling approximations and provided a formal theoretical analysis of its convergence under mild assumptions. Based on VML minimization, we proposed empirically effective algorithms (VML-MAP, $\text{VML-MAP}^{\bm \tau}$, LatentVML-MAP) and demonstrated the effectiveness of our approach through extensive experiments on image restoration across multiple datasets.

\section{Limitations}\label{sec:limitations}
Although this paper primarily focuses on developing a principled framework and establishing the theoretical foundations of VML, the availability of a practically effective optimizer for minimizing the VML objective is equally critical. While ill-conditioned and/or non-linear degradation operators exacerbate these optimization challenges, higher-order methods and advanced optimization strategies could mitigate such issues and improve performance, as illustrated by our proposed preconditioner. However, it is of utmost importance that this additional performance gain must not come at the expense of prohibitive computational costs. Therefore, designing optimizers that are both efficient and practically feasible remains an essential direction for future work. 

\section*{Impact Statement}

This paper presents work whose goal is to advance the field of Machine Learning. There are many potential societal consequences of our work, none of which we feel must be specifically highlighted here.

\section*{Acknowledgements}

This work is part of the Marie Skłodowska-Curie Actions project \textit{MODELAIR}, funded by the European Commission under the Horizon Europe program through grant agreement no. 101072559. The computations and the data handling were enabled by resources provided by the National Academic Infrastructure for Supercomputing in Sweden (NAISS), partially funded by the Swedish Research Council through grant agreement no. 2022-06725.

\bibliography{example_paper}

@article{ddrm,
  title={Denoising diffusion restoration models},
  author={Kawar, Bahjat and Elad, Michael and Ermon, Stefano and Song, Jiaming},
  journal={Advances in neural information processing systems},
  volume={35},
  pages={23593--23606},
  year={2022}
}

@inproceedings{pgdm,
  title={Pseudoinverse-guided diffusion models for inverse problems},
  author={Song, Jiaming and Vahdat, Arash and Mardani, Morteza and Kautz, Jan},
  booktitle={International Conference on Learning Representations},
  year={2023}
}

@inproceedings{
dps,
title={Diffusion Posterior Sampling for General Noisy Inverse Problems},
author={Hyungjin Chung and Jeongsol Kim and Michael Thompson Mccann and Marc Louis Klasky and Jong Chul Ye},
booktitle={The Eleventh International Conference on Learning Representations },
year={2023},
url={https://openreview.net/forum?id=OnD9zGAGT0k}
}

@inproceedings{mapga,
  title={Inverse Problems with Diffusion Models: A MAP Estimation Perspective},
  author={Gutha, Sai Bharath Chandra and Vinuesa, Ricardo and Azizpour, Hossein},
  booktitle={2025 IEEE/CVF Winter Conference on Applications of Computer Vision (WACV)},
  pages={4153--4162},
  year={2025},
  organization={IEEE}
}

@inproceedings{consistencymodels,
  title={Consistency Models},
  author={Song, Yang and Dhariwal, Prafulla and Chen, Mark and Sutskever, Ilya},
  booktitle={International Conference on Machine Learning},
  pages={32211--32252},
  year={2023},
  organization={PMLR}
}

@inproceedings{daps,
  title={Improving diffusion inverse problem solving with decoupled noise annealing},
  author={Zhang, Bingliang and Chu, Wenda and Berner, Julius and Meng, Chenlin and Anandkumar, Anima and Song, Yang},
  booktitle={Proceedings of the Computer Vision and Pattern Recognition Conference},
  pages={20895--20905},
  year={2025}
}

@article{dmplug,
  title={Dmplug: A plug-in method for solving inverse problems with diffusion models},
  author={Wang, Hengkang and Zhang, Xu and Li, Taihui and Wan, Yuxiang and Chen, Tiancong and Sun, Ju},
  journal={Advances in Neural Information Processing Systems},
  volume={37},
  pages={117881--117916},
  year={2024}
}

@inproceedings{peng,
  title={Improving Diffusion Models for Inverse Problems Using Optimal Posterior Covariance},
  author={Peng, Xinyu and Zheng, Ziyang and Dai, Wenrui and Xiao, Nuoqian and Li, Chenglin and Zou, Junni and Xiong, Hongkai},
  booktitle={International Conference on Machine Learning},
  pages={40347--40370},
  year={2024},
  organization={PMLR}
}

@article{
boys,
title={Tweedie Moment Projected Diffusions for Inverse Problems},
author={Benjamin Boys and Mark Girolami and Jakiw Pidstrigach and Sebastian Reich and Alan Mosca and Omer Deniz Akyildiz},
journal={Transactions on Machine Learning Research},
issn={2835-8856},
year={2024},
url={https://openreview.net/forum?id=4unJi0qrTE},
note={Featured Certification}
}

@article{ddpm,
  title={Denoising diffusion probabilistic models},
  author={Ho, Jonathan and Jain, Ajay and Abbeel, Pieter},
  journal={Advances in neural information processing systems},
  volume={33},
  pages={6840--6851},
  year={2020}
}

@article{edm,
  title={Elucidating the design space of diffusion-based generative models},
  author={Karras, Tero and Aittala, Miika and Aila, Timo and Laine, Samuli},
  journal={Advances in neural information processing systems},
  volume={35},
  pages={26565--26577},
  year={2022}
}

@inproceedings{imagenet,
  title={Imagenet: A large-scale hierarchical image database},
  author={Deng, Jia and Dong, Wei and Socher, Richard and Li, Li-Jia and Li, Kai and Fei-Fei, Li},
  booktitle={2009 IEEE conference on computer vision and pattern recognition},
  pages={248--255},
  year={2009},
  organization={Ieee}
}

@inproceedings{ldm,
  title={High-resolution image synthesis with latent diffusion models},
  author={Rombach, Robin and Blattmann, Andreas and Lorenz, Dominik and Esser, Patrick and Ommer, Bj{\"o}rn},
  booktitle={Proceedings of the IEEE/CVF conference on computer vision and pattern recognition},
  pages={10684--10695},
  year={2022}
}

@inproceedings{diffpir,
  title={Denoising diffusion models for plug-and-play image restoration},
  author={Zhu, Yuanzhi and Zhang, Kai and Liang, Jingyun and Cao, Jiezhang and Wen, Bihan and Timofte, Radu and Van Gool, Luc},
  booktitle={Proceedings of the IEEE/CVF conference on computer vision and pattern recognition},
  pages={1219--1229},
  year={2023}
}

@inproceedings{reddiff,
title={A Variational Perspective on Solving Inverse Problems with Diffusion Models},
author={Morteza Mardani and Jiaming Song and Jan Kautz and Arash Vahdat},
booktitle={The Twelfth International Conference on Learning Representations},
year={2024},
url={https://openreview.net/forum?id=1YO4EE3SPB}
}

@inproceedings{repaint,
  title={Repaint: Inpainting using denoising diffusion probabilistic models},
  author={Lugmayr, Andreas and Danelljan, Martin and Romero, Andres and Yu, Fisher and Timofte, Radu and Van Gool, Luc},
  booktitle={Proceedings of the IEEE/CVF conference on computer vision and pattern recognition},
  pages={11461--11471},
  year={2022}
}

@article{dcps,
  title={Divide-and-conquer posterior sampling for denoising diffusion priors},
  author={Janati, Yazid and Moufad, Badr and Durmus, Alain and Moulines, Eric and Olsson, Jimmy},
  journal={Advances in Neural Information Processing Systems},
  volume={37},
  pages={97408--97444},
  year={2024}
}

@inproceedings{vmps,
title={Variational Diffusion Posterior Sampling with Midpoint Guidance},
author={Badr Moufad and Yazid Janati and Lisa Bedin and Alain Oliviero Durmus and randal douc and Eric Moulines and Jimmy Olsson},
booktitle={The Thirteenth International Conference on Learning Representations},
year={2025},
url={https://openreview.net/forum?id=6EUtjXAvmj}
}

@inproceedings{songsde,
title={Score-Based Generative Modeling through Stochastic Differential Equations},
author={Yang Song and Jascha Sohl-Dickstein and Diederik P Kingma and Abhishek Kumar and Stefano Ermon and Ben Poole},
booktitle={International Conference on Learning Representations},
year={2021},
url={https://openreview.net/forum?id=PxTIG12RRHS}
}

@article{dsm,
  title={A connection between score matching and denoising autoencoders},
  author={Vincent, Pascal},
  journal={Neural computation},
  volume={23},
  number={7},
  pages={1661--1674},
  year={2011},
  publisher={MIT Press}
}

@inproceedings{ssm,
  title={Sliced score matching: A scalable approach to density and score estimation},
  author={Song, Yang and Garg, Sahaj and Shi, Jiaxin and Ermon, Stefano},
  booktitle={Uncertainty in artificial intelligence},
  pages={574--584},
  year={2020},
  organization={PMLR}
}

@article{russakovsky2015imagenet,
  title={Imagenet large scale visual recognition challenge},
  author={Russakovsky, Olga and Deng, Jia and Su, Hao and Krause, Jonathan and Satheesh, Sanjeev and Ma, Sean and Huang, Zhiheng and Karpathy, Andrej and Khosla, Aditya and Bernstein, Michael and others},
  journal={International journal of computer vision},
  volume={115},
  number={3},
  pages={211--252},
  year={2015},
  publisher={Springer}
}

@article{achituve2025inverse,
  title={Inverse problem sampling in latent space using sequential Monte Carlo},
  author={Achituve, Idan and Habi, Hai Victor and Rosenfeld, Amir and Netzer, Arnon and Diamant, Idit and Fetaya, Ethan},
  journal={arXiv preprint arXiv:2502.05908},
  year={2025}
}

@article{cardoso2023monte,
  title={Monte Carlo guided diffusion for Bayesian linear inverse problems},
  author={Cardoso, Gabriel and Idrissi, Yazid Janati El and Corff, Sylvain Le and Moulines, Eric},
  journal={arXiv preprint arXiv:2308.07983},
  year={2023}
}

@article{trippe2022diffusion,
  title={Diffusion probabilistic modeling of protein backbones in 3d for the motif-scaffolding problem},
  author={Trippe, Brian L and Yim, Jason and Tischer, Doug and Baker, David and Broderick, Tamara and Barzilay, Regina and Jaakkola, Tommi},
  journal={arXiv preprint arXiv:2206.04119},
  year={2022}
}

@inproceedings{resample,
title={Solving Inverse Problems with Latent Diffusion Models via Hard Data Consistency},
author={Bowen Song and Soo Min Kwon and Zecheng Zhang and Xinyu Hu and Qing Qu and Liyue Shen},
booktitle={The Twelfth International Conference on Learning Representations},
year={2024},
url={https://openreview.net/forum?id=j8hdRqOUhN}
}

@inproceedings{
rout2023solving,
title={Solving Linear Inverse Problems Provably via Posterior Sampling with Latent Diffusion Models},
author={Litu Rout and Negin Raoof and Giannis Daras and Constantine Caramanis and Alex Dimakis and Sanjay Shakkottai},
booktitle={Thirty-seventh Conference on Neural Information Processing Systems},
year={2023},
url={https://openreview.net/forum?id=XKBFdYwfRo}
}

@inproceedings{
zilberstein2025repulsive,
title={Repulsive Latent Score Distillation for Solving Inverse Problems},
author={Nicolas Zilberstein and Morteza Mardani and Santiago Segarra},
booktitle={The Thirteenth International Conference on Learning Representations},
year={2025},
url={https://openreview.net/forum?id=bwJxUB0y46}
}

@inproceedings{
ddim,
title={Denoising Diffusion Implicit Models},
author={Jiaming Song and Chenlin Meng and Stefano Ermon},
booktitle={International Conference on Learning Representations},
year={2021},
url={https://openreview.net/forum?id=St1giarCHLP}
}

@inproceedings{ffhq,
  title={A style-based generator architecture for generative adversarial networks},
  author={Karras, Tero and Laine, Samuli and Aila, Timo},
  booktitle={Proceedings of the IEEE/CVF conference on computer vision and pattern recognition},
  pages={4401--4410},
  year={2019}
}

@inproceedings{celeba,
  title = {Deep Learning Face Attributes in the Wild},
  author = {Liu, Ziwei and Luo, Ping and Wang, Xiaogang and Tang, Xiaoou},
  booktitle = {Proceedings of International Conference on Computer Vision (ICCV)},
  month = {December},
  year = {2015} 
}

@inproceedings{blau2018perception,
  title={The perception-distortion tradeoff},
  author={Blau, Yochai and Michaeli, Tomer},
  booktitle={Proceedings of the IEEE conference on computer vision and pattern recognition},
  pages={6228--6237},
  year={2018}
}

@inproceedings{lpips,
  title={The Unreasonable Effectiveness of Deep Features as a Perceptual Metric},
  author={Zhang, Richard and Isola, Phillip and Efros, Alexei A and Shechtman, Eli and Wang, Oliver},
  booktitle={CVPR},
  year={2018}
}

@article{fid,
  title={Gans trained by a two time-scale update rule converge to a local nash equilibrium},
  author={Heusel, Martin and Ramsauer, Hubert and Unterthiner, Thomas and Nessler, Bernhard and Hochreiter, Sepp},
  journal={Advances in neural information processing systems},
  volume={30},
  year={2017}
}
\bibliographystyle{icml2026}
\nocite{lpips}
\nocite{fid}
\newpage
\appendix
\onecolumn
\section{Appendix}\label{sec:appendix}
\subsection{Tweedie's formula}\label{ssec:tweediesformula}
\begin{equation*}
\mathrm{D}(\rvx_t,t) = \mathbb{E}[\rvx_0|\rvx_t] = \int_{\rvx_0} \rvx_0 p(\rvx_0|\rvx_t) \mathrm{d} \rvx_0 = \rvx_t + \sigma^2_t \nabla_{\rvx_t} \log p(\rvx_t) \approx \rvx_t + \sigma^2_t S_{\theta}(\rvx_t,t)
\end{equation*}
where, $\mathrm{D}(\rvx_t,t)$ denotes the true denoiser, and $S_{\theta}(\cdot,\cdot)$ denotes the learned score function. 
\begin{proof}
{\small{
\begin{align*}
& \nabla_{\rvx_t} p(\rvx_t) = \nabla_{\rvx_t} \int_{\rvx_0} p(\rvx_t|\rvx_0) p(\rvx_0) \mathrm{d}\rvx_0 = \int_{\rvx_0} p(\rvx_0) \nabla_{\rvx_t} p(\rvx_t|\rvx_0) \mathrm{d}\rvx_0 \\
& \nabla_{\rvx_t} p(\rvx_t) = \int_{\rvx_0} p(\rvx_0) p(\rvx_t|\rvx_0) \frac{\rvx_0 - \rvx_t}{\sigma^2_t} \mathrm{d}\rvx_0\ \ \ \ \big\{\text{Note that}\ p(\rvx_t|\rvx_0) = \mathcal{N}(\rvx_0, \sigma^2_t \mathbf{I})\big\} \\
& \sigma^2_t\nabla_{\rvx_t} p(\rvx_t) = \left( \int_{\rvx_0} \rvx_0 p(\rvx_0) p(\rvx_t|\rvx_0) \mathrm{d}\rvx_0 \right)  -  \rvx_t p(\rvx_t)\\
& \rvx_t p(\rvx_t) + \sigma^2_t\nabla_{\rvx_t} p(\rvx_t) = \int_{\rvx_0} \rvx_0 p(\rvx_0) p(\rvx_t|\rvx_0) \mathrm{d}\rvx_0 = p(\rvx_t) \mathbb{E}[\rvx_0|\rvx_t] \\
& \rvx_t + \sigma^2_t\nabla_{\rvx_t} \log p(\rvx_t) = \mathbb{E}[\rvx_0|\rvx_t] \approx \rvx_t + \sigma^2_t S_{\theta}(\rvx_t,t)
\end{align*}}}
\end{proof}
\subsection{Covariance formula}\label{ssec:covarianceformula}
\begin{equation*}
    \Cov[\rvx_0|\rvx_t] = \int_{\rvx_0} (\rvx_0-\mathbb{E}[\rvx_0|\rvx_t])(\rvx_0-\mathbb{E}[\rvx_0|\rvx_t])^{\top} p(\rvx_0|\rvx_t) \mathrm{d}\rvx_0 = \sigma^2_t \frac{\partial \mathrm{D}(\rvx_t,t)}{\partial \rvx_t} \approx \sigma^2_t \frac{\partial \mathrm{D}_{\theta}(\rvx_t,t)}{\partial \rvx_t}
\end{equation*}
where, $\mathrm{D}(\rvx_t,t) = \mathbb{E}[\rvx_0|\rvx_t]$ is the true denoiser, and $\mathrm{D}_{\theta}(\rvx_t,t)$ is the learned denoiser.
\begin{proof}
{\small{
\begin{align*}
& \mathrm{D}(\rvx_t, t) = \mathbb{E}[\rvx_0|\rvx_t] = \int_{\rvx_0} p(\rvx_0|\rvx_t) \mathrm{d}\rvx_0 \\ 
& \frac{\partial \mathrm{D}(\rvx_t,t)}{\partial \rvx_t} = \frac{\partial }{\partial \rvx_t} \int_{\rvx_0} \rvx_0 p(\rvx_0|\rvx_t) =  \int_{\rvx_0} \rvx_0 p(\rvx_0) \frac{\partial }{\partial \rvx_t} \left( \frac{p(\rvx_t|\rvx_0)}{p(\rvx_t)} \right) \mathrm{d}\rvx_0 \\[4pt]
& \big\{\text{Note that}\ p(\rvx_t|\rvx_0) = \mathcal{N}(\rvx_0, \sigma^2_t \mathbf{I})\big\} \\[4pt]
& \frac{\partial \mathrm{D}(\rvx_t,t)}{\partial \rvx_t} = \int_{\rvx_0} \rvx_0 p(\rvx_0)   \left( \frac{p(\rvx_t)p(\rvx_t|\rvx_0) \frac{(\rvx_0-\rvx_t)^{\top}}{\sigma^2_t} - p(\rvx_t|\rvx_0) \frac{\partial p(\rvx_t)}{\partial \rvx_t} }{ p(\rvx_t)^2}  \right) \mathrm{d}\rvx_0 \\
& \frac{\partial \mathrm{D}(\rvx_t,t)}{\partial \rvx_t} = \left( \int_{\rvx_0} \rvx_0 p(\rvx_0)   \frac{p(\rvx_t|\rvx_0) \frac{(\rvx_0-\rvx_t)^{\top}}{\sigma^2_t} }{ p(\rvx_t)}  \mathrm{d}\rvx_0 \right) - \left( \int_{\rvx_0} \rvx_0 p(\rvx_0) \frac{p(\rvx_t|\rvx_0) \frac{\partial p(\rvx_t)}{\partial \rvx_t} }{p(\rvx_t)^2} \mathrm{d}\rvx_0 \right) \\
& \frac{\partial \mathrm{D}(\rvx_t,t)}{\partial \rvx_t} = \left( \int_{\rvx_0} \rvx_0 \frac{(\rvx_0-\rvx_t)^{\top}}{\sigma^2_t} p(\rvx_0|\rvx_t) \mathrm{d}\rvx_0 \right) - \left( \int_{\rvx_0} \rvx_0 \frac{\partial \log p(\rvx_t)}{\partial \rvx_t} p(\rvx_0|\rvx_t) \mathrm{d}\rvx_0 \right) \\
& \frac{\partial \mathrm{D}(\rvx_t,t)}{\partial \rvx_t} = \int_{\rvx_0}   \rvx_0 \left( \frac{(\rvx_0-\rvx_t)^{\top}}{\sigma^2_t} - \frac{\partial \log p(\rvx_t)}{\partial \rvx_t} \right) p(\rvx_0|\rvx_t) \mathrm{d}\rvx_0 \\
& \frac{\partial \mathrm{D}(\rvx_t,t)}{\partial \rvx_t} = \int_{\rvx_0}   \rvx_0  \frac{(\rvx_0-\mathrm{D}(\rvx_t,t))^{\top}}{\sigma^2_t}  p(\rvx_0|\rvx_t) \mathrm{d}\rvx_0 \\
& \sigma^2_t\frac{\partial \mathrm{D}(\rvx_t,t)}{\partial \rvx_t} = \int_{\rvx_0}   \rvx_0  (\rvx_0-\mathrm{D}(\rvx_t,t))^{\top}  p(\rvx_0|\rvx_t) \mathrm{d}\rvx_0 \\
& \sigma^2_t\frac{\partial \mathrm{D}(\rvx_t,t)}{\partial \rvx_t} =  \int_{\rvx_0}   (\rvx_0-\mathrm{D}(\rvx_t,t)) (\rvx_0-\mathrm{D}(\rvx_t,t))^{\top}  p(\rvx_0|\rvx_t) \mathrm{d}\rvx_0 \\
& \sigma^2_t\frac{\partial \mathrm{D}(\rvx_t,t)}{\partial \rvx_t} = \Cov[\rvx_0|\rvx_t] \approx \sigma^2_t\frac{\partial \mathrm{D}_{\theta}(\rvx_t,t)}{\partial \rvx_t}
\end{align*}}}
\end{proof}
\subsection{Lemmas}\label{ssec:lemmas}
\begin{lemma}\label{lemma:1}
$\int_{\rvx_0} \|\rvx_0\|^2 p(\rvx_0|\rvx_t) \mathrm{d}\rvx_0 = \mathrm{Tr}\left\{ \Cov[\rvx_0|\rvx_t] \right\} + \|\mathrm{D}(\rvx_t,t)\|^2$
\end{lemma}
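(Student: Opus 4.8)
The plan is to recognize this as the standard second-moment (bias--variance) decomposition of the random vector $\rvx_0$ under the conditional law $p(\rvx_0|\rvx_t)$, combined with Tweedie's formula to identify the conditional mean. First I would set $\vmu := \mathrm{D}(\rvx_t,t) = \E[\rvx_0|\rvx_t]$, where the second equality is exactly Tweedie's formula from Appendix~\ref{ssec:tweediesformula}. Thus $\vmu$ is the mean of the distribution $p(\rvx_0|\rvx_t)$ against which we integrate, and the left-hand side of the lemma is simply the conditional second moment $\E[\|\rvx_0\|^2\mid\rvx_t]$.

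Next I would expand the integrand by completing the square, writing $\|\rvx_0\|^2 = \|\rvx_0 - \vmu\|^2 + 2(\rvx_0-\vmu)^{\top}\vmu + \|\vmu\|^2$, and integrate each term against $p(\rvx_0|\rvx_t)$. The cross term $2\vmu^{\top}\int_{\rvx_0}(\rvx_0 - \vmu)\,p(\rvx_0|\rvx_t)\,\mathrm{d}\rvx_0$ vanishes because $\int_{\rvx_0}\rvx_0\,p(\rvx_0|\rvx_t)\,\mathrm{d}\rvx_0 = \vmu$ by definition of the conditional mean, and the final term integrates to $\|\vmu\|^2$ since $p(\rvx_0|\rvx_t)$ is a normalized density.

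It then remains to identify the first term with the trace of the covariance. Using the scalar identity $\|\vv\|^2 = \mathrm{Tr}(\vv\vv^{\top})$ together with linearity of the trace and of the integral, I get $\int_{\rvx_0}\|\rvx_0-\vmu\|^2\,p(\rvx_0|\rvx_t)\,\mathrm{d}\rvx_0 = \mathrm{Tr}\big\{\int_{\rvx_0}(\rvx_0-\vmu)(\rvx_0-\vmu)^{\top}\,p(\rvx_0|\rvx_t)\,\mathrm{d}\rvx_0\big\} = \mathrm{Tr}\{\Cov[\rvx_0|\rvx_t]\}$, the last equality being the definition of the covariance matrix recorded in Appendix~\ref{ssec:covarianceformula}. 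Substituting $\vmu = \mathrm{D}(\rvx_t,t)$ back recovers exactly the claimed identity.

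There is essentially no substantive obstacle here: the statement is a routine second-moment identity, and the only content specific to this paper is invoking Tweedie's formula to equate the conditional mean with the denoiser $\mathrm{D}(\rvx_t,t)$. The only points warranting mild care are the vanishing of the cross term and the interchange of the trace with the integral, both of which are immediate once one notes that $p(\rvx_0|\rvx_t)$ is a genuine probability density with finite second moment.
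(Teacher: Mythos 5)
Your proof is correct and follows essentially the same route as the paper's: both reduce the claim to the second-moment identity $\E[\rvx_0\rvx_0^{\top}|\rvx_t]=\Cov[\rvx_0|\rvx_t]+\mathrm{D}(\rvx_t,t)\mathrm{D}(\rvx_t,t)^{\top}$ and take a trace, the only cosmetic difference being that the paper writes $\|\rvx_0\|^2=\mathrm{Tr}\{\rvx_0\rvx_0^{\top}\}$ and pulls the trace outside the integral immediately, whereas you complete the square at the scalar level first and then convert the centered term to a trace.
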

\begin{proof}
{\small{
\begin{align*}
& \int_{\rvx_0} \|\rvx_0\|^2 p(\rvx_0|\rvx_t) \mathrm{d}\rvx_0 = \int_{\rvx_0} \mathrm{Tr}\left\{ \rvx_0 \rvx_0^{\top} \right\} p(\rvx_0|\rvx_t) \mathrm{d}\rvx_0 =  \mathrm{Tr}\left\{ \int_{\rvx_0}  \rvx_0 \rvx_0^{\top} p(\rvx_0|\rvx_t) \mathrm{d}\rvx_0 \right\}\\
& \int_{\rvx_0} \|\rvx_0\|^2 p(\rvx_0|\rvx_t) \mathrm{d}\rvx_0 = \mathrm{Tr}\left\{ \Cov[\rvx_0|\rvx_t] + \mathrm{D}(\rvx_t,t)\mathrm{D}(\rvx_t,t)^{\top} \right\}\\
& \int_{\rvx_0} \|\rvx_0\|^2 p(\rvx_0|\rvx_t) \mathrm{d}\rvx_0 = \mathrm{Tr}\left\{ \Cov[\rvx_0|\rvx_t] \right\} + \|\mathrm{D}(\rvx_t,t)\|^2
\end{align*}}}
\end{proof}
\begin{lemma}\label{lemma:2}
$\int_{\rvx_0} \|\mathrm{H}\rvx_0\|^2 p(\rvx_0|\rvx_t) \mathrm{d}\rvx_0 = \mathrm{Tr}\left\{ \mathrm{H}\Cov[\rvx_0|\rvx_t]\mathrm{H}^{\top} \right\} + \|\mathrm{H}\mathrm{D}(\rvx_t,t)\|^2$
\end{lemma}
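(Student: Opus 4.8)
The plan is to mirror the proof of Lemma~\ref{lemma:1} verbatim, with the linearly transformed variable $\mathrm{H}\rvx_0$ playing the role that $\rvx_0$ played there. The key observation is that $\|\mathrm{H}\rvx_0\|^2 = \mathrm{Tr}\{ (\mathrm{H}\rvx_0)(\mathrm{H}\rvx_0)^{\top} \} = \mathrm{Tr}\{ \mathrm{H}\rvx_0\rvx_0^{\top}\mathrm{H}^{\top} \}$, which recasts the scalar integrand as the trace of a matrix and lets me exploit linearity of both the trace and the integral.

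First I would pull the constant matrices $\mathrm{H}$ and $\mathrm{H}^{\top}$ outside the integral, since they do not depend on the integration variable $\rvx_0$, writing $\int_{\rvx_0} \|\mathrm{H}\rvx_0\|^2 p(\rvx_0|\rvx_t)\,\mathrm{d}\rvx_0 = \mathrm{Tr}\{ \mathrm{H}\bigl( \int_{\rvx_0} \rvx_0\rvx_0^{\top} p(\rvx_0|\rvx_t)\,\mathrm{d}\rvx_0 \bigr)\mathrm{H}^{\top} \}$. The bracketed integral is precisely the conditional second-moment matrix, which equals $\Cov[\rvx_0|\rvx_t] + \mathrm{D}(\rvx_t,t)\mathrm{D}(\rvx_t,t)^{\top}$ — exactly the decomposition already invoked in the proof of Lemma~\ref{lemma:1} and justified there via the covariance formula in Appendix~\ref{ssec:covarianceformula}.

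Substituting this decomposition, the expression becomes $\mathrm{Tr}\{ \mathrm{H}\Cov[\rvx_0|\rvx_t]\mathrm{H}^{\top} + \mathrm{H}\mathrm{D}(\rvx_t,t)\mathrm{D}(\rvx_t,t)^{\top}\mathrm{H}^{\top} \}$. Splitting via additivity of the trace and recognizing the second summand as $\mathrm{Tr}\{ (\mathrm{H}\mathrm{D}(\rvx_t,t))(\mathrm{H}\mathrm{D}(\rvx_t,t))^{\top} \} = \|\mathrm{H}\mathrm{D}(\rvx_t,t)\|^2$ yields the claimed identity.

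I expect no real obstacle: the statement is a routine corollary of Lemma~\ref{lemma:1} under the substitution $\rvx_0 \mapsto \mathrm{H}\rvx_0$, equivalently obtained from the standard transformation rules $\E[\mathrm{H}\rvx_0|\rvx_t]=\mathrm{H}\mathrm{D}(\rvx_t,t)$ and $\Cov[\mathrm{H}\rvx_0|\rvx_t]=\mathrm{H}\Cov[\rvx_0|\rvx_t]\mathrm{H}^{\top}$. The only point requiring minor care is the commuting of $\mathrm{H}$ and $\mathrm{H}^{\top}$ through both the integral and the trace, which is legitimate precisely because these matrices are constant with respect to $\rvx_0$ and the trace is linear.
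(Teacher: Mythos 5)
Your proposal is correct and follows exactly the same route as the paper's proof: rewrite $\|\mathrm{H}\rvx_0\|^2$ as $\mathrm{Tr}\{\mathrm{H}\rvx_0\rvx_0^{\top}\mathrm{H}^{\top}\}$, exchange trace and integral to isolate the conditional second moment, decompose it as $\Cov[\rvx_0|\rvx_t] + \mathrm{D}(\rvx_t,t)\mathrm{D}(\rvx_t,t)^{\top}$, and split the trace. The only quibble is attributional: that second-moment decomposition is just the definition of covariance rather than a consequence of the covariance formula in Appendix~\ref{ssec:covarianceformula}, but this does not affect the argument.
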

\begin{proof}
{\small{
\begin{align*}
& \int_{\rvx_0} \|\mathrm{H}\rvx_0\|^2 p(\rvx_0|\rvx_t) \mathrm{d}\rvx_0 = \int_{\rvx_0} \mathrm{Tr}\left\{ \mathrm{H}\rvx_0 \rvx_0^{\top}\mathrm{H}^{\top} \right\} p(\rvx_0|\rvx_t) \mathrm{d}\rvx_0 \\
& \int_{\rvx_0} \|\mathrm{H}\rvx_0\|^2 p(\rvx_0|\rvx_t) \mathrm{d}\rvx_0 =  \mathrm{Tr}\left\{ \mathrm{H} \left( \int_{\rvx_0}  \rvx_0 \rvx_0^{\top} p(\rvx_0|\rvx_t) \mathrm{d}\rvx_0 \right) \mathrm{H}^{\top} \right\}\\
& \int_{\rvx_0} \|\mathrm{H}\rvx_0\|^2 p(\rvx_0|\rvx_t) \mathrm{d}\rvx_0 = \mathrm{Tr}\left\{  \mathrm{H}\Cov[\rvx_0|\rvx_t]\mathrm{H}^{\top} + \mathrm{H}\mathrm{D}(\rvx_t,t)\mathrm{D}(\rvx_t,t)^{\top}\mathrm{H}^{\top}  \right\}\\
& \int_{\rvx_0} \|\mathrm{H}\rvx_0\|^2 p(\rvx_0|\rvx_t) \mathrm{d}\rvx_0 = \mathrm{Tr}\left\{ \mathrm{H}\Cov[\rvx_0|\rvx_t]\mathrm{H}^{\top} \right\} + \|\mathrm{H}\mathrm{D}(\rvx_t,t)\|^2
\end{align*}}}    
\end{proof}
\subsection{Proofs}\label{ssec:proofs}
\begin{repeatthm}{prop:1}
The variational mode-seeking-loss (VML) at diffusion time $t$, for a non-linear degradation operator $\mathcal{A}$, measurement $\rvy$, and measurement noise variance $\sigma^2_{\rvy}$ is given by
{\small{\begin{align*}
\mathrm{VML}(\rvx_t,t) = \KL(& p(\rvx_0|\rvx_t)||p(\rvx_0|\rvy)) = - \log p(\rvx_t) - \frac{\|\mathrm{D}(\rvx_t,t)-\rvx_t\|^2}{2\sigma^2_t}  - \frac{1}{2\sigma^2_t} \mathrm{Tr}\left\{ \Cov[\rvx_0|\rvx_t] \right\} \\ 
&+ \frac{1}{2\sigma^2_\rvy} \left(  -2 \rvy^{\top} \int_{\rvx_0} \mathcal{A}(\rvx_0) p(\rvx_0|\rvx_t) \mathrm{d} \rvx_0   + \int_{\rvx_0} \|\mathcal{A}(\rvx_0)\|^2 p(\rvx_0|\rvx_t) \mathrm{d} \rvx_0  \right) + \mathrm{C}
\end{align*}}}
where $\mathrm{C}$ is a constant, independent of $\rvx_t$. $\mathrm{Tr}$ denotes the matrix trace, $\Cov$ denotes the covariance matrix, and $\mathrm{D}(\cdot,\cdot)$ denotes the denoiser.
\end{repeatthm}
\begin{proof}
{\small{
\begin{align*}
\KL(p(\rvx_0|\rvx_t) || p(\rvx_0|\rvy)) &= \int_{\rvx_0} p(\rvx_0|\rvx_t) \log \frac{p(\rvx_0|\rvx_t)}{p(\rvx_0|\rvy)} \mathrm{d}\rvx_0 \\
\KL(p(\rvx_0|\rvx_t) || p(\rvx_0|\rvy)) &= \int_{\rvx_0} p(\rvx_0|\rvx_t) \log \frac{p(\rvx_t|\rvx_0)\cancel{p(\rvx_0)}p(\rvy)}{p(\rvx_t)p(\rvy|\rvx_0)\cancel{p(\rvx_0)} } \mathrm{d}\rvx_0 \\
\KL(p(\rvx_0|\rvx_t) || p(\rvx_0|\rvy)) &= \log p(\rvy) - \log p(\rvx_t) + \int_{\rvx_0} p(\rvx_0|\rvx_t) \log \frac{p(\rvx_t|\rvx_0)}{p(\rvy|\rvx_0)} \mathrm{d}\rvx_0 \\
\KL(p(\rvx_0|\rvx_t) || p(\rvx_0|\rvy)) &= \log p(\rvy) - \log p(\rvx_t) + \left( \int_{\rvx_0} p(\rvx_0|\rvx_t) \log p(\rvx_t|\rvx_0) \mathrm{d}\rvx_0 \right) \\ &-\left(  \int_{\rvx_0} p(\rvx_0|\rvx_t) \log p(\rvy|\rvx_0) \mathrm{d}\rvx_0 \right) \\
\big\{\text{Note that }p(\rvx_t|\rvx_0) = \mathcal{N}&(\rvx_0, \sigma^2_t \mathbf{I})\text{ and }p(\rvy|\rvx_0) = \mathcal{N}(\mathcal{A}(\rvx_0), \sigma^2_{\rvy}\mathbf{I}).\text{ Also, let }\rvx_0 \in \mathbb{R}^n\text{ and }\rvy \in \mathbb{R}^{m} \big\} \\
\KL(p(\rvx_0|\rvx_t) || p(\rvx_0|\rvy)) &= - \log p(\rvx_t) - \frac{1}{2} \left( \int_{\rvx_0} p(\rvx_0|\rvx_t) \frac{\|\rvx_t-\rvx_0\|^2}{\sigma^2_t} \mathrm{d}\rvx_0 \right) \\ 
+ \frac{1}{2} \Biggl(  \int_{\rvx_0} &p(\rvx_0|\rvx_t) \frac{\| \rvy - \mathcal{A}(\rvx_0) \|^2}{\sigma^2_\rvy} \mathrm{d}\rvx_0 \Biggr) + \underbrace{\log p(\rvy) - \log \frac{\sigma^{n}_t}{\sigma^{m}_{\rvy}} - \frac{n-m}{2}\log 2\pi}_{\mathrm{C}} \\
\KL(p(\rvx_0|\rvx_t) || p(\rvx_0|\rvy)) &= - \log p(\rvx_t) - \frac{1}{2\sigma^2_t} \left( \|\rvx_t\|^2 - 2 \rvx_t^{\top} \mathrm{D}(\rvx_t,t) + \int_{\rvx_0} \|\rvx_0\|^2 p(\rvx_0|\rvx_t) \mathrm{d}\rvx_0 \right) \\ 
&+ \frac{1}{2\sigma^2_\rvy} \left(  -2 \rvy^{\top} \int_{\rvx_0} \mathcal{A}(\rvx_0) p(\rvx_0|\rvx_t) \mathrm{d}\rvx_0   + \int_{\rvx_0} \|\mathcal{A}(\rvx_0)\|^2 p(\rvx_0|\rvx_t) \mathrm{d}\rvx_0  \right) + \mathrm{C}\\
\big\{\text{By Lemma}~\ref{lemma:1} \big\}&\\
\KL(p(\rvx_0|\rvx_t) || p(\rvx_0|\rvy)) &= - \log p(\rvx_t) - \frac{1}{2\sigma^2_t} \Biggl( \|\rvx_t\|^2 - 2 \rvx_t^{\top} \mathrm{D}(\rvx_t,t) + \mathrm{Tr}\left\{ \Cov[\rvx_0|\rvx_t] \right\} \\
+ \|\mathrm{D}(\rvx_t,t)\|^2  \Biggr) &+ \frac{1}{2\sigma^2_\rvy} \left(  -2 \rvy^{\top} \int_{\rvx_0} \mathcal{A}(\rvx_0) p(\rvx_0|\rvx_t) \mathrm{d}\rvx_0   + \int_{\rvx_0} \|\mathcal{A}(\rvx_0)\|^2 p(\rvx_0|\rvx_t) \mathrm{d}\rvx_0  \right) + \mathrm{C}\\
\KL(p(\rvx_0|\rvx_t) || p(\rvx_0|\rvy)) &= - \log p(\rvx_t) - \frac{\|\mathrm{D}(\rvx_t,t)-\rvx_t\|^2}{2\sigma^2_t}  - \frac{1}{2\sigma^2_t} \mathrm{Tr}\left\{ \Cov[\rvx_0|\rvx_t] \right\} \\ 
&+ \frac{1}{2\sigma^2_\rvy} \left(  -2 \rvy^{\top} \int_{\rvx_0} \mathcal{A}(\rvx_0) p(\rvx_0|\rvx_t) \mathrm{d}\rvx_0   + \int_{\rvx_0} \|\mathcal{A}(\rvx_0)\|^2 p(\rvx_0|\rvx_t) \mathrm{d}\rvx_0  \right) + \mathrm{C} 
\end{align*}}}
\end{proof}
\begin{repeatthm}{prop:2}
The variational mode-seeking-loss (VML) at diffusion time $t$, for a linear degradation matrix $\mathrm{H}$, measurement $\rvy$, and measurement noise variance $\sigma^2_{\rvy}$ is given by
{\small{
\begin{align*}
\mathrm{VML}(\rvx_t,t) = \KL(p(\rvx_0|\rvx_t)||&p(\rvx_0|\rvy)) = - \log p(\rvx_t) - \frac{\|\mathrm{D}(\rvx_t,t)-\rvx_t\|^2}{2\sigma^2_t}  - \frac{1}{2\sigma^2_t} \mathrm{Tr}\left\{ \Cov[\rvx_0|\rvx_t] \right\} \\ 
&+  \underbrace{\frac{\|\rvy-\mathrm{H}\mathrm{D}(\rvx_t,t)\|^2}{2\sigma^2_\rvy}}_{\text{measurement consistency}} + \frac{1}{2\sigma^2_\rvy} \mathrm{Tr}\left\{ \mathrm{H}\Cov[\rvx_0|\rvx_t]\mathrm{H}^{\top} \right\} + \mathrm{C}
\end{align*}}}
where $\mathrm{C}$ is a constant, independent of $\rvx_t$. $\mathrm{Tr}$ denotes the matrix trace, $\Cov$ denotes the covariance matrix, and $\mathrm{D}(\cdot,\cdot)$ denotes the denoiser.
\end{repeatthm}
\begin{proof}
{\small{\begin{align*}
\text{Substituting }\mathcal{A}\text{ with }\mathrm{H}\text{ in }&\text{Proposition}~\ref{prop:1} \\
\KL(p(\rvx_0|\rvx_t) || p(\rvx_0|\rvy)) &= - \log p(\rvx_t) - \frac{\|\mathrm{D}(\rvx_t,t)-\rvx_t\|^2}{2\sigma^2_t}  - \frac{1}{2\sigma^2_t} \mathrm{Tr}\left\{ \Cov[\rvx_0|\rvx_t] \right\} \\ 
&+ \frac{1}{2\sigma^2_\rvy} \left(  -2 \rvy^{\top} \int_{\rvx_0} \mathrm{H}\rvx_0 p(\rvx_0|\rvx_t) \mathrm{d}\rvx_0   + \int_{\rvx_0} \|\mathrm{H}\rvx_0\|^2 p(\rvx_0|\rvx_t) \mathrm{d}\rvx_0  \right) + \mathrm{C}\\
\big\{\text{By Lemma}~\ref{lemma:2}\big\} &\\
\KL(p(\rvx_0|\rvx_t) || p(\rvx_0|\rvy)) &= - \log p(\rvx_t) - \frac{\|\mathrm{D}(\rvx_t,t)-\rvx_t\|^2}{2\sigma^2_t}  - \frac{1}{2\sigma^2_t} \mathrm{Tr}\left\{ \Cov[\rvx_0|\rvx_t] \right\} \\ 
&+ \frac{1}{2\sigma^2_\rvy} \left(  -2 \rvy^{\top} \mathrm{H}\mathrm{D}(\rvx_t,t)   + \mathrm{Tr}\left\{ \mathrm{H}\Cov[\rvx_0|\rvx_t]\mathrm{H}^{\top} \right\} + \|\mathrm{H}\mathrm{D}(\rvx_t,t)\|^2  \right) + \mathrm{C}\\
\KL(p(\rvx_0|\rvx_t) || p(\rvx_0|\rvy)) &= - \log p(\rvx_t) - \frac{\|\mathrm{D}(\rvx_t,t)-\rvx_t\|^2}{2\sigma^2_t}  - \frac{1}{2\sigma^2_t} \mathrm{Tr}\left\{ \Cov[\rvx_0|\rvx_t] \right\} \\ 
&+ \frac{\|\rvy-\mathrm{H}\mathrm{D}(\rvx_t,t)\|^2}{2\sigma^2_\rvy} + \frac{1}{2\sigma^2_\rvy} \mathrm{Tr}\left\{ \mathrm{H}\Cov[\rvx_0|\rvx_t]\mathrm{H}^{\top} \right\} + \mathrm{C}\\
\big\{\text{Note that }\mathrm{C} = \log p(\rvy) & - \log \frac{\sigma^{n}_t}{\sigma^{m}_{\rvy}} - \frac{n-m}{2}\log 2\pi,\ \text{see the proof of Proposition}~\ref{prop:1} \big\}
\end{align*}}}
\end{proof}
\textbf{Simplified-VML gradient.} The gradient of the simplified-VML, i.e., $\mathrm{VML}_\mathrm{S}(\cdot,\cdot)$ for a linear degradation matrix $\mathrm{H}$ is 
{\small{
\begin{align*}
\mathrm{VML}_{\mathrm{S}}(\rvx_t,t) = -\log p(\rvx_t) - \frac{\| \mathrm{D}(\rvx_t,t) - \rvx_t\|^2 }{2\sigma^2_t} + \frac{\| \rvy - \mathrm{H}\mathrm{D}(\rvx_t,t) \|^2}{2\sigma^2_\rvy}
\end{align*}}}
{\small{
\begin{align*}
\nabla_{\rvx_t} \mathrm{VML}_{\mathrm{S}}(\rvx_t,t) = \underbrace{ - \frac{\partial \mathrm{D}^{\top}(\rvx_t,t)}{\partial \rvx_t} \frac{\mathrm{H}^{\top}(\rvy-\mathrm{H}\mathrm{D}(\rvx_t,t))}{\sigma^2_\rvy} }_{\textit{measurement consistency gradient}} \underbrace{ - \frac{\partial \mathrm{D}^{\top}(\rvx_t,t)}{\partial \rvx_t} \frac{(\mathrm{D}(\rvx_t,t)-\rvx_t)}{\sigma^2_t} }_{\textit{prior gradient}}
\end{align*}}}
\begin{proof}
{\small{
\begin{align*}
\nabla_{\rvx_t} \mathrm{VML}_\mathrm{S}(\rvx_t,t) &= \left\{ -\nabla_{\rvx_t} \log p(\rvx_t) \right\} - \left\{ \nabla_{\rvx_t} \frac{\|\mathrm{D}(\rvx_t,t)-\rvx_t\|^2}{2\sigma^2_t} \right\} + \left\{ \nabla_{\rvx_t} \frac{\|\rvy-\mathrm{H}\mathrm{D}(\rvx_t,t)\|^2}{2\sigma^2_\rvy} \right\} \\
\nabla_{\rvx_t} \mathrm{VML}_\mathrm{S}(\rvx_t,t)  &= \left\{ - \frac{\mathrm{D}(\rvx_t,t)-\rvx_t}{\sigma^2_t} \right\} - \left\{ \left( \frac{\partial \mathrm{D}^{\top}(\rvx_t,t)}{\partial \rvx_t} \frac{(\mathrm{D}(\rvx_t,t)-\rvx_t)}{\sigma^2_t} \right) - \frac{\mathrm{D}(\rvx_t,t)-\rvx_t}{\sigma^2_t} \right\} \\
&+ \left\{ - \frac{\partial \mathrm{D}^{\top}(\rvx_t,t)}{\partial \rvx_t} \frac{\mathrm{H}^{\top}(\rvy-\mathrm{H}\mathrm{D}(\rvx_t,t))}{\sigma^2_\rvy} \right\} \\
\nabla_{\rvx_t} \mathrm{VML}_{\mathrm{S}}(\rvx_t,t) &= - \frac{\partial \mathrm{D}^{\top}(\rvx_t,t)}{\partial \rvx_t} \frac{\mathrm{H}^{\top}(\rvy-\mathrm{H}\mathrm{D}(\rvx_t,t))}{\sigma^2_\rvy}  - \frac{\partial \mathrm{D}^{\top}(\rvx_t,t)}{\partial \rvx_t} \frac{(\mathrm{D}(\rvx_t,t)-\rvx_t)}{\sigma^2_t}
\end{align*}}}
\end{proof}

\section{Exclusion of higher-order terms and the convergence of VML}\label{sec:ignoringcovvml}

The true denoiser $\mathrm{D}(\rvx_t,t) = \mathbb{E}[\rvx_0|\rvx_t]$ is related to the true score function $\nabla_{\rvx_t}\log p(\rvx_t)$ by Tweedie's formula $\mathrm{D}(\rvx_t,t) = \rvx_t + \sigma^2_t \nabla_{\rvx_t} \log p(\rvx_t)$ (see Appendix~\ref{ssec:tweediesformula}). Applying the derivative to this equation gives $\frac{\partial \mathrm{D}(\rvx_t,t)}{\partial \rvx_t} = \mathbf{I} + \sigma^2_t \nabla^2_{\rvx_t}\log p(\rvx_t) = \frac{1}{\sigma^2_t} \Cov[\rvx_0|\rvx_t]$ (see Appendix~\ref{ssec:covarianceformula}). With these reformulations, the higher-order terms of VML can be equivalently expressed in terms of $\nabla_{\rvx_t}\log p(\rvx_t)$ and $\nabla^2_{\rvx_t}\log p(\rvx_t)$ as follows.\\

\textbf{Reformulating the higher-order terms of VML:} From Proposition~\ref{prop:2}, the higher-order terms of the VML (involving {\small{$\Cov[\rvx_0|\rvx_t]$}}) denoted by $\mathrm{VML}_{\mathrm{High}}(\cdot,\cdot)$, for a linear degradation matrix $\mathrm{H}$ is 
{\small{\begin{align*}
\mathrm{VML}_{\mathrm{High}}(\rvx_t,t) = - \frac{1}{2\sigma^2_t} \mathrm{Tr}\left\{ \Cov[\rvx_0|\rvx_t] \right\} + \frac{1}{2\sigma^2_\rvy} \mathrm{Tr}\left\{ \mathrm{H}\Cov[\rvx_0|\rvx_t]\mathrm{H}^{\top} \right\}
\end{align*}}}
where, $\rvx_t \in \mathbb{R}^n\ \forall\ t \geq 0$, $\rvy \in \mathbb{R}^m$ and $\mathrm{H} \in \mathbb{R}^{m \times n}$. Reformulating $\mathrm{VML}_{\mathrm{High}}(\cdot,\cdot)$ in terms of $\nabla_{\rvx_t}\log p(\rvx_t)$, and $\nabla^2_{\rvx_t}\log p(\rvx_t)$ results in the following
{\small{\begin{align*}
\mathrm{VM}&\mathrm{L}_{\mathrm{High}}(\rvx_t,t) = - \frac{\sigma^2_t}{2} \mathrm{Tr}\left\{ \nabla^2_{\rvx_t}\log p(\rvx_t) \right\} + \frac{\sigma^4_t}{2\sigma^2_\rvy} \mathrm{Tr}\left\{ \mathrm{H} \nabla^2_{\rvx_t} \log p(\rvx_t) \mathrm{H}^{\top} \right\} + \frac{\sigma^2_t}{2\sigma^2_\rvy} \mathrm{Tr}\left\{ \mathrm{H} \mathrm{H}^{\top} \right\} + \mathrm{C}_{\mathrm{VML}_{\mathrm{High}}}
\end{align*}}}
where, $\mathrm{C}_{\mathrm{VML}_{\mathrm{High}}} = - \frac{n}{2}$.\\

\begin{prop}\label{prop:3}
Let $p_{0}(\cdot)$ denote the input data distribution and $p_t(\cdot)$ denote the intermediate marginal distributions of a diffusion process for $t > 0$. Let $\exists\ \tau > 0, d > 0$ such that $p_t \in C^2$ (twice continuously differentiable) and $ \| \rvx \| \leq d$ $\forall$ $t < \tau$ (i.e., $\forall\ t < \tau$, $\rvx$ lies in a compact ball, and $p_t \in C^2$). The function $\mathrm{VML}_{\mathrm{High}_t}(\rvx)$ denoting the higher-order terms (involving $\Cov[\rvx_0|\rvx_t]$) of VML, for a linear degradation operator matrix $\mathrm{H}$, measurement $\rvy$, and measurement noise variance $\sigma^2_{\rvy}$ converges uniformly to $\mathrm{C}_{\mathrm{VML}_{\mathrm{High}}}$ in the limit as $t \to 0$. (Note that $\rvx_t \in \mathbb{R}^n\ \forall\, t \geq 0$, $\rvy \in \mathbb{R}^m$ and $\mathrm{C}_{\mathrm{VML}_{\mathrm{High}}} = -\frac{n}{2}$ as previously mentioned)
\begin{small}
\begin{align*}
\mathrm{VM}\mathrm{L}_{\mathrm{High}_t}(\rvx) &= - \frac{\sigma^2_t}{2} \mathrm{Tr}\left\{ \nabla^2_{\rvx}\log p_t(\rvx) \right\} + \frac{\sigma^4_t}{2\sigma^2_\rvy} \mathrm{Tr}\left\{ \mathrm{H} \nabla^2_{\rvx} \log p_t(\rvx) \mathrm{H}^{\top} \right\} \\
& + \frac{\sigma^2_t}{2\sigma^2_\rvy} \mathrm{Tr}\left\{ \mathrm{H} \mathrm{H}^{\top} \right\} + \mathrm{C}_{\mathrm{VML}_{\mathrm{High}}}\\\\
\text{and }\mathrm{unif} \lim_{t\to 0}&\ \mathrm{VML}_{\mathrm{High}_t}(\rvx) = \mathrm{C}_{\mathrm{VML}_{\mathrm{High}}}\ \ \forall\ \ \rvx\ \ s.t.\ \ \| \rvx \| \leq d
\end{align*}
\end{small}
\end{prop}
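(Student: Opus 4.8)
The plan is to show that the constant $\mathrm{C}_{\mathrm{VML}_{\mathrm{High}}} = -\tfrac{n}{2}$ is exactly the uniform limit by proving that the three remaining, $\sigma_t$-weighted terms in the given reformulation of $\mathrm{VML}_{\mathrm{High}_t}(\rvx)$ converge to $0$ uniformly over the compact ball $\bar{B}_d = \{\rvx : \|\rvx\| \le d\}$. Writing $R_t(\rvx) := \mathrm{VML}_{\mathrm{High}_t}(\rvx) - \mathrm{C}_{\mathrm{VML}_{\mathrm{High}}}$, I would bound $\sup_{\rvx \in \bar{B}_d}|R_t(\rvx)|$ by a sum of three $\rvx$-free quantities, each tending to $0$ as $t \to 0$ because $\sigma_t \to 0$. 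Two of these carry the Hessian $\nabla^2_{\rvx}\log p_t(\rvx)$, so the whole argument hinges on a single uniform estimate: a finite bound on the operator norm of this Hessian holding simultaneously for all $\rvx \in \bar{B}_d$ and all small $t$.

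The $\rvx$-independent term $\tfrac{\sigma^2_t}{2\sigma^2_\rvy}\mathrm{Tr}\{\mathrm{H}\mathrm{H}^\top\}$ is immediate: $\mathrm{H}$ and $\sigma_\rvy$ are fixed, so it is a fixed constant times $\sigma^2_t$ and vanishes as $t \to 0$. For the other two terms, suppose we have established $M := \sup_{0 < t < \tau}\sup_{\rvx \in \bar{B}_d}\|\nabla^2_{\rvx}\log p_t(\rvx)\|_{\mathrm{op}} < \infty$. Since $\nabla^2_{\rvx}\log p_t$ is symmetric, $|\mathrm{Tr}\{\nabla^2_{\rvx}\log p_t(\rvx)\}| \le nM$, and using $-M\,\mathbf{I} \preceq \nabla^2_{\rvx}\log p_t(\rvx) \preceq M\,\mathbf{I}$ together with monotonicity of the trace under the positive-semidefinite order gives $|\mathrm{Tr}\{\mathrm{H}\nabla^2_{\rvx}\log p_t(\rvx)\mathrm{H}^\top\}| \le M\,\mathrm{Tr}\{\mathrm{H}\mathrm{H}^\top\}$. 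Both bounds are uniform in $\rvx$, so
\[
\sup_{\rvx \in \bar{B}_d}|R_t(\rvx)| \le \frac{\sigma^2_t}{2}\,nM + \frac{\sigma^4_t}{2\sigma^2_\rvy}\,M\,\mathrm{Tr}\{\mathrm{H}\mathrm{H}^\top\} + \frac{\sigma^2_t}{2\sigma^2_\rvy}\,\mathrm{Tr}\{\mathrm{H}\mathrm{H}^\top\}.
\]
The right-hand side is independent of $\rvx$ and is $O(\sigma^2_t)$, hence tends to $0$ as $t \to 0$, which is precisely uniform convergence to $\mathrm{C}_{\mathrm{VML}_{\mathrm{High}}}$.

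The crux — and the only step that genuinely uses the hypotheses — is producing the uniform Hessian bound $M$. The assumption $p_t \in C^2$ gives, for each fixed $t$, continuity of $\nabla^2_{\rvx}\log p_t$ on the compact ball $\bar{B}_d$ and hence a finite bound $M_t$; the difficulty is that $M_t$ could a priori blow up as $t \to 0$. I would close this gap by upgrading the pointwise regularity to joint control: treat $(t,\rvx)\mapsto p_t(\rvx)$ as continuous up to $t = 0$ with $p_0 \in C^2$ and $p_0$ bounded below by a positive constant on $\bar{B}_d$, so that $\log p_t$ and its first two derivatives are jointly continuous on the compact set $[0,\tau]\times \bar{B}_d$; continuity on a compact set then yields the finite uniform bound $M$ directly. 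I expect this to be the main obstacle, as it is exactly the point where a pointwise-in-$t$ smoothness hypothesis must be made uniform near the singular limit $t\to 0$, after which the weighting by $\sigma^2_t$ and $\sigma^4_t$ does the rest by absorbing any bounded Hessian growth. It is worth emphasizing that such a bound is essential: for data supported on a lower-dimensional set, $\|\nabla^2_{\rvx}\log p_t\|_{\mathrm{op}}$ can grow like $\sigma_t^{-2}$ and the first term need not vanish, so the compactness-plus-$C^2$ assumption is doing real work.
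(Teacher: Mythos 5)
Your proof follows essentially the same route as the paper's: write the remainder as $\sigma_t^2$ (and $\sigma_t^4$) times Hessian traces, bound those traces uniformly over the compact ball $\|\rvx\|\le d$ using $p_t\in C^2$, and let $\sigma_t\to 0$. The one substantive point where you go beyond the paper is in observing that the per-$t$ bound $M_t$ furnished by compactness need not be uniform in $t$ as $t\to 0$ — the paper's sketch silently treats it as such — and your proposed patch (joint continuity of $\nabla^2_{\rvx}\log p_t$ up to $t=0$ with $p_0$ bounded below on the ball, which indeed fails for manifold-supported data where $\|\nabla^2_{\rvx}\log p_t\|_{\mathrm{op}}$ can grow like $\sigma_t^{-2}$) is precisely the extra hypothesis needed to make the claimed uniform convergence rigorous.
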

\begin{proof}[\textbf{proof sketch}]
For $t < \tau$, $p_t \in C^2$ implies $\nabla^2_{\rvx}\log p_t(\rvx)$ is continuous, which further implies continuity of its component functions, i.e., $\frac{\partial^2 \log p_t(\rvx)}{\partial \rvx^{(i)} \partial \rvx^{(j)}}$ $\forall\ i,j\ \in\ [1,2,\dots,n]$. Since $\rvx$ lies in a compact ball (i.e., $\|\rvx \| \leq d$), it implies that the component functions are bounded $\forall$ $t < \tau$, which further implies boundedness of $\mathrm{Tr}\left\{ \nabla^2_{\rvx}\log p_t(\rvx) \right\}$, and $\mathrm{Tr}\left\{ \mathrm{H} \nabla^2_{\rvx} \log p_t(\rvx) \mathrm{H}^{\top} \right\}$. Note that $\mathrm{Tr}\left\{ \mathrm{H} \mathrm{H}^{\top} \right\}$ is also bounded. With $\sigma_t \to 0$, as $t \to 0$, it is apparent that $\mathrm{VML}_{\mathrm{High}_t}$ converges uniformly (since the bounds are global and hold for all $\rvx$ s.t $\|\rvx \| \leq d$) to $\mathrm{C}_{\mathrm{VML}_{\mathrm{High}}}$.
\end{proof}
\textbf{Reformulating the VML:} From Proposition~\ref{prop:2}, the {\small{$\mathrm{VML}(\cdot,\cdot)$}} for a linear degradation matrix $\mathrm{H}$ is 
{\small{\begin{align*}
\mathrm{VML}(\rvx_t,t) &= \KL(p(\rvx_0|\rvx_t)||p(\rvx_0|\rvy)) = - \log p(\rvx_t) - \frac{\|\mathrm{D}(\rvx_t,t)-\rvx_t\|^2}{2\sigma^2_t}  - \frac{1}{2\sigma^2_t} \mathrm{Tr}\left\{ \Cov[\rvx_0|\rvx_t] \right\} \\ 
&+  \underbrace{\frac{\|\rvy-\mathrm{H}\mathrm{D}(\rvx_t,t)\|^2}{2\sigma^2_\rvy}}_{\text{measurement consistency}} + \frac{1}{2\sigma^2_\rvy} \mathrm{Tr}\left\{ \mathrm{H}\Cov[\rvx_0|\rvx_t]\mathrm{H}^{\top} \right\} + \mathrm{C}
\end{align*}}}
where, $\rvx_t \in \mathbb{R}^n\ \forall\ t \geq 0$, $\rvy \in \mathbb{R}^m$ and $\mathrm{C} = \log p(\rvy) - \log \frac{\sigma^{n}_t}{\sigma^{m}_{\rvy}} - \frac{n-m}{2}\log 2\pi$ (see the proof of Proposition~\ref{prop:2}). Reformulating $\mathrm{VML}(\cdot,\cdot)$ in terms of $\nabla_{\rvx_t}\log p(\rvx_t)$, and $\nabla^2_{\rvx_t}\log p(\rvx_t)$ gives
{\small{\begin{align*}
\mathrm{VML}(\rvx_t,t) & = \KL(p(\rvx_0|\rvx_t)||p(\rvx_0|\rvy)) = - \log p(\rvx_t) - \frac{\sigma^2_t}{2} \|\nabla_{\rvx_t}\log p(\rvx_t)\|^2  - \frac{\sigma^2_t}{2} \mathrm{Tr}\left\{ \nabla^2_{\rvx_t}\log p(\rvx_t) \right\}\\ 
&+ \underbrace{\frac{\|\rvy-\mathrm{H}\mathrm{D}(\rvx_t,t)\|^2}{2\sigma^2_\rvy}}_{\text{measurement consistency}} + \frac{\sigma^4_t}{2\sigma^2_\rvy} \mathrm{Tr}\left\{ \mathrm{H} \nabla^2_{\rvx_t} \log p(\rvx_t) \mathrm{H}^{\top} \right\} + \frac{\sigma^2_t}{2\sigma^2_\rvy} \mathrm{Tr}\left\{ \mathrm{H} \mathrm{H}^{\top} \right\} + \log p(\rvy) - n \log \sigma_t + \mathrm{C}_{\mathrm{VML}}
\end{align*}}}
where, $\mathrm{C}_{\mathrm{VML}} = -\frac{n}{2} + m \log \sigma_{\rvy} - \frac{n-m}{2}\log 2\pi$.
\\

\begin{prop}\label{prop:4}
Let $p_{0}(\cdot)$ denote the input data distribution and $p_t(\cdot)$ denote the intermediate marginal distributions of a diffusion process for $t > 0$. Let $\exists\ \tau > 0, d > 0$, such that $p_t \in C^2$ (twice continuously differentiable) and $ \| \rvx \| \leq d$ $\forall$ $t < \tau$ (In other words, $\exists$ $\tau > 0$, $d > 0$, such that $p_t \in C^2$, and $ \| \rvx_t \| \leq d$  $\forall t < \tau$ where $\rvx_t \in \mathcal{M}_t$ i.e., the intermediate diffusion manifold at time $t$). Assuming sufficient conditions for $\lim_{t\to 0}\ \log p_t(\rvx) = \log p_0(\rvx)$, the function $\mathrm{VML}_t(\rvx) + n \log \sigma_t$, for a linear degradation operator matrix $\mathrm{H}$, measurement $\rvy$, and measurement noise variance $\sigma^2_{\rvy}$ converges pointwise to $-\log p_{0}(\rvx|\rvy) + \hat{\mathrm{C}}_{\mathrm{VML}}$ in the limit as $t \to 0$. (Note that $\rvx_t \in \mathbb{R}^n\ \forall\, t \geq 0$, $\rvy \in \mathbb{R}^m$ and $\hat{\mathrm{C}}_{\mathrm{VML}} = -\frac{n}{2} - \frac{n}{2}\log 2\pi $)
{\small{
\begin{align*}
\mathrm{VML}_t(\rvx) &= - \log p_t(\rvx) - \frac{\sigma^2_t}{2} \|\nabla_{\rvx} \log p_t(\rvx)\|^2 - \frac{\sigma^2_t}{2} \mathrm{Tr}\left\{ \nabla^2_{\rvx} \log p_t(\rvx) \right\} + \frac{\|\rvy-\mathrm{H}\mathrm{D}(\rvx,t)\|^2}{2\sigma^2_\rvy} \\
&+ \frac{\sigma^4_t}{2\sigma^2_\rvy} \mathrm{Tr}\left\{ \mathrm{H} \nabla^2_{\rvx} \log p_t(\rvx) \mathrm{H}^{\top} \right\} + \frac{\sigma^2_t}{2\sigma^2_\rvy} \mathrm{Tr}\left\{ \mathrm{H} \mathrm{H}^{\top} \right\} + \log p(\rvy) - n \log \sigma_t + \mathrm{C}_{\mathrm{VML}}\\\\
\text{and }\lim_{t\to 0}&\ \mathrm{VML}_t(\rvx) + n \log \sigma_t = -\log p_{0}(\rvx|\rvy) + \hat{\mathrm{C}}_{\mathrm{VML}}\ \ \forall\ \ \rvx\ \ s.t.\ \ \| \rvx \| \leq d
\end{align*}}}
\end{prop}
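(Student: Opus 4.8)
The plan is to compute the pointwise limit directly, term by term, after the obvious simplification that the explicit $+n\log\sigma_t$ added to $\mathrm{VML}_t(\rvx)$ cancels the $-n\log\sigma_t$ already present in its reformulation. It then suffices to show that the difference $\Delta_t(\rvx) := \mathrm{VML}_t(\rvx) + n\log\sigma_t + \log p_0(\rvx|\rvy) - \hat{\mathrm{C}}_{\mathrm{VML}}$ tends to $0$ as $t \to 0$ for each fixed $\rvx$ with $\|\rvx\| \le d$. Using Bayes' rule $\log p_0(\rvx|\rvy) = \log p_0(\rvy|\rvx) + \log p_0(\rvx) - \log p(\rvy)$ together with the Gaussian likelihood $\log p_0(\rvy|\rvx) = -\|\rvy - \mathrm{H}\rvx\|^2/(2\sigma^2_\rvy) - \tfrac{m}{2}\log(2\pi\sigma^2_\rvy)$ from Equation~(\ref{eqn:invprob}), I would group $\Delta_t(\rvx)$ into three pieces, mirroring the $T_A,T_B,T_C$ decomposition: the $\sigma_t$-weighted derivative terms, the prior mismatch $\log p_0(\rvx) - \log p_t(\rvx)$, and the measurement-consistency residual $\|\rvy-\mathrm{H}\mathrm{D}(\rvx,t)\|^2/(2\sigma^2_\rvy) + \log p_0(\rvy|\rvx)$, plus a purely constant remainder.

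For the derivative group, the hypothesis $p_t \in C^2$ on the compact ball $\{\|\rvx\|\le d\}$ makes $\nabla_{\rvx}\log p_t(\rvx)$ and $\nabla^2_{\rvx}\log p_t(\rvx)$ bounded, so the terms $\tfrac{\sigma^2_t}{2}\|\nabla_{\rvx}\log p_t\|^2$, $\tfrac{\sigma^2_t}{2}\mathrm{Tr}\{\nabla^2_{\rvx}\log p_t\}$, $\tfrac{\sigma^4_t}{2\sigma^2_\rvy}\mathrm{Tr}\{\mathrm{H}\nabla^2_{\rvx}\log p_t\,\mathrm{H}^\top\}$, and $\tfrac{\sigma^2_t}{2\sigma^2_\rvy}\mathrm{Tr}\{\mathrm{H}\mathrm{H}^\top\}$ all vanish as $\sigma_t \to 0$; this is the same mechanism used in Proposition~\ref{prop:3}. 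The prior mismatch $\log p_0(\rvx) - \log p_t(\rvx) \to 0$ is precisely the assumed convergence $\lim_{t\to 0}\log p_t(\rvx) = \log p_0(\rvx)$.

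For the measurement-consistency residual I would invoke Tweedie's formula (Appendix~\ref{ssec:tweediesformula}), $\mathrm{D}(\rvx,t) = \rvx + \sigma^2_t\nabla_{\rvx}\log p_t(\rvx)$, so the same boundedness gives $\mathrm{D}(\rvx,t)\to\rvx$ and hence $\|\rvy-\mathrm{H}\mathrm{D}(\rvx,t)\|^2/(2\sigma^2_\rvy) \to \|\rvy-\mathrm{H}\rvx\|^2/(2\sigma^2_\rvy) = -\log p_0(\rvy|\rvx) - \tfrac{m}{2}\log(2\pi\sigma^2_\rvy)$. What remains is a constant that must be checked to vanish: collecting $\log p(\rvy)$, the constant $\mathrm{C}_{\mathrm{VML}} = -\tfrac{n}{2} + m\log\sigma_\rvy - \tfrac{n-m}{2}\log 2\pi$, the likelihood normalizer $-\tfrac{m}{2}\log 2\pi - m\log\sigma_\rvy$, and $-\hat{\mathrm{C}}_{\mathrm{VML}}$, the $m\log\sigma_\rvy$ terms cancel and the $\log 2\pi$ coefficients combine as $-\tfrac{m}{2} - \tfrac{n-m}{2} = -\tfrac{n}{2}$, leaving exactly $\hat{\mathrm{C}}_{\mathrm{VML}} = -\tfrac{n}{2} - \tfrac{n}{2}\log 2\pi$. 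This bookkeeping is routine but error-prone, and is where I would double-check the signs carefully.

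The main obstacle is not any single limit but the uniform-in-$t$ control of the spatial derivatives of $\log p_t$ near $t = 0$: pointwise $C^2$ regularity for each fixed $t$ on a compact ball does not by itself bound $\sigma^2_t\nabla^2_{\rvx}\log p_t(\rvx)$ in the limit unless the bound on $\nabla^2_{\rvx}\log p_t(\rvx)$ is uniform over small $t$. This uniform boundedness is the substantive regularity content hidden in the hypotheses, and it is also why the convergence $\log p_t \to \log p_0$ must be assumed rather than derived, since it can fail when $p_0$ is degenerate (e.g.\ supported on a lower-dimensional manifold, where the scores blow up). Correspondingly, the conclusion is only \emph{pointwise}, unlike the uniform convergence in Proposition~\ref{prop:3}, because the unweighted term $-\log p_t(\rvx)$ converges only pointwise.
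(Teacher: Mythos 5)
Your proposal is correct and follows essentially the same route as the paper's proof sketch: the same decomposition of the difference into the $\sigma_t$-weighted derivative terms, the prior mismatch, and the measurement-consistency residual plus constants, with each piece handled by the same arguments (boundedness of $\nabla_{\rvx}\log p_t$ and $\nabla^2_{\rvx}\log p_t$ on the compact ball, the assumed convergence $\log p_t \to \log p_0$, and Tweedie's formula giving $\mathrm{D}(\rvx,t)\to\rvx$), and your constant bookkeeping matches the paper's. Your closing remark that the boundedness of the Hessian must hold uniformly over small $t$ for the limit to go through is a fair sharpening of a point the paper's sketch leaves implicit.
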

\begin{proof}[\textbf{proof sketch}]
It suffices to show that {\small{$\lim_{t\to 0}\ \{\mathrm{VML}_t(\rvx) + n \log \sigma_t + \log p_{0}(\rvx|\rvy)-\hat{\mathrm{C}}_{\mathrm{VML}} \} = 0\ \ \forall\ \ \rvx\ \ s.t.\ \ \| \rvx \| \leq d$}}. 
{\small{\begin{align*}
    &\mathrm{VML}_t(\rvx) + n \log \sigma_t + \log p_{0}(\rvx|\rvy) - \hat{\mathrm{C}}_{\mathrm{VML}}\\ 
    &=  \mathrm{VML}_t(\rvx) + n \log \sigma_t + \log p_{0}(\rvy|\rvx) + \log p_{0}(\rvx) - \log p(\rvy) - \mathrm{C}_{\mathrm{VML}} + \left\{ m \log \sigma_{\rvy} + \frac{m}{2}\log 2\pi \right\} \\
    &= \underbrace{\left\{  - \frac{\sigma^2_t}{2} \|\nabla_{\rvx} \log p_t(\rvx)\|^2
    - \frac{\sigma^2_t}{2} \mathrm{Tr}\left\{ \nabla^2_{\rvx} \log p_t(\rvx) \right\} + \frac{\sigma^4_t}{2\sigma^2_\rvy} \mathrm{Tr}\left\{ \mathrm{H} \nabla^2_{\rvx} \log p_t(\rvx) \mathrm{H}^{\top} \right\} + \frac{\sigma^2_t}{2\sigma^2_\rvy} \mathrm{Tr}\left\{ \mathrm{H} \mathrm{H}^{\top} \right\} \right\}}_{T_A}\\
    &+ \underbrace{\left\{ -\log p_t(\rvx) + \log p_{0}(\rvx) \right\}}_{T_B} + \underbrace{\left\{ m \log \sigma_y + \frac{m}{2}\log 2\pi + \frac{\|\rvy-\mathrm{H}\mathrm{D}(\rvx,t)\|^2}{2\sigma^2_\rvy} + \log p_{0}(\rvy|\rvx) \right\}}_{T_C}
\end{align*}}}
To show that {\small{$\lim_{t\to 0} \{\mathrm{VML}_t(\rvx) + n \log \sigma_t + \log p_{0}(\rvx|\rvy) - \hat{\mathrm{C}}_{\mathrm{VML}}\} = 0\ \ \forall\ \ \rvx\ \ s.t.\ \ \| \rvx \| \leq d$}}, we need to show that {\small{$\lim_{t \to 0}T_A = \lim_{t \to 0}T_B = \lim_{t \to 0}T_C = 0$}}. Under sufficient conditions assumed for $\lim_{t \to 0}\ \log p_t = \log p_0$, it implies that {\small{$ \lim_{t \to 0} T_B = 0 $}}. 
Considering {\small{$T_A$}}: as $p_t(\rvx) \in C^2$ and $\rvx$ lies in a compact set, it implies that {\small{$\nabla_{\rvx} \log p_t(\rvx)$}} and {\small{$\nabla^2_{\rvx} \log p_t(\rvx)$}} are bounded for all $t < \tau$. With {\small{$\sigma_t \to 0$}} as {\small{$t \to 0$}}, {\small{$\lim_{t \to 0}T_A = 0$}}. Considering {\small{$T_C$}}: Note that {\small{$p_0(\rvy|\rvx) = \mathcal{N}(\mathrm{H}\rvx, \sigma^2_{\rvy}\mathbf{I})$}} (see Equation~\ref{eqn:invprob}) and {\scriptsize{$\log p_0(\rvy|\rvx) = -m \log \sigma_y - \frac{m}{2} \log 2\pi -\frac{\|\rvy-\mathrm{H}\rvx\|^2}{2\sigma^2_{\rvy}}$}}. It can be seen that, as {\small{$t \to 0$}}, {\small{$\mathrm{D}(\rvx,t) \to \rvx$}}, since {\small{$\mathrm{D}(\rvx,t) = \rvx + \sigma^2_t \nabla_{\rvx}\log p_t(\rvx)$}} (Appendix~\ref{ssec:tweediesformula}). This further implies {\small{$\lim_{t \to 0} T_c = 0$}}.
\end{proof}
\textbf{Reformulating the simplified-VML:} From Equation~(\ref{eqn:svml}), $\mathrm{VML}_{\mathrm{S}}(\cdot,\cdot)$ for a linear degradation matrix $\mathrm{H}$ is 
{\small{\begin{align*}
\mathrm{VML}_{\mathrm{S}}(\rvx_t,t) = - \log p(\rvx_t) - \frac{\|\mathrm{D}(\rvx_t,t)-\rvx_t\|^2}{2\sigma^2_t}  +  \underbrace{\frac{\|\rvy-\mathrm{H}\mathrm{D}(\rvx_t,t)\|^2}{2\sigma^2_\rvy}}_{\text{measurement consistency}} + \mathrm{C}
\end{align*}}}
where, $\rvx_t \in \mathbb{R}^n\ \forall\ t \geq 0$, $\rvy \in \mathbb{R}^m$ and $\mathrm{C} = \log p(\rvy) - \log \frac{\sigma^{n}_t}{\sigma^{m}_{\rvy}} - \frac{n-m}{2}\log 2\pi$. Reformulating $\mathrm{VML}_{\mathrm{S}}(\cdot,\cdot)$ in terms of $\nabla_{\rvx_t}\log p(\rvx_t)$, and $\nabla^2_{\rvx_t}\log p(\rvx_t)$ gives
{\small{\begin{align*}
\mathrm{VM}&\mathrm{L}_{\mathrm{S}}(\rvx_t,t) = - \log p(\rvx_t) - \frac{\sigma^2_t}{2} \|\nabla_{\rvx_t}\log p(\rvx_t)\|^2 + \underbrace{\frac{\|\rvy-\mathrm{H}\mathrm{D}(\rvx_t,t)\|^2}{2\sigma^2_\rvy}}_{\text{measurement consistency}} + \log p(\rvy) - n \log \sigma_t + \mathrm{C}_{\mathrm{VML}_{\mathrm{S}}}
\end{align*}}} 
where, $\mathrm{C}_{\mathrm{VML}_{\mathrm{S}}} = m \log \sigma_{\rvy} - \frac{n-m}{2}\log 2\pi$.
\\

\begin{corollary}\label{corollary:1}
Let $p_{0}(\cdot)$ denote the input data distribution and $p_t(\cdot)$ denote the intermediate marginal distributions of a diffusion process for $t > 0$. Let $\exists\ \tau > 0, d > 0$ such that $p_t \in C^1$ (once continuously differentiable) and $ \| \rvx \| \leq d$ $\forall$ $t < \tau$ (i.e., $\forall\ t < \tau$, $\rvx$ lies in a compact ball, and $p_t \in C^1$). Assuming sufficient conditions for $\lim_{t\to 0}\ \log p_t(\rvx) = \log p_0(\rvx)\ \forall\ \rvx$, the function given by $\mathrm{VML}_{\mathrm{S}_t}(\rvx)+ n \log \sigma_t$, (where $\mathrm{VML}_{\mathrm{S}_t}(\rvx)$ denotes the Simplified-VML) for a linear degradation operator matrix $\mathrm{H}$, measurement $\rvy$, and measurement noise variance $\sigma^2_{\rvy}$ converges pointwise to $-\log p_{0}(\rvx|\rvy) + \hat{\mathrm{C}}_{\mathrm{VML}_{\mathrm{S}}}$ in the limit as $t \to 0$. (Note that $\rvx_t \in \mathbb{R}^n\ \forall\, t \geq 0$, $\rvy \in \mathbb{R}^m$ and $\hat{\mathrm{C}}_{\mathrm{VML}_{\mathrm{S}}} = - \frac{n}{2}\log 2\pi $)
\begin{small}
\begin{align*}
\mathrm{VML}_{\mathrm{S}_t}(\rvx) &= - \log p_t(\rvx) - \frac{\sigma^2_t}{2} \|\nabla_{\rvx}\log p_t(\rvx)\|^2 + \frac{\|\rvy-\mathrm{H}\mathrm{D}(\rvx,t)\|^2}{2\sigma^2_\rvy} + \log p(\rvy) - n \log \sigma_t + \mathrm{C}_{\mathrm{VML}_{\mathrm{S}}}\\\\
\text{and }\lim_{t\to 0}&\ \mathrm{VML}_{\mathrm{S}_t}(\rvx) + n \log \sigma_t = -\log p_{0}(\rvx|\rvy) + \hat{\mathrm{C}}_{\mathrm{VML}_{\mathrm{S}}}\ \ \forall\ \ \rvx\ \ s.t.\ \ \| \rvx \| \leq d
\end{align*}
\end{small}
\end{corollary}
\begin{proof}[\textbf{proof sketch}]
    By arguments similar to those in the proof of Proposition~\ref{prop:4}
\end{proof}
\begin{corollary}\label{corollary:2}
Let $p_{0}(\cdot)$ denote the input data distribution and $p_t(\cdot)$ denote the intermediate marginal distributions of a diffusion process for $t > 0$. Let $\exists\ \tau > 0, d > 0$ such that $p_t \in C^1$ (once continuously differentiable) and $ \| \rvx \| \leq d$ $\forall$ $t < \tau$ (i.e., $\forall\ t < \tau$, $\rvx$ lies in a compact ball, and $p_t \in C^1$). Assuming sufficient conditions for $\lim_{t\to 0}\ \log p_t(\rvx) = \log p_0(\rvx)\ \forall\ \rvx$, the function given by $\mathrm{VML}^{\rho}_{\mathrm{S}_t}(\rvx)+ n \log \sigma_t$, (where $\mathrm{VML}^{\rho}_{\mathrm{S}_t}(\rvx)$ denotes the reweighted simplified-VML with prior weight schedule $\rho_t$, such that $\lim_{t \to 0}\rho_t = 1$) for a linear degradation operator matrix $\mathrm{H}$, measurement $\rvy$, and measurement noise variance $\sigma^2_{\rvy}$ converges pointwise to $-\log p_{0}(\rvx|\rvy) + \hat{\mathrm{C}}_{\mathrm{VML}_{\mathrm{S}}}$ in the limit as $t \to 0$. (Note that $\rvx_t \in \mathbb{R}^n\ \forall\, t \geq 0$, $\rvy \in \mathbb{R}^m$ and $\hat{\mathrm{C}}_{\mathrm{VML}_{\mathrm{S}}} = - \frac{n}{2}\log 2\pi $)
\begin{small}
\begin{align*}
\mathrm{VML}^{\rho}_{\mathrm{S}_t}(\rvx) &= - \rho_t \, \log p_t(\rvx) - \rho_t \, \frac{\sigma^2_t}{2} \|\nabla_{\rvx}\log p_t(\rvx)\|^2 + \frac{\|\rvy-\mathrm{H}\mathrm{D}(\rvx,t)\|^2}{2\sigma^2_\rvy} + \log p(\rvy) - n \log \sigma_t + \mathrm{C}_{\mathrm{VML}_{\mathrm{S}}}\\\\
\text{and }\lim_{t\to 0}&\ \mathrm{VML}^{\rho}_{\mathrm{S}_t}(\rvx) + n \log \sigma_t = -\log p_{0}(\rvx|\rvy) + \hat{\mathrm{C}}_{\mathrm{VML}_{\mathrm{S}}}\ \ \forall\ \ \rvx\ \ s.t.\ \ \| \rvx \| \leq d
\end{align*}
\end{small}
\end{corollary}
\begin{proof}[\textbf{proof sketch}]
    By arguments similar to those in the proof of Proposition~\ref{prop:4}, and using the facts that the limit of the products is equal to the product of the limits, and that $\lim_{t \to 0}\rho_t = 1$. 
\end{proof}
\begin{remark}\label{remark:1}
Note that the limit of $\mathrm{VML}_t(\rvx)$ as $t \to 0$ doesn't exist. However, for a given arbitrary $t$, a global minimizer of $\mathrm{VML}_t(\rvx)$ is also a global minimizer of $\mathrm{VML}_t(\rvx) + n \log \sigma_t$ (for $n \log \sigma_t$ is a constant given $t$) and vice-versa. From Proposition~\ref{prop:4}, $\mathrm{VML}_t(\rvx) + n \log \sigma_t$ converges pointwise to $-\log p_0(\rvx|\rvy) + \hat{\mathrm{C}}_{\mathrm{VML}}$ in the limit as $t \to 0$. \hfill \break
\end{remark}
\begin{remark}\label{remark:2}
Note that the limit of $\mathrm{VML}_{\mathrm{S}_t}(\rvx)$ as $t \to 0$ doesn't exist. However, for a given arbitrary $t$, a global minimizer of $\mathrm{VML}_{\mathrm{S}_t}(\rvx)$ is also a global minimizer of $\mathrm{VML}_{\mathrm{S}_t}(\rvx) + n \log \sigma_t + \mathrm{C}_{\mathrm{VML}_{\mathrm{High}}}$ (for $n \log \sigma_t + \mathrm{C}_{\mathrm{VML}_{\mathrm{High}}}$ is a constant given $t$) and vice-versa. From Corollary~\ref{corollary:1}, $\mathrm{VML}_{\mathrm{S}_t}(\rvx) + n \log \sigma_t + \mathrm{C}_{\mathrm{VML}_{\mathrm{High}}}$ converges pointwise to $-\log p_0(\rvx|\rvy) + \hat{\mathrm{C}}_{\mathrm{VML}_{\mathrm{S}}} + \mathrm{C}_{\mathrm{VML}_{\mathrm{High}}} = -\log p_0(\rvx|\rvy) + \hat{\mathrm{C}}_{\mathrm{VML}}$ in the limit as $t \to 0$. \hfill \break
\end{remark}
\begin{remark}\label{remark:3}
From Proposition~\ref{prop:3}, the function $(\mathrm{VML}_{\mathrm{High}_t}(\rvx) - \mathrm{C}_{\mathrm{VML}_{\mathrm{High}}})$ converges uniformly to the zero function in the limit as $t \to 0$. Note that $\mathrm{VML}_{\mathrm{High}_t}(\rvx) - \mathrm{C}_{\mathrm{VML}_{\mathrm{High}}} = (\mathrm{VML}_t(\rvx) + n \log \sigma_t) - (\mathrm{VML}_{\mathrm{S}_t}(\rvx) + n \log \sigma_t + \mathrm{C}_{\mathrm{VML}_{\mathrm{High}}})$, i.e., the difference of essentially equivalent (in terms of global minimizers) functions of VML and Simplified-VML respectively (see Remarks~\ref{remark:1} and~\ref{remark:2}). It implies that the difference of these functions becomes arbitrarily small as $t \to 0$. In practice, this approximation of VML with the simplified-VML (or equivalently with the reweighted simplified-VML from Corollary~\ref{corollary:2}) may not be critical, as the errors arising due to the imperfect optimizer and numerical errors from discretizing the reverse SDE or PF ODE typically dominate early in the reverse diffusion process.
\end{remark}

\section{Experimental setup, implementation details, Qualitative visualizations, and more}\label{sec:implementationdetails}
\subsection{Experiments in Table~\ref{tab:exp1}}\label{ssec:implementationdetails1}
For our experiments in Table~\ref{tab:exp1}, we considered image restoration inverse problems with severe enough degradations to make it more challenging. However, we do not resort to extreme degradations, as the corresponding measurements typically do not provide strong guidance for recovering the ground truth image, since extreme degradations make the posterior highly multimodal to an extent that the restored image is perceptually dissimilar to the ground truth image, which makes it challenging to assess the performance using the usual LPIPS/FID metrics. Our experiments included half-mask inpainting, $4\times$ super-resolution, and uniform deblurring with a $16\times 16$ kernel. We utilize the SVD-based super-resolution and uniform deblurring operators from~\citet{ddrm} to ensure that the preconditioner can be computed efficiently. For uniform deblurring, we observed that the degradations are not severe enough, as the pseudoinverse solution already gives an almost perfect reconstruction. To make it more challenging, we zero out the singular values below a high enough threshold (0.2) as opposed to the default threshold (0.03) used in~\citet{ddrm}.

\begin{table}[h!]
\caption{Best learning rate configuration for each task and dataset in Table~\ref{tab:exp1}, Table~\ref{tab:exp2} and Table~\ref{tab:ir1}. Note that the learning rate is $\gamma_0 \, \sigma^2_\rvy$, with $\gamma_0$ as reported in the table.}
\label{tab:hp1}
\begin{center}
\begin{tabular}{ l l l  l l l  c c c  c c c  c c c }
\toprule
\multicolumn{3}{l}{} & \multicolumn{3}{c}{} & \multicolumn{3}{c}{} & \multicolumn{3}{c}{} & \multicolumn{3}{c}{} \\[-7pt]
\multicolumn{3}{l}{{Dataset}}  & \multicolumn{3}{l}{{Method}} & \multicolumn{3}{c}{{Inpainting}} & \multicolumn{3}{c}{{$4\times$ Super-res}}  & \multicolumn{3}{c}{{Deblurring}} \\[4pt]
\toprule
\multicolumn{3}{l}{} & \multicolumn{3}{c}{} & \multicolumn{3}{c}{} & \multicolumn{3}{c}{} & \multicolumn{3}{c}{}\\[-7pt]
\multicolumn{3}{l}{} & \multicolumn{3}{l}{{\scriptsize{$\text{VML-MAP}$}}}  & 
\multicolumn{3}{c}{\scriptsize{$\gamma_0 = 1.5$}} & \multicolumn{3}{c}{\scriptsize{$\gamma_0 = 30.0$}}  & \multicolumn{3}{c}{\scriptsize{$\gamma_0 = 2.25$}} \\[-5pt]
\multicolumn{3}{l}{\scriptsize{$\text{ImageNet}$64}} & \multicolumn{3}{l}{}  & 
\multicolumn{3}{c}{\scriptsize{$$}} & \multicolumn{3}{c}{\scriptsize{$$}}  & \multicolumn{3}{c}{\scriptsize{$$}} \\[-5pt]
\multicolumn{3}{l}{} & \multicolumn{3}{l}{{\scriptsize{$\text{VML-MAP}_{pre}$}}}  & 
\multicolumn{3}{c}{\scriptsize{$\gamma_0 = 1.5$}} & \multicolumn{3}{c}{\scriptsize{$\gamma_0 = 1.75$}}  & \multicolumn{3}{c}{\scriptsize{$\gamma_0 = 2.0$}} \\[2pt]
\midrule
\multicolumn{3}{l}{} & \multicolumn{3}{c}{} & \multicolumn{3}{c}{} & \multicolumn{3}{c}{} & \multicolumn{3}{c}{}\\[-7pt]
\multicolumn{3}{l}{} & \multicolumn{3}{l}{{\scriptsize{$\text{VML-MAP}$}}}  & 
\multicolumn{3}{c}{\scriptsize{$\gamma_0 = 1.25$}} & \multicolumn{3}{c}{\scriptsize{$\gamma_0 = 25.0$}}  & \multicolumn{3}{c}{\scriptsize{$\gamma_0 = 2.0$}} \\[-5pt]
\multicolumn{3}{l}{\scriptsize{$\text{ImageNet}256$}} & \multicolumn{3}{l}{}  & 
\multicolumn{3}{c}{\scriptsize{$$}} & \multicolumn{3}{c}{\scriptsize{$$}}  & \multicolumn{3}{c}{\scriptsize{$$}} \\[-5pt]
\multicolumn{3}{l}{} & \multicolumn{3}{l}{{\scriptsize{$\text{VML-MAP}_{pre}$}}}  & 
\multicolumn{3}{c}{\scriptsize{$\gamma_0 = 1.25$}} & \multicolumn{3}{c}{\scriptsize{$\gamma_0 = 1.5$}}  & \multicolumn{3}{c}{\scriptsize{$\gamma_0 = 1.5$}} \\[2pt]
\midrule
\multicolumn{3}{l}{} & \multicolumn{3}{c}{} & \multicolumn{3}{c}{} & \multicolumn{3}{c}{} & \multicolumn{3}{c}{}\\[-7pt]
\multicolumn{3}{l}{} & \multicolumn{3}{l}{{\scriptsize{$\text{VML-MAP}$}}}  & 
\multicolumn{3}{c}{\scriptsize{$\gamma_0 = 1.25$}} & \multicolumn{3}{c}{\scriptsize{$\gamma_0 = 30.0$}}  & \multicolumn{3}{c}{\scriptsize{$\gamma_0 = 2.0$}} \\[-5pt]
\multicolumn{3}{l}{\scriptsize{$\text{FFHQ}256$}} & \multicolumn{3}{l}{}  & 
\multicolumn{3}{c}{\scriptsize{$$}} & \multicolumn{3}{c}{\scriptsize{$$}}  & \multicolumn{3}{c}{\scriptsize{$$}} \\[-5pt]
\multicolumn{3}{l}{} & \multicolumn{3}{l}{{\scriptsize{$\text{VML-MAP}_{pre}$}}}  & 
\multicolumn{3}{c}{\scriptsize{$\gamma_0 = 1.25$}} & \multicolumn{3}{c}{\scriptsize{$\gamma_0 = 1.5$}}  & \multicolumn{3}{c}{\scriptsize{$\gamma_0 = 1.5$}} \\[2pt]
\bottomrule
\end{tabular}
\end{center}
\end{table}

For DDRM, $\Pi$GDM, MAPGA, VML-MAP, and $\text{VML-MAP}_{pre}$, we use the EDM~\cite{edm} noise schedule discretization with $\sigma_{min}=0.002$, $\sigma_{max}=140$, and EDM's $\rho=7$. For VML-MAP, and $\text{VML-MAP}_{pre}$, we set our prior weight schedule $\rho(\cdot)$ to a constant, i.e., $\rho(t) = 1\ \ \forall\ \ t > 0$. Note that MAPGA requires a consistency model by default, so throughout this paper, we use the variant MAPGA(D) from~\citet{mapga}, which replaces the consistency model with a single-step denoiser approximation. Note that we use the EDM schedule also for DDRM and $\Pi$GDM, as it performed the best. For DAPS, we use the default DAPS-1K and DAPS-4K configurations mentioned in the original paper, and observed that the default hyperparameters used in the paper for box inpainting, super-resolution, and Gaussian deblurring also perform the best for our half-mask inpainting, super-resolution, and uniform deblurring tasks. 

For each experiment with DDRM and $\Pi$GDM, we search for $N$ (i.e., the number of reverse diffusion steps) over \{$20.50,100,200,500,1000$\} and report the best result. For each experiment using MAPGA, VML-MAP, and $\text{VML-MAP}_{pre}$, we search for ($N,K$) ($N$ denotes the total number of diffusion time steps, and $K$ denotes the number of gradient ascent/descent iterations per step) over \{($20,50$),($50,20$),($100,10$),($200,5$),($500$,$2$),($1000,1$)\} and report the best performance (this keeps the total budget for MAP-GA, VML-MAP and $\text{VML-MAP}_{pre}$ within 1000 optimization steps in total). In every case, we find the best configuration (with the lowest LPIPS) to be ($N,K$) = ($20,50$). We set $\sigma_\rvy=0$ for DDRM, $\Pi$GDM, and MAPGA, while for DAPS, VML-MAP and $\text{VML-MAP}_{pre}$, we set $\sigma_\rvy = 1e^{-9}$. For MAPGA, the default learning rate from the original repository was used, while for VML-MAP and $\text{VML-MAP}_{pre}$, we report the best learning rate configuration for each task and dataset as $\gamma_0 \cdot \sigma^2_\rvy$, with $\gamma_0$ shown in Table~\ref{tab:hp1}. Our implementation of DDRM, $\Pi$GDM, and MAPGA is based on the following original repositories \href{https://github.com/bahjat-kawar/ddrm}{\text{ddrm}}, \href{https://github.com/NVlabs/RED-diff}{pgdm}, \href{https://github.com/guthasaibharathchandra/MAP-GA}{mapga}, respectively.

For our previous experiments in Table~\ref{tab:exp1}, we additionally report PSNR and SSIM metrics below in Table~\ref{tab:exp1extended} for ImageNet$256$. As mentioned earlier in Section~\ref{sec:image-restoration}, non-perceptual metrics such as PSNR and SSIM are not the right metrics of consideration for image-restoration inverse problems, as they do not necessarily capture the perceptual fidelity of the reconstructed images. Furthermore, pixel-to-pixel metrics such as PSNR overlook the multimodal nature of the posterior, especially for the severe degradation operators we considered in all our experiments throughout this paper. The quantitative results (see Table~\ref{tab:exp1extended}) and the qualitative visualizations (see Figures~\ref{fig:qv1},~\ref{fig:qv2}, and~\ref{fig:qv3}) clearly show this contrast, where methods achieving high PSNR and SSIM produce perceptually worse image reconstructions. We refer to Section~\ref{sec:image-restoration} for more details on metrics.

\begin{table*}[ht!]
\caption{ImageNet$256$ Experiments from Table~\ref{tab:exp1} with additional metrics such as PSNR and SSIM. The mean and standard deviation across $4$ different runs are reported. Best values in \textbf{bold}, second best values \underline{underlined}.}
\label{tab:exp1extended}
\begin{center}
\resizebox{\textwidth}{!}{
\begin{tabular}{ l c c c c c c c c c c c c } 
\toprule
  \multicolumn{1}{c}{} & \multicolumn{12}{c}{} \\[-7pt]
  \multicolumn{1}{l}{{Method}} & \multicolumn{4}{c}{{Inpainting}} & \multicolumn{4}{c}{{$4\times${ Super-res}}}  & \multicolumn{4}{c}{{Deblurring}} \\[3pt]
  \multicolumn{1}{c}{} & \multicolumn{1}{c}{\scriptsize{PSNR$\uparrow$}} & \multicolumn{1}{c}{\scriptsize{SSIM$\uparrow$}} & \multicolumn{1}{c}{\scriptsize{LPIPS$\downarrow$}} & \multicolumn{1}{c}{\scriptsize{FID$\downarrow$}} & \multicolumn{1}{c}{\scriptsize{PSNR$\uparrow$}} & \multicolumn{1}{c}{\scriptsize{SSIM$\uparrow$}} & \multicolumn{1}{c}{\scriptsize{LPIPS$\downarrow$}} & \multicolumn{1}{c}{\scriptsize{FID$\downarrow$}} & \multicolumn{1}{c}{\scriptsize{PSNR$\uparrow$}} & \multicolumn{1}{c}{\scriptsize{SSIM$\uparrow$}} & \multicolumn{1}{c}{\scriptsize{LPIPS$\downarrow$}} & \multicolumn{1}{c}{\scriptsize{FID$\downarrow$}} \\[3pt]
\toprule 
 \multicolumn{1}{c}{} & \multicolumn{12}{c}{}\\[-7pt]
  \multicolumn{1}{l}{{\scriptsize{$\text{DDRM}$}}}  & 
 \multicolumn{1}{c}{\scriptsize{$\underline{15.01}$}\tiny{\color{lightgray}{$\pm0.055$}}} & \multicolumn{1}{c}{\scriptsize{$\underline{0.679}$}\tiny{\color{lightgray}{$\pm0.000$}}} &
 \multicolumn{1}{c}{\scriptsize{$0.393$}\tiny{\color{lightgray}{$\pm0.001$}}} & \multicolumn{1}{c}{\scriptsize{$103.5$}\tiny{\color{lightgray}{$\pm0.657$}}}  & \multicolumn{1}{c}{\scriptsize{$24.20$}\tiny{\color{lightgray}{$\pm0.065$}}} & \multicolumn{1}{c}{\scriptsize{$0.738$}\tiny{\color{lightgray}{$\pm0.000$}}} &
 \multicolumn{1}{c}{\scriptsize{$0.289$}\tiny{\color{lightgray}{$\pm0.000$}}} & \multicolumn{1}{c}{\scriptsize{$89.40$}\tiny{\color{lightgray}{$\pm0.962$}}} & \multicolumn{1}{c}{\scriptsize{$20.51$}\tiny{\color{lightgray}{$\pm0.010$}}} & \multicolumn{1}{c}{\scriptsize{$0.511$}\tiny{\color{lightgray}{$\pm0.000$}}} &
 \multicolumn{1}{c}{\scriptsize{$0.618$}\tiny{\color{lightgray}{$\pm0.000$}}} & \multicolumn{1}{c}{\scriptsize{$233.4$}\tiny{\color{lightgray}{$\pm2.033$}}}\\
 
  \multicolumn{1}{l}{{\scriptsize{$\Pi\text{GDM}$}}}  & \multicolumn{1}{c}{\scriptsize{$\bf{15.08}$}\tiny{\color{lightgray}{$\pm0.081$}}} & \multicolumn{1}{c}{\scriptsize{$\bf{0.681}$}\tiny{\color{lightgray}{$\pm0.001$}}}  &
 
 \multicolumn{1}{c}{\scriptsize{$0.373$}\tiny{\color{lightgray}{$\pm0.000$}}} & \multicolumn{1}{c}{\scriptsize{$103.7$}\tiny{\color{lightgray}{$\pm0.694$}}}  &  \multicolumn{1}{c}{\scriptsize{$\underline{24.41}$}\tiny{\color{lightgray}{$\pm0.068$}}} & \multicolumn{1}{c}{\scriptsize{$\underline{0.745}$}\tiny{\color{lightgray}{$\pm0.000$}}}  &
 
 \multicolumn{1}{c}{\scriptsize{$0.292$}\tiny{\color{lightgray}{$\pm0.001$}}} & \multicolumn{1}{c}{\scriptsize{$83.88$}\tiny{\color{lightgray}{$\pm0.226$}}} &  \multicolumn{1}{c}{\scriptsize{$\bf{20.74}$}\tiny{\color{lightgray}{$\pm0.003$}}} & \multicolumn{1}{c}{\scriptsize{$\underline{0.523}$}\tiny{\color{lightgray}{$\pm0.000$}}}  &
 
 \multicolumn{1}{c}{\scriptsize{$0.562$}\tiny{\color{lightgray}{$\pm0.000$}}} & \multicolumn{1}{c}{\scriptsize{$231.8$}\tiny{\color{lightgray}{$\pm0.631$}}}\\ 
 
  \multicolumn{1}{l}{{\scriptsize{$\text{MAPGA}$}}}  & \multicolumn{1}{c}{\scriptsize{$14.51$}\tiny{\color{lightgray}{$\pm0.044$}}} & \multicolumn{1}{c}{\scriptsize{$0.654$}\tiny{\color{lightgray}{$\pm0.000$}}}  &
 
 \multicolumn{1}{c}{\scriptsize{$\underline{0.290}$}\tiny{\color{lightgray}{$\pm0.001$}}} & \multicolumn{1}{c}{\scriptsize{$\underline{80.76}$}\tiny{\color{lightgray}{$\pm2.013$}}}  & \multicolumn{1}{c}{\scriptsize{$\bf{24.63}$}\tiny{\color{lightgray}{$\pm0.038$}}} & \multicolumn{1}{c}{\scriptsize{$\bf{0.755}$}\tiny{\color{lightgray}{$\pm0.000$}}}  &
 
 \multicolumn{1}{c}{\scriptsize{$0.273$}\tiny{\color{lightgray}{$\pm0.001$}}} & \multicolumn{1}{c}{\scriptsize{$76.12$}\tiny{\color{lightgray}{$\pm0.587$}}} & \multicolumn{1}{c}{\scriptsize{$\underline{20.71}$}\tiny{\color{lightgray}{$\pm0.031$}}} & \multicolumn{1}{c}{\scriptsize{$\bf{0.544}$}\tiny{\color{lightgray}{$\pm0.000$}}}  &
 
 \multicolumn{1}{c}{\scriptsize{$\underline{0.459}$}\tiny{\color{lightgray}{$\pm0.002$}}} & \multicolumn{1}{c}{\scriptsize{$\underline{194.9}$}\tiny{\color{lightgray}{$\pm1.548$}}}\\ 

 
 
 
  \multicolumn{1}{l}{{\scriptsize{$\text{DAPS-1K}$}}}  & \multicolumn{1}{c}{\scriptsize{$14.88$}\tiny{\color{lightgray}{$\pm0.117$}}} & \multicolumn{1}{c}{\scriptsize{$0.641$}\tiny{\color{lightgray}{$\pm0.000$}}}  &
 \multicolumn{1}{c}{\scriptsize{$0.384$}\tiny{\color{lightgray}{$\pm0.001$}}} & \multicolumn{1}{c}{\scriptsize{$98.45$}\tiny{\color{lightgray}{$\pm1.879$}}}  & \multicolumn{1}{c}{\scriptsize{$23.82$}\tiny{\color{lightgray}{$\pm0.038$}}} & \multicolumn{1}{c}{\scriptsize{$0.669$}\tiny{\color{lightgray}{$\pm0.000$}}}  &
 
 \multicolumn{1}{c}{\scriptsize{$0.254$}\tiny{\color{lightgray}{$\pm0.001$}}} & \multicolumn{1}{c}{\scriptsize{$71.60$}\tiny{\color{lightgray}{$\pm1.010$}}} & \multicolumn{1}{c}{\scriptsize{$20.22$}\tiny{\color{lightgray}{$\pm0.027$}}} & \multicolumn{1}{c}{\scriptsize{$0.466$}\tiny{\color{lightgray}{$\pm0.000$}}}  &
 
 \multicolumn{1}{c}{\scriptsize{$0.605$}\tiny{\color{lightgray}{$\pm0.001$}}} & \multicolumn{1}{c}{\scriptsize{$217.8$}\tiny{\color{lightgray}{$\pm3.042$}}}\\
  
   \multicolumn{1}{l}{{\scriptsize{$\text{DAPS-4K}$}}}  & \multicolumn{1}{c}{\scriptsize{$14.59$}\tiny{\color{lightgray}{$\pm0.115$}}} & \multicolumn{1}{c}{\scriptsize{$0.637$}\tiny{\color{lightgray}{$\pm0.001$}}}  &
  
  \multicolumn{1}{c}{\scriptsize{$0.365$}\tiny{\color{lightgray}{$\pm0.004$}}} & \multicolumn{1}{c}{\scriptsize{$93.62$}\tiny{\color{lightgray}{$\pm0.856$}}}  & \multicolumn{1}{c}{\scriptsize{$23.62$}\tiny{\color{lightgray}{$\pm0.049$}}} & \multicolumn{1}{c}{\scriptsize{$0.670$}\tiny{\color{lightgray}{$\pm0.000$}}}  &

  \multicolumn{1}{c}{\scriptsize{$0.244$}\tiny{\color{lightgray}{$\pm0.000$}}} & \multicolumn{1}{c}{\scriptsize{$69.71$}\tiny{\color{lightgray}{$\pm1.554$}}} & \multicolumn{1}{c}{\scriptsize{$20.15$}\tiny{\color{lightgray}{$\pm0.011$}}} & \multicolumn{1}{c}{\scriptsize{$0.467$}\tiny{\color{lightgray}{$\pm0.000$}}}  &
  
  \multicolumn{1}{c}{\scriptsize{$0.593$}\tiny{\color{lightgray}{$\pm0.001$}}} & \multicolumn{1}{c}{\scriptsize{$227.9$}\tiny{\color{lightgray}{$\pm2.991$}}}\\[2pt]
 \cline{2-13}
  \multicolumn{1}{c}{} & \multicolumn{12}{c}{}\\[-7pt]
 
  \multicolumn{1}{l}{\bf{\scriptsize{$\text{VML-MAP}$}}}  & \multicolumn{1}{c}{\scriptsize{$14.34$}\tiny{\color{lightgray}{$\pm0.067$}}} & \multicolumn{1}{c}{\scriptsize{$0.631$}\tiny{\color{lightgray}{$\pm0.001$}}} &
 
 \multicolumn{1}{c}{\scriptsize{$\bf{0.262}$}\tiny{\color{lightgray}{$\pm0.002$}}} & \multicolumn{1}{c}{\scriptsize{$\bf{74.21}$}\tiny{\color{lightgray}{$\pm3.209$}}} & \multicolumn{1}{c}{\scriptsize{$23.41$}\tiny{\color{lightgray}{$\pm0.049$}}} & \multicolumn{1}{c}{\scriptsize{$0.729$}\tiny{\color{lightgray}{$\pm0.000$}}} &

 \multicolumn{1}{c}{\scriptsize{$\bf{0.194}$}\tiny{\color{lightgray}{$\pm0.001$}}} & \multicolumn{1}{c}{\scriptsize{$\underline{60.08}$}\tiny{\color{lightgray}{$\pm0.687$}}} & \multicolumn{1}{c}{\scriptsize{$20.40$}\tiny{\color{lightgray}{$\pm0.028$}}} & \multicolumn{1}{c}{\scriptsize{$0.517$}\tiny{\color{lightgray}{$\pm0.001$}}} &
 
 \multicolumn{1}{c}{\scriptsize{$0.509$}\tiny{\color{lightgray}{$\pm0.003$}}} & \multicolumn{1}{c}{\scriptsize{$200.4$}\tiny{\color{lightgray}{$\pm2.812$}}}\\
 
  \multicolumn{1}{l}{\bf{\scriptsize{$\text{VML-MAP}_{pre}$}}}  & \multicolumn{1}{c}{\scriptsize{$14.34$}\tiny{\color{lightgray}{$\pm0.067$}}} & \multicolumn{1}{c}{\scriptsize{$0.631$}\tiny{\color{lightgray}{$\pm0.001$}}} &
 
 \multicolumn{1}{c}{\scriptsize{$\bf{0.262}$}\tiny{\color{lightgray}{$\pm0.002$}}} & \multicolumn{1}{c}{\scriptsize{$\bf{74.21}$}\tiny{\color{lightgray}{$\pm3.209$}}} & \multicolumn{1}{c}{\scriptsize{$23.63$}\tiny{\color{lightgray}{$\pm0.117$}}} & \multicolumn{1}{c}{\scriptsize{$0.733$}\tiny{\color{lightgray}{$\pm0.002$}}} &
 
 \multicolumn{1}{c}{\scriptsize{$\underline{0.196}$}\tiny{\color{lightgray}{$\pm0.003$}}} & \multicolumn{1}{c}{\scriptsize{$\bf{58.60}$}\tiny{\color{lightgray}{$\pm2.076$}}} & \multicolumn{1}{c}{\scriptsize{$19.70$}\tiny{\color{lightgray}{$\pm0.079$}}} & \multicolumn{1}{c}{\scriptsize{$0.511$}\tiny{\color{lightgray}{$\pm0.002$}}} &
 
 \multicolumn{1}{c}{\scriptsize{$\bf{0.367}$}\tiny{\color{lightgray}{$\pm0.002$}}} & \multicolumn{1}{c}{\scriptsize{$\bf{165.2}$}\tiny{\color{lightgray}{$\pm2.037$}}}\\[2pt]
 \bottomrule
 \end{tabular}}
\end{center}
\end{table*}

\begin{figure}[h!]
    \centering
    \begin{subfigure}[h]{0.125\textwidth}
    \centering
    \subfloat{\scriptsize{\bf{\color{gray}{\textit{Original}}}}}
    \end{subfigure}
    \begin{subfigure}[h]{0.125\textwidth}
    \centering
    \subfloat{\scriptsize{\bf{\color{gray}{\textit{Measurement}}}}}
    \end{subfigure}
    \begin{subfigure}[h]{0.125\textwidth}
    \centering
    \subfloat{\scriptsize{\bf{\color{gray}{\textit{DDRM}}}}}
    \end{subfigure}
    \begin{subfigure}[h]{0.125\textwidth}
    \centering
    \subfloat{\scriptsize{\bf{\color{gray}{\textit{$\Pi$GDM}}}}}
    \end{subfigure}
    \begin{subfigure}[h]{0.125\textwidth}
    \centering
    \subfloat{\scriptsize{\bf{\color{gray}{\textit{DAPS-1K}}}}}
    \end{subfigure}
    \begin{subfigure}[h]{0.125\textwidth}
    \centering
    \subfloat{\scriptsize{\bf{\color{gray}{\textit{DAPS-4K}}}}}
    \end{subfigure}
    \begin{subfigure}[h]{0.125\textwidth}
    \centering
    \subfloat{\scriptsize{\bf{\color{gray}{\textit{MAPGA}}}}}
    \end{subfigure}
    \begin{subfigure}[h]{0.125\textwidth}
    \centering
    \subfloat{\scriptsize{\bf{\color{gray}{\textit{VML-MAP}}}}}
    \end{subfigure}
    \\[2pt]
    \begin{subfigure}[h]{0.125\textwidth}
    \centering
    \includegraphics[width=\textwidth]{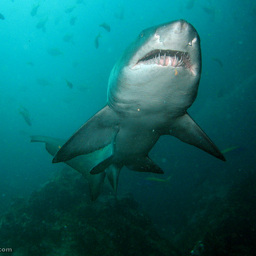}    
    \end{subfigure}
    \begin{subfigure}[h]{0.125\textwidth}
    \centering
    \includegraphics[width=\textwidth]{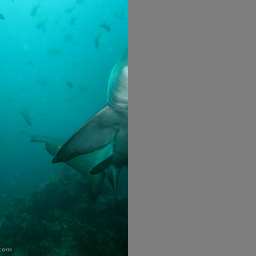}    
    \end{subfigure}
    \begin{subfigure}[h]{0.125\textwidth}
    \centering
    \includegraphics[width=\textwidth]{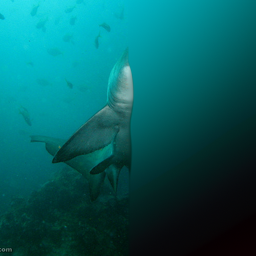}  
    \end{subfigure}
    \begin{subfigure}[h]{0.125\textwidth}
    \centering
    \includegraphics[width=\textwidth]{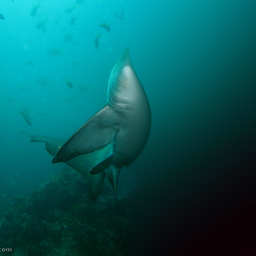}    
    \end{subfigure}
    \begin{subfigure}[h]{0.125\textwidth}
    \centering
    \includegraphics[width=\textwidth]{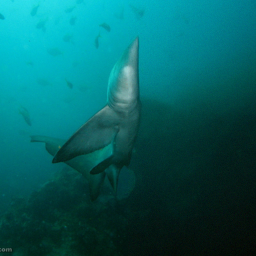}  
    \end{subfigure}
    \begin{subfigure}[h]{0.125\textwidth}
    \centering
    \includegraphics[width=\textwidth]{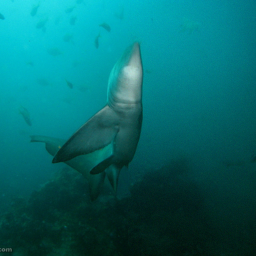}    
    \end{subfigure}
    \begin{subfigure}[h]{0.125\textwidth}
    \centering
    \includegraphics[width=\textwidth]{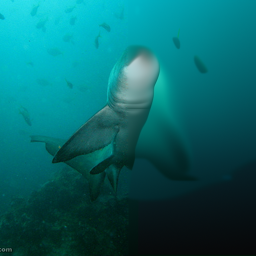}    
    \end{subfigure}
    \begin{subfigure}[h]{0.125\textwidth}
    \centering
    \includegraphics[width=\textwidth]{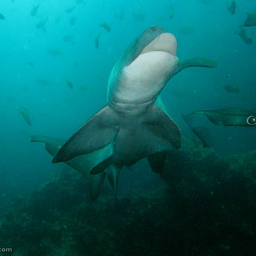}    
    \end{subfigure}
    \\ 
    \begin{subfigure}[h]{0.125\textwidth}
    \centering
    \includegraphics[width=\textwidth]{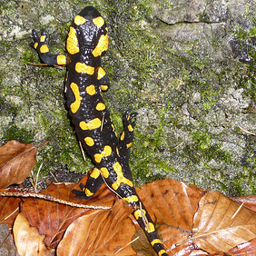}    
    \end{subfigure}
    \begin{subfigure}[h]{0.125\textwidth}
    \centering
    \includegraphics[width=\textwidth]{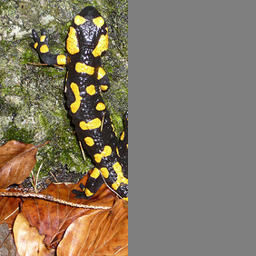}    
    \end{subfigure}
    \begin{subfigure}[h]{0.125\textwidth}
    \centering
    \includegraphics[width=\textwidth]{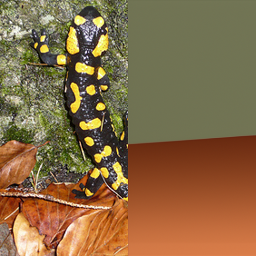}  
    \end{subfigure}
    \begin{subfigure}[h]{0.125\textwidth}
    \centering
    \includegraphics[width=\textwidth]{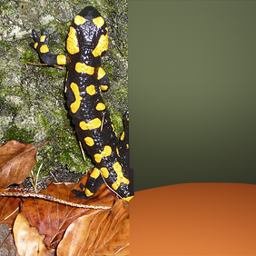}    
    \end{subfigure}
    \begin{subfigure}[h]{0.125\textwidth}
    \centering
    \includegraphics[width=\textwidth]{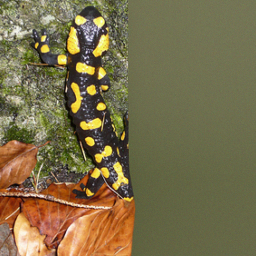}  
    \end{subfigure}
    \begin{subfigure}[h]{0.125\textwidth}
    \centering
    \includegraphics[width=\textwidth]{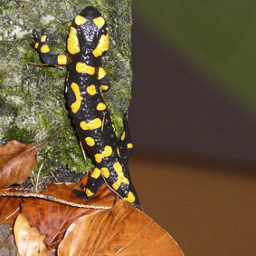}    
    \end{subfigure}
    \begin{subfigure}[h]{0.125\textwidth}
    \centering
    \includegraphics[width=\textwidth]{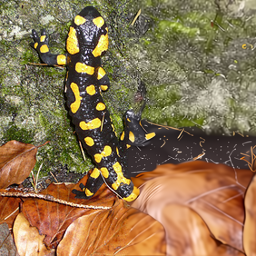}    
    \end{subfigure}
    \begin{subfigure}[h]{0.125\textwidth}
    \centering
    \includegraphics[width=\textwidth]{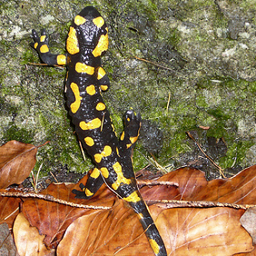}    
    \end{subfigure}
    \\ 
    \begin{subfigure}[h]{0.125\textwidth}
    \centering
    \includegraphics[width=\textwidth]{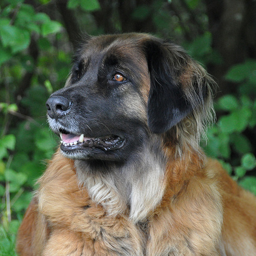}    
    \end{subfigure}
    \begin{subfigure}[h]{0.125\textwidth}
    \centering
    \includegraphics[width=\textwidth]{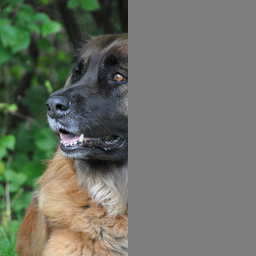}    
    \end{subfigure}
    \begin{subfigure}[h]{0.125\textwidth}
    \centering
    \includegraphics[width=\textwidth]{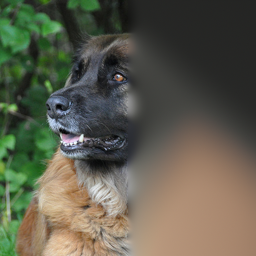}  
    \end{subfigure}
    \begin{subfigure}[h]{0.125\textwidth}
    \centering
    \includegraphics[width=\textwidth]{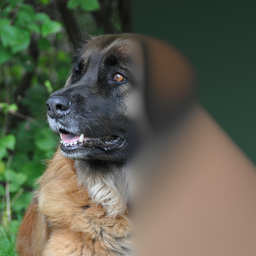}    
    \end{subfigure}
    \begin{subfigure}[h]{0.125\textwidth}
    \centering
    \includegraphics[width=\textwidth]{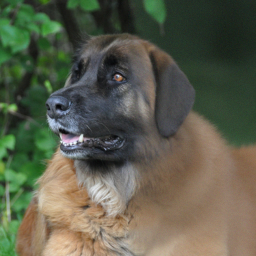}  
    \end{subfigure}
    \begin{subfigure}[h]{0.125\textwidth}
    \centering
    \includegraphics[width=\textwidth]{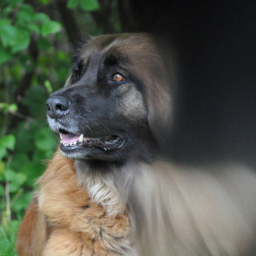}    
    \end{subfigure}
    \begin{subfigure}[h]{0.125\textwidth}
    \centering
    \includegraphics[width=\textwidth]{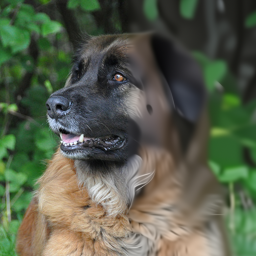}    
    \end{subfigure}
    \begin{subfigure}[h]{0.125\textwidth}
    \centering
    \includegraphics[width=\textwidth]{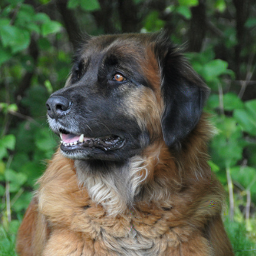}    
    \end{subfigure}
    \caption{Half-mask inpainting task from Table~\ref{tab:exp1}. Zoom in for the best view.}\label{fig:qv1}
\end{figure}
\begin{figure}[h!]
    \centering
    \begin{subfigure}[h]{0.15\textwidth}
    \centering
    \subfloat{\scriptsize{\bf{\color{gray}{\textit{Original}}}}}
    \end{subfigure}
    \begin{subfigure}[h]{0.15\textwidth}
    \centering
    \subfloat{\scriptsize{\bf{\color{gray}{\textit{Measurement}}}}}
    \end{subfigure}
    \begin{subfigure}[h]{0.15\textwidth}
    \centering
    \subfloat{\scriptsize{\bf{\color{gray}{\textit{DAPS-4K}}}}}
    \end{subfigure}
    \begin{subfigure}[h]{0.15\textwidth}
    \centering
    \subfloat{\scriptsize{\bf{\color{gray}{\textit{MAPGA}}}}}
    \end{subfigure}
    \begin{subfigure}[h]{0.15\textwidth}
    \centering
    \subfloat{\scriptsize{\bf{\color{gray}{\textit{VML-MAP}}}}}
    \end{subfigure}
    \\[2pt]
    \begin{subfigure}[h]{0.15\textwidth}
    \centering
    \includegraphics[width=\textwidth]{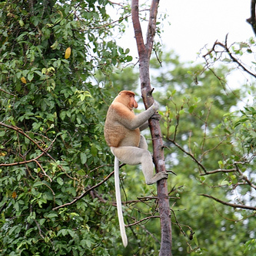}    
    \end{subfigure}
    \begin{subfigure}[h]{0.15\textwidth}
    \centering
    \includegraphics[width=\textwidth]{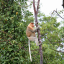}    
    \end{subfigure}
    \begin{subfigure}[h]{0.15\textwidth}
    \centering
    \includegraphics[width=\textwidth]{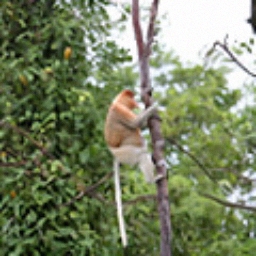}  
    \end{subfigure}
    \begin{subfigure}[h]{0.15\textwidth}
    \centering
    \includegraphics[width=\textwidth]{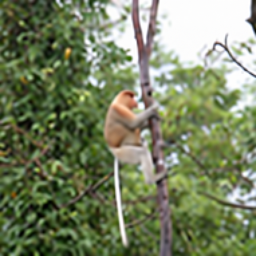}    
    \end{subfigure}
    \begin{subfigure}[h]{0.15\textwidth}
    \centering
    \includegraphics[width=\textwidth]{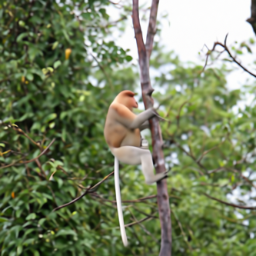}  
    \end{subfigure}
    \\ 
    \begin{subfigure}[h]{0.15\textwidth}
    \centering
    \includegraphics[width=\textwidth]{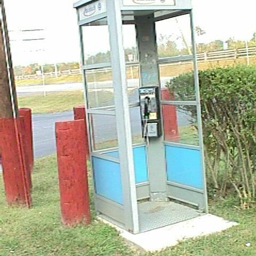}    
    \end{subfigure}
    \begin{subfigure}[h]{0.15\textwidth}
    \centering
    \includegraphics[width=\textwidth]{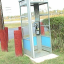}    
    \end{subfigure}
    \begin{subfigure}[h]{0.15\textwidth}
    \centering
    \includegraphics[width=\textwidth]{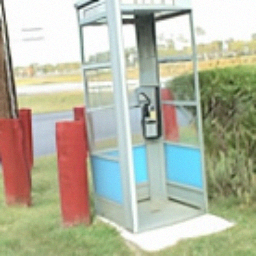}  
    \end{subfigure}
    \begin{subfigure}[h]{0.15\textwidth}
    \centering
    \includegraphics[width=\textwidth]{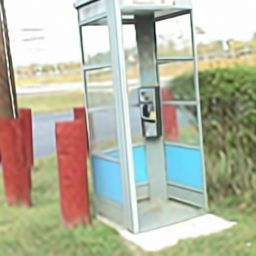}    
    \end{subfigure}
    \begin{subfigure}[h]{0.15\textwidth}
    \centering
    \includegraphics[width=\textwidth]{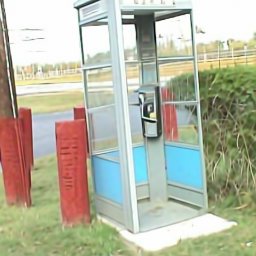}  
    \end{subfigure}
    \\ 
    \begin{subfigure}[h]{0.15\textwidth}
    \centering
    \includegraphics[width=\textwidth]{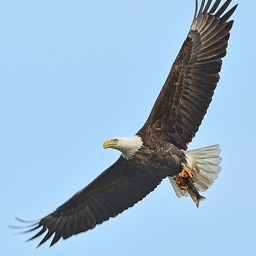}    
    \end{subfigure}
    \begin{subfigure}[h]{0.15\textwidth}
    \centering
    \includegraphics[width=\textwidth]{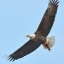}    
    \end{subfigure}
    \begin{subfigure}[h]{0.15\textwidth}
    \centering
    \includegraphics[width=\textwidth]{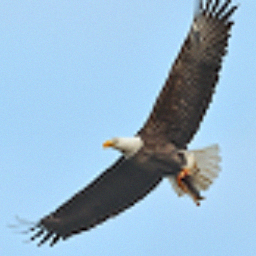}  
    \end{subfigure}
    \begin{subfigure}[h]{0.15\textwidth}
    \centering
    \includegraphics[width=\textwidth]{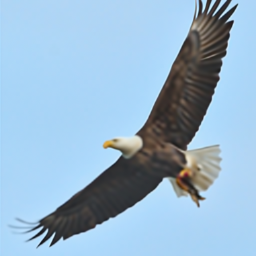}    
    \end{subfigure}
    \begin{subfigure}[h]{0.15\textwidth}
    \centering
    \includegraphics[width=\textwidth]{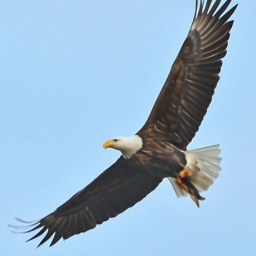}  
    \end{subfigure}
    \caption{$4\times$ super-resolution task from Table~\ref{tab:exp1}. Zoom in for the best view.}\label{fig:qv2}
\end{figure}
\begin{figure}[h!]
    \centering
    \begin{subfigure}[h]{0.15\textwidth}
    \centering
    \subfloat{\scriptsize{\bf{\color{gray}{$\textit{Original}_{\ }$}}}}
    \end{subfigure}
    \begin{subfigure}[h]{0.15\textwidth}
    \centering
    \subfloat{\scriptsize{\bf{\color{gray}{$\textit{Measurement}_{\ }$}}}}
    \end{subfigure}
    \begin{subfigure}[h]{0.15\textwidth}
    \centering
    \subfloat{\scriptsize{\bf{\color{gray}{$\textit{DAPS-4K}_{\ }$}}}}
    \end{subfigure}
    \begin{subfigure}[h]{0.15\textwidth}
    \centering
    \subfloat{\scriptsize{\bf{\color{gray}{$\textit{MAPGA}_{\ }$}}}}
    \end{subfigure}
    \begin{subfigure}[h]{0.15\textwidth}
    \centering
    \subfloat{\scriptsize{\bf{\color{gray}{$\textit{VML-MAP}_{\ }$}}}}
    \end{subfigure}
    \begin{subfigure}[h]{0.15\textwidth}
    \centering
    \subfloat{\scriptsize{\bf{\color{gray}{$\textit{VML-MAP}_{pre}$}}}}
    \end{subfigure}
    \\[2pt]
    \begin{subfigure}[h]{0.15\textwidth}
    \centering
    \includegraphics[width=\textwidth]{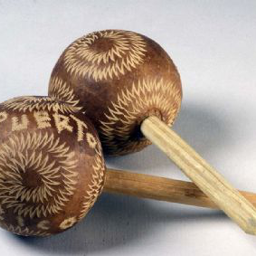}    
    \end{subfigure}
    \begin{subfigure}[h]{0.15\textwidth}
    \centering
    \includegraphics[width=\textwidth]{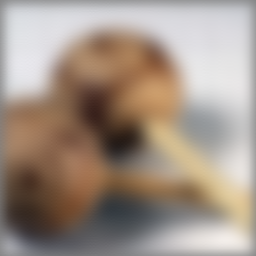}    
    \end{subfigure}
    \begin{subfigure}[h]{0.15\textwidth}
    \centering
    \includegraphics[width=\textwidth]{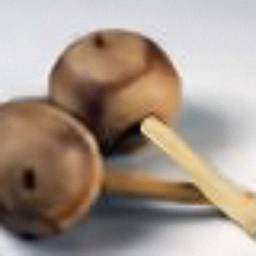}  
    \end{subfigure}
    \begin{subfigure}[h]{0.15\textwidth}
    \centering
    \includegraphics[width=\textwidth]{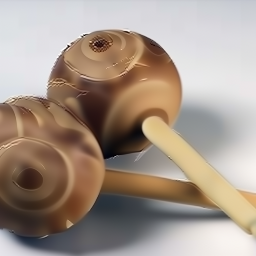}    
    \end{subfigure}
    \begin{subfigure}[h]{0.15\textwidth}
    \centering
    \includegraphics[width=\textwidth]{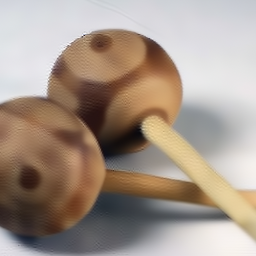}
    \end{subfigure}
    \begin{subfigure}[h]{0.15\textwidth}
    \centering
    \includegraphics[width=\textwidth]{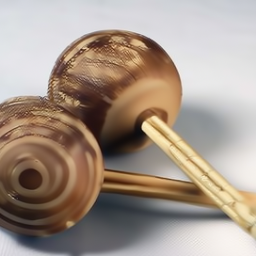}
    \end{subfigure}
    \\ 
    \begin{subfigure}[h]{0.15\textwidth}
    \centering
    \includegraphics[width=\textwidth]{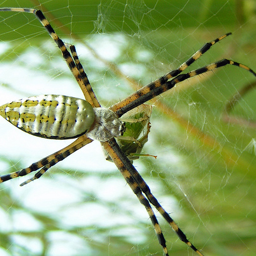}    
    \end{subfigure}
    \begin{subfigure}[h]{0.15\textwidth}
    \centering
    \includegraphics[width=\textwidth]{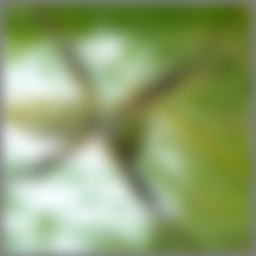}    
    \end{subfigure}
    \begin{subfigure}[h]{0.15\textwidth}
    \centering
    \includegraphics[width=\textwidth]{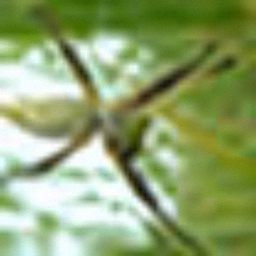}  
    \end{subfigure}
    \begin{subfigure}[h]{0.15\textwidth}
    \centering
    \includegraphics[width=\textwidth]{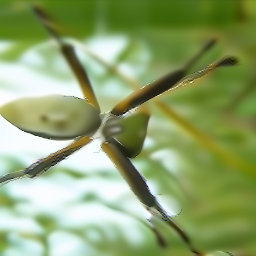}    
    \end{subfigure}
    \begin{subfigure}[h]{0.15\textwidth}
    \centering
    \includegraphics[width=\textwidth]{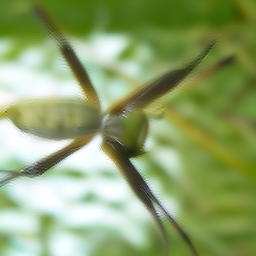}
    \end{subfigure}
    \begin{subfigure}[h]{0.15\textwidth}
    \centering
    \includegraphics[width=\textwidth]{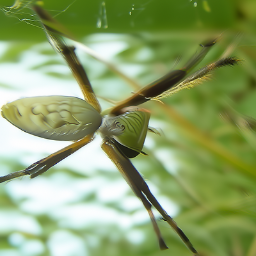}
    \end{subfigure}
    \\ 
    \begin{subfigure}[h]{0.15\textwidth}
    \centering
    \includegraphics[width=\textwidth]{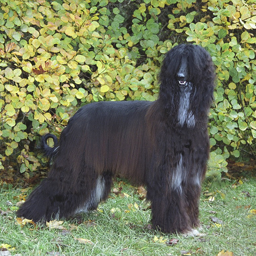}    
    \end{subfigure}
    \begin{subfigure}[h]{0.15\textwidth}
    \centering
    \includegraphics[width=\textwidth]{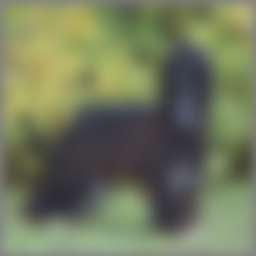}    
    \end{subfigure}
    \begin{subfigure}[h]{0.15\textwidth}
    \centering
    \includegraphics[width=\textwidth]{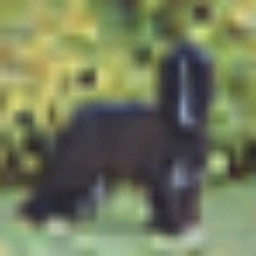}  
    \end{subfigure}
    \begin{subfigure}[h]{0.15\textwidth}
    \centering
    \includegraphics[width=\textwidth]{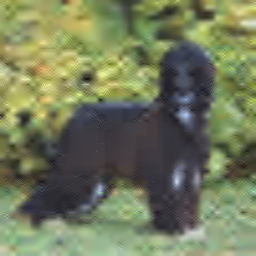}    
    \end{subfigure}
    \begin{subfigure}[h]{0.15\textwidth}
    \centering
    \includegraphics[width=\textwidth]{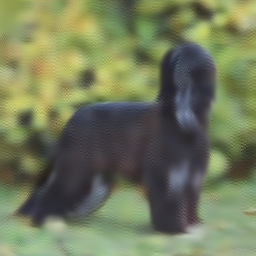}
    \end{subfigure}
    \begin{subfigure}[h]{0.15\textwidth}
    \centering
    \includegraphics[width=\textwidth]{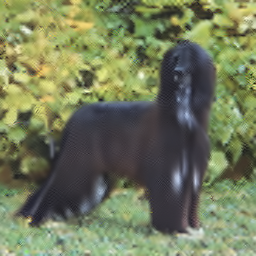}
    \end{subfigure}
    \caption{Deblurring task from Table~\ref{tab:exp1}. Zoom in for the best view.}\label{fig:qv3}
\end{figure}

\subsection{Experiments in Table~\ref{tab:exp2}}\label{ssec:implementationdetails2}

For our experiments in Table~\ref{tab:exp2}, we considered the case of noisy image restoration with the measurement noise $\sigma_\rvy = 0.05$ and the same degradation operators as in Table~\ref{tab:exp1}. For $\text{VML-MAP}^{\bm \tau}$ and $\text{VML-MAP}^{\bm \tau}_{pre}$, we set the number of reverse diffusion steps for diffusion time $t \geq \bm \tau$ given by $N = 20$, the number of gradient-descent iterations per step given by $K = 20$, and the number of reverse diffusion steps for diffusion time $t < \bm \tau$ given by $S = 100$. For the half-mask inpainting, $4\times$ super-resolution, and uniform deblurring tasks, we set the threshold $\bm \tau$ to $4\,\sigma_\rvy$, $16\, \sigma_\rvy$, and $16\, \sigma_\rvy$ respectively, and use the prior weight schedule {\small{$\rho(t) = \left( 1 + 100\, \frac{\sigma_t}{\sigma_\rvy} \right)^{-1} $}} for all the tasks. For baselines and other hyperparameters, we follow the same settings from Appendix~\ref{ssec:implementationdetails1}, with the best learning rate configuration for VML-MAP and $\text{VML-MAP}_{pre}$ reported in Table~\ref{tab:hp1}. Figures~\ref{fig:noisyexp1} and~\ref{fig:noisyexp2} depict qualitative visualizations of the reconstructed images from the experiments in Table~\ref{tab:exp2}.  

\begin{figure}[h!]
    \centering
    \begin{subfigure}[h]{0.025\textwidth}
    \centering    
    \subfloat{\bf{\color{gray}{ }}}
    \end{subfigure}
    \begin{subfigure}[h]{0.15\textwidth}
    \centering
    \subfloat{\scriptsize{\bf{{\color{gray}{\textit{Original}}}}}}    
    \end{subfigure}
    \begin{subfigure}[h]{0.15\textwidth}
    \centering
    \subfloat{\scriptsize{\bf{{\color{gray}{\textit{Measurement}}}}}}    
    \end{subfigure}
    \begin{subfigure}[h]{0.15\textwidth}
    \centering
    \subfloat{\scriptsize{\bf{{\color{gray}{$\Pi$\textit{GDM}}}}}}   
    \end{subfigure}
    \begin{subfigure}[h]{0.15\textwidth}
    \centering
    \subfloat{\scriptsize{{\bf{\color{gray}{\textit{DAPS-4K}}}}}}    
    \end{subfigure}
    \begin{subfigure}[h]{0.15\textwidth}
    \centering
    \subfloat{\scriptsize{\bf{\color{gray}{\textit{VML-MAP}}}}}
    \end{subfigure}
    \\[2pt]
    \begin{subfigure}[h]{0.025\textwidth}
    \centering
    \subfloat{{\rotatebox[origin=c]{90}{\bf{\color{gray}{\textit{Inpaint}}}}}}    
    \end{subfigure}
    \begin{subfigure}[h]{0.15\textwidth}
    \centering
    \includegraphics[width=\textwidth]{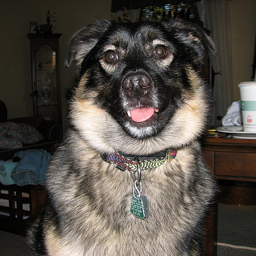}    
    \end{subfigure}
    \begin{subfigure}[h]{0.15\textwidth}
    \centering
    \includegraphics[width=\textwidth]{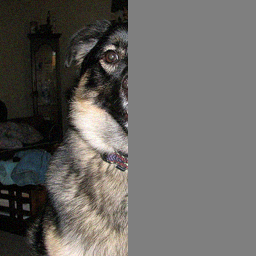}    
    \end{subfigure}
    \begin{subfigure}[h]{0.15\textwidth}
    \centering
    \includegraphics[width=\textwidth]{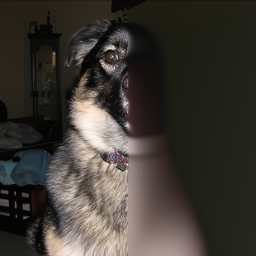}  
    \end{subfigure}
    \begin{subfigure}[h]{0.15\textwidth}
    \centering
    \includegraphics[width=\textwidth]{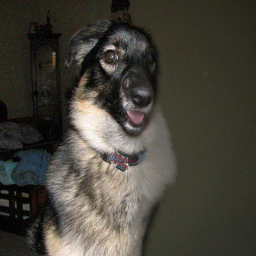}    
    \end{subfigure}
    \begin{subfigure}[h]{0.15\textwidth}
    \centering
    \includegraphics[width=\textwidth]{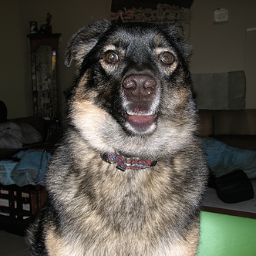}    
    \end{subfigure}
    \\
    \begin{subfigure}[h]{0.025\textwidth}
    \centering
    \subfloat{{\rotatebox[origin=c]{90}{\color{gray}{$\mathbf{4\times}$\bf{\textit{SR}}}}}}    
    \end{subfigure}
    \begin{subfigure}[h]{0.15\textwidth}
    \centering
    \includegraphics[width=\textwidth]{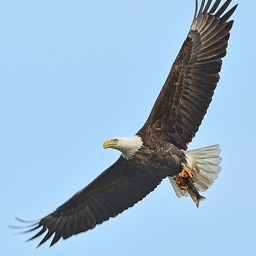}    
    \end{subfigure}
    \begin{subfigure}[h]{0.15\textwidth}
    \centering
    \includegraphics[width=\textwidth]{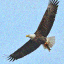}    
    \end{subfigure}
    \begin{subfigure}[h]{0.15\textwidth}
    \centering
    \includegraphics[width=\textwidth]{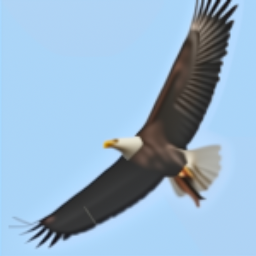}    
    \end{subfigure}
    \begin{subfigure}[h]{0.15\textwidth}
    \centering
    \includegraphics[width=\textwidth]{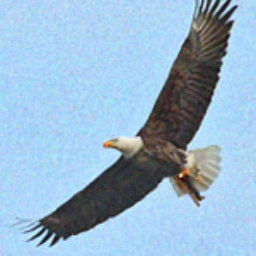}    
    \end{subfigure}
    \begin{subfigure}[h]{0.15\textwidth}
    \centering
    \includegraphics[width=\textwidth]{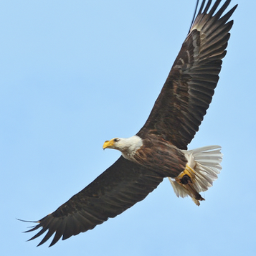}    
    \end{subfigure}
    \\
    \begin{subfigure}[h]{0.025\textwidth}
    \centering
    \subfloat{{\rotatebox[origin=c]{90}{\bf{\color{gray}{\textit{Deblur}}}}}}    
    \end{subfigure}
    \begin{subfigure}[h]{0.15\textwidth}
    \centering
    \includegraphics[width=\textwidth]{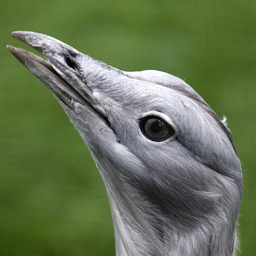}    
    \end{subfigure}
    \begin{subfigure}[h]{0.15\textwidth}
    \centering
    \includegraphics[width=\textwidth]{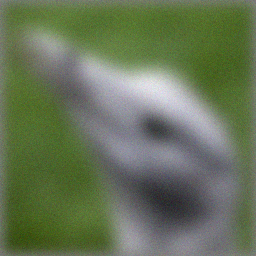}    
    \end{subfigure}
    \begin{subfigure}[h]{0.15\textwidth}
    \centering
    \includegraphics[width=\textwidth]{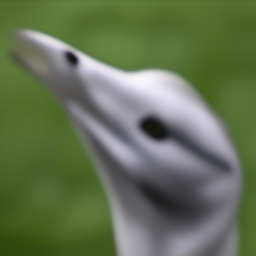} 
    \end{subfigure}
    \begin{subfigure}[h]{0.15\textwidth}
    \centering
    \includegraphics[width=\textwidth]{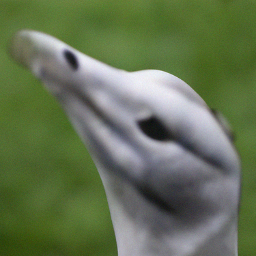} 
    \end{subfigure}
    \begin{subfigure}[h]{0.15\textwidth}
    \centering
    \includegraphics[width=\textwidth]{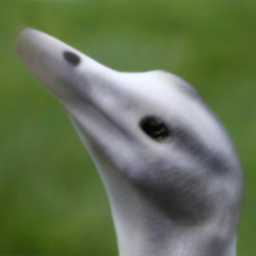} 
    \end{subfigure}
    \caption{Noisy image restoration ($\sigma_\rvy = 0.05$) experiments from Table~\ref{tab:exp2}. Zoom in for the best view.}\label{fig:noisyexp1}
\end{figure}
\begin{figure}[h!]
    \centering
    \begin{subfigure}[h]{0.025\textwidth}
    \centering    
    \subfloat{\bf{\color{gray}{ }}}
    \end{subfigure}
    \begin{subfigure}[h]{0.15\textwidth}
    \centering
    \subfloat{\tiny{\bf{{\color{gray}{$\textit{Original}_{\ }$}}}}}    
    \end{subfigure}
    \begin{subfigure}[h]{0.15\textwidth}
    \centering
    \subfloat{\tiny{\bf{{\color{gray}{$\textit{Measurement}_{\ }$}}}}}    
    \end{subfigure}
    \begin{subfigure}[h]{0.15\textwidth}
    \centering
    \subfloat{\tiny{{\bf{\color{gray}{$\Pi\textit{GDM}_{\ }$}}}}}    
    \end{subfigure}
    \begin{subfigure}[h]{0.15\textwidth}
    \centering
    \subfloat{\tiny{{\bf{\color{gray}{$\textit{DAPS-4K}_{\ }$}}}}}    
    \end{subfigure}
    \begin{subfigure}[h]{0.15\textwidth}
    \centering
    \subfloat{\tiny{\bf{\color{gray}{$\textit{VML-MAP}_{\ }$}}}}
    \end{subfigure}
    \begin{subfigure}[h]{0.15\textwidth}
    \centering
    \subfloat{\tiny{\bf{{\color{gray}{$\textit{VML-MAP}_{pre}$}}}}}   
    \end{subfigure}
    \\[2pt]
    \begin{subfigure}[h]{0.15\textwidth}
    \centering
    \includegraphics[width=\textwidth]{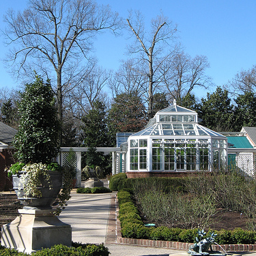}    
    \end{subfigure}
    \begin{subfigure}[h]{0.15\textwidth}
    \centering
    \includegraphics[width=\textwidth]{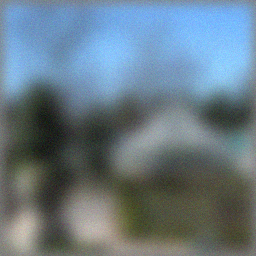}    
    \end{subfigure}
    \begin{subfigure}[h]{0.15\textwidth}
    \centering
    \includegraphics[width=\textwidth]{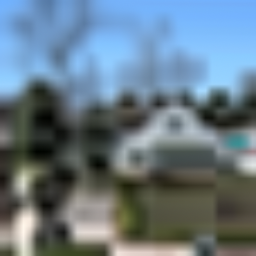}  
    \end{subfigure}
    \begin{subfigure}[h]{0.15\textwidth}
    \centering
    \includegraphics[width=\textwidth]{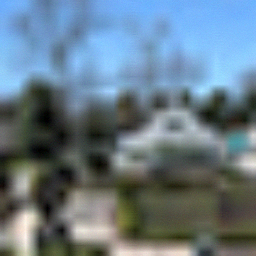}    
    \end{subfigure}
    \begin{subfigure}[h]{0.15\textwidth}
    \centering
    \includegraphics[width=\textwidth]{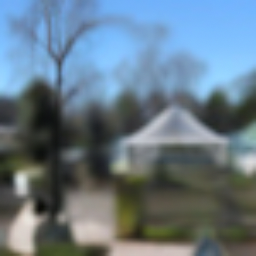}
    \end{subfigure}
    \begin{subfigure}[h]{0.15\textwidth}
    \centering
    \includegraphics[width=\textwidth]{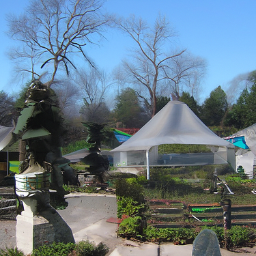}    
    \end{subfigure}
    \\
    \begin{subfigure}[h]{0.15\textwidth}
    \centering
    \includegraphics[width=\textwidth]{icml2026/figures/noisy_imagenet256new/deblur16/im36_original.png}    
    \end{subfigure}
    \begin{subfigure}[h]{0.15\textwidth}
    \centering
    \includegraphics[width=\textwidth]{icml2026/figures/noisy_imagenet256new/deblur16/im36_degraded.png}    
    \end{subfigure}
    \begin{subfigure}[h]{0.15\textwidth}
    \centering
    \includegraphics[width=\textwidth]{icml2026/figures/noisy_imagenet256new/deblur16/im36_recovered_pgdm.png} 
    \end{subfigure}
    \begin{subfigure}[h]{0.15\textwidth}
    \centering
    \includegraphics[width=\textwidth]{icml2026/figures/noisy_imagenet256new/deblur16/im36_recovered_daps4k.png} 
    \end{subfigure}
    \begin{subfigure}[h]{0.15\textwidth}
    \centering
    \includegraphics[width=\textwidth]{icml2026/figures/noisy_imagenet256new/deblur16/im36_recovered_vmlmap.png}
    \end{subfigure}
    \begin{subfigure}[h]{0.15\textwidth}
    \centering
    \includegraphics[width=\textwidth]{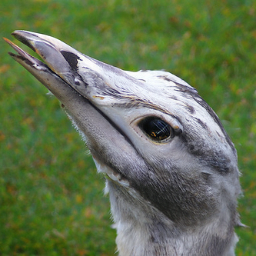}
    \end{subfigure}
    \caption{Noisy deblurring task from Table~\ref{tab:exp2}. Zoom in for the best view.}\label{fig:noisyexp2}
\end{figure}

\subsection{Additional experiments on noiseless image inpainting with diverse masks}\label{ssec:inpexp}
As mentioned in Section~\ref{ssec:noiseless-image-restoration}, DDRM, $\Pi$GDM, MAPGA, and $\text{VML-MAP}_{pre}$ require the singular value decomposition (SVD) of the linear degradation matrix $\mathrm{H}$, while DAPS and VML-MAP only require the forward operation of $\mathrm{H}$. For inpainting, $\mathrm{H}$ is a diagonal matrix with zeros for indices corresponding to masked pixels, and ones for indices corresponding to observed pixels. The SVD of $\mathrm{H}$ in this case is trivial, since $\mathrm{H}$ itself is the singular value matrix, with the left and the right singular matrices being identity. This ensures a fair comparison among all the methods, irrespective of whether a method requires the SVD of $\mathrm{H}$ or not. Therefore, in this section, we focus on the noiseless image inpainting task with different types of masks as described below. 

\begin{itemize}[label=\textbullet]
    \item \textbf{Expand mask:} Pixels outside the $128\times 128$ square center-crop are masked 
    \item \textbf{Box mask:} Pixels within the $128\times 128$ square center-crop are masked 
    \item \textbf{Super-resolution mask:} Alternative pixels are masked 
    \item \textbf{Random mask:} $70\%$ of the pixels are randomly masked 
\end{itemize}
We use $\gamma_0 = 1.0$ (i.e., a learning rate of $1.0 \, \sigma^2_\rvy$) for all tasks on both $\text{ImageNet}256$ and $\text{FFHQ}256$. For other hyperparameters, we follow the same settings mentioned in Appendix~\ref{ssec:implementationdetails1}. The results reported in Table~\ref{tab:inpexp}, and the qualitative visualizations in Figures~\ref{fig:qv4},~\ref{fig:qv5},~\ref{fig:qv6}, and~\ref{fig:qv7} reveal the superior performance of VML-MAP over existing baselines for all the inpainting tasks.

\begin{table}[t!]
\caption{Evaluation of image restoration methods on noiseless inpainting with expand mask, box mask, super-resolution mask, and random mask on $100$ validation images of ImageNet$256$, and on $100$ images of FFHQ$256$. Best values in \textbf{bold}, second best values \underline{underlined}.}
\label{tab:inpexp}
\begin{center}
\resizebox{0.75\textwidth}{!}{
\begin{tabular}{ l l l l l l c c c c c c c c }
\toprule
\multicolumn{3}{c}{} & \multicolumn{3}{l}{} & \multicolumn{8}{c}{} \\[-7pt]
\multicolumn{3}{c}{{Dataset}}  & \multicolumn{3}{l}{{Method}} & \multicolumn{2}{c}{{Expand mask}} & \multicolumn{2}{c}{{Box mask}}  & \multicolumn{2}{c}{{Sup-res mask}} & \multicolumn{2}{c}{{Random mask}} \\[3pt]
 \multicolumn{3}{l}{} & \multicolumn{3}{l}{} & \multicolumn{1}{c}{\scriptsize{LPIPS$\downarrow$}} & \multicolumn{1}{c}{\scriptsize{FID$\downarrow$}} & \multicolumn{1}{c}{\scriptsize{LPIPS$\downarrow$}} & \multicolumn{1}{c}{\scriptsize{FID$\downarrow$}} & \multicolumn{1}{c}{\scriptsize{LPIPS$\downarrow$}} & \multicolumn{1}{c}{\scriptsize{FID$\downarrow$}} & \multicolumn{1}{c}{\scriptsize{LPIPS$\downarrow$}} & \multicolumn{1}{c}{\scriptsize{FID$\downarrow$}}\\[3pt]
 \toprule
 \multicolumn{3}{c}{} & \multicolumn{3}{l}{} & \multicolumn{8}{c}{}\\[-7pt]
 \multicolumn{3}{c}{} & \multicolumn{3}{l}{{\small{$\text{DDRM}$}}}  & 
 \multicolumn{1}{c}{\scriptsize{$0.548$}} & \multicolumn{1}{c}{\scriptsize{$152.0$}}  & \multicolumn{1}{c}{\scriptsize{$ 0.211$}} & \multicolumn{1}{c}{\scriptsize{$133.1$}} & \multicolumn{1}{c}{\scriptsize{$0.089$}} & \multicolumn{1}{c}{\scriptsize{$29.24$}} & \multicolumn{1}{c}{\scriptsize{$0.052$}} & \multicolumn{1}{c}{\scriptsize{$21.81$}}\\
 \multicolumn{3}{c}{} & \multicolumn{3}{l}{{\small{$\Pi\text{GDM}$}}}  & \multicolumn{1}{c}{\scriptsize{$0.523$}} & \multicolumn{1}{c}{\scriptsize{$155.7$}}  & \multicolumn{1}{c}{\scriptsize{$0.203$}} & \multicolumn{1}{c}{\scriptsize{$123.9$}} & \multicolumn{1}{c}{\scriptsize{$\underline{0.080}$}} & \multicolumn{1}{c}{\scriptsize{$26.54$}} & \multicolumn{1}{c}{\scriptsize{$0.054$}} & \multicolumn{1}{c}{\scriptsize{$24.48$}}\\ 
 \multicolumn{3}{c}{} & \multicolumn{3}{l}{{\small{$\text{MAPGA}$}}}  & \multicolumn{1}{c}{\scriptsize{$\underline{0.466}$}} & \multicolumn{1}{c}{\scriptsize{$\underline{126.9}$}}  & \multicolumn{1}{c}{\scriptsize{$\underline{0.150}$}} & \multicolumn{1}{c}{\scriptsize{$\underline{91.14}$}} & \multicolumn{1}{c}{\scriptsize{$0.087$}} & \multicolumn{1}{c}{\scriptsize{$\underline{25.02}$}} & \multicolumn{1}{c}{\scriptsize{$\underline{0.051}$}} & \multicolumn{1}{c}{\scriptsize{$\underline{20.31}$}}\\ 
 \multicolumn{3}{c}{\small{ImageNet}} & \multicolumn{3}{l}{{\small{$\text{DAPS-1K}$}}}  & \multicolumn{1}{c}{\scriptsize{$0.550$}} & \multicolumn{1}{c}{\scriptsize{$167.2$}}  & \multicolumn{1}{c}{\scriptsize{$0.201$}} & \multicolumn{1}{c}{\scriptsize{$112.6$}} & \multicolumn{1}{c}{\scriptsize{$0.132$}} & \multicolumn{1}{c}{\scriptsize{$53.46$}} & \multicolumn{1}{c}{\scriptsize{$0.092$}} & \multicolumn{1}{c}{\scriptsize{$35.93$}}\\
\multicolumn{3}{c}{} & \multicolumn{3}{l}{{\small{$\text{DAPS-4K}$}}}  & \multicolumn{1}{c}{\scriptsize{$0.521$}} & \multicolumn{1}{c}{\scriptsize{$154.0$}}  & \multicolumn{1}{c}{\scriptsize{$0.187$}} & \multicolumn{1}{c}{\scriptsize{$102.8$}} & \multicolumn{1}{c}{\scriptsize{$0.114$}} & \multicolumn{1}{c}{\scriptsize{$45.65$}} & \multicolumn{1}{c}{\scriptsize{$0.082$}} & \multicolumn{1}{c}{\scriptsize{$31.67$}}\\[2pt]
\cline{4-14}
\multicolumn{3}{c}{} & \multicolumn{3}{l}{} & \multicolumn{8}{c}{}\\[-7pt]
\multicolumn{3}{c}{} & \multicolumn{3}{l}{\bf{\small{$\text{VML-MAP}$}}} & \multicolumn{1}{c}{\scriptsize{$\bf{0.434}$}} & \multicolumn{1}{c}{\scriptsize{$\bf{116.2}$}} & \multicolumn{1}{c}{\scriptsize{$\bf{0.138}$}} & \multicolumn{1}{c}{\scriptsize{$\bf{75.80}$}} & \multicolumn{1}{c}{\scriptsize{$\bf{0.068}$}} & \multicolumn{1}{c}{\scriptsize{$\bf{19.96}$}} & \multicolumn{1}{c}{\scriptsize{$\bf{0.044}$}} & \multicolumn{1}{c}{\scriptsize{$\bf{16.14}$}}\\[2pt]
\midrule
 \multicolumn{3}{c}{} & \multicolumn{3}{l}{} & \multicolumn{8}{c}{}\\[-7pt]
 \multicolumn{3}{c}{} & \multicolumn{3}{l}{{\small{$\text{DDRM}$}}}  & 
 \multicolumn{1}{c}{\scriptsize{$0.426$}} & \multicolumn{1}{c}{\scriptsize{$151.1$}}  & \multicolumn{1}{c}{\scriptsize{$0.087$}} & \multicolumn{1}{c}{\scriptsize{$50.60$}} & \multicolumn{1}{c}{\scriptsize{$\underline{0.031}$}} & \multicolumn{1}{c}{\scriptsize{$\bf{20.75}$}} & \multicolumn{1}{c}{\scriptsize{$\underline{0.027}$}} & \multicolumn{1}{c}{\scriptsize{$\underline{18.79}$}}\\
 \multicolumn{3}{c}{} & \multicolumn{3}{l}{{\small{$\Pi\text{GDM}$}}}  & \multicolumn{1}{c}{\scriptsize{$0.415$}} & \multicolumn{1}{c}{\scriptsize{$146.3$}}  & \multicolumn{1}{c}{\scriptsize{$0.084$}} & \multicolumn{1}{c}{\scriptsize{$46.95$}} & \multicolumn{1}{c}{\scriptsize{$0.033$}} & \multicolumn{1}{c}{\scriptsize{$23.98$}} & \multicolumn{1}{c}{\scriptsize{$\underline{0.027}$}} & \multicolumn{1}{c}{\scriptsize{$19.93$}}\\ 
 \multicolumn{3}{c}{} & \multicolumn{3}{l}{{\small{$\text{MAPGA}$}}}  & \multicolumn{1}{c}{\scriptsize{$\underline{0.393}$}} & \multicolumn{1}{c}{\scriptsize{$129.3$}}  & \multicolumn{1}{c}{\scriptsize{$\underline{0.070}$}} & \multicolumn{1}{c}{\scriptsize{$41.34$}} & \multicolumn{1}{c}{\scriptsize{$0.036$}} & \multicolumn{1}{c}{\scriptsize{$27.01$}} & \multicolumn{1}{c}{\scriptsize{$\underline{0.027}$}} & \multicolumn{1}{c}{\scriptsize{$20.88$}}\\ 
 \multicolumn{3}{c}{\small{FFHQ}} & \multicolumn{3}{l}{{\small{$\text{DAPS-1K}$}}}  & \multicolumn{1}{c}{\scriptsize{$0.423$}} & \multicolumn{1}{c}{\scriptsize{$\underline{126.7}$}}  & \multicolumn{1}{c}{\scriptsize{$0.076$}} & \multicolumn{1}{c}{\scriptsize{$\underline{33.17}$}} & \multicolumn{1}{c}{\scriptsize{$0.057$}} & \multicolumn{1}{c}{\scriptsize{$47.21$}} & \multicolumn{1}{c}{\scriptsize{$ 0.059$}} & \multicolumn{1}{c}{\scriptsize{$45.55$}}\\
 \multicolumn{3}{c}{} & \multicolumn{3}{l}{{\small{$\text{DAPS-4K}$}}}  & \multicolumn{1}{c}{\scriptsize{$0.398$}} & \multicolumn{1}{c}{\scriptsize{$117.5$}}  & \multicolumn{1}{c}{\scriptsize{$0.077$}} & \multicolumn{1}{c}{\scriptsize{$33.82$}} & \multicolumn{1}{c}{\scriptsize{$0.051$}} & \multicolumn{1}{c}{\scriptsize{$42.32$}} & \multicolumn{1}{c}{\scriptsize{$0.051$}} & \multicolumn{1}{c}{\scriptsize{$40.06$}}\\[2pt]
 \cline{4-14}
 \multicolumn{3}{c}{} & \multicolumn{3}{l}{} & \multicolumn{8}{c}{}\\[-7pt]
 \multicolumn{3}{c}{} & \multicolumn{3}{l}{\bf{\small{$\text{VML-MAP}$}}}  & \multicolumn{1}{c}{\scriptsize{$\bf{0.365}$}} & \multicolumn{1}{c}{\scriptsize{$\bf{112.9}$}} & \multicolumn{1}{c}{\scriptsize{$\bf{0.057}$}} & \multicolumn{1}{c}{\scriptsize{$\bf{28.38}$}} & \multicolumn{1}{c}{\scriptsize{$\bf{0.027}$}} & \multicolumn{1}{c}{\scriptsize{$\underline{21.40}$}} & \multicolumn{1}{c}{\scriptsize{$\bf{0.022}$}} & \multicolumn{1}{c}{\scriptsize{$\bf{16.30}$}}\\[2pt]
 \bottomrule
\end{tabular}}
\end{center}
\end{table}
\begin{figure}[h!]
    \centering
    \begin{subfigure}[h]{0.125\textwidth}
    \centering
    \subfloat{\scriptsize{\bf{\color{gray}{\textit{Original}}}}}
    \end{subfigure}
    \begin{subfigure}[h]{0.125\textwidth}
    \centering
    \subfloat{\scriptsize{\bf{\color{gray}{\textit{Measurement}}}}}
    \end{subfigure}
    \begin{subfigure}[h]{0.125\textwidth}
    \centering
    \subfloat{\scriptsize{\bf{\color{gray}{\textit{DDRM}}}}}
    \end{subfigure}
    \begin{subfigure}[h]{0.125\textwidth}
    \centering
    \subfloat{\scriptsize{\bf{\color{gray}{\textit{$\Pi$GDM}}}}}
    \end{subfigure}
    \begin{subfigure}[h]{0.125\textwidth}
    \centering
    \subfloat{\scriptsize{\bf{\color{gray}{\textit{DAPS-1K}}}}}
    \end{subfigure}
    \begin{subfigure}[h]{0.125\textwidth}
    \centering
    \subfloat{\scriptsize{\bf{\color{gray}{\textit{DAPS-4K}}}}}
    \end{subfigure}
    \begin{subfigure}[h]{0.125\textwidth}
    \centering
    \subfloat{\scriptsize{\bf{\color{gray}{\textit{MAPGA}}}}}
    \end{subfigure}
    \begin{subfigure}[h]{0.125\textwidth}
    \centering
    \subfloat{\scriptsize{\bf{\color{gray}{\textit{VML-MAP}}}}}
    \end{subfigure}
    \\[2pt]
    \begin{subfigure}[h]{0.125\textwidth}
    \centering
    \includegraphics[width=\textwidth]{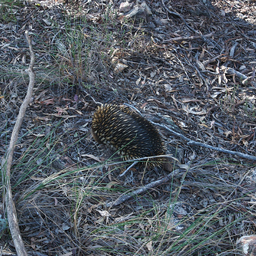}    
    \end{subfigure}
    \begin{subfigure}[h]{0.125\textwidth}
    \centering
    \includegraphics[width=\textwidth]{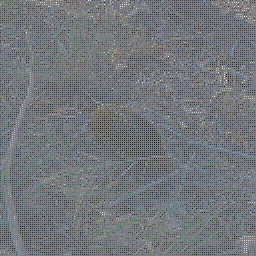}    
    \end{subfigure}
    \begin{subfigure}[h]{0.125\textwidth}
    \centering
    \includegraphics[width=\textwidth]{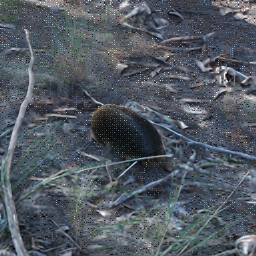}  
    \end{subfigure}
    \begin{subfigure}[h]{0.125\textwidth}
    \centering
    \includegraphics[width=\textwidth]{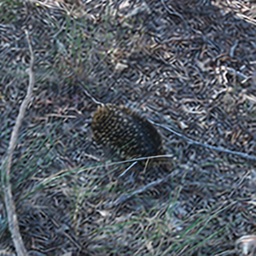}    
    \end{subfigure}
    \begin{subfigure}[h]{0.125\textwidth}
    \centering
    \includegraphics[width=\textwidth]{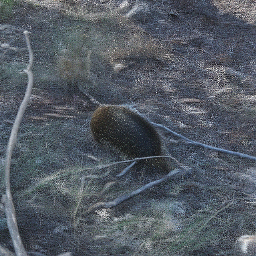}  
    \end{subfigure}
    \begin{subfigure}[h]{0.125\textwidth}
    \centering
    \includegraphics[width=\textwidth]{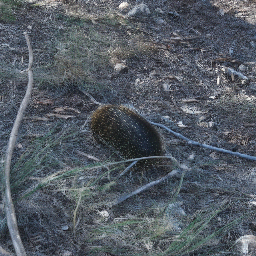}
    \end{subfigure}
    \begin{subfigure}[h]{0.125\textwidth}
    \centering
    \includegraphics[width=\textwidth]{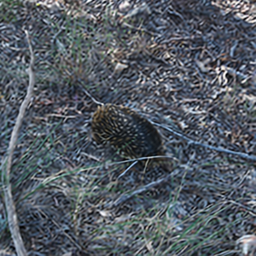}    
    \end{subfigure}
    \begin{subfigure}[h]{0.125\textwidth}
    \centering
    \includegraphics[width=\textwidth]{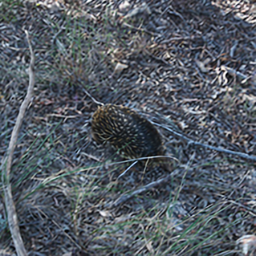}
    \end{subfigure}
    \\ 
    \begin{subfigure}[h]{0.125\textwidth}
    \centering
    \includegraphics[width=\textwidth]{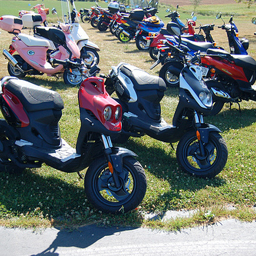}    
    \end{subfigure}
    \begin{subfigure}[h]{0.125\textwidth}
    \centering
    \includegraphics[width=\textwidth]{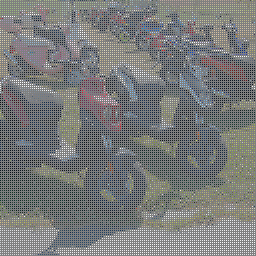}    
    \end{subfigure}
    \begin{subfigure}[h]{0.125\textwidth}
    \centering
    \includegraphics[width=\textwidth]{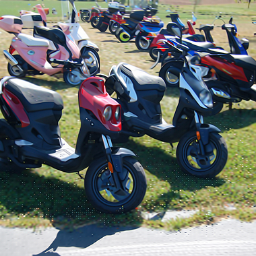}  
    \end{subfigure}
    \begin{subfigure}[h]{0.125\textwidth}
    \centering
    \includegraphics[width=\textwidth]{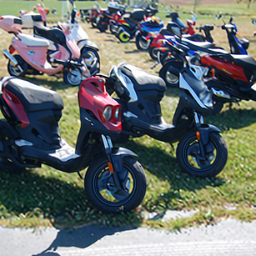}    
    \end{subfigure}
    \begin{subfigure}[h]{0.125\textwidth}
    \centering
    \includegraphics[width=\textwidth]{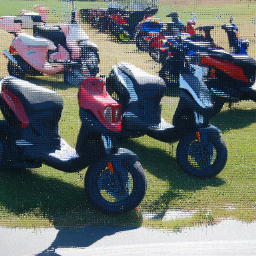}  
    \end{subfigure}
    \begin{subfigure}[h]{0.125\textwidth}
    \centering
    \includegraphics[width=\textwidth]{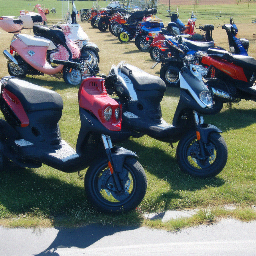}
    \end{subfigure}
    \begin{subfigure}[h]{0.125\textwidth}
    \centering
    \includegraphics[width=\textwidth]{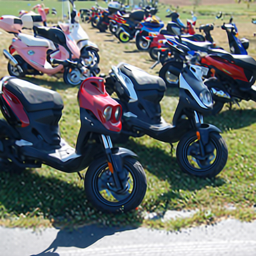}    
    \end{subfigure}
    \begin{subfigure}[h]{0.125\textwidth}
    \centering
    \includegraphics[width=\textwidth]{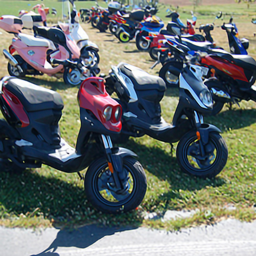}
    \end{subfigure}
    \caption{Super-res-mask inpainting task from Table~\ref{tab:inpexp}. Zoom in for the best view.}\label{fig:qv4}
\end{figure}
\begin{figure}[h!]
    \centering
    \begin{subfigure}[h]{0.125\textwidth}
    \centering
    \subfloat{\scriptsize{\bf{\color{gray}{\textit{Original}}}}}
    \end{subfigure}
    \begin{subfigure}[h]{0.125\textwidth}
    \centering
    \subfloat{\scriptsize{\bf{\color{gray}{\textit{Measurement}}}}}
    \end{subfigure}
    \begin{subfigure}[h]{0.125\textwidth}
    \centering
    \subfloat{\scriptsize{\bf{\color{gray}{\textit{DDRM}}}}}
    \end{subfigure}
    \begin{subfigure}[h]{0.125\textwidth}
    \centering
    \subfloat{\scriptsize{\bf{\color{gray}{\textit{$\Pi$GDM}}}}}
    \end{subfigure}
    \begin{subfigure}[h]{0.125\textwidth}
    \centering
    \subfloat{\scriptsize{\bf{\color{gray}{\textit{DAPS-1K}}}}}
    \end{subfigure}
    \begin{subfigure}[h]{0.125\textwidth}
    \centering
    \subfloat{\scriptsize{\bf{\color{gray}{\textit{DAPS-4K}}}}}
    \end{subfigure}
    \begin{subfigure}[h]{0.125\textwidth}
    \centering
    \subfloat{\scriptsize{\bf{\color{gray}{\textit{MAPGA}}}}}
    \end{subfigure}
    \begin{subfigure}[h]{0.125\textwidth}
    \centering
    \subfloat{\scriptsize{\bf{\color{gray}{\textit{VML-MAP}}}}}
    \end{subfigure}
    \\[2pt]
    \begin{subfigure}[h]{0.125\textwidth}
    \centering
    \includegraphics[width=\textwidth]{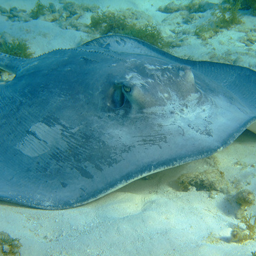}    
    \end{subfigure}
    \begin{subfigure}[h]{0.125\textwidth}
    \centering
    \includegraphics[width=\textwidth]{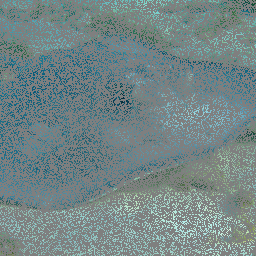}    
    \end{subfigure}
    \begin{subfigure}[h]{0.125\textwidth}
    \centering
    \includegraphics[width=\textwidth]{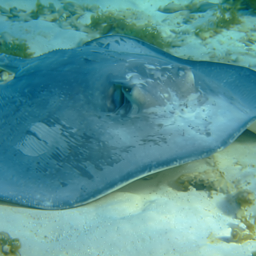}  
    \end{subfigure}
    \begin{subfigure}[h]{0.125\textwidth}
    \centering
    \includegraphics[width=\textwidth]{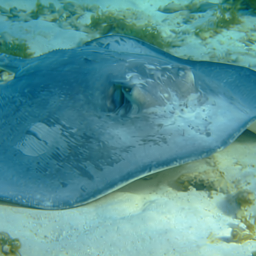}    
    \end{subfigure}
    \begin{subfigure}[h]{0.125\textwidth}
    \centering
    \includegraphics[width=\textwidth]{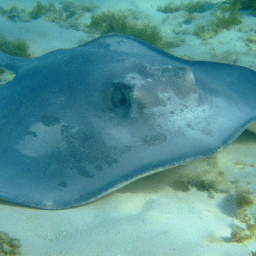}  
    \end{subfigure}
    \begin{subfigure}[h]{0.125\textwidth}
    \centering
    \includegraphics[width=\textwidth]{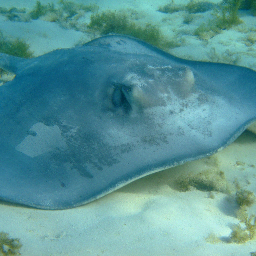}
    \end{subfigure}
    \begin{subfigure}[h]{0.125\textwidth}
    \centering
    \includegraphics[width=\textwidth]{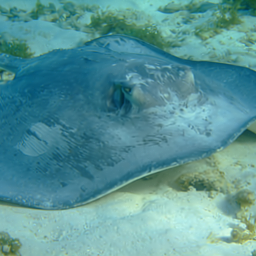}    
    \end{subfigure}
    \begin{subfigure}[h]{0.125\textwidth}
    \centering
    \includegraphics[width=\textwidth]{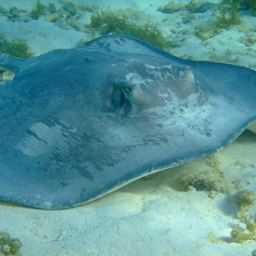}
    \end{subfigure}
    \\ 
    \begin{subfigure}[h]{0.125\textwidth}
    \centering
    \includegraphics[width=\textwidth]{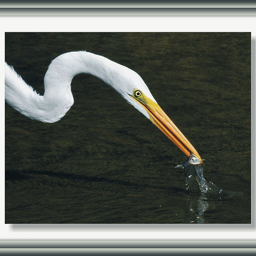}    
    \end{subfigure}
    \begin{subfigure}[h]{0.125\textwidth}
    \centering
    \includegraphics[width=\textwidth]{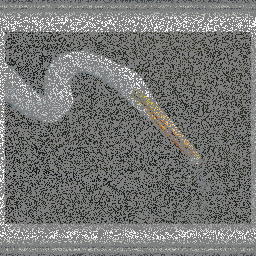}    
    \end{subfigure}
    \begin{subfigure}[h]{0.125\textwidth}
    \centering
    \includegraphics[width=\textwidth]{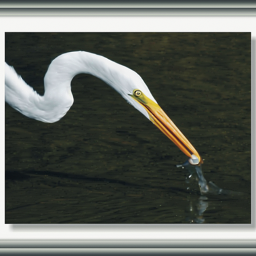}  
    \end{subfigure}
    \begin{subfigure}[h]{0.125\textwidth}
    \centering
    \includegraphics[width=\textwidth]{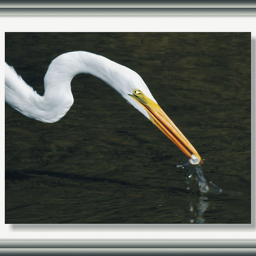}    
    \end{subfigure}
    \begin{subfigure}[h]{0.125\textwidth}
    \centering
    \includegraphics[width=\textwidth]{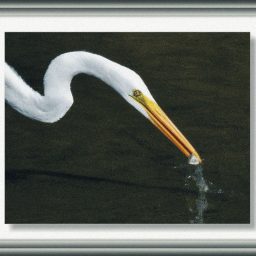}  
    \end{subfigure}
    \begin{subfigure}[h]{0.125\textwidth}
    \centering
    \includegraphics[width=\textwidth]{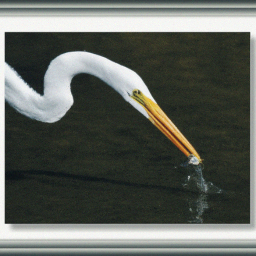}
    \end{subfigure}
    \begin{subfigure}[h]{0.125\textwidth}
    \centering
    \includegraphics[width=\textwidth]{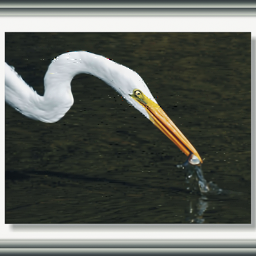}    
    \end{subfigure}
    \begin{subfigure}[h]{0.125\textwidth}
    \centering
    \includegraphics[width=\textwidth]{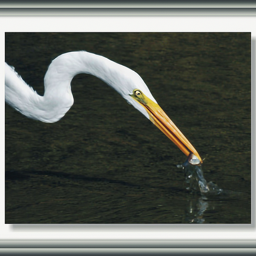}
    \end{subfigure}
    \caption{Random-mask inpainting task from Table~\ref{tab:inpexp}. Zoom in for the best view.}\label{fig:qv5}
\end{figure}
\begin{figure}[h!]
    \centering
    \begin{subfigure}[h]{0.125\textwidth}
    \centering
    \subfloat{\scriptsize{\bf{\color{gray}{\textit{Original}}}}}
    \end{subfigure}
    \begin{subfigure}[h]{0.125\textwidth}
    \centering
    \subfloat{\scriptsize{\bf{\color{gray}{\textit{Measurement}}}}}
    \end{subfigure}
    \begin{subfigure}[h]{0.125\textwidth}
    \centering
    \subfloat{\scriptsize{\bf{\color{gray}{\textit{DDRM}}}}}
    \end{subfigure}
    \begin{subfigure}[h]{0.125\textwidth}
    \centering
    \subfloat{\scriptsize{\bf{\color{gray}{\textit{$\Pi$GDM}}}}}
    \end{subfigure}
    \begin{subfigure}[h]{0.125\textwidth}
    \centering
    \subfloat{\scriptsize{\bf{\color{gray}{\textit{DAPS-1K}}}}}
    \end{subfigure}
    \begin{subfigure}[h]{0.125\textwidth}
    \centering
    \subfloat{\scriptsize{\bf{\color{gray}{\textit{DAPS-4K}}}}}
    \end{subfigure}
    \begin{subfigure}[h]{0.125\textwidth}
    \centering
    \subfloat{\scriptsize{\bf{\color{gray}{\textit{MAPGA}}}}}
    \end{subfigure}
    \begin{subfigure}[h]{0.125\textwidth}
    \centering
    \subfloat{\scriptsize{\bf{\color{gray}{\textit{VML-MAP}}}}}
    \end{subfigure}
    \\[2pt]
    \begin{subfigure}[h]{0.125\textwidth}
    \centering
    \includegraphics[width=\textwidth]{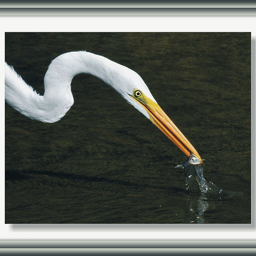}    
    \end{subfigure}
    \begin{subfigure}[h]{0.125\textwidth}
    \centering
    \includegraphics[width=\textwidth]{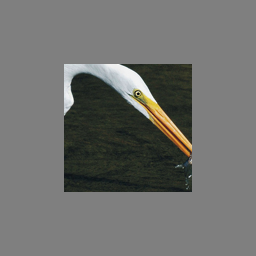}    
    \end{subfigure}
    \begin{subfigure}[h]{0.125\textwidth}
    \centering
    \includegraphics[width=\textwidth]{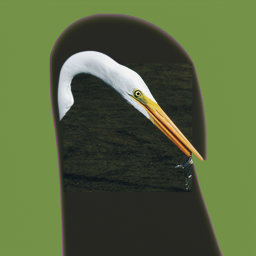}  
    \end{subfigure}
    \begin{subfigure}[h]{0.125\textwidth}
    \centering
    \includegraphics[width=\textwidth]{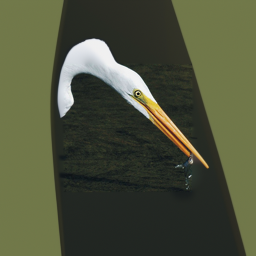}    
    \end{subfigure}
    \begin{subfigure}[h]{0.125\textwidth}
    \centering
    \includegraphics[width=\textwidth]{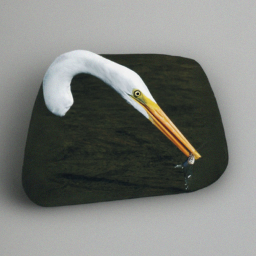}  
    \end{subfigure}
    \begin{subfigure}[h]{0.125\textwidth}
    \centering
    \includegraphics[width=\textwidth]{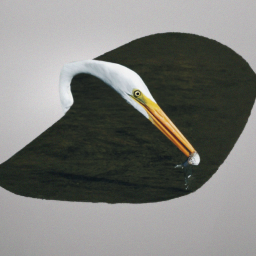}
    \end{subfigure}
    \begin{subfigure}[h]{0.125\textwidth}
    \centering
    \includegraphics[width=\textwidth]{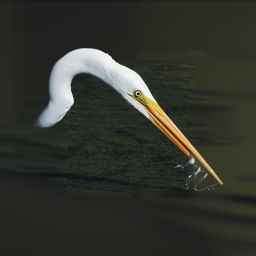}    
    \end{subfigure}
    \begin{subfigure}[h]{0.125\textwidth}
    \centering
    \includegraphics[width=\textwidth]{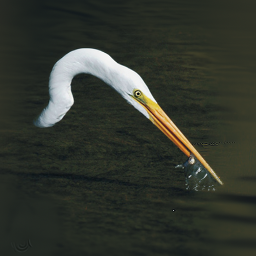}
    \end{subfigure}
    \\ 
    \begin{subfigure}[h]{0.125\textwidth}
    \centering
    \includegraphics[width=\textwidth]{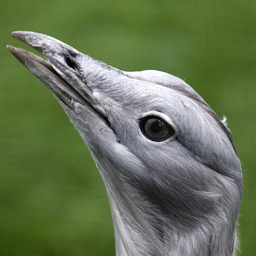}    
    \end{subfigure}
    \begin{subfigure}[h]{0.125\textwidth}
    \centering
    \includegraphics[width=\textwidth]{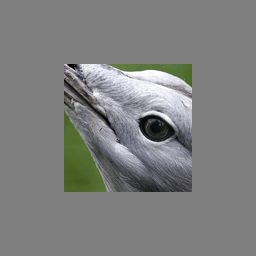}    
    \end{subfigure}
    \begin{subfigure}[h]{0.125\textwidth}
    \centering
    \includegraphics[width=\textwidth]{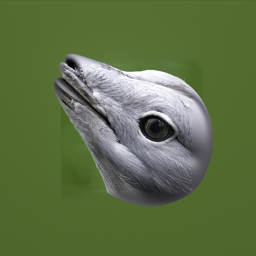}  
    \end{subfigure}
    \begin{subfigure}[h]{0.125\textwidth}
    \centering
    \includegraphics[width=\textwidth]{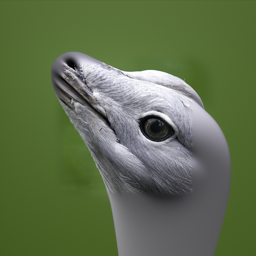}    
    \end{subfigure}
    \begin{subfigure}[h]{0.125\textwidth}
    \centering
    \includegraphics[width=\textwidth]{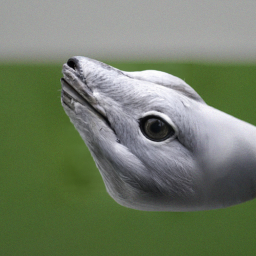}  
    \end{subfigure}
    \begin{subfigure}[h]{0.125\textwidth}
    \centering
    \includegraphics[width=\textwidth]{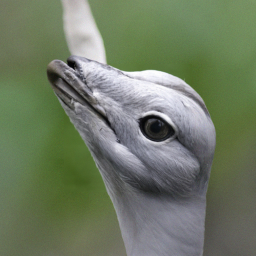}
    \end{subfigure}
    \begin{subfigure}[h]{0.125\textwidth}
    \centering
    \includegraphics[width=\textwidth]{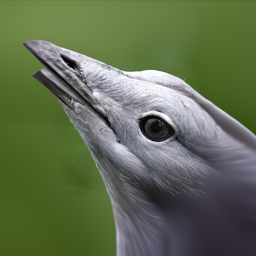}    
    \end{subfigure}
    \begin{subfigure}[h]{0.125\textwidth}
    \centering
    \includegraphics[width=\textwidth]{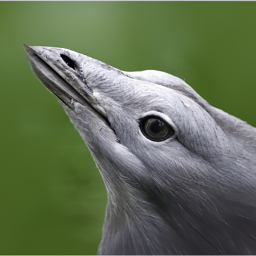}
    \end{subfigure}
    \\ 
    \begin{subfigure}[h]{0.125\textwidth}
    \centering
    \includegraphics[width=\textwidth]{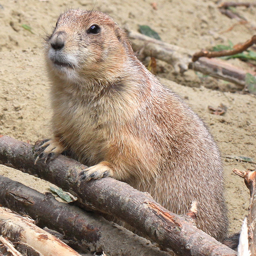}    
    \end{subfigure}
    \begin{subfigure}[h]{0.125\textwidth}
    \centering
    \includegraphics[width=\textwidth]{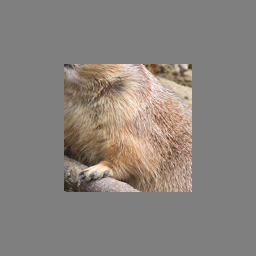}    
    \end{subfigure}
    \begin{subfigure}[h]{0.125\textwidth}
    \centering
    \includegraphics[width=\textwidth]{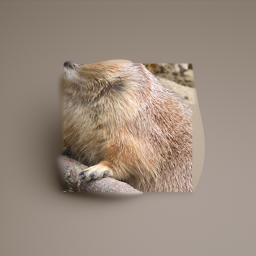}  
    \end{subfigure}
    \begin{subfigure}[h]{0.125\textwidth}
    \centering
    \includegraphics[width=\textwidth]{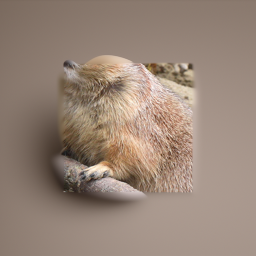}    
    \end{subfigure}
    \begin{subfigure}[h]{0.125\textwidth}
    \centering
    \includegraphics[width=\textwidth]{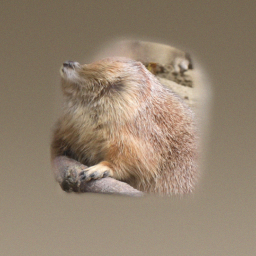}  
    \end{subfigure}
    \begin{subfigure}[h]{0.125\textwidth}
    \centering
    \includegraphics[width=\textwidth]{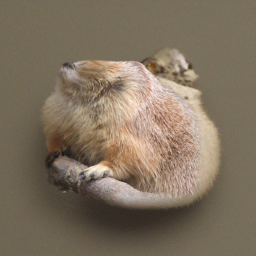}
    \end{subfigure}
    \begin{subfigure}[h]{0.125\textwidth}
    \centering
    \includegraphics[width=\textwidth]{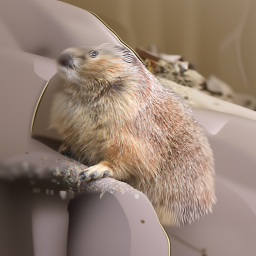}    
    \end{subfigure}
    \begin{subfigure}[h]{0.125\textwidth}
    \centering
    \includegraphics[width=\textwidth]{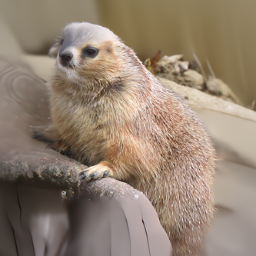}
    \end{subfigure}
    \caption{Expand-mask inpainting task from Table~\ref{tab:inpexp}. Zoom in for the best view.}\label{fig:qv6}
\end{figure}
\begin{figure}[h!]
    \centering
    \begin{subfigure}[h]{0.125\textwidth}
    \centering
    \subfloat{\scriptsize{\bf{\color{gray}{\textit{Original}}}}}
    \end{subfigure}
    \begin{subfigure}[h]{0.125\textwidth}
    \centering
    \subfloat{\scriptsize{\bf{\color{gray}{\textit{Measurement}}}}}
    \end{subfigure}
    \begin{subfigure}[h]{0.125\textwidth}
    \centering
    \subfloat{\scriptsize{\bf{\color{gray}{\textit{DDRM}}}}}
    \end{subfigure}
    \begin{subfigure}[h]{0.125\textwidth}
    \centering
    \subfloat{\scriptsize{\bf{\color{gray}{\textit{$\Pi$GDM}}}}}
    \end{subfigure}
    \begin{subfigure}[h]{0.125\textwidth}
    \centering
    \subfloat{\scriptsize{\bf{\color{gray}{\textit{DAPS-1K}}}}}
    \end{subfigure}
    \begin{subfigure}[h]{0.125\textwidth}
    \centering
    \subfloat{\scriptsize{\bf{\color{gray}{\textit{DAPS-4K}}}}}
    \end{subfigure}
    \begin{subfigure}[h]{0.125\textwidth}
    \centering
    \subfloat{\scriptsize{\bf{\color{gray}{\textit{MAPGA}}}}}
    \end{subfigure}
    \begin{subfigure}[h]{0.125\textwidth}
    \centering
    \subfloat{\scriptsize{\bf{\color{gray}{\textit{VML-MAP}}}}}
    \end{subfigure}
    \\[2pt]
    \begin{subfigure}[h]{0.125\textwidth}
    \centering
    \includegraphics[width=\textwidth]{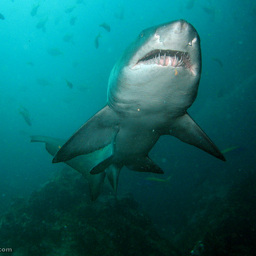}    
    \end{subfigure}
    \begin{subfigure}[h]{0.125\textwidth}
    \centering
    \includegraphics[width=\textwidth]{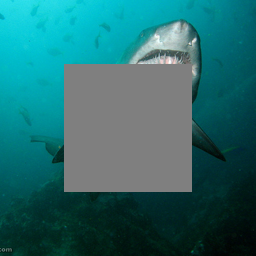}    
    \end{subfigure}
    \begin{subfigure}[h]{0.125\textwidth}
    \centering
    \includegraphics[width=\textwidth]{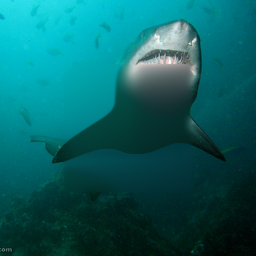}  
    \end{subfigure}
    \begin{subfigure}[h]{0.125\textwidth}
    \centering
    \includegraphics[width=\textwidth]{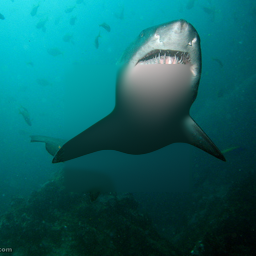}    
    \end{subfigure}
    \begin{subfigure}[h]{0.125\textwidth}
    \centering
    \includegraphics[width=\textwidth]{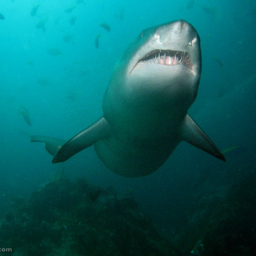}  
    \end{subfigure}
    \begin{subfigure}[h]{0.125\textwidth}
    \centering
    \includegraphics[width=\textwidth]{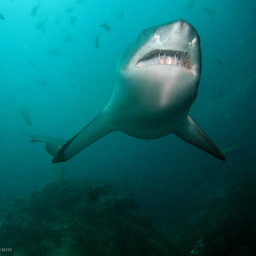}
    \end{subfigure}
    \begin{subfigure}[h]{0.125\textwidth}
    \centering
    \includegraphics[width=\textwidth]{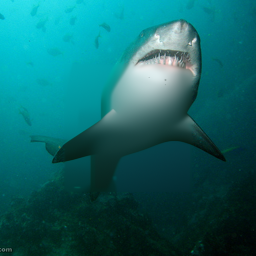}    
    \end{subfigure}
    \begin{subfigure}[h]{0.125\textwidth}
    \centering
    \includegraphics[width=\textwidth]{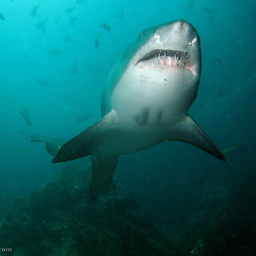}
    \end{subfigure}
    \\ 
    \begin{subfigure}[h]{0.125\textwidth}
    \centering
    \includegraphics[width=\textwidth]{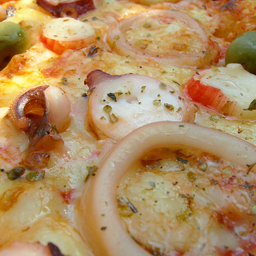}    
    \end{subfigure}
    \begin{subfigure}[h]{0.125\textwidth}
    \centering
    \includegraphics[width=\textwidth]{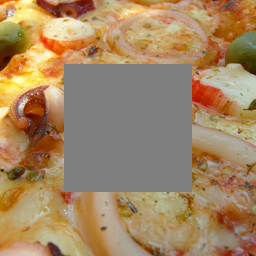}    
    \end{subfigure}
    \begin{subfigure}[h]{0.125\textwidth}
    \centering
    \includegraphics[width=\textwidth]{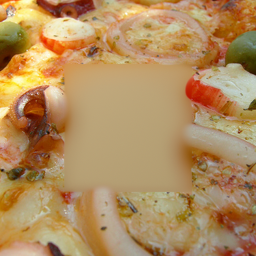}  
    \end{subfigure}
    \begin{subfigure}[h]{0.125\textwidth}
    \centering
    \includegraphics[width=\textwidth]{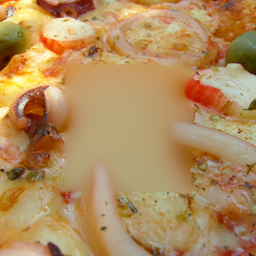}    
    \end{subfigure}
    \begin{subfigure}[h]{0.125\textwidth}
    \centering
    \includegraphics[width=\textwidth]{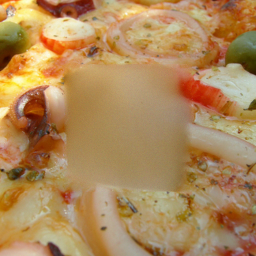}  
    \end{subfigure}
    \begin{subfigure}[h]{0.125\textwidth}
    \centering
    \includegraphics[width=\textwidth]{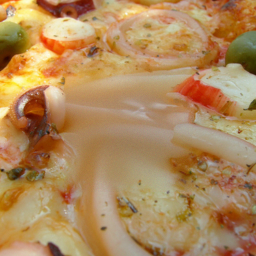}
    \end{subfigure}
    \begin{subfigure}[h]{0.125\textwidth}
    \centering
    \includegraphics[width=\textwidth]{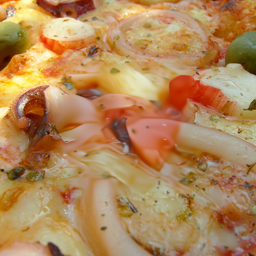}    
    \end{subfigure}
    \begin{subfigure}[h]{0.125\textwidth}
    \centering
    \includegraphics[width=\textwidth]{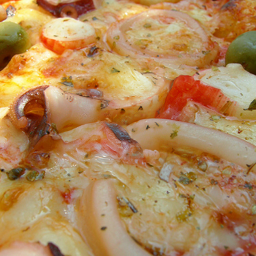}
    \end{subfigure}
    \\ 
    \begin{subfigure}[h]{0.125\textwidth}
    \centering
    \includegraphics[width=\textwidth]{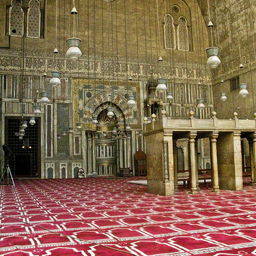}    
    \end{subfigure}
    \begin{subfigure}[h]{0.125\textwidth}
    \centering
    \includegraphics[width=\textwidth]{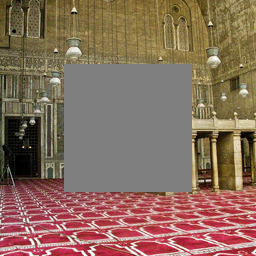}    
    \end{subfigure}
    \begin{subfigure}[h]{0.125\textwidth}
    \centering
    \includegraphics[width=\textwidth]{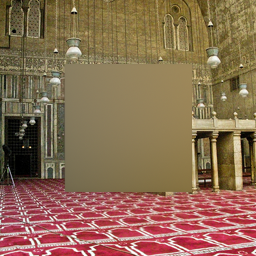}  
    \end{subfigure}
    \begin{subfigure}[h]{0.125\textwidth}
    \centering
    \includegraphics[width=\textwidth]{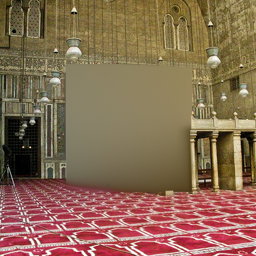}    
    \end{subfigure}
    \begin{subfigure}[h]{0.125\textwidth}
    \centering
    \includegraphics[width=\textwidth]{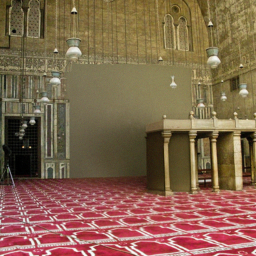}  
    \end{subfigure}
    \begin{subfigure}[h]{0.125\textwidth}
    \centering
    \includegraphics[width=\textwidth]{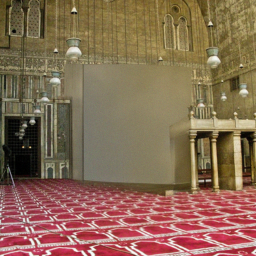}
    \end{subfigure}
    \begin{subfigure}[h]{0.125\textwidth}
    \centering
    \includegraphics[width=\textwidth]{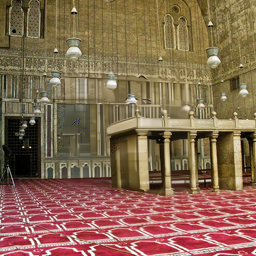}    
    \end{subfigure}
    \begin{subfigure}[h]{0.125\textwidth}
    \centering
    \includegraphics[width=\textwidth]{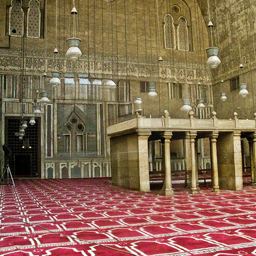}
    \end{subfigure}
    \caption{Box-mask inpainting task from Table~\ref{tab:inpexp}. Zoom in for the best view.}\label{fig:qv7}
\end{figure}

\subsection{Additional experiments on ImageNet$64$}\label{ssec:ir1}

In these experiments, we consider the case of noiseless image restoration, which includes the half-mask inpainting, $4\times$ super-resolution, and the uniform deblurring operators (similar to the experiments in Table~\ref{tab:exp1}). We evaluate existing baselines, VML-MAP, and $\text{VML-MAP}_{pre}$ on $1000$ images (each from a different class) from the ImageNet~\cite{russakovsky2015imagenet} validation set with a resolution of $64 \times 64$. We use the corresponding pre-trained \emph{class-conditional} diffusion model from~\citet{ddpm}. Note that the experiments on ImageNet$256$ in Table~\ref{tab:exp1} earlier use the \emph{unconditional} ImageNet$256$ pre-trained model, and the methods were evaluated on $100$ images of the ImageNet$256$ validation set.

Quantitative evaluation results from Table~\ref{tab:ir1} again indicate the effectiveness of VML-MAP in practice over existing baselines, especially for inpainting and super-resolution. Figure~\ref{fig:ir1} presents a qualitative comparison of the same. However, when the linear degradation matrix $\mathrm{H}$ is ill-conditioned, VML-MAP can be seen to struggle with the optimization, as seems to be the case with deblurring. Here, our proposed preconditioned variant $\text{VML-MAP}_{pre}$ alleviates this problem, as can be seen from the quantitative results in Table~\ref{tab:ir1}, and from the qualitative visualizations from Figure~\ref{fig:ir2}. 

Note again that DDRM, $\Pi$GDM, MAPGA, $\text{VML-MAP}_{pre}$ require access to the Singular Value Decomposition (SVD) of the linear degradation matrix $\mathrm{H}$ to find the pseudoinverse of terms involved therein, but VML-MAP only requires the forward operation of $\mathrm{H}$. We use the same experimental setup and hyperparameters mentioned in Appendix~\ref{ssec:implementationdetails1}, with the best learning rate configuration for VML-MAP and $\text{VML-MAP}_{pre}$ reported in Table~\ref{tab:hp1}.

\begin{table}[h!]
\caption{Noiseless image restoration experiments: Half-mask inpainting, $4\times$ Super-resolution, and Uniform deblurring tasks on $1000$ images of ImageNet$64$ validation set are repeated across 4 different seeds. The mean and standard deviation across these runs are reported. Best values in \textbf{bold}, second best values \underline{underlined}.}
\label{tab:ir1}
\begin{center}
\resizebox{0.8\textwidth}{!}{
\begin{tabular}{ l c c c c c c }
 \toprule
 \multicolumn{1}{l}{} & \multicolumn{2}{c}{} & \multicolumn{2}{c}{} & \multicolumn{2}{c}{}\\[-10pt]
 \multicolumn{1}{l}{{Method}} & \multicolumn{2}{c}{{Inpainting}} & \multicolumn{2}{c}{$4\times$ {Super-res}}  & \multicolumn{2}{c}{{Deblurring}} \\
 \multicolumn{1}{l}{} & {\scriptsize{LPIPS$\downarrow$}} & {\scriptsize{FID$\downarrow$}} & {\scriptsize{LPIPS$\downarrow$}} & {\scriptsize{FID$\downarrow$}} & {\scriptsize{LPIPS$\downarrow$}} & {\scriptsize{FID$\downarrow$}} \\[2pt]
 \toprule 
 \multicolumn{1}{l}{} & \multicolumn{2}{c}{} & \multicolumn{2}{c}{} & \multicolumn{2}{c}{}\\[-10pt]
 \multicolumn{1}{l}{{\scriptsize{$\text{DDRM}$}}} & \scriptsize{$0.263$}\tiny{\color{lightgray}{$\pm0.000$}} & \scriptsize{$57.15$}\tiny{\color{lightgray}{$\pm0.241$}} & \scriptsize{$0.234$}\tiny{\color{lightgray}{$\pm0.000$}} & \scriptsize{$77.82$}\tiny{\color{lightgray}{$\pm0.526$}} & \scriptsize{$0.467$}\tiny{\color{lightgray}{$\pm0.001$}} & \scriptsize{$197.7$}\tiny{\color{lightgray}{$\pm0.600$}} \\
 \multicolumn{1}{l}{{\scriptsize{$\Pi\text{GDM}$}}}  & \scriptsize{$0.242$}\tiny{\color{lightgray}{$\pm0.000$}} & \scriptsize{$54.69$}\tiny{\color{lightgray}{$\pm0.334$}} & \scriptsize{$0.241$}\tiny{\color{lightgray}{$\pm0.000$}} & \scriptsize{$88.63$}\tiny{\color{lightgray}{$\pm0.472$}} & \scriptsize{$0.439$}\tiny{\color{lightgray}{$\pm0.001$}} & \scriptsize{$164.7$}\tiny{\color{lightgray}{$\pm1.098$}} \\
 \multicolumn{1}{l}{{\scriptsize{$\text{MAPGA}$}}} & \scriptsize{$\underline{0.172}$}\tiny{\color{lightgray}{$\pm0.000$}} & \scriptsize{$\underline{46.43}$}\tiny{\color{lightgray}{$\pm0.150$}} & \scriptsize{$0.204$}\tiny{\color{lightgray}{$\pm0.000$}} & \scriptsize{$84.72$}\tiny{\color{lightgray}{$\pm0.779$}} & \scriptsize{$\underline{0.322}$}\tiny{\color{lightgray}{$\pm0.001$}} & \scriptsize{$113.9$}\tiny{\color{lightgray}{$\pm0.753$}} \\
 \midrule 
 \multicolumn{1}{l}{} & \multicolumn{2}{c}{} & \multicolumn{2}{c}{} & \multicolumn{2}{c}{}\\[-10pt]
 \multicolumn{1}{l}{{\scriptsize{$\textbf{VML-MAP}$}}}  &\scriptsize{$\bf{0.146}$}\tiny{\color{lightgray}{$\pm0.000$}} & \scriptsize{$\bf{38.84}$}\tiny{\color{lightgray}{$\pm0.233$}}  & \scriptsize{$\underline{0.136}$}\tiny{\color{lightgray}{$\pm0.001$}} & \scriptsize{$\underline{62.19}$}\tiny{\color{lightgray}{$\pm0.298$}} & \scriptsize{$0.356$}\tiny{\color{lightgray}{$\pm0.001$}} & \scriptsize{$\underline{106.6}$}\tiny{\color{lightgray}{$\pm0.682$}} \\
 \multicolumn{1}{l}{{\scriptsize{$\textbf{VML-MAP}_{pre}$}}} & \scriptsize{$\bf{0.146}$}\tiny{\color{lightgray}{$\pm0.000$}} & \scriptsize{$\bf{38.84}$}\tiny{\color{lightgray}{$\pm0.233$}}  & \scriptsize{$\bf{0.129}$}\tiny{\color{lightgray}{$\pm0.000$}} & \scriptsize{$\bf{60.01}$}\tiny{\color{lightgray}{$\pm0.400$}} & \scriptsize{$\bf{0.266}$}\tiny{\color{lightgray}{$\pm0.001$}} & \scriptsize{$\bf{77.46}$}\tiny{\color{lightgray}{$\pm0.424$}} \\
 \bottomrule
\end{tabular}}
\end{center}
\end{table}

\begin{figure}[h!]
    \centering
    \begin{subfigure}[h]{0.025\textwidth}
    \centering    
    \subfloat{\bf{\color{gray}{ }}}
    \end{subfigure}
    \begin{subfigure}[h]{0.14\textwidth}
    \centering
    \subfloat{\scriptsize{\bf{{\color{gray}{\textit{Original}}}}}}    
    \end{subfigure}
    \begin{subfigure}[h]{0.14\textwidth}
    \centering
    \subfloat{\scriptsize{\bf{{\color{gray}{\textit{Measurement}}}}}}    
    \end{subfigure}
    \begin{subfigure}[h]{0.14\textwidth}
    \centering
    \subfloat{\scriptsize{\bf{{\color{gray}{\textit{DDRM}}}}}}   
    \end{subfigure}
    \begin{subfigure}[h]{0.14\textwidth}
    \centering
    \subfloat{\scriptsize{{\bf{\color{gray}{\textit{$\Pi$GDM}}}}}}    
    \end{subfigure}
    \begin{subfigure}[h]{0.14\textwidth}
    \centering
    \subfloat{\scriptsize{\bf{\color{gray}{\textit{MAPGA}}}}}
    \end{subfigure}
    \begin{subfigure}[h]{0.14\textwidth}
    \centering
    \subfloat{\scriptsize{\bf{\color{gray}{\textit{VML-MAP}}}}}
    \end{subfigure}
    \\[2pt]
    \begin{subfigure}[h]{0.025\textwidth}
    \centering
    \subfloat{{\rotatebox[origin=c]{90}{{\color{gray}{\bf{\textit{Inpaint}}}}}}}    
    \end{subfigure}
    \begin{subfigure}[h]{0.14\textwidth}
    \centering
    \includegraphics[width=\textwidth]{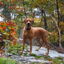}    
    \end{subfigure}
    \begin{subfigure}[h]{0.14\textwidth}
    \centering
    \includegraphics[width=\textwidth]{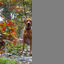}
    \end{subfigure}
    \begin{subfigure}[h]{0.14\textwidth}
    \centering
    \includegraphics[width=\textwidth]{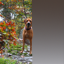}    
    \end{subfigure}
    \begin{subfigure}[h]{0.14\textwidth}
    \centering
    \includegraphics[width=\textwidth]{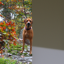}    
    \end{subfigure}
    \begin{subfigure}[h]{0.14\textwidth}
    \centering
    \includegraphics[width=\textwidth]{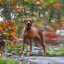}    
    \end{subfigure}
    \begin{subfigure}[h]{0.14\textwidth}
    \centering
    \includegraphics[width=\textwidth]{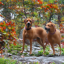}    
    \end{subfigure}
    \\
    \begin{subfigure}[h]{0.025\textwidth}
    \centering
    \subfloat{{\rotatebox[origin=c]{90}{\color{gray}{$\mathbf{4\times}$\bf{\textit{SR}}}}}}    
    \end{subfigure}
    \begin{subfigure}[h]{0.14\textwidth}
    \centering
    \includegraphics[width=\textwidth]{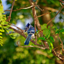}    
    \end{subfigure}
    \begin{subfigure}[h]{0.14\textwidth}
    \centering
    \includegraphics[width=\textwidth]{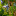}    
    \end{subfigure}
    \begin{subfigure}[h]{0.14\textwidth}
    \centering
    \includegraphics[width=\textwidth]{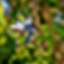}
    \end{subfigure}
    \begin{subfigure}[h]{0.14\textwidth}
    \centering
    \includegraphics[width=\textwidth]{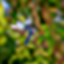}
    \end{subfigure}
    \begin{subfigure}[h]{0.14\textwidth}
    \centering
    \includegraphics[width=\textwidth]{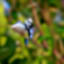}
    \end{subfigure}
    \begin{subfigure}[h]{0.14\textwidth}
    \centering
    \includegraphics[width=\textwidth]{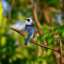}    
    \end{subfigure}
    \\
    \begin{subfigure}[h]{0.025\textwidth}
    \centering
    \subfloat{{\rotatebox[origin=c]{90}{\bf{\color{gray}{\textit{Deblur}}}}}}   
    \end{subfigure}
    \begin{subfigure}[h]{0.14\textwidth}
    \centering
    \includegraphics[width=\textwidth]{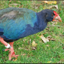}    
    \end{subfigure}
    \begin{subfigure}[h]{0.14\textwidth}
    \centering
    \includegraphics[width=\textwidth]{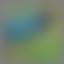}    
    \end{subfigure}
    \begin{subfigure}[h]{0.14\textwidth}
    \centering
    \includegraphics[width=\textwidth]{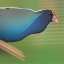}
    \end{subfigure}
    \begin{subfigure}[h]{0.14\textwidth}
    \centering
    \includegraphics[width=\textwidth]{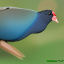}
    \end{subfigure}
    \begin{subfigure}[h]{0.14\textwidth}
    \centering
    \includegraphics[width=\textwidth]{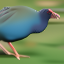}
    \end{subfigure}
    \begin{subfigure}[h]{0.14\textwidth}
    \centering
    \includegraphics[width=\textwidth]{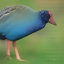}
    \end{subfigure}
    \caption{Half-mask inpainting, $4\times$ super-resolution, and deblurring tasks from Table~\ref{tab:ir1}. Zoom in for the best view.}\label{fig:ir1}
\end{figure}

\begin{figure}[h!]
    \centering
    \begin{subfigure}[h]{0.16\textwidth}
    \centering
    \subfloat{\scriptsize{\bf{{\color{gray}{$\textit{Original}_{\ }$}}}}}    
    \end{subfigure}
    \begin{subfigure}[h]{0.16\textwidth}
    \centering
    \subfloat{\scriptsize{\bf{{\color{gray}{$\textit{Measurement}_{\ }$}}}}}    
    \end{subfigure}
    \begin{subfigure}[h]{0.16\textwidth}
    \centering
    \subfloat{\scriptsize{{\bf{\color{gray}{$\textit{$\Pi$GDM}_{\ }$}}}}}    
    \end{subfigure}
    \begin{subfigure}[h]{0.16\textwidth}
    \centering
    \subfloat{\scriptsize{\bf{\color{gray}{$\textit{MAPGA}_{\ }$}}}}
    \end{subfigure}
    \begin{subfigure}[h]{0.16\textwidth}
    \centering
    \subfloat{\scriptsize{\bf{\color{gray}{$\textit{VML-MAP}_{\ }$}}}}
    \end{subfigure}
    \begin{subfigure}[h]{0.16\textwidth}
    \centering
    \subfloat{\scriptsize{\bf{{\color{gray}{$\textit{VML-MAP}_{pre}$}}}}}   
    \end{subfigure}
    \\[2pt]
    \begin{subfigure}[h]{0.16\textwidth}
    \centering
    \includegraphics[width=\textwidth]{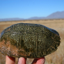}    
    \end{subfigure}
    \begin{subfigure}[h]{0.16\textwidth}
    \centering
    \includegraphics[width=\textwidth]{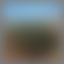}    
    \end{subfigure}
    \begin{subfigure}[h]{0.16\textwidth}
    \centering
    \includegraphics[width=\textwidth]{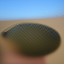}    
    \end{subfigure}
    \begin{subfigure}[h]{0.16\textwidth}
    \centering
    \includegraphics[width=\textwidth]{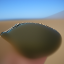}    
    \end{subfigure}
    \begin{subfigure}[h]{0.16\textwidth}
    \centering
    \includegraphics[width=\textwidth]{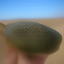}    
    \end{subfigure}
    \begin{subfigure}[h]{0.16\textwidth}
    \centering
    \includegraphics[width=\textwidth]{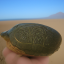}    
    \end{subfigure}
    \\
    \begin{subfigure}[h]{0.16\textwidth}
    \centering
    \includegraphics[width=\textwidth]{icml2026/figures/imagenet64c/deblur16/im136_original.png}    
    \end{subfigure}
    \begin{subfigure}[h]{0.16\textwidth}
    \centering
    \includegraphics[width=\textwidth]{icml2026/figures/imagenet64c/deblur16/im136_degraded.png}    
    \end{subfigure}
    \begin{subfigure}[h]{0.16\textwidth}
    \centering
    \includegraphics[width=\textwidth]{icml2026/figures/imagenet64c/deblur16/im136_recovered_pgdm.png}    
    \end{subfigure}
    \begin{subfigure}[h]{0.16\textwidth}
    \centering
    \includegraphics[width=\textwidth]{icml2026/figures/imagenet64c/deblur16/im136_recovered_mapga.png}    
    \end{subfigure}
    \begin{subfigure}[h]{0.16\textwidth}
    \centering
    \includegraphics[width=\textwidth]{icml2026/figures/imagenet64c/deblur16/im136_recovered_vmlmap.png}    
    \end{subfigure}
    \begin{subfigure}[h]{0.16\textwidth}
    \centering
    \includegraphics[width=\textwidth]{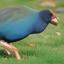}    
    \end{subfigure}
    \caption{Deblurring task from Table~\ref{tab:ir1}. Zoom in for the best view.}\label{fig:ir2}
\end{figure}

\subsection{Validation check experiments from Figure~\ref{fig:toyexample}}\label{ssec:toysetup}

Each row of Figure~\ref{fig:toyexample} corresponds to a different example, where the input data follows a different 2D GMM distribution. For the first example (i.e., the top row of Figure~\ref{fig:toyexample}), the input data distribution is given by
\begin{align*}
p(\rvx_0) = 0.1&\,\mathcal{N}([1.0, 0.75]^{\top}, 0.15\,\mathbf{I}) + 0.3\, \mathcal{N}([0.75, -0.25]^\top, 0.15\,\mathbf{I}) + 0.2\,\mathcal{N}([-0.5, 0.25]^\top, 0.15\,\mathbf{I}) \\ 
& + 0.2\,\mathcal{N}([-0.9, -1.0]^\top, 0.15\,\mathbf{I}) + 0.2\,\mathcal{N}([0.,0.]^\top, 0.15\,\mathbf{I})    
\end{align*}
For the second example (i.e., the bottom row of Figure~\ref{fig:toyexample}), the input data distribution is given by
\begin{align*}
p(\rvx_0) =  0.1&\,\mathcal{N}([1.0, 0.75]^{\top}, 0.15\,\mathbf{I}) + 0.05\, \mathcal{N}([0.75, -0.25]^\top, 0.15\,\mathbf{I}) + 0.1\,\mathcal{N}([-0.5, 0.25]^\top, 0.15\,\mathbf{I}) \\ 
& + 0.15\,\mathcal{N}([-0.9, -1.0]^\top, 0.15\,\mathbf{I}) + 0.1\,\mathcal{N}([0.,0.]^\top, 0.5\,\mathbf{I}) + 0.1\,\mathcal{N}([1.0, -1.25]^\top, 0.2\,\mathbf{I}) \\
& + 0.1\, \mathcal{N}([1.5, 1.5]^\top, 0.25\,\mathbf{I}) + 0.15\, \mathcal{N}([-1.2, 0.6]^\top, 0.25\,\mathbf{I}) + 0.15\, \mathcal{N}([1.0, 1.0]^\top, 0.2\,\mathbf{I})
\end{align*}
Note that for any given diffusion time $t$, the marginal distribution $p(\rvx_t)$ can be analytically computed in a closed form for the above cases, given that the perturbation kernel $p(\rvx_t|\rvx_0) = \mathcal{N}(0,\sigma^2_t\mathbf{I})$ for some monotonically increasing diffusion noise schedule $\sigma(\cdot)$. Therefore, the score function of the intermediate marginal, i.e., $\nabla_{\rvx_t}\log p(\rvx_t)$ and corresponding true denoiser, i.e.,  $\mathrm{D}(\rvx_t,t) = \rvx_t + \sigma^2_t\, \nabla_{\rvx_t}\log p(\rvx_t)$ (see Appendix~\ref{ssec:tweediesformula}) can be analytically computed. Note that in the validation check, we used the analytically computed true denoiser to apply VML-MAP, so that we need not be concerned with the model approximation errors involved in using a learned denoiser.

For VML-MAP, we set $N = K = 20$, $\sigma(t) = t$, the learning rate $\gamma=0.5$, and the EDM~\cite{edm} noise schedule discretization with $\sigma_{min}=2e^{-3}$, $\sigma_{max}=40.0$, and EDM's polynomial co-efficient $\rho=7$. We set the prior weight schedule $\rho(t) = 1.0 + \rvw\, \frac{\sigma_t}{\sigma_\rvy}$, with $\rvw=2e^3$ for noiseless inpainting, and $\rvw=1.0$ for noisy inpainting ($\sigma_\rvy = 0.5$).

\subsection{Qualitative reconstructions across different seeds for the half-mask inpainting task in Table~\ref{tab:exp1}}\label{ssec:multipleseeds}

\begin{figure}[htbp]
    \centering
    \begin{subfigure}[h]{0.10\textwidth}
    \centering
    \subfloat{\scriptsize{\bf{\color{gray}{\textit{Original}}}}}
    \end{subfigure}
    \begin{subfigure}[h]{0.10\textwidth}
    \centering
    \subfloat{\scriptsize{\bf{\color{gray}{\textit{Measurement}}}}}
    \end{subfigure}
    \begin{subfigure}[h]{0.10\textwidth}
    \centering
    \subfloat{\scriptsize{\bf{\color{gray}{\textit{Seed-1}}}}}
    \end{subfigure}
    \begin{subfigure}[h]{0.10\textwidth}
    \centering
    \subfloat{\scriptsize{\bf{\color{gray}{\textit{Seed-2}}}}}
    \end{subfigure}
    \begin{subfigure}[h]{0.10\textwidth}
    \centering
    \subfloat{\scriptsize{\bf{\color{gray}{\textit{Seed-3}}}}}
    \end{subfigure}
    \begin{subfigure}[h]{0.10\textwidth}
    \centering
    \subfloat{\scriptsize{\bf{\color{gray}{\textit{Seed-4}}}}}
    \end{subfigure}
    \\[3pt]
    \begin{subfigure}[h]{0.10\textwidth}
    \centering
    \includegraphics[width=\textwidth]{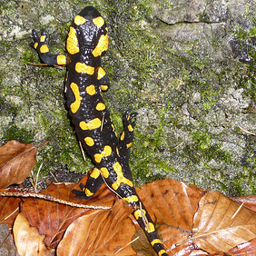}    
    \end{subfigure}
    \begin{subfigure}[h]{0.10\textwidth}
    \centering
    \includegraphics[width=\textwidth]{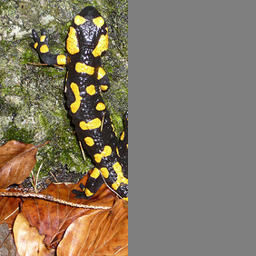}    
    \end{subfigure}
    \begin{subfigure}[h]{0.10\textwidth}
    \centering
    \includegraphics[width=\textwidth]{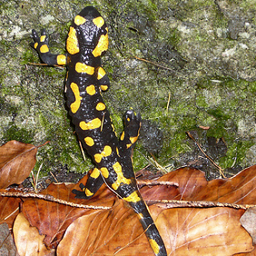}  
    \end{subfigure}
    \begin{subfigure}[h]{0.10\textwidth}
    \centering
    \includegraphics[width=\textwidth]{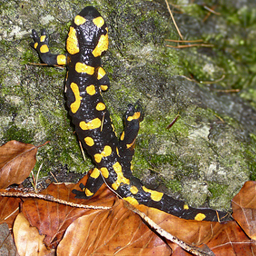}    
    \end{subfigure}
    \begin{subfigure}[h]{0.10\textwidth}
    \centering
    \includegraphics[width=\textwidth]{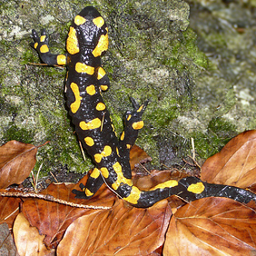}
    \end{subfigure}
    \begin{subfigure}[h]{0.10\textwidth}
    \centering
    \includegraphics[width=\textwidth]{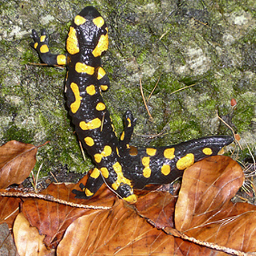}
    \end{subfigure}
    \\ 
    \begin{subfigure}[h]{0.10\textwidth}
    \centering
    \includegraphics[width=\textwidth]{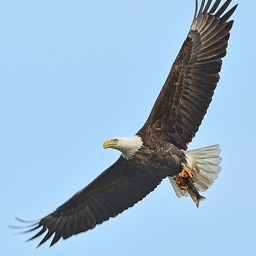}    
    \end{subfigure}
    \begin{subfigure}[h]{0.10\textwidth}
    \centering
    \includegraphics[width=\textwidth]{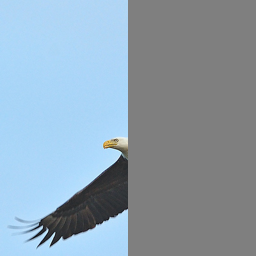}    
    \end{subfigure}
    \begin{subfigure}[h]{0.10\textwidth}
    \centering
    \includegraphics[width=\textwidth]{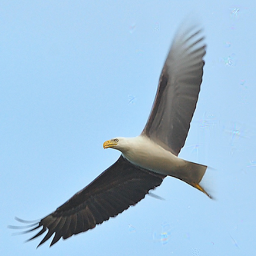}  
    \end{subfigure}
    \begin{subfigure}[h]{0.10\textwidth}
    \centering
    \includegraphics[width=\textwidth]{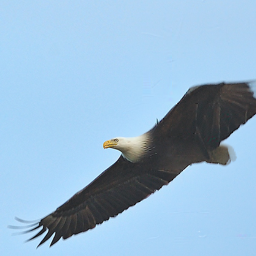}    
    \end{subfigure}
    \begin{subfigure}[h]{0.10\textwidth}
    \centering
    \includegraphics[width=\textwidth]{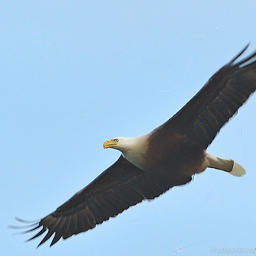}
    \end{subfigure}
    \begin{subfigure}[h]{0.10\textwidth}
    \centering
    \includegraphics[width=\textwidth]{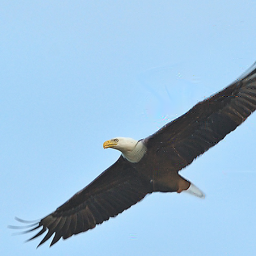}
    \end{subfigure}
    \\ 
    \begin{subfigure}[h]{0.10\textwidth}
    \centering
    \includegraphics[width=\textwidth]{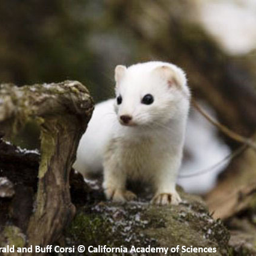}    
    \end{subfigure}
    \begin{subfigure}[h]{0.10\textwidth}
    \centering
    \includegraphics[width=\textwidth]{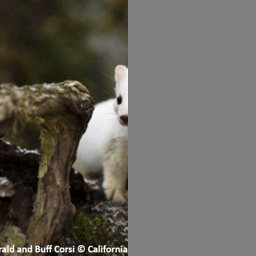}    
    \end{subfigure}
    \begin{subfigure}[h]{0.10\textwidth}
    \centering
    \includegraphics[width=\textwidth]{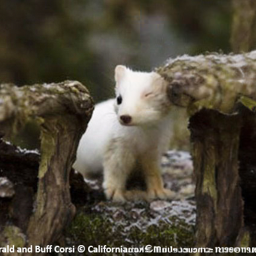}  
    \end{subfigure}
    \begin{subfigure}[h]{0.10\textwidth}
    \centering
    \includegraphics[width=\textwidth]{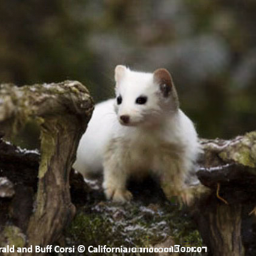}    
    \end{subfigure}
    \begin{subfigure}[h]{0.10\textwidth}
    \centering
    \includegraphics[width=\textwidth]{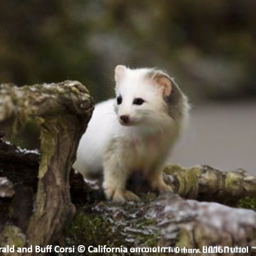}
    \end{subfigure}
    \begin{subfigure}[h]{0.10\textwidth}
    \centering
    \includegraphics[width=\textwidth]{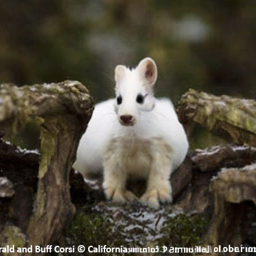}
    \end{subfigure}
    \\ 
    \begin{subfigure}[h]{0.10\textwidth}
    \centering
    \includegraphics[width=\textwidth]{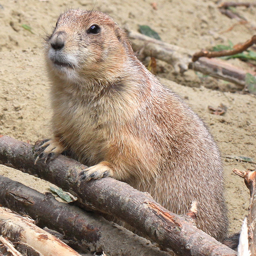}    
    \end{subfigure}
    \begin{subfigure}[h]{0.10\textwidth}
    \centering
    \includegraphics[width=\textwidth]{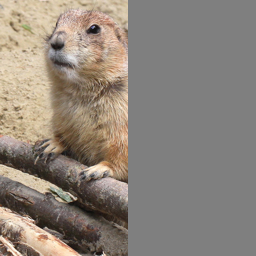}    
    \end{subfigure}
    \begin{subfigure}[h]{0.10\textwidth}
    \centering
    \includegraphics[width=\textwidth]{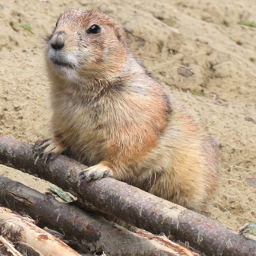}  
    \end{subfigure}
    \begin{subfigure}[h]{0.10\textwidth}
    \centering
    \includegraphics[width=\textwidth]{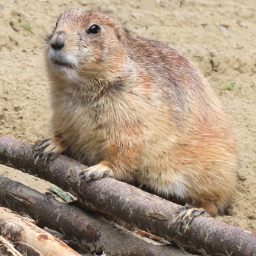}    
    \end{subfigure}
    \begin{subfigure}[h]{0.10\textwidth}
    \centering
    \includegraphics[width=\textwidth]{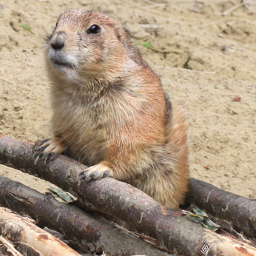}
    \end{subfigure}
    \begin{subfigure}[h]{0.10\textwidth}
    \centering
    \includegraphics[width=\textwidth]{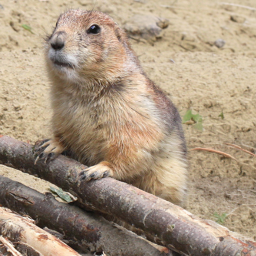}
    \end{subfigure}
    \\ 
    \begin{subfigure}[h]{0.10\textwidth}
    \centering
    \includegraphics[width=\textwidth]{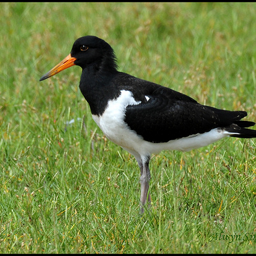}    
    \end{subfigure}
    \begin{subfigure}[h]{0.10\textwidth}
    \centering
    \includegraphics[width=\textwidth]{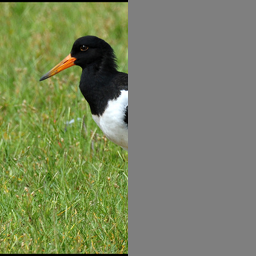}    
    \end{subfigure}
    \begin{subfigure}[h]{0.10\textwidth}
    \centering
    \includegraphics[width=\textwidth]{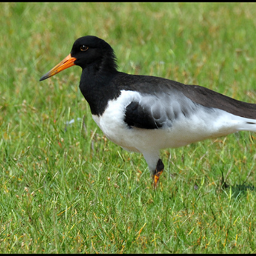}  
    \end{subfigure}
    \begin{subfigure}[h]{0.10\textwidth}
    \centering
    \includegraphics[width=\textwidth]{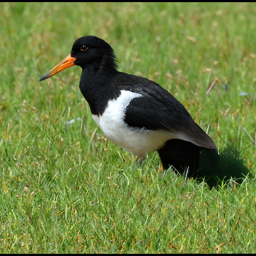}    
    \end{subfigure}
    \begin{subfigure}[h]{0.10\textwidth}
    \centering
    \includegraphics[width=\textwidth]{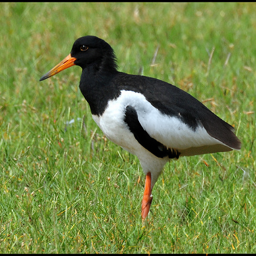}
    \end{subfigure}
    \begin{subfigure}[h]{0.10\textwidth}
    \centering
    \includegraphics[width=\textwidth]{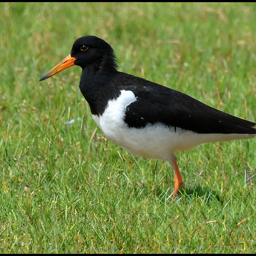}
    \end{subfigure}
    \\ 
    \begin{subfigure}[h]{0.10\textwidth}
    \centering
    \includegraphics[width=\textwidth]{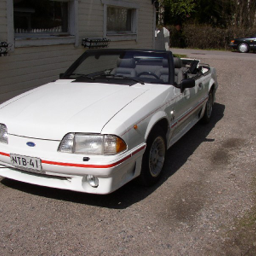}    
    \end{subfigure}
    \begin{subfigure}[h]{0.10\textwidth}
    \centering
    \includegraphics[width=\textwidth]{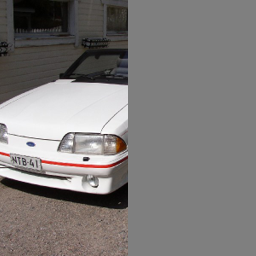}    
    \end{subfigure}
    \begin{subfigure}[h]{0.10\textwidth}
    \centering
    \includegraphics[width=\textwidth]{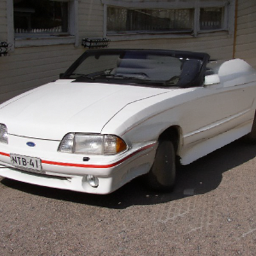}  
    \end{subfigure}
    \begin{subfigure}[h]{0.10\textwidth}
    \centering
    \includegraphics[width=\textwidth]{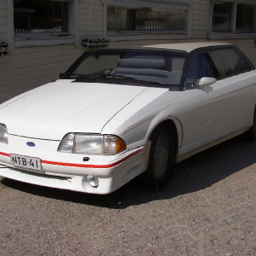}    
    \end{subfigure}
    \begin{subfigure}[h]{0.10\textwidth}
    \centering
    \includegraphics[width=\textwidth]{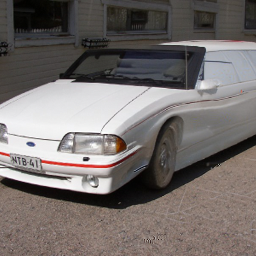}
    \end{subfigure}
    \begin{subfigure}[h]{0.10\textwidth}
    \centering
    \includegraphics[width=\textwidth]{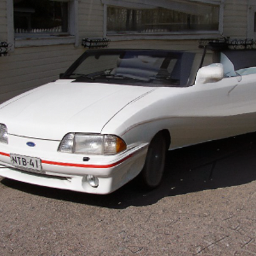}
    \end{subfigure}
    \caption{Restored images of VML-MAP across different seeds for the half-mask inpainting task in Table~\ref{tab:exp1}.}\label{fig:diverse_seeds}
\end{figure}
\newpage
\section{Extension to Latent Diffusion Models (LDM)}\label{sec:extensiontoldm}
\subsection{Approximating the VML for LDMs}
Here, we provide an extension of the VML objective to Latent Diffusion Models (LDM) for solving inverse problems. In LDMs, we treat both the encoder $\mathcal{E}$ and the decoder $\mathcal{D}$ as deterministic mappings from a clean image $\rvx_0$ to a clean latent variable $\rvz_0$ and vice-versa, respectively. To solve an inverse problem with a pre-trained LDM and a given measurement $\rvy$, we first aim to solve for the MAP estimate $\rvz^{*}_{0} = \arg \max_{\rvz_0}\log p(\rvz_0|\rvy)$ with the VML objective extended to LDMs and later use the decoder to predict a clean image in a deterministic manner i.e., $\rvx^{*}_0 = \mathcal{D}(\rvz^{*}_{0})$. First, we define the VML objective in LDMs, which we term $\mathrm{VML}_{\mathrm{LDM}}$, as the KL divergence between $p(\rvz_0|\rvz_t)$ and $p(\rvz_0|\rvy)$, and further approximate it as below.

\begin{prop}\label{prop:6}
The variational mode-seeking-loss for an LDM ($\text{VML}^{\text{LDM}}$) at diffusion time $t$, for a degradation operator $\mathcal{A}$, measurement $\rvy$, and measurement noise variance $\sigma^2_{\rvy}$ is given by
{\small{\begin{align*}
\mathrm{VML}^{\mathrm{LDM}}(\rvz_t,t) &= \KL(p(\rvz_0|\rvz_t) || p(\rvz_0|\rvy)) \approx - \log p(\rvz_t) - \frac{\|\mathrm{D}(\rvz_t,t)-\rvz_t\|^2}{2\sigma^2_t}  - \frac{1}{2\sigma^2_t} \mathrm{Tr}\left\{ \Cov[\rvz_0|\rvz_t] \right\} \\
+& \frac{\| \rvy - \mathcal{A}(\mathcal{D}(\mathrm{D}(\rvz_t,t))) \|^2}{2\sigma^2_\rvy} + \frac{1}{2\sigma^2_\rvy} \left( \mathrm{Tr}\left\{ \frac{\partial \mathcal{A}(\mathcal{D}(\mathrm{D}(\rvz_t,t)))}{\partial \mathrm{D}(\rvz_t,t)} \Cov[\rvz_0|\rvz_t] \frac{\partial \mathcal{A}(\mathcal{D}(\mathrm{D}(\rvz_t,t)))^{\top}}{\partial \mathrm{D}(\rvz_t,t)} \right\} \right) + \mathrm{C}
\end{align*}}}
where $\mathrm{C}$ is a constant, independent of $\rvx_t$. $\mathrm{Tr}$ denotes the matrix trace, $\Cov$ denotes the covariance matrix, $\mathrm{D}(\cdot,\cdot)$ denotes the denoiser, and $\mathcal{D}(\cdot)$ denotes the LDM decoder.
\end{prop}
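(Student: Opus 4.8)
The plan is to follow the derivation of Proposition~\ref{prop:1} verbatim, but carried out in the latent space, and then to introduce a single linearization to handle the composed operator $\mathcal{A}\circ\mathcal{D}$. First I would expand $\KL(p(\rvz_0|\rvz_t)\,\|\,p(\rvz_0|\rvy))$ by Bayes' rule exactly as in the proof of Proposition~\ref{prop:1}: writing $p(\rvz_0|\rvz_t)\propto p(\rvz_t|\rvz_0)p(\rvz_0)$ and $p(\rvz_0|\rvy)\propto p(\rvy|\rvz_0)p(\rvz_0)$, the prior $p(\rvz_0)$ cancels inside the log-ratio. Since the latent diffusion uses the same perturbation kernel $p(\rvz_t|\rvz_0)=\mathcal{N}(\rvz_0,\sigma^2_t\mathbf{I})$, and since $\rvy=\mathcal{A}(\mathcal{D}(\rvz_0))+\eta$ gives $p(\rvy|\rvz_0)=\mathcal{N}(\mathcal{A}(\mathcal{D}(\rvz_0)),\sigma^2_\rvy\mathbf{I})$, this step reproduces the expression of Proposition~\ref{prop:1} with $\mathcal{A}(\rvx_0)$ replaced throughout by $\mathcal{A}(\mathcal{D}(\rvz_0))$. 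The three prior terms $-\log p(\rvz_t)-\|\mathrm{D}(\rvz_t,t)-\rvz_t\|^2/(2\sigma^2_t)-\mathrm{Tr}\{\Cov[\rvz_0|\rvz_t]\}/(2\sigma^2_t)$ then follow exactly from Tweedie's formula (Appendix~\ref{ssec:tweediesformula}) and Lemma~\ref{lemma:1}, with no approximation.

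The only nontrivial part is the measurement term, which after the above step reads $\frac{1}{2\sigma^2_\rvy}\left(-2\rvy^\top\int\mathcal{A}(\mathcal{D}(\rvz_0))\,p(\rvz_0|\rvz_t)\,\mathrm{d}\rvz_0+\int\|\mathcal{A}(\mathcal{D}(\rvz_0))\|^2 p(\rvz_0|\rvz_t)\,\mathrm{d}\rvz_0\right)$. Here I cannot invoke Lemma~\ref{lemma:2}, because even for a linear $\mathrm{H}$ the composition $\mathcal{A}\circ\mathcal{D}$ is nonlinear through the decoder. The plan is to perform a first-order Taylor expansion of $\mathcal{A}(\mathcal{D}(\rvz_0))$ about the conditional mean $\hat{\rvz}_0:=\mathrm{D}(\rvz_t,t)=\mathbb{E}[\rvz_0|\rvz_t]$, writing $\mathcal{A}(\mathcal{D}(\rvz_0))\approx\mathcal{A}(\mathcal{D}(\hat{\rvz}_0))+J(\rvz_0-\hat{\rvz}_0)$ with $J:=\partial\mathcal{A}(\mathcal{D}(\hat{\rvz}_0))/\partial\hat{\rvz}_0$. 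Expanding about the mean is the key device: because $\mathbb{E}[\rvz_0-\hat{\rvz}_0|\rvz_t]=0$, the first moment collapses to $\mathbb{E}[\mathcal{A}(\mathcal{D}(\rvz_0))|\rvz_t]\approx\mathcal{A}(\mathcal{D}(\hat{\rvz}_0))$ and the linear cross-term in the second moment vanishes, leaving $\mathbb{E}[\|\mathcal{A}(\mathcal{D}(\rvz_0))\|^2|\rvz_t]\approx\|\mathcal{A}(\mathcal{D}(\hat{\rvz}_0))\|^2+\mathbb{E}[(\rvz_0-\hat{\rvz}_0)^\top J^\top J(\rvz_0-\hat{\rvz}_0)|\rvz_t]$.

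I would then reduce the quadratic expectation to a trace using the cyclic property and the definition of the covariance, $\mathbb{E}[(\rvz_0-\hat{\rvz}_0)^\top J^\top J(\rvz_0-\hat{\rvz}_0)|\rvz_t]=\mathrm{Tr}\{J\,\Cov[\rvz_0|\rvz_t]\,J^\top\}$, exactly mirroring the trace manipulation in Lemmas~\ref{lemma:1} and~\ref{lemma:2}. Substituting these two approximate moments back and completing the square in $\rvy$ (absorbing the $\rvy$-independent $\|\rvy\|^2$ into $\mathrm{C}$) yields the measurement-consistency term $\|\rvy-\mathcal{A}(\mathcal{D}(\mathrm{D}(\rvz_t,t)))\|^2/(2\sigma^2_\rvy)$ together with the stated trace correction involving $J$. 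Combining with the exact prior terms completes the derivation. The main obstacle is conceptual rather than computational: unlike Proposition~\ref{prop:2}, the result is an approximation ($\approx$), so I must be explicit that the linearization discards the second- and higher-order terms of $\mathcal{A}\circ\mathcal{D}$; the identity is therefore exact only when $\mathcal{A}\circ\mathcal{D}$ is affine, and otherwise incurs an error controlled by the curvature of $\mathcal{A}\circ\mathcal{D}$ and the spread of $p(\rvz_0|\rvz_t)$.
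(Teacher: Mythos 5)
Your proposal is correct and follows essentially the same route as the paper: Bayes' expansion with cancellation of the latent prior, Tweedie/Lemma~\ref{lemma:1} for the prior terms, and a first-order linearization of $\mathcal{A}\circ\mathcal{D}$ about $\mathrm{D}(\rvz_t,t)$ whose zero-mean deviation kills the cross terms and yields the trace correction. The only nuance you elide is that the paper also flags $p(\rvy|\rvz_0)\approx\mathcal{N}(\mathcal{A}(\mathcal{D}(\rvz_0)),\sigma^2_\rvy\mathbf{I})$ as itself an approximation (the encoder--decoder is not an exact bijection), whereas you treat that step as exact; this is a modeling caveat, not a gap in the derivation.
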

\begin{proof}
{\scriptsize{
\begin{align*}
\KL(p(\rvz_0|\rvz_t) || p(\rvz_0|\rvy)) &= \int_{\rvz_0} p(\rvz_0|\rvz_t) \log \frac{p(\rvz_0|\rvz_t)}{p(\rvz_0|\rvy)} \mathrm{d}\rvz_0 \\
\KL(p(\rvz_0|\rvz_t) || p(\rvz_0|\rvy)) &= \int_{\rvz_0} p(\rvz_0|\rvz_t) \log \frac{p(\rvz_t|\rvz_0)\cancel{p(\rvz_0)}p(\rvy)}{p(\rvz_t)p(\rvy|\rvz_0)\cancel{p(\rvz_0)} } \mathrm{d}\rvz_0 \\
\KL(p(\rvz_0|\rvz_t) || p(\rvz_0|\rvy)) &= \log p(\rvy) - \log p(\rvz_t) + \int_{\rvz_0} p(\rvz_0|\rvz_t) \log \frac{p(\rvz_t|\rvz_0)}{p(\rvy|\rvz_0)} \mathrm{d}\rvz_0 \\
\KL(p(\rvz_0|\rvz_t) || p(\rvz_0|\rvy)) &= \log p(\rvy) - \log p(\rvz_t) + \left( \int_{\rvz_0} p(\rvz_0|\rvz_t) \log p(\rvz_t|\rvz_0) \mathrm{d}\rvz_0 \right)\\
&- \left(  \int_{\rvz_0} p(\rvz_0|\rvz_t) \log p(\rvy|\rvz_0) \mathrm{d}\rvz_0 \right) \\[8pt]
\big\{\text{Note that }p(\rvz_t|\rvz_0) &= \mathcal{N}(\rvz_0, \sigma^2_t \mathbf{I})\text{ and we approximate }p(\rvy|\rvz_0) \approx \mathcal{N}(\mathcal{A}(\mathcal{D}(\rvz_0)), \sigma^2_{\rvy}\mathbf{I}) \big\} \\[8pt]
\KL(p(\rvz_0|\rvz_t) || p(\rvz_0|\rvy)) &\approx - \log p(\rvz_t) - \frac{1}{2} \left( \int_{\rvz_0} p(\rvz_0|\rvz_t) \frac{\|\rvz_t-\rvz_0\|^2}{\sigma^2_t} \mathrm{d}\rvz_0 \right)\\
&+ \frac{1}{2} \left(  \int_{\rvz_0} p(\rvz_0|\rvz_t) \frac{\| \rvy - \mathcal{A}(\mathcal{D}(\rvz_0)) \|^2}{\sigma^2_\rvy} \mathrm{d}\rvz_0 \right) + \mathrm{C}\\
\KL(p(\rvz_0|\rvz_t) || p(\rvz_0|\rvy)) &\approx - \log p(\rvz_t) - \frac{1}{2\sigma^2_t} \left( \|\rvz_t\|^2 - 2 \rvz_t^{\top} \mathrm{D}(\rvz_t,t) + \int_{\rvz_0} 
\|\rvz_0\|^2 p(\rvz_0|\rvz_t) \mathrm{d}\rvz_0 \right) \\ 
+ \frac{1}{2\sigma^2_\rvy}& \left(  -2 \rvy^{\top} \int_{\rvz_0} \mathcal{A}(\mathcal{D}(\rvz_0)) p(\rvz_0|\rvz_t) \mathrm{d}\rvz_0   + \int_{\rvz_0} \|\mathcal{A}(\mathcal{D}(\rvz_0))\|^2 p(\rvz_0|\rvz_t) \mathrm{d}\rvz_0  \right) + \mathrm{C}\\[8pt]
\text{We make a linear}&\text{ approximation of }\mathcal{A}(\mathcal{D}(\rvz_0))\text{ around }\hat{\rvz}_t = \mathrm{D}(\rvz_t,t) = \int_{\rvz_0}\rvz_0 p(\rvz_0|\rvz_t) \text{  as follows  }\\
&\mathcal{A}(\mathcal{D}(\rvz_0)) \approx \mathcal{A}(\mathcal{D}(\hat{\rvz}_t)) + \frac{\partial \mathcal{A}(\mathcal{D}(\hat{\rvz}_t))}{\partial \hat{\rvz}_t} (\rvz_0 - \hat{\rvz}_t)\\[8pt]
\KL(p(\rvz_0|\rvz_t) || p(\rvz_0|\rvy)) &\approx - \log p(\rvz_t) - \frac{1}{2\sigma^2_t} \left( \|\rvz_t\|^2 - 2 \rvz_t^{\top} \mathrm{D}(\rvz_t,t) + \mathrm{Tr}\left\{ \Cov[\rvz_0|\rvz_t] \right\} + \|\mathrm{D}(\rvz_t,t)\|^2  \right) \\ 
+ \frac{1}{2\sigma^2_\rvy} \Biggl( -2 \rvy^{\top} \Biggl\{ \mathcal{A}(\mathcal{D}(\hat{\rvz}_t)) &+ {\cancel{\frac{\partial \mathcal{A}(\mathcal{D}(\hat{\rvz}_t))}{\partial \hat{\rvz}_t} \int_{\rvz_0} (\rvz_0 - \hat{\rvz}_t) p(\rvz_0|\rvz_t) \mathrm{d}\rvz_0}} \Biggr\} + \int_{\rvz_0} \|\mathcal{A}(\mathcal{D}(\rvz_0))\|^2 p(\rvz_0|\rvz_t) \mathrm{d}\rvz_0  \Biggr) + \mathrm{C}\\
\KL(p(\rvz_0|\rvz_t) || p(\rvz_0|\rvy)) & \approx - \log p(\rvz_t) - \frac{\|\mathrm{D}(\rvz_t,t)-\rvz_t\|^2}{2\sigma^2_t}  - \frac{1}{2\sigma^2_t} \mathrm{Tr}\left\{ \Cov[\rvz_0|\rvz_t] \right\} \\ 
+ \frac{1}{2\sigma^2_\rvy} \Biggl(  -2 \rvy^{\top} \mathcal{A}(\mathcal{D}(&\hat{\rvz}_t))  + \|\mathcal{A}(\mathcal{D}(\hat{\rvz}_t))\|^2 + \cancel{2 \mathcal{A}(\mathcal{D}(\hat{\rvz}_t))^{\top} \frac{\partial \mathcal{A}(\mathcal{D}(\hat{\rvz}_t))}{\partial \hat{\rvz}_t} \int_{\rvz_0} (\rvz_0 - \hat{\rvz}_t)  p(\rvz_0|\rvz_t) \mathrm{d}\rvz_0}  \Biggr)\\
+ \frac{1}{2\sigma^2_\rvy} & \left( \int_{\rvz_0} \left\| \frac{\partial \mathcal{A}(\mathcal{D}(\hat{\rvz}_t))}{\partial \hat{\rvz}_t} (\rvz_0 - \hat{\rvz}_t) \right\|^2 p(\rvz_0|\rvz_t) \mathrm{d}\rvz_0 \right) + \mathrm{C} \\
\KL(p(\rvz_0|\rvz_t) || p(\rvz_0|\rvy)) & \approx - \log p(\rvz_t) - \frac{\|\mathrm{D}(\rvz_t,t)-\rvz_t\|^2}{2\sigma^2_t}  - \frac{1}{2\sigma^2_t} \mathrm{Tr}\left\{ \Cov[\rvz_0|\rvz_t] \right\} + \frac{\| \rvy - \mathcal{A}(\mathcal{D}(\hat{\rvz}_t)) \|^2}{2\sigma^2_\rvy}\\
+ \frac{1}{2\sigma^2_\rvy} \Biggl( \mathrm{Tr}\Biggl\{ & \frac{\partial \mathcal{A}(\mathcal{D}(\hat{\rvz}_t))}{\partial \hat{\rvz}_t} \left( \int_{\rvz_0} (\rvz_0 - \hat{\rvz}_t)(\rvz_0 - \hat{\rvz}_t)^{\top} p(\rvz_0|\rvz_t) \mathrm{d}\rvz_0 \right) \frac{\partial \mathcal{A}(\mathcal{D}(\hat{\rvz}_t))^{\top}}{\partial \hat{\rvz}_t} \Biggr\} \Biggr) + \mathrm{C} \\
\KL(p(\rvz_0|\rvz_t) || p(\rvz_0|\rvy)) & \approx - \log p(\rvz_t) - \frac{\|\mathrm{D}(\rvz_t,t)-\rvz_t\|^2}{2\sigma^2_t}  - \frac{1}{2\sigma^2_t} \mathrm{Tr}\left\{ \Cov[\rvz_0|\rvz_t] \right\} + \frac{\| \rvy - \mathcal{A}(\mathcal{D}(\hat{\rvz}_t)) \|^2}{2\sigma^2_\rvy} \\
+ \frac{1}{2\sigma^2_\rvy} & \left( \mathrm{Tr}\left\{ \frac{\partial \mathcal{A}(\mathcal{D}(\hat{\rvz}_t))}{\partial \hat{\rvz}_t} \Cov[\rvz_0|\rvz_t] \frac{\partial \mathcal{A}(\mathcal{D}(\hat{\rvz}_t))^{\top}}{\partial \hat{\rvz}_t} \right\} \right) + \mathrm{C} \\
\KL(p(\rvz_0|\rvz_t) || p(\rvz_0|\rvy)) & \approx - \log p(\rvz_t) - \frac{\|\mathrm{D}(\rvz_t,t)-\rvz_t\|^2}{2\sigma^2_t}  - \frac{1}{2\sigma^2_t} \mathrm{Tr}\left\{ \Cov[\rvz_0|\rvz_t] \right\} + \frac{\| \rvy - \mathcal{A}(\mathcal{D}(\mathrm{D}(\rvz_t,t))) \|^2}{2\sigma^2_\rvy}\\
+ \frac{1}{2\sigma^2_\rvy} &\left( \mathrm{Tr}\left\{ \frac{\partial \mathcal{A}(\mathcal{D}(\mathrm{D}(\rvz_t,t)))}{\partial \mathrm{D}(\rvz_t,t)} \Cov[\rvz_0|\rvz_t] \frac{\partial \mathcal{A}(\mathcal{D}(\mathrm{D}(\rvz_t,t)))^{\top}}{\partial \mathrm{D}(\rvz_t,t)} \right\} \right) + \mathrm{C} \\
\end{align*}}}
\end{proof}
\subsection{Simplified-VML and Latent VML-MAP for LDMs}\label{ssec:svml_ldm}
Similar to the case of pixel diffusion models, the higher-order terms involving $\Cov[\rvz_0|\rvz_t]$ in $\text{VML}^{\text{LDM}}(\cdot,t)$ converge to a constant as $t \to 0$, under mild assumptions on $\mathcal{A}$ and $\mathcal{D}$. Ignoring these higher-order terms, we define the simplified-$\text{VML}^{\text{LDM}}$ objective and its gradient as follows. 

\textbf{Simplified-$\text{VML}^{\text{LDM}}$ and its gradient.} For a linear degradation matrix $\mathrm{H}$, the simplified-$\text{VML}^{\text{LDM}}$ denoted as $\mathrm{VML}^{\mathrm{LDM}}_\mathrm{S}(\cdot,\cdot)$ and its gradient is given by
{\small{
\begin{align*}
\mathrm{VML}^{\mathrm{LDM}}_{\mathrm{S}}(\rvz_t,t) = -\log p(\rvz_t) - \frac{\| \mathrm{D}(\rvz_t,t) - \rvz_t\|^2 }{2\sigma^2_t} + \frac{\| \rvy - \mathrm{H}\mathcal{D}(\mathrm{D}(\rvz_t,t)) \|^2}{2\sigma^2_\rvy}
\end{align*}}}
{\small{
\begin{align*}
\nabla_{\rvz_t} \mathrm{VML}^{\mathrm{LDM}}_{\mathrm{S}}(\rvz_t,t) = \underbrace{ - \frac{\partial \mathrm{D}^{\top}(\rvz_t,t)}{\partial \rvz_t} \frac{\partial \mathcal{D}^{\top}(\mathrm{D}(\rvz_t,t))}{\partial \mathrm{D}(\rvz_t,t)} \frac{\mathrm{H}^{\top}(\rvy-\mathrm{H} \mathcal{D}(\mathrm{D}(\rvz_t,t)))}{\sigma^2_\rvy} }_{\textit{measurement consistency gradient}} \underbrace{ - \frac{\partial \mathrm{D}^{\top}(\rvz_t,t)}{\partial \rvz_t} \frac{(\mathrm{D}(\rvz_t,t)-\rvz_t)}{\sigma^2_t} }_{\textit{prior gradient}}
\end{align*}}}
\begin{proof}
{\small{
\begin{align*}
\nabla_{\rvz_t} \mathrm{VML}^{\mathrm{LDM}}_\mathrm{S}(\rvz_t,t) &= \left\{ -\nabla_{\rvz_t} \log p(\rvz_t) \right\} - \left\{ \nabla_{\rvz_t} \frac{\|\mathrm{D}(\rvz_t,t)-\rvz_t\|^2}{2\sigma^2_t} \right\} + \left\{ \nabla_{\rvz_t} \frac{\|\rvy-\mathrm{H}\mathcal{D}(\mathrm{D}(\rvz_t,t))\|^2}{2\sigma^2_\rvy} \right\} \\
\nabla_{\rvz_t} \mathrm{VML}^{\mathrm{LDM}}_\mathrm{S}(\rvz_t,t)  &= \left\{ - \frac{\mathrm{D}(\rvz_t,t)-\rvz_t}{\sigma^2_t} \right\} - \left\{ \left( \frac{\partial \mathrm{D}^{\top}(\rvz_t,t)}{\partial \rvz_t} \frac{(\mathrm{D}(\rvz_t,t)-\rvz_t)}{\sigma^2_t} \right) - \frac{\mathrm{D}(\rvz_t,t)-\rvz_t}{\sigma^2_t} \right\} \\
&+ \left\{ - \frac{\partial \mathrm{D}^{\top}(\rvz_t,t)}{\partial \rvz_t} \frac{\partial \mathcal{D}^{\top}(\mathrm{D}(\rvz_t,t))}{\partial \mathrm{D}(\rvz_t,t)} \frac{\mathrm{H}^{\top}(\rvy-\mathrm{H}\mathcal{D}(\mathrm{D}(\rvx_t,t)))}{\sigma^2_\rvy} \right\} \\
\nabla_{\rvz_t} \mathrm{VML}^{\mathrm{LDM}}_{\mathrm{S}}(\rvz_t,t) &= - \frac{\partial \mathrm{D}^{\top}(\rvz_t,t)}{\partial \rvz_t} \frac{\partial \mathcal{D}^{\top}(\mathrm{D}(\rvz_t,t))}{\partial \mathrm{D}(\rvz_t,t)} \frac{\mathrm{H}^{\top}(\rvy-\mathrm{H} \mathcal{D}(\mathrm{D}(\rvz_t,t)))}{\sigma^2_\rvy}  - \frac{\partial \mathrm{D}^{\top}(\rvz_t,t)}{\partial \rvz_t} \frac{(\mathrm{D}(\rvz_t,t)-\rvz_t)}{\sigma^2_t}\\
\end{align*}}}
\end{proof}

We present LatentVML-MAP (Algorithm~\ref{algo:latentvmlmap}) as an extension of VML-MAP (Algorithm~\ref{algo:vmlmap}) to LDMs. In principle, LatentVML-MAP minimizes $\mathrm{VML}^{\mathrm{LDM}}_{\mathrm{S}}(\cdot,\cdot)$ at each reverse diffusion step to find $\rvz^{*}_0 = \arg \max_{\rvz_0} \log p(\rvz_0|\rvy)$ and finally, uses the decoder $\mathcal{D}$ to return $\rvx_0 = \mathcal{D}(\rvz^{*}_0)$.

The inputs to Algorithm~\ref{algo:latentvmlmap} consists of the latent diffusion denoiser $\mathrm{D}_{\theta}(\cdot,\cdot)$, the decoder $\mathcal{D}(\cdot)$, the linear degradation matrix $\mathrm{H}$, the measurement $\rvy$ with noise variance $\sigma^2_{\rvy}$, the diffusion noise schedule $\sigma(\cdot)$, the total number of reverse diffusion steps $N$ with the discretized time step schedule specified by $t_{i\in\{0,\dots N\}}$, where $t_0 = 0$, the gradient descent iterations per step given by $K$, and the learning rate $\gamma$. We use the notations $\sigma(t)$ and $\sigma_t$ interchangeably.

\begin{algorithm}[H]
\caption{\textbf{LatentVML-MAP}}
\label{algo:latentvmlmap}
\begin{algorithmic}[1]
\STATE {\bfseries Input:} $\mathrm{D}_{\theta}(\cdot,\cdot), \mathcal{D}(\cdot), \mathrm{H}, \rvy, \sigma_\rvy, \sigma(\cdot), t_{i\in \{0,\dots N \}}, K$, $\gamma$
\STATE {\bfseries Output:} $\mathbf{x}_{t_0}$ 
\STATE {\bfseries Initialize} $\rvz_{t_N} \sim \mathcal{N}(\mathbf{0},\sigma^2_{t_N}\mathbf{I})$
\FOR{$i = N$ \textbf{down to} $1$}
    \FOR{$j = 1$ \textbf{to} $K$}
        \STATE 
        $\rvz_{t_i} \gets \rvz_{t_i} - \gamma \, \nabla_{\rvz_{t_i}}\mathrm{VML}^{\mathrm{LDM}}_{\mathrm{S}}(\rvz_t,t)$\hspace{.5cm} \COMMENTNOBRACES{see Appendix~\ref{ssec:svml_ldm}}    
    \ENDFOR
    \STATE $\rvz_{t_{i-1}} \sim \mathcal{N}\left(\mathrm{D_{\theta}}(\rvz_{t_i},t_i), \sigma^2_{t_{i-1}}\mathbf{I}\right)$
\ENDFOR
\STATE $\rvx_{t_0} = \mathcal{D}(\rvz_{t_0})$
\STATE {\bfseries Return} $\mathbf{x}_{t_0}$
\end{algorithmic}
\end{algorithm}

\subsection{Experiments on CelebA}\label{ssec:ldmexperiments}

We conduct experiments on 100 test images from the CelebA~\citep{celeba} dataset on Half-mask inpainting, Box-mask inpainting, $4\times$ Super-resolution, and Deblurring tasks using the pre-trained latent diffusion model and the autoencoder from~\citet{ldm}. For LatentVML-MAP, we fix $N=20$, $K=50$, and use the EDM scheduler with $\sigma_{min}=0.002$, $\sigma_{max}=80$, and $\rho=7$. For Resample~\citep{resample}, we follow the default settings with 500 time steps, and for LatentDAPS~\citep{daps}, instead of the default setting of 100 neural function evaluations, we select 1000 NFEs to allow for a fair comparison, keeping other parameters fixed.  We set $\sigma_y = 1e\text{-}9$ for all the methods. For LatentVML-MAP, we report the best learning rate configuration for each task as $\gamma_0 \cdot \sigma^2_y$, with $\gamma_0$ as follows: Half-mask inpainting ($\gamma_0 = 0.1$), Box-mask inpainting ($\gamma_0 = 0.1$), $4\times$ Super-resolution ($\gamma_0 = 0.9$), Deblurring ($\gamma_0 = 0.075$).

\begin{table}[h!]
\caption{Evaluation of LDM-based image restoration methods on Half-mask inpainting, Box-mask inpainting, $4\times$ super-resolution, Deblurring on $100$ images of CelebA$256$. Excluding Resample, we denote the best values in \textbf{bold}, second best values \underline{underlined}.}
\label{tab:ldmexp1}
\begin{center}
\resizebox{0.90\textwidth}{!}{
\begin{tabular}{ l l l  l l l  c c c c c c c c }
\toprule
\multicolumn{3}{c}{} & \multicolumn{3}{c}{} & \multicolumn{8}{c}{} \\[-10pt]
\multicolumn{3}{c}{{Dataset}}  & \multicolumn{3}{l}{{Method}} & \multicolumn{2}{c}{{Half-Inpaint}} & \multicolumn{2}{c}{{Box-Inpaint}}  & \multicolumn{2}{c}{{$4\times $Sup-res}} & \multicolumn{2}{c}{{Deblurring}} \\[1pt]
 \multicolumn{3}{c}{} & \multicolumn{3}{c}{} & \multicolumn{1}{c}{\scriptsize{LPIPS$\downarrow$}} & \multicolumn{1}{c}{\scriptsize{FID$\downarrow$}} & \multicolumn{1}{c}{\scriptsize{LPIPS$\downarrow$}} & \multicolumn{1}{c}{\scriptsize{FID$\downarrow$}} & \multicolumn{1}{c}{\scriptsize{LPIPS$\downarrow$}} & \multicolumn{1}{c}{\scriptsize{FID$\downarrow$}} & \multicolumn{1}{c}{\scriptsize{LPIPS$\downarrow$}} & \multicolumn{1}{c}{\scriptsize{FID$\downarrow$}}\\[1pt]
 \toprule
 \multicolumn{3}{c}{} & \multicolumn{3}{c}{} & \multicolumn{8}{c}{}\\[-10pt]
 \multicolumn{3}{c}{} & \multicolumn{3}{l}{{\small{$\text{\color{lightgray}{Resample}}$}}}  & \multicolumn{1}{c}{\scriptsize{\color{lightgray}{$0.235$}}} & \multicolumn{1}{c}{\scriptsize{\color{lightgray}{$55.80$}}}  & \multicolumn{1}{c}{\scriptsize{\color{lightgray}{$0.092$}}} & \multicolumn{1}{c}{\scriptsize{\color{lightgray}{$53.12$}}} & \multicolumn{1}{c}{\scriptsize{\color{lightgray}{$0.084$}}} & \multicolumn{1}{c}{\scriptsize{\color{lightgray}{$41.85$}}} & \multicolumn{1}{c}{\scriptsize{\color{lightgray}{$0.202$}}} & \multicolumn{1}{c}{\scriptsize{\color{lightgray}{$63.41$}}}\\
 \multicolumn{3}{c}{} & \multicolumn{3}{l}{{\small{$\text{LatentDAPS-1K}$}}}  & 
 \multicolumn{1}{c}{\scriptsize{$\underline{0.240}$}} & \multicolumn{1}{c}{\scriptsize{$\underline{54.47}$}}  & \multicolumn{1}{c}{\scriptsize{$\bf{0.070}$}} & \multicolumn{1}{c}{\scriptsize{$\underline{37.69}$}} & \multicolumn{1}{c}{\scriptsize{$0.114$}} & \multicolumn{1}{c}{\scriptsize{$\underline{38.72}$}} & \multicolumn{1}{c}{\scriptsize{$0.328$}} & \multicolumn{1}{c}{\scriptsize{$104.2$}}\\
 \multicolumn{3}{c}{\small{CelebA}} & \multicolumn{3}{l}{{\small{$\text{Resample w/o PO}$}}}  & \multicolumn{1}{c}{\scriptsize{$0.259$}} & \multicolumn{1}{c}{\scriptsize{$64.32$}}  & \multicolumn{1}{c}{\scriptsize{$0.106$}} & \multicolumn{1}{c}{\scriptsize{$68.30$}} & \multicolumn{1}{c}{\scriptsize{$\underline{0.112}$}} & \multicolumn{1}{c}{\scriptsize{$50.95$}} & \multicolumn{1}{c}{\scriptsize{$\bf{0.210}$}} & \multicolumn{1}{c}{\scriptsize{$\bf{65.36}$}}\\[2pt] 
 \cline{4-14}
 \multicolumn{3}{c}{} & \multicolumn{3}{c}{} & \multicolumn{8}{c}{}\\[-7pt]
 \multicolumn{3}{l}{} & \multicolumn{3}{l}{\bf{\small{$\text{LatentVML-MAP}$}}}  & \multicolumn{1}{c}{\scriptsize{$\bf{0.208}$}} & \multicolumn{1}{c}{\scriptsize{$\bf{47.04}$}} & \multicolumn{1}{c}{\scriptsize{$\underline{0.074}$}} & \multicolumn{1}{c}{\scriptsize{$\bf{34.73}$}} & \multicolumn{1}{c}{\scriptsize{$\bf{0.099}$}} & \multicolumn{1}{c}{\scriptsize{$\bf{36.82}$}} & \multicolumn{1}{c}{\scriptsize{\underline{$0.212$}}} & \multicolumn{1}{c}{\scriptsize{\underline{$71.43$}}}\\
 \bottomrule
\end{tabular}}
\end{center}
\end{table}

For LatentVML-MAP, we observed that even for linear inverse problems, the optimization becomes more challenging due to the non-linearity of the LDM decoder. As a result, we noticed that the final reconstructed images are blurry and inconsistent with the measurement $\rvy$. We also observed this pattern with LatentDAPS, but not with Resample, as it uses pixel-space optimization. Note that Resample also requires the LDM encoder to project the pixel-space-optimized result back into the latent space, unlike LatentDAPS and LatentVML-MAP. For a fair comparison, in our experiments,
\begin{itemize}[left=1em, label=\textbullet]
    \item we also report the performance of Resample with its pixel-space optimization replaced with latent-space optimization (see~\citet{resample}), denoted as \textbf{Resample w/o PO}
    \item we project the final reconstructed images of all methods onto the measurement subspace to ensure that all the reconstructed images are consistent with the measurements.
\end{itemize}

For inpainting, the measurement subspace projection implies that we paste the observed pixels back into the reconstructed images. For super-resolution and deblurring, we paste back the observations in the spectral space of the linear operator by using the SVD accessible operators from~\citet{ddrm}. Results from Table~\ref{tab:ldmexp1} validate the effectiveness of LatentVML-MAP in practice. We believe that the performance bottleneck primarily exists due to the challenging optimization and that improved optimization techniques can further enhance the performance of LatentVML-MAP.
 
\begin{figure*}[htbp]
    \centering
    \begin{subfigure}[h]{0.03\textwidth}
    \centering
    \subfloat{\scriptsize{ }}
    \end{subfigure}
    \begin{subfigure}[h]{0.155\textwidth}
    \centering
    \subfloat{\scriptsize{\bf{\color{gray}{\textit{Original}}}}}    
    \end{subfigure}
    \begin{subfigure}[h]{0.155\textwidth}
    \centering
    \subfloat{\scriptsize{\bf{\color{gray}{\textit{Measurement}}}}}    
    \end{subfigure}
    \begin{subfigure}[h]{0.155\textwidth}
    \centering
    \subfloat{\scriptsize{\bf{\color{gray}{\textit{LatentDAPS-1K}}}}}    
    \end{subfigure}
    \begin{subfigure}[h]{0.155\textwidth}
    \centering
    \subfloat{\scriptsize{\bf{\color{gray}{\textit{Resample w/o PO}}}}}    
    \end{subfigure}
    \begin{subfigure}[h]{0.155\textwidth}
    \centering
    \subfloat{\scriptsize{\bf{\color{gray}{\textit{LatentVML-MAP}}}}}    
    \end{subfigure}
    \\[3pt]
    \begin{subfigure}[h]{0.03\textwidth}
    \centering
    \subfloat{{\rotatebox[origin=c]{90}{\color{gray}{\bf{\textit{Half Inpaint}}}}}}    
    \end{subfigure}
    \begin{subfigure}[h]{0.155\textwidth}
    \centering
    \includegraphics[width=\textwidth]{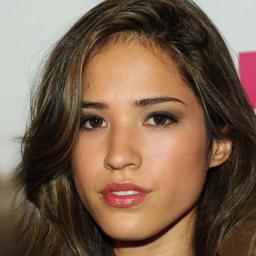}    
    \end{subfigure}
    \begin{subfigure}[h]{0.155\textwidth}
    \centering
    \includegraphics[width=\textwidth]{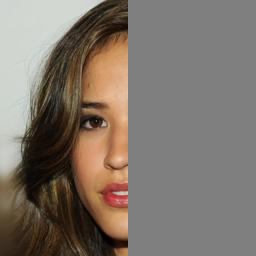}    
    \end{subfigure}
    \begin{subfigure}[h]{0.155\textwidth}
    \centering
    \includegraphics[width=\textwidth]{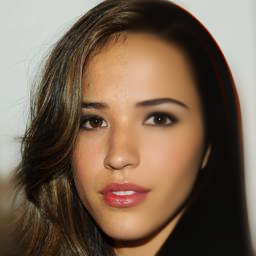}  
    \end{subfigure}
    \begin{subfigure}[h]{0.155\textwidth}
    \centering
    \includegraphics[width=\textwidth]{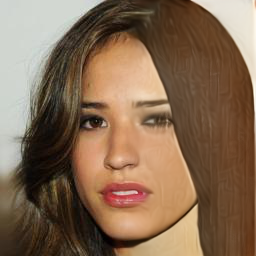}    
    \end{subfigure}
    \begin{subfigure}[h]{0.155\textwidth}
    \centering
    \includegraphics[width=\textwidth]{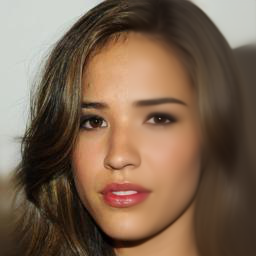}  
    \end{subfigure}
    \\ 
    \begin{subfigure}[h]{0.03\textwidth}
    \centering
    \subfloat{{\rotatebox[origin=c]{90}{\color{gray}{\bf{\textit{Half Inpaint}}}}}}    
    \end{subfigure}
    \begin{subfigure}[h]{0.155\textwidth}
    \centering
    \includegraphics[width=\textwidth]{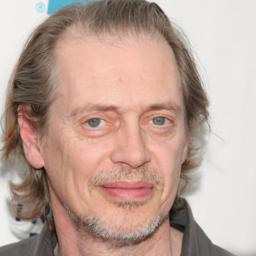}    
    \end{subfigure}
    \begin{subfigure}[h]{0.155\textwidth}
    \centering
    \includegraphics[width=\textwidth]{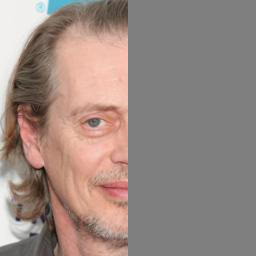}    
    \end{subfigure}
    \begin{subfigure}[h]{0.155\textwidth}
    \centering
    \includegraphics[width=\textwidth]{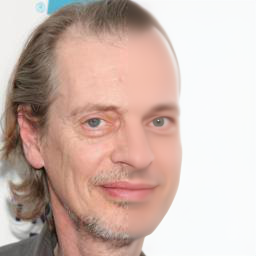}  
    \end{subfigure}
    \begin{subfigure}[h]{0.155\textwidth}
    \centering
    \includegraphics[width=\textwidth]{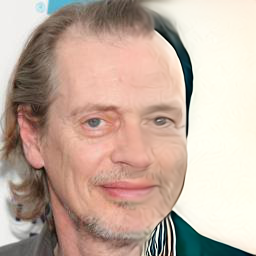}    
    \end{subfigure}
    \begin{subfigure}[h]{0.155\textwidth}
    \centering
    \includegraphics[width=\textwidth]{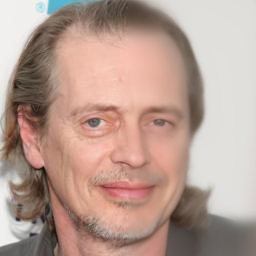}  
    \end{subfigure}
    \\ 
    \begin{subfigure}[h]{0.03\textwidth}
    \centering
    \subfloat{{\rotatebox[origin=c]{90}{\color{gray}{\bf{\textit{Box Inpaint}}}}}}    
    \end{subfigure}
    \begin{subfigure}[h]{0.155\textwidth}
    \centering
    \includegraphics[width=\textwidth]{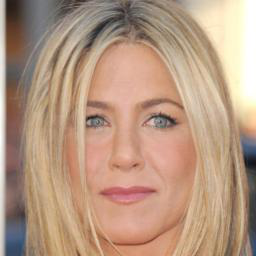}    
    \end{subfigure}
    \begin{subfigure}[h]{0.155\textwidth}
    \centering
    \includegraphics[width=\textwidth]{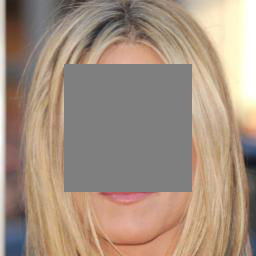}    
    \end{subfigure}
    \begin{subfigure}[h]{0.155\textwidth}
    \centering
    \includegraphics[width=\textwidth]{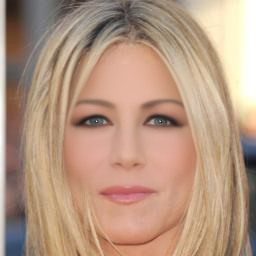}  
    \end{subfigure}
    \begin{subfigure}[h]{0.155\textwidth}
    \centering
    \includegraphics[width=\textwidth]{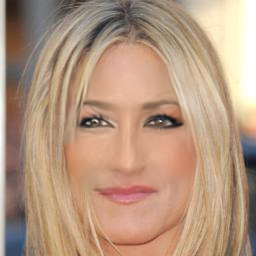}    
    \end{subfigure}
    \begin{subfigure}[h]{0.155\textwidth}
    \centering
    \includegraphics[width=\textwidth]{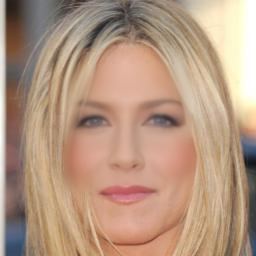}  
    \end{subfigure}
    \\ 
    \begin{subfigure}[h]{0.03\textwidth}
    \centering
    \subfloat{{\rotatebox[origin=c]{90}{\color{gray}{\bf{\textit{Box Inpaint}}}}}}    
    \end{subfigure}
    \begin{subfigure}[h]{0.155\textwidth}
    \centering
    \includegraphics[width=\textwidth]{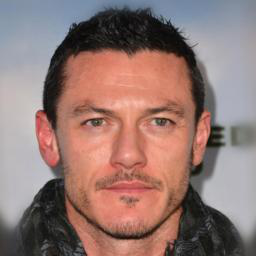}    
    \end{subfigure}
    \begin{subfigure}[h]{0.155\textwidth}
    \centering
    \includegraphics[width=\textwidth]{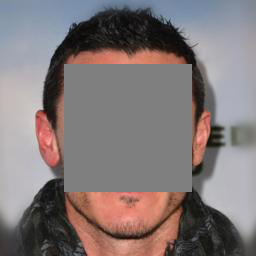}    
    \end{subfigure}
    \begin{subfigure}[h]{0.155\textwidth}
    \centering
    \includegraphics[width=\textwidth]{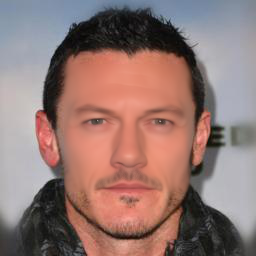}  
    \end{subfigure}
    \begin{subfigure}[h]{0.155\textwidth}
    \centering
    \includegraphics[width=\textwidth]{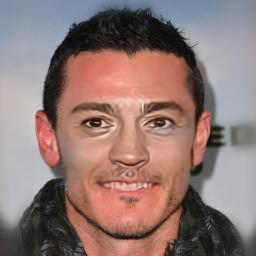}    
    \end{subfigure}
    \begin{subfigure}[h]{0.155\textwidth}
    \centering
    \includegraphics[width=\textwidth]{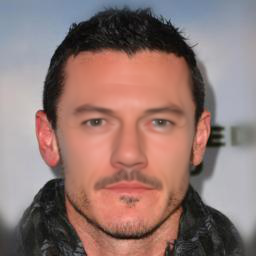}  
    \end{subfigure}
    \\ 
    \begin{subfigure}[h]{0.03\textwidth}
    \centering
    \subfloat{{\rotatebox[origin=c]{90}{\color{gray}{\bf{$\mathbf{4\times}$ \textit{Super-res}}}}}}
    \end{subfigure}
    \begin{subfigure}[h]{0.155\textwidth}
    \centering
    \includegraphics[width=\textwidth]{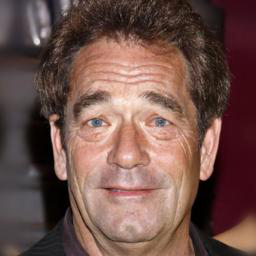}    
    \end{subfigure}
    \begin{subfigure}[h]{0.155\textwidth}
    \centering
    \includegraphics[width=\textwidth]{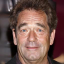}    
    \end{subfigure}
    \begin{subfigure}[h]{0.155\textwidth}
    \centering
    \includegraphics[width=\textwidth]{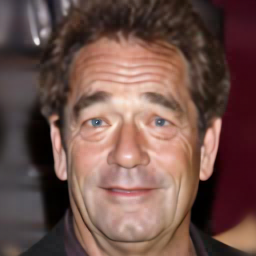}  
    \end{subfigure}
    \begin{subfigure}[h]{0.155\textwidth}
    \centering
    \includegraphics[width=\textwidth]{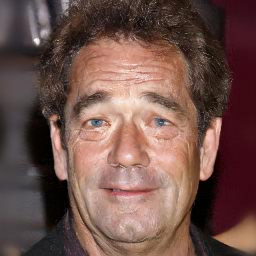}    
    \end{subfigure}
    \begin{subfigure}[h]{0.155\textwidth}
    \centering
    \includegraphics[width=\textwidth]{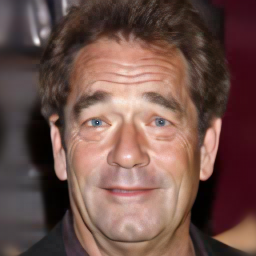}  
    \end{subfigure}
    \\ 
    \begin{subfigure}[h]{0.03\textwidth}
    \centering
    \subfloat{{\rotatebox[origin=c]{90}{\color{gray}{\bf{$\mathbf{4\times}$ \textit{Super-res}}}}}}
    \end{subfigure}
    \begin{subfigure}[h]{0.155\textwidth}
    \centering
    \includegraphics[width=\textwidth]{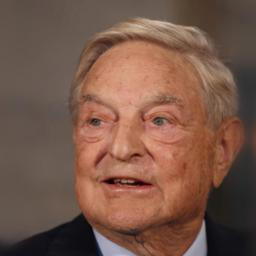}    
    \end{subfigure}
    \begin{subfigure}[h]{0.155\textwidth}
    \centering
    \includegraphics[width=\textwidth]{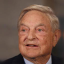}    
    \end{subfigure}
    \begin{subfigure}[h]{0.155\textwidth}
    \centering
    \includegraphics[width=\textwidth]{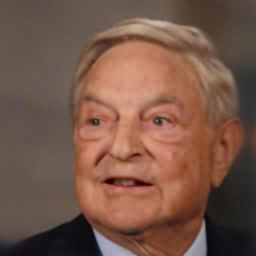}  
    \end{subfigure}
    \begin{subfigure}[h]{0.155\textwidth}
    \centering
    \includegraphics[width=\textwidth]{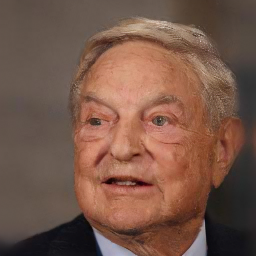}    
    \end{subfigure}
    \begin{subfigure}[h]{0.155\textwidth}
    \centering
    \includegraphics[width=\textwidth]{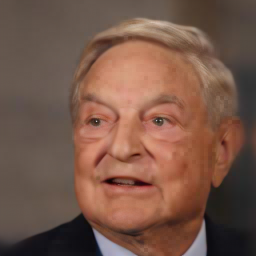}  
    \end{subfigure}
    \\ 
    \begin{subfigure}[h]{0.03\textwidth}
    \centering
    \subfloat{{\rotatebox[origin=c]{90}{\color{gray}{\bf{\textit{Deblurring}}}}}}
    \end{subfigure}
    \begin{subfigure}[h]{0.155\textwidth}
    \centering
    \includegraphics[width=\textwidth]{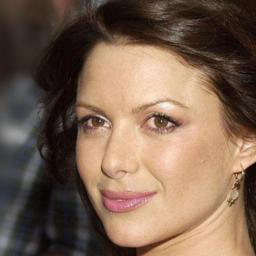}    
    \end{subfigure}
    \begin{subfigure}[h]{0.155\textwidth}
    \centering
    \includegraphics[width=\textwidth]{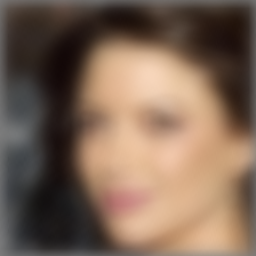}    
    \end{subfigure}
    \begin{subfigure}[h]{0.155\textwidth}
    \centering
    \includegraphics[width=\textwidth]{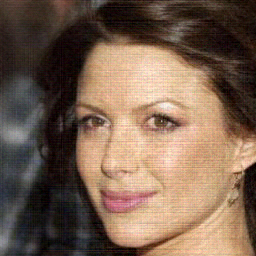}  
    \end{subfigure}
    \begin{subfigure}[h]{0.155\textwidth}
    \centering
    \includegraphics[width=\textwidth]{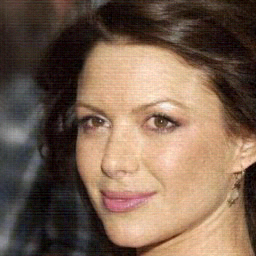}    
    \end{subfigure}
    \begin{subfigure}[h]{0.155\textwidth}
    \centering
    \includegraphics[width=\textwidth]{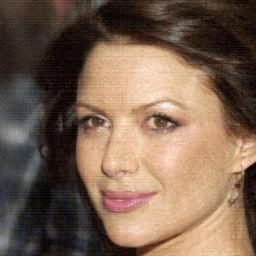}  
    \end{subfigure}
    \\ 
    \begin{subfigure}[h]{0.03\textwidth}
    \centering
    \subfloat{{\rotatebox[origin=c]{90}{\color{gray}{\bf{\textit{Deblurring}}}}}}
    \end{subfigure}
    \begin{subfigure}[h]{0.155\textwidth}
    \centering
    \includegraphics[width=\textwidth]{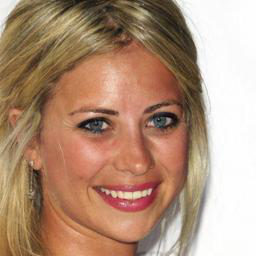}    
    \end{subfigure}
    \begin{subfigure}[h]{0.155\textwidth}
    \centering
    \includegraphics[width=\textwidth]{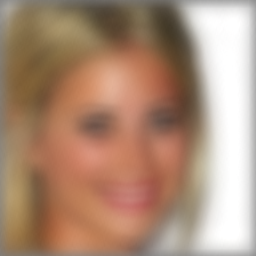}    
    \end{subfigure}
    \begin{subfigure}[h]{0.155\textwidth}
    \centering
    \includegraphics[width=\textwidth]{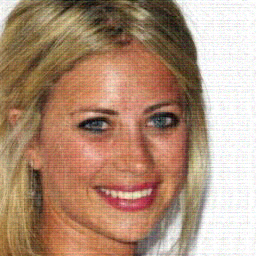}  
    \end{subfigure}
    \begin{subfigure}[h]{0.155\textwidth}
    \centering
    \includegraphics[width=\textwidth]{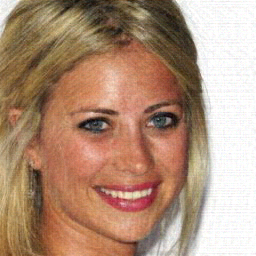}    
    \end{subfigure}
    \begin{subfigure}[h]{0.155\textwidth}
    \centering
    \includegraphics[width=\textwidth]{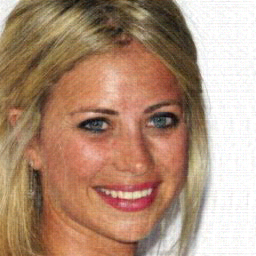}  
    \end{subfigure}
    \caption{Original image, measurement, and restored images with LatentDAPS-1K, Resample w/o PO, and LatentVML-MAP.}\label{fig:ldmqv}
\end{figure*}

\end{document}